\newcommand{\E}{\mathbb{E}}
\pgfplotsset{compat=1.15}
\pgfplotsset{
tick label style={font=\footnotesize},
label style={font=\footnotesize},
legend style={font=\footnotesize},
}
\newcommand{\Mod}[1]{\ \mathrm{mod}\ #1}
\theoremstyle{plain}
\newtheorem{altassumption}{Assumption}[section]
\title{\LARGE \bf 
Stochastic Approximation with Unbounded Markovian Noise: A General-Purpose Theorem
}
\author{
{\normalsize Shaan Ul Haque}\footnote{H. Milton Stewart School of Industrial \& Systems Engineering, Georgia Institute of Technology, Atlanta, GA, 30332, USA, {\tt\small \{\href{mailto:shaque49@gatech.edu}{shaque49}, \href{mailto:siva.theja@gatech.edu}{siva.theja}\}@gatech.edu}} $\,$\and
{\normalsize Siva Theja Maguluri$^*$}
}
\date{}
\begin{document}

\maketitle

\setlength{\abovedisplayskip}{5pt}
\setlength{\belowdisplayskip}{5pt}

\begin{abstract}
Motivated by engineering applications such as resource allocation in networks and inventory systems, we consider average-reward Reinforcement Learning with unbounded state space and reward function. Recent work \cite{murthy2024} studied this problem in the actor-critic framework and established finite sample bounds assuming access to a critic with certain error guarantees. We complement their work by studying Temporal Difference (TD) learning with linear function approximation and establishing finite-time bounds with the optimal sample complexity. These results are obtained using the following general-purpose theorem for non-linear Stochastic Approximation (SA).

Suppose that one constructs a Lyapunov function for a non-linear SA with certain drift condition. Then, our theorem establishes finite-time bounds when this SA is driven by unbounded Markovian noise under suitable conditions. It serves as a black box tool to generalize sample guarantees on SA from i.i.d. or martingale difference case to potentially unbounded Markovian noise. The generality and the mild assumptions of the setup enables broad applicability of our theorem. We illustrate its power by studying three more systems: (i) We analyze the least mean squares algorithm to solve regression problem for generalized linear models with unbounded and uncountable Markovian noise. (ii) We improve upon the finite-time bounds of Q-learning in \cite{chen2023} by tightening the error bounds and also allowing for a larger class of behavior policies. (iii) We establish the first ever finite-time bounds for distributed stochastic optimization of high-dimensional smooth strongly convex function using cyclic block coordinate descent.
\end{abstract}

\section{Introduction}
Reinforcement Learning (RL) is an important paradigm in machine learning that provides a powerful framework for learning optimal decision-making strategies in uncertain environments \cite{sutton2018, szepesvari2022algorithms}. Since its inception, it has been employed in a variety of practical problems such as health care \cite{dann2019}, robotics \cite{kober2013reinforcement}, autonomous vehicles \cite{aradi2020survey}, and stochastic networks \cite{bai2019}. This remarkable success has led to an extensive study of its convergence behavior both asymptotically \cite{bertsekas1996, tsitsiklis1994, sutton1988learning} and in finite-time \cite{beck2012, bhandari2018, srikant2019, qu2020, chandak2022concentration, chen2023, zhang2024constant}.

The underlying problem structure in RL is typically modeled by a Markov Decision Process (MDP) \cite{puterman2014markov} whose transition dynamics are unknown. 
Several real-world problems such as inventory management systems or queueing models of resource allocation in stochastic networks involve infinite state spaces, and moreover 
rewards or costs usually go to infinity with the state. %
Despite these challenges, RL algorithms have shown promising empirical results in these extreme settings \cite{bai2019, cuartas2023hybrid, wei2024sample, bharti2020reinforcement}. In contrast, there is little analytical understanding of their performance in the unbounded setting. In particular, their finite time/sample performance is not well understood. 
Most of the literature focusing on the finite-time analysis of RL algorithms either assumes finite state space for the underlying MDP \cite{chen2023, sajad_ac, qiu_ac, chen_ac} or bounded rewards \cite{wu2020finite, yang_ac, wang2017finite}. Furthermore, these assumptions are crucial to their analysis, and thus, their results cannot be easily extended.

One of the widely adopted approaches to find the optimal policy is the actor-critic (AC) framework \cite{barto1983neuronlike}. In this method, the actor improves the current policy by updating it in a direction that maximizes the expected long-term rewards, while the critic evaluates the performance of the policy based on the data samples from the MDP. A recent prior work that analytically studied infinite state MDPs in this context is \cite{murthy2024}, where the authors focus on the actor phase and established finite-time convergence bounds of policy optimization algorithms assuming that the critic evaluates a given policy with certain error guarantees. In this paper, we complement their work by providing finite sample guarantees of such a critic. In particular, we
analyze Temporal Difference (TD) learning, a popular algorithm for policy evaluation in critic, and establish finite-time bounds on the mean square error. 

The main contributions of the paper are as follows.

\noindent\textbf{Finite-Time Convergence Guarantees for SA with Unbounded Markov noise:} TD learning is based on using SA to solve the underlying Bellman equation of the MDP. The aforementioned results on TD learning is obtained by studying a general class of non-linear SA corrupted by unbounded Markovian noise, and establishing the following general-purpose result. 
\begin{itheorem}
    Consider a nonlinear SA, and suppose that a Lyapunov function satisfying certain drift condition is constructed in the setting when the noise is i.i.d. or martingale difference. Then, we establish finite sample bounds when the same SA is driven by Markovian noise with unbounded state space under appropriate assumptions. 
\end{itheorem}
    %\vspace{-1.5mm}
In other words, we decouple the challenge of handling Markovian noise from the issue of analyzing the SA itself.
Our result complements the existing literature by enabling one to generalize any SA result to the case of unbounded Markovian noise. Therefore, we believe that this powerful result is of independent interest due to its applicability in a wide variety of settings.

\noindent\textbf{Methodological Contribution:} The key technique that enables us to establish these results is the use of the solution of the Poisson equation to analyze Markov noise. Recent works control Markov noise by exploiting the geometric mixing properties of Markov chains \cite{bhandari2018, srikant2019, qu2020, mou2021, xu2021sample, khodadadian2022, chen2023}. However, it is unclear if this approach enables one to analyze unbounded Markovian noise. We instead adopt the use of Poisson equation, which has been used to study asymptotic convergence and statistics of SA \cite{harold1997, benveniste2012, borkar2024, meyn2024, gast2024}. Although this approach has also been recently used to study linear SA under bounded Markovian noise in \cite{kaledin2020finite, haque2023tight, agrawal2024markov}, we use it to obtain finite sample bounds for nonlinear SA under unbounded noise settings. Compared to the mixing-time approach, this approach is not only more elegant but also has the added advantage of giving tighter bounds (in terms of log factors) and allows for a larger class of Markov chains (such as periodic chains). The next three  contributions focus on exploiting these improvements. 

\noindent \textbf{Performance of TD-Learning in Unbounded State Space and Rewards:} The problem of policy evaluation in MDPs corresponds to the setting of countable state space but unbounded noise, where both the feature vectors and rewards in the algorithm can grow arbitrarily large. We analyze average-reward TD$(\lambda)$ with linear function approximation (LFA) under asynchronous updates, a popular algorithm for policy evaluation in RL. We establish the first known finite-time convergence bounds for this setting, and show an optimal  $\mathcal{O}(1/k)$ convergence rate 
under appropriate choice of step sizes. Due to the challenges in the average-reward setting, to the best of our knowledge, even the asymptotic convergence has not been formally established in the literature. By a careful projection of the iterates to an appropriate subspace, we also establish its almost-sure (a.s.) convergence.

\noindent\textbf{Generalized Linear Model with Markovian Data:} The general setup of our theorem also allows us to consider settings beyond RL. In this instance, we address the fully general case of analyzing unbounded and uncountable Markovian noise in the algorithm. In particular, we consider the regression problem for a generalized linear model (GLM) where the Markovian noise comes from an auto-regressive process driven by an i.i.d. sequence of noise satisfying a finite fourth-moment condition. In this setup, we provide finite-time bounds for the least mean squares algorithm, a well-known method for regression tasks. Two closely related works concerning GLMs are \cite{kotsalis2022simple} and \cite{nagaraj2020least} which restrict the i.i.d. noise in the auto-regressive process to either have bounded support or follow Gaussian distribution, respectively. Notably, our analysis significantly generalizes these settings by only requiring a finite fourth-moment for the noise.

\noindent\textbf{Performance of $Q$-learning Algorithm:} As an illustrative application in the case of finite-state Markovian noise, we consider $Q$-learning in the discounted setting. Using our black box, we immediately obtain finite-sample bounds for $Q$-learning using the Lyapunov function constructed in \cite{chen2023}. Our result improves \cite{chen2023} by (i) shaving off additional $\mathcal{O}(\log(k))$, and $\mathcal{O}\left(\log\left(1/1-\gamma\right)\right)$ factors in the convergence bounds and (ii) allowing for a larger class of behavior policies, including those that may not have geometric mixing or lead to periodic behavior.
        
\noindent\textbf{Performance of Stochastic Cyclic Block Coordinate Descent:} In this final application, we study the stochastic optimization of a high-dimensional smooth strongly convex function, where one is only allowed to update a subset of components at each time. This is commonly done using a variant of stochastic gradient descent called cyclic block coordinate descent (CBCD). While other versions of block coordinate descent were studied in the literature \cite{nesterov2012, diakonikolas2018,lan2020first}, finite sample bounds of CBCD in the stochastic setting were not known. We provide a new perspective to handle the cyclic nature of updates by viewing each block as the states of a periodic Markov chain. This outlook in conjunction with our black box immediately gives optimal $\mathcal{O}\left(1/k\right)$ convergence rate.

\section{Related Work}

\noindent\textbf{Asymptotic Analysis of Stochastic Approximation:} SA was first proposed by \cite{robbins1951} as a family of iterative algorithms to find the roots of an operator and has been extensively studied since then.
Asymptotic convergence of SA was studied in 
\cite{borkar2008, benveniste2012, Kushner1997StochasticAA}. 
More recent work including \cite{borkar2024, meyn2024, gast2024}  studies SA with unbounded Markovian noise using the Poisson's equation, as we do in this paper. % to handle the Markovian noise.
However, their focus is on establishing a central limit theorem, i.e., an asymptotic result of the form $(x_k-x^*)/\sqrt{\alpha_k}\stackrel{d}{\to} \mathcal{N}(0, \Sigma)$, for appropriate choice of $\Sigma$.
In contrast, the focus of our work is on establishing a finite-time bound. 
Another line of work,  
inspired by off-policy algorithms in RL studies the asymptotic convergence of SA under more general setup where the solution to Poisson's equation may not exist. This approach, as seen in works by \cite{yu2012, yu2017, yu2018, liu2025} only assumes the ergodic theorem for Markov chains, which is arguably the most general framework for controlling the noise in the algorithm \cite{Kushner1997StochasticAA}. 
Note that even when one has a finite-state MDP, 
off-policy RL algorithms such as Least Squares TD, Emphatic TD, and Gradient TD$(\lambda)$ lead to SA with unbounded Markovian noise due to the product of importance sampling ratios. In some cases, such noise can even have infinite variance \cite{glynn1989importance}. 
While we make a more restrictive assumption on the existence and moments of the solution of the Poisson's equation, we obtain finite-time mean-square bounds. 

\noindent\textbf{Finite Time Bounds for Stochastic Approximation:} 
Finite time analysis has gained significant attraction in recent works such as \cite{chen2020finite, srikant2019, chen2023, mou2021}. In particular, these works demonstrate that finite-time bounds on general SA algorithms immediately imply performance guarantees of a large class of RL algorithms including V-Trace, $Q$-learning, n-step TD, etc. We contribute to this line of work by providing a general-purpose theorem for SA with unbounded Markovian noise which furnishes finite-sample bounds on various RL algorithms for infinite state MDPs, and we illustrate such use in the context of TD learning.

\noindent\textbf{Reinforcement Learning:} RL has been extensively studied in the literature, starting with 
asymptotic convergence which was established in \cite{tsitsiklis1994, tsitsiklis1997, bertsekas1996}. Off late, there has been a growing interest in obtaining finite sample complexity of these algorithms as established in \cite{beck2012, bhandari2018, srikant2019, qu2020, li2020, pananjady2020instance, chen2023, zhang2024constant}. AC algorithms were initially proposed and studied in \cite{barto1983neuronlike, konda1999actor} with their finite time performance analyzed in \cite{kumar2023samplecomplexityactorcriticmethod, qiu_ac, wang2019neuralpolicygradientmethods, chen_ac}. However, these finite time studies are primarily focused on finite state space settings. 
Some notable exceptions include recent works such as \cite{shah2020stable, murthy2024}. In \cite{shah2020stable}, the authors focus on designing RL policies that ensure stable behavior in queueing systems without emphasizing optimality. On the other hand, \cite{murthy2024} establishes finite time bounds for policy optimization using natural policy gradient in infinite state settings under an oracular critic having guaranteed error margins. We focus on constructing such a critic based on TD learning and characterizing its performance.

\noindent\textbf{TD Learning:} TD Learning is one of the most common algorithms for the critic phase, i.e., policy evaluation, and has been extensively studied both in discounted and average reward settings. The asymptotic behavior of TD learning in these regimes was characterized in \cite{tsitsiklis1997, tsitsiklis1999}.
Finite-sample complexity of TD has been established in \cite{bhandari2018, srikant2019, pananjady2020instance} in the discounted reward setting and in \cite{zhang2021} in the average-reward setting. However, most of the prior work on finite-sample guarantees considers only finite state MDPs. Motivated by applications in engineering systems, we study infinite state MDPs in the average reward setting, and establish finite sample guarantees.

\noindent\textbf{Generalized Linear Model:} GLMs have been extensively studied in the literature, particularly due to their relevance in signal estimation problem \cite{kotsalis2022simple}. Several works have considered their i.i.d. variants in various specialized cases \cite{juditsky2020statistical, nagaraj2020least, kotsalis2023simple}. Moreover, a closely related problem of filter design in signal processing$-$whose asymptotics were studied in \cite{benveniste2012, Kushner1997StochasticAA}$-$can also be modeled as a special case of GLMs. While existing techniques for finite-time bounds typically assume either i.i.d. or bounded Markovian noise, we extend the theory to unbounded Markovian noise with uncountable state space.

\noindent\textbf{Block Coordinate Descent:} Block coordinate descent (BCD) methods have been widely explored due to their effectiveness in large-scale distributed optimization \cite{richtarik2015, richtarik2016} for machine learning, such as in  L1-regularized least squares (LASSO) \cite{fu1998, sardy2000} and support vector machines (SVMs) \cite{joachims1998, chang2011, chou2020dual}. While a substantial number of studies have investigated the Randomized and Greedy variants of BCD \cite{nesterov2012, nutini2015, nesterov2017, diakonikolas2018, lan2020first}, the literature on CBCD is not as rich. Some of the works that have analyzed it in deterministic settings include \cite{beck2013, gurbuzbalaban2017cyclic, Diakonikolas2023} but to the best of our knowledge, no prior work has explored the stochastic version of CBCD.

\noindent\textbf{Poisson Equation for Markov Chains:} 
Recent works on finite sample bounds of SA such as \cite{bhandari2018, srikant2019, mou2021, qu2020, xu2021sample,chen2023}  have exploited geometric mixing of the underlying Markov chain. It is unclear if this approach generalizes to the case of unbounded setting. In this paper, we adopt the use of Poisson equation to analyze Markov noise which has been extensively used for this purpose in classical work on asymptotic convergence of SA, such as \cite{benveniste2012, harold1997}, and also in other domains such as queueing theory in \cite{grosof2023resetmarctechniquesapplication, falin1999heavy}. More recently, while this approach has recently been used to study linear SA in \cite{chandak2022concentration, kaledin2020finite, agrawal2024markov, haque2023tight}, their analysis is restricted to finite state space. 

\section{Problem Setting and Main Result}\label{sec:main_thm}
Consider a non-linear operator $\Bar{F}:\mathbb{R}^d\to \mathbb{R}^d$. Our objective is to find the solution $x^*$ to the following equation:
\begin{align}\label{eq:main_eq}
    \bar{F}(x)=\E_{Y\sim \mu}[F(x, Y)]=0,
\end{align}
where $Y$ represents random noise sampled from a Markov chain with a unique stationary distribution $\mu$, and $F$ is a general non-linear operator. The state space of the Markov chain is denoted by $\mathcal{Y}$.

Suppose $\bar{F}(\cdot)$ is known, then Eq. \eqref{eq:main_eq} can be solved using the simple fixed-point iteration $x_{k+1}=\bar{F}(x_k)$. The convergence of this iteration is guaranteed if one can construct a potential function—also known as Lyapunov function in stochastic approximation theory— that strictly decreases over time. However, when the distribution $\mu$ is unknown, and thus $\bar{F}(x)$ is unknown, we consider solving Eq. \eqref{eq:main_eq} using the stochastic approximation iteration proposed as follows.

Let $\{Y_k\}_{k\geq 0}$ be a Markov process with stationary distribution $\mu$. Then, the algorithm iteratively updates the estimate $x_k$ by:
\begin{align}\label{eq:main_rec}
    x_{k+1}=\Pi_\mathcal{X}\left(x_k+\alpha_k(F(x_k, Y_k)+M_k)\right),
\end{align}
where $\{\alpha_k\}_{k\geq 0}$ is a sequence of step-sizes, $\{M_k\}_{k\geq 0}$ is a random process representing the additive external noise, and $\Pi_{\mathcal{X}}(\cdot)$ is $\ell_2$-norm projection of the iterates to set $\mathcal{X}$. The projection on the set $\mathcal{X}$ is included for generality, where $\mathcal{X}$ can be either a compact set or the entire space $\mathbb{R}^d$, depending on the context. We emphasize the importance of projection operator $\Pi_{\mathcal{X}}$ to get meaningful mean square bounds here. In a recent study \cite{borkar2024}, the authors constructed an SA with unbounded noise that operates without any projection step. It is shown that such an algorithm will converge to the stationary point a.s., however, the mean square error diverges (Proposition 4, Section 3.3, \cite{borkar2024}). Thus, projecting the iterates to a bounded set is not a proof artifact, but rather a technical necessity. We present this example in Section \ref{sec:counterexample} and also provide a new proof for the divergence of the mean square error.

We begin by outlining the set of assumptions for Algorithm \ref{eq:main_rec}. These assumptions are motivated by practical applications of SA algorithms, such as those in RL and optimization algorithms, which will be studied in Sections \ref{sec:RL} and \ref{sec:opt}. Let $\|\cdot\|_c$ be an arbitrary norm in $\mathbb{R}^d$.

\begin{assumption}\label{assump:linear_F}
    There exist functions $A_1(y), B_1(y):\mathcal{Y}\to [0,\infty)$ such that for all $x\in \mathcal{X}$ and $y\in\mathcal{Y}$ the operator $F$ satisfies the following:
    \begin{align*}
        \|F(x, y)\|_c\leq A_1(y)\|x-x^*\|_c+B_1(y).   
    \end{align*}
\end{assumption}
\begin{remark}
    Prior works such as \cite{srikant2019, mou2021, chen2023} assumed that the state space of the Markov chain is bounded, thus could replace the functions $A_1(y)$ and $B_1(y)$ by their upper bounds. However, in contrast, we consider the case of unbounded state space where these functions can possibly be unbounded as well.
\end{remark}

Next, we state the assumptions about the Markov process. Let $P:\mathcal{Y}\times\mathcal{Y}\to [0,1]$ be the transition kernel, and let us denote the one-step expectation of any measurable function $G$ conditioned on $z\in \mathcal{Y}$ as $\E_{z}[G(Y_1)]=\int_{\mathcal{Y}}G(y)P(z, dy)$. Note that if $\mathcal{Y}$ is countable, then $\E_{z}[G(Y_1)]=\sum_{j\in \mathcal{Y}}G(j)P(j|z)$.
\begin{assumption}\label{assump:Poisson_eq}
 We assume the following properties on the Markov process:
 \vspace{-1mm}
    \begin{enumerate}[(a)]
        \item The Markov process has a unique stationary distribution denoted by $\mu$. Moreover, $\E_{Y\sim\mu}[F(x,Y)]$ exists for all $x\in \mathbb{R}^d$ and is denoted by $\bar{F}(x)$.
        \item There exists a function $V_x(z)$ for all $z\in\mathcal{Y}$ and $x\in \mathbb{R}^d$ which satisfies the Poisson equation:
        \begin{align}\label{eq:Poisson_eq}
            V_x(z)=F(x, z)+\E_{z}[V_x(Y_1)]-\bar{F}(x).
        \end{align}
        \item There exist functions $A_2(y), B_2 (y):\mathcal{Y}\to [0,\infty)$ such that for all $x_1,x_2\in \mathcal{X}$ and $y\in\mathcal{Y}$, the solution $V_x$ satisfies the following:
        \begin{equation}\label{assump:value_fn-prop}
            \begin{split}
                \|V_{x_{2}}(y)-V_{x_{1}}(y)\|_c\leq A_2(y)\|x_2-x_1\|_c,~~\|V_{x^*}(y)\|_c\leq B_2(y).
            \end{split}
        \end{equation}
        \item Let $\{Y_k\}_{k\geq 0}$ be a sample path starting with an arbitrary initial state $Y_0=y_0$, then for all $k\geq 0$ we have the following:
        \begin{enumerate}[(1)]\label{assump:moments_bound} 
            \item Starting from any initial state $y_0$, second moment of the functions $A_1(\cdot), A_2(\cdot), B_1(\cdot)$, and $B_2(\cdot)$ are finite and given as follows:
            \begin{align*}
                \max\{\E_{y_0}[A_1^2(Y_k)], 1\}\leq \hat{A}_1^2(y_0),~&\max\{\E_{y_0}[A_2^2(Y_k)], 1\}\leq \hat{A}_2^2(y_0),\\
                \hspace{-1mm}\E_{y_0}[B_1^2(Y_k)]\leq \hat{B}_1^2(y_0),~&\E_{y_0}[B_2^2(Y_k)]\leq \hat{B}_2^2(y_0),
            \end{align*}
            where $\E_{y_0}[\cdot]=\E[\cdot|Y_0=y_0]$.
            \item If the state space is bounded then we denote the upper bound on these functions as follows:
            \begin{align*}
                \max\{|\max_{y\in \mathcal{Y}}A_1(y)|, 1\}=A_1,~&\max\{|\max_{y\in \mathcal{Y}}A_2(y)|, 1\}=A_2,\\
                |\max_{y\in \mathcal{Y}}B_1(y)|=B_1,~&|\max_{y\in \mathcal{Y}}B_2(y)|=B_2.
            \end{align*}
        \end{enumerate}   
    \end{enumerate}
    
\end{assumption}
\begin{remark}\label{rem:Poisson_eq}
    This set of assumptions is inspired by the asymptotic analysis of SA in \cite{benveniste2012}. Implicit in them is the fact that the Markov process exhibits a certain degree of stability. It is important to note that these assumptions are always satisfied for bounded state space Markov chains under fairly general conditions. Additionally, they also hold in many scenarios of practical interest when the state space $\mathcal{Y}$ is unbounded, as will be demonstrated in Section \ref{sec:RL}.
\end{remark}
\begin{remark}
    There is a parallel line of research that studies the a.s. convergence of SA under even more general setting where the Poisson's equation may not have a solution 
    \cite{yu2012, yu2017, yu2018, liu2025}. This line of work 
    relies on the ergodic theorem  for Markov chains to get a handle on the noise, and focuses on asymptotic convergence. 
\end{remark}

Let $\{\mathcal{F}_k\}$ be a set of increasing families of $\sigma$-fields, where $\mathcal{F}_k=\sigma\{x_0, Y_0, M_0, \dots, x_{k-1}, Y_{k-1}, M_{k-1}, Y_k\}$.
\begin{assumption}\label{assump:linear_M}
    Let $A_3, B_3\geq 0$. Then, process $\{M_k\}_{k\geq 0}$ satisfies the following conditions: (a) $\E[M_k|\mathcal{F}_k]=0$ for all $k\geq 0$, (b) $\|M_k\|_c\leq A_3\|x_k-x^*\|_c+B_3$.
\end{assumption}
\begin{remark}\label{rem:linear_M}
    Assumption \ref{assump:linear_M} implies that $\{M_k\}_{k\geq 0}$ forms a martingale difference sequence with respect to the filtration $\mathcal{F}_k$, and its growth is at most linear with respect to the iterate $x_k$.
\end{remark}

Let $\|\cdot\|_s$ be a norm in $\mathbb{R}^d$. To study the convergence behavior of Eq. \ref{eq:main_eq}, we assume the existence of a smooth Lyapunov function with respect to $\|\cdot\|_s$ that has negative drift with respect to the iterates $x_k$. More concretely, the Lyapunov function satisfies the following assumption.
\begin{assumption}\label{assump:Lyapunov_fn}
    Given a Lyapunov function $\Phi(x)$, there exists constants $\eta, L_s, l, u>0$, such that we have  
        \begin{align}
            &\langle \nabla \Phi(x-x^*), \bar{F}(x)\rangle\leq -\eta \Phi(x-x^*),\label{assump:neg_drift}\\
            &\Phi(y)\leq \Phi(x)+\langle\nabla\Phi(x), y-x\rangle+\frac{L_s}{2}\|x-y\|_s^2,\label{assump:smoothness}\\
            &l\Phi(x)\leq \|x\|_c^2\leq u\Phi(x),\label{assump:equivalence_rel}\\
            &\Phi(\Pi_{\mathcal{X}}(x)-x^*)\leq\Phi(x-x^*),\label{assump:non-expansive}
        \end{align}
\end{assumption}
where Eq. \eqref{assump:neg_drift} is the negative drift condition, Eq. \eqref{assump:smoothness} is smoothness with respect to $\|\cdot\|_s$, Eq. \eqref{assump:equivalence_rel} is equivalence relation with the norm $\|\cdot\|_c$, and Eq. \eqref{assump:non-expansive} is nonexpansivity of the Lyapunov function $\Phi(x)$. Note that we allow $\|\cdot\|_c$ and $\|\cdot\|_s$ to be two different norms for generality. Often, one can fine-tune the $s$-norm to get tighter bounds.

\begin{assumption}\label{assump:step-size}
    Finally, we assume that the step-size sequence is of the following form:
    \begin{align*}
        \alpha_k=\frac{\alpha}{(k+K)^\xi},
    \end{align*}
    where $\alpha>0$, $K\geq 2$, and $\xi\in [0,1]$.
\end{assumption}

\subsection{Unbounded State Space}
We will now present finite bounds for the two most popular choices of step size, which are common in practice. Let $\mathcal{X}$ denote an $\ell_2$-ball of sufficiently large radius chosen such that $x^*\in \mathcal{X}$. Then, the resulting mean-squared error is as follows.

\begin{theorem}\label{thm:main_thm;bounded}
   Suppose that we run the Markov chain with initial state $y_0$. When the state space $\mathcal{Y}$ is unbounded and the set $\mathcal{X}$ is an $\ell_2$-ball, then under the Assumptions \ref{assump:linear_F}-\ref{assump:step-size}, $\{x_k\}_{k\geq 0}$ in the iterations \eqref{eq:main_eq} satisfy the following:
   \begin{enumerate}[(a)]\label{eq:main_thm;bounded}
       \item When $\alpha_k\equiv\alpha\leq 1$, then for all $k\geq 0$:
       \begin{align*}
           \E[\|x_{k+1}-x^*\|_c^2]&\leq \varphi_0\exp\left(-\eta\alpha k\right)+3\varphi_1\hat{C}(y_0)\alpha+\frac{6\varphi_1\hat{C}(y_0)\alpha}{\eta}.
       \end{align*}
       \item When $\xi=1$, $\alpha> \frac{1}{\eta}$ and $K\geq \max\{\alpha, 2\}$, then for all $k\geq 0$:
       \begin{align*}
           \E[\|x_{k+1}-x^*\|_c^2]&\leq \varphi_0\left(\frac{K}{k+K}\right)^{\eta\alpha}+\frac{\varphi_1\hat{C}(y_0)\alpha}{k+K}+\frac{4(6+4\eta)\varphi_1\hat{C}(y_0)e\alpha^2}{\left(\eta\alpha-1\right)(k+K)}.
       \end{align*}
   \end{enumerate}
\end{theorem}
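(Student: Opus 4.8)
The plan is to derive a one-step drift inequality for the Lyapunov function $\Phi(x_k - x^*)$ and then unroll it over $k$. Starting from the recursion \eqref{eq:main_rec} and the non-expansivity \eqref{assump:non-expansive}, I would first apply smoothness \eqref{assump:smoothness} to write $\Phi(x_{k+1}-x^*) \le \Phi(x_k - x^*) + \alpha_k \langle \nabla\Phi(x_k-x^*), F(x_k,Y_k)+M_k\rangle + \frac{L_s \alpha_k^2}{2}\|F(x_k,Y_k)+M_k\|_s^2$. The martingale term $M_k$ drops in conditional expectation by Assumption~\ref{assump:linear_M}(a), and its contribution to the quadratic term is controlled by Assumption~\ref{assump:linear_M}(b) together with the norm equivalence \eqref{assump:equivalence_rel}. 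The quadratic term involving $\|F(x_k,Y_k)\|_s$ is handled via Assumption~\ref{assump:linear_F} and norm equivalence, producing terms of the form $\alpha_k^2 A_1^2(Y_k)\Phi(x_k-x^*)$ and $\alpha_k^2 B_1^2(Y_k)$.

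The crux is the linear term $\alpha_k\langle \nabla\Phi(x_k-x^*), F(x_k,Y_k)\rangle$, which I would split as $\alpha_k\langle \nabla\Phi(x_k-x^*), \bar F(x_k)\rangle + \alpha_k\langle \nabla\Phi(x_k-x^*), F(x_k,Y_k)-\bar F(x_k)\rangle$. The first piece gives the desired negative drift $-\eta\alpha_k\Phi(x_k-x^*)$ by \eqref{assump:neg_drift}. For the second (Markov-noise) piece I would substitute the Poisson equation \eqref{eq:Poisson_eq}: $F(x_k,Y_k)-\bar F(x_k) = V_{x_k}(Y_k) - \E_{Y_k}[V_{x_k}(Y_{k+1})]$. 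The standard telescoping trick rewrites this, after summation by parts, in terms of $\langle \nabla\Phi(x_k-x^*), V_{x_k}(Y_k)\rangle - \langle \nabla\Phi(x_{k+1}-x^*), V_{x_{k+1}}(Y_{k+1})\rangle$ plus correction terms that measure how much $\nabla\Phi(x_k-x^*)$ and $V_{x_k}$ change in one step. These corrections are bounded using the Lipschitz-type bounds on $V$ from \eqref{assump:value_fn-prop}, the increment bound $\|x_{k+1}-x_k\|_c \le \alpha_k(A_1(Y_k)\|x_k-x^*\|_c + B_1(Y_k) + A_3\|x_k-x^*\|_c+B_3)$ from Assumptions~\ref{assump:linear_F} and \ref{assump:linear_M}, and Cauchy–Schwarz together with smoothness to bound $\|\nabla\Phi\|$. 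After taking expectations conditioned on $Y_0 = y_0$, all occurrences of $A_1(Y_k), A_2(Y_k), B_1(Y_k), B_2(Y_k)$ and their products get replaced by the uniform-in-$k$ second-moment bounds $\hat A_1(y_0), \hat A_2(y_0), \hat B_1(y_0), \hat B_2(y_0)$ via Cauchy–Schwarz; the resulting aggregate constant is what I would package as $\hat C(y_0)$, and $\varphi_0, \varphi_1$ absorb the norm-equivalence constants $l,u$, the smoothness constant $L_s$, and numerical factors.

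This yields a recursion of the shape $\E[\Phi(x_{k+1}-x^*)] \le (1 - \eta\alpha_k + c_1\alpha_k^2)\,\E[\Phi(x_k-x^*)] + c_2\alpha_k^2\hat C(y_0)$ up to the telescoping boundary terms, which themselves contribute an $O(\alpha_k)$ residual rather than $O(\alpha_k^2)$ because a single Poisson term $\langle\nabla\Phi, V_x\rangle$ is only first order in $\alpha_k$ — this is the reason the bounds in the theorem have leading noise terms of order $\alpha$ (in part (a)) and $\frac{1}{k+K}$ (in part (b)) rather than $\alpha^2$. For part~(a), with $\alpha_k\equiv\alpha\le 1$, I would iterate the recursion geometrically: the homogeneous part decays like $(1-\eta\alpha/2)^k \le \exp(-\eta\alpha k/2)$ after absorbing the $\alpha^2$ term into the drift (using $\alpha$ small), and the inhomogeneous part sums to a geometric series bounded by $\frac{c_2\alpha\hat C(y_0)}{\eta}$ plus the first-order boundary residual $3\varphi_1\hat C(y_0)\alpha$. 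Converting $\Phi$ back to $\|\cdot\|_c^2$ via \eqref{assump:equivalence_rel} gives the stated form. For part~(b) with $\xi=1$, $\alpha_k = \alpha/(k+K)$, I would use the standard induction for diminishing step sizes: assuming $\E[\Phi(x_k-x^*)] \le \frac{\varphi_0 K^{\eta\alpha}}{(k+K)^{\eta\alpha}} + \frac{c\,\hat C(y_0)}{k+K}$, plug into the recursion and verify the inductive step, where the condition $\alpha > 1/\eta$ ensures $\eta\alpha - 1 > 0$ so that the $\frac{1}{k+K}$ ansatz is self-consistent (this is exactly where the $\frac{1}{\eta\alpha-1}$ factor in the bound originates), and $K \ge \max\{\alpha,2\}$ guarantees $\eta\alpha_k \le 1$ and controls the $\alpha_k^2$ terms.

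The main obstacle will be the careful bookkeeping of the telescoping/summation-by-parts step for the Markov noise: one must track cross terms like $\langle \nabla\Phi(x_{k+1}-x^*) - \nabla\Phi(x_k-x^*), V_{x_{k+1}}(Y_{k+1})\rangle$ and $\langle \nabla\Phi(x_k-x^*), V_{x_{k+1}}(Y_{k+1}) - V_{x_k}(Y_{k+1})\rangle$, each of which splits into an $x$-dependent piece (multiplying $\Phi(x_k-x^*)$, to be absorbed into the drift provided $\alpha_k$ is small enough) and a constant piece (contributing to $\hat C(y_0)$), and ensure every product of noise functions such as $A_1(Y_k)A_2(Y_{k+1})$ is reduced to $\hat A_1(y_0)\hat A_2(y_0)$ by Cauchy–Schwarz without losing the uniform-in-$k$ character. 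Getting the dependence on $\eta$ and the numerical constants to match the precise expressions in the theorem (the $3$, the $6/\eta$, the $4(6+4\eta)e$) is where most of the routine-but-delicate computation lies, but no new idea is needed beyond the drift-plus-Poisson framework outlined above.
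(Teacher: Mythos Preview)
Your overall architecture—non-expansivity, smoothness, split into drift plus Markov noise, Poisson-equation substitution, and a telescoping/summation-by-parts in $d_k=\langle\nabla\Phi(x_k-x^*),V_{x_k}(Y_k)\rangle$—matches the paper exactly. But there is a genuine gap in how you propose to close the recursion, and it is precisely the point that distinguishes Theorem~\ref{thm:main_thm;bounded} from Theorem~\ref{thm:main_thm;finite}.

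You write that the cross terms ``split into an $x$-dependent piece (multiplying $\Phi(x_k-x^*)$, to be absorbed into the drift provided $\alpha_k$ is small enough) and a constant piece,'' leading to a recursion of shape $(1-\eta\alpha_k+c_1\alpha_k^2)\E[\Phi(x_k-x^*)]+c_2\alpha_k^2\hat C(y_0)$. That is the mechanism for the \emph{bounded} state space (Theorem~\ref{thm:main_thm;finite}), where $A_1(Y_k)\le A_1$ etc.\ are constants and one can indeed extract a deterministic $c_1$. Here $\mathcal{Y}$ is unbounded and $A_1(Y_k),A_2(Y_k)$ are unbounded functions, so a term like $\alpha_k^2 A_1^2(Y_k)\|x_k-x^*\|_c^2$ cannot be written as $c_1\alpha_k^2\Phi(x_k-x^*)$. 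Your proposed fix, ``replaced by the second-moment bounds via Cauchy--Schwarz,'' does not work either: $A_1^2(Y_k)$ and $\|x_k-x^*\|_c^2$ are correlated (the iterate depends on the entire noise history), and Cauchy--Schwarz on the product would require fourth moments of $A_1$ and of $\|x_k-x^*\|_c$, neither of which is assumed.

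The missing idea is simply that $\mathcal{X}$ is a ball, so $\|x_k-x^*\|_c\le M$ deterministically. The paper uses this to kill all $x$-dependence in $T_{1,1},T_{1,2},T_2,|d_k|$: every occurrence of $\|x_k-x^*\|_c$ is replaced by $M$, and only then does one take expectations and invoke Assumption~\ref{assump:Poisson_eq}(d) to replace $\E_{y_0}[A_i^2(Y_k)]$ by $\hat A_i^2(y_0)$. The constant $\hat C(y_0)=\hat A(y_0)M^2+\hat B(y_0)$ already encodes this. As a consequence the recursion has \emph{no} $c_1\alpha_k^2\Phi$ term at all, which is why part~(a) requires only $\alpha\le 1$ (no smallness relative to problem constants) and why the decay rate is $\exp(-\eta\alpha k)$ with the full $\eta$, not the $\exp(-\eta\alpha k/2)$ you predicted from ``absorbing into the drift.'' Once you use the projection bound, the rest of your outline (unroll, bound the boundary terms $\alpha_{-1}|d_0|$ and $\alpha_k|d_{k+1}|$, apply standard step-size sums) goes through as you describe.
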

The rate of convergence under other choices of step-size and the constants $\{\varphi_i\}_i$ and $\hat{C}(y_0)$ are defined in Appendix \ref{appendix:proof}.
\begin{remark}
    In part (a), the error never converges to 0 due to noise variance, however, the expected error of the iterates converges to a ball around $x^*$ at an exponential rate. In part (b), using a decreasing step size with appropriately chosen $\alpha$, we obtain the $\mathcal{O}\left(1/k\right)$ convergence rate, which leads to the sample complexity of $\mathcal{O}\left(1/\epsilon^2\right)$ to achieve $\E[\|x_{k+1}-x^*\|_c]\leq \epsilon$.
\end{remark}
\begin{remark}
    Note that $\Pi_{\mathcal{X}}(x)=\argmin_{x'\in \mathcal{X}}\|x-x'\|_2$. Thus, from a computational standpoint, this only involves rescaling the iterates, as the projection operator $\Pi_{\mathcal{X}}(x)$ reduces to $\frac{\|x\|_2}{\text{radius}(\mathcal{X})}x$ if $\|x\|_2\geq \text{radius}(\mathcal{X})$ and is $x$ otherwise. 
\end{remark}
 We introduce the projection onto the ball  $\mathcal{X}$ for analytical tractability \footnote{One way to bypass projection is if one can show by other means that the iterates remain bounded, such as in discounted bounded rewards settings (Chapter 1, \cite{bertsekas2011dynamic})}. 
 The interplay of the unbounded state space $\mathcal{Y}$ and the iterate space $\mathbb{R}^d$ makes the analysis significantly challenging. Prior works assume that the set $\mathcal{Y}$ is bounded and thus do not need projection. In contexts like queueing systems, truncating the state space would change the stationary distribution, thereby altering the optimal policy. However, projecting the iterates is a more realistic solution in such cases, as it does not change the solution provided that the projecting set is taken to be large enough. Nonetheless, it is worth noting that even after projection, handling the noise is substantially challenging, and no previous work handles this.

\subsection{Bounded State Space}\label{sec:thm:finite}
Now we state the sample complexity when $\mathcal{Y}$ is bounded and $\mathcal{X}\equiv\mathbb{R}^d$ which implies no projection is required. 
\begin{theorem}\label{thm:main_thm;finite} 
When the state space $\mathcal{Y}$ is bounded and the set $\mathcal{X}\equiv\mathbb{R}^d$, then under the Assumptions \ref{assump:linear_F}-\ref{assump:step-size}, $\{x_k\}_{k\geq 0}$ in the iterations \eqref{eq:main_eq} satisfy the following:
\begin{enumerate}[(a)]\label{eq:main_thm;finite}
       \item When $\alpha_k\equiv\alpha\leq \min\left\{1, \frac{\eta}{A(5+2\eta)\varrho_1}\right\}$, then for all $k\geq 0$:
       \begin{align*}
           \E[\|x_{k+1}-x^*\|_c^2]&\leq \varrho_0\exp\left(\frac{-\eta\alpha k}{2}\right)+18B\varrho_1\alpha+\frac{40B\varrho_1\alpha}{\eta}.
       \end{align*}
       \item When $\xi=1$, $\alpha> \frac{2}{\eta}$ and $K\geq \max\{A\alpha(5\alpha+8)\varrho_1, 2\}$, then for all $k\geq 0$:
       \begin{align*}
           \E[\|x_{k+1}-x^*\|_c^2]&\leq \varrho_0\left(\frac{K}{k+K}\right)^{\frac{\eta\alpha}{2}}+\frac{2B\varrho_1\alpha}{k+K}+\frac{8B(5+4\eta)\varrho_1e\alpha^2}{\left(\frac{\eta\alpha}{2}-1\right)\left(k+K\right)}.
       \end{align*}
   \end{enumerate}
\end{theorem}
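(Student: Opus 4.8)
\textbf{Proof proposal for Theorem \ref{thm:main_thm;finite}.}

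The plan is to adapt the Lyapunov-drift analysis used for the unbounded-state-space case (Theorem \ref{thm:main_thm;bounded}), exploiting the simplifications that arise when $\mathcal{Y}$ is bounded so that the state-dependent moment functions $\hat A_1(y_0), \hat A_2(y_0), \hat B_1(y_0), \hat B_2(y_0)$ can be replaced by the uniform constants $A_1, A_2, B_1, B_2$ from Assumption \ref{assump:Poisson_eq}(d)(2) and there is no projection to account for ($\mathcal{X}\equiv\mathbb{R}^d$). First I would write one step of the recursion $x_{k+1}=x_k+\alpha_k(F(x_k,Y_k)+M_k)$ and apply smoothness of $\Phi$ (Eq. \eqref{assump:smoothness}) to get
\begin{align*}
\Phi(x_{k+1}-x^*)\leq \Phi(x_k-x^*)+\alpha_k\langle\nabla\Phi(x_k-x^*),F(x_k,Y_k)+M_k\rangle+\frac{L_s\alpha_k^2}{2}\|F(x_k,Y_k)+M_k\|_s^2.
\end{align*}
The inner-product term is split as $\langle\nabla\Phi,\bar F(x_k)\rangle+\langle\nabla\Phi,F(x_k,Y_k)-\bar F(x_k)\rangle+\langle\nabla\Phi,M_k\rangle$; the first piece gives the negative drift $-\eta\Phi(x_k-x^*)$ by Eq. \eqref{assump:neg_drift}, the $M_k$ piece vanishes in conditional expectation by Assumption \ref{assump:linear_M}(a) (after noting $\nabla\Phi(x_k-x^*)$ is $\mathcal{F}_k$-measurable), and the Markov-noise piece is the one that needs the Poisson-equation machinery.

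The heart of the argument is handling $\sum_k\alpha_k\langle\nabla\Phi(x_k-x^*),F(x_k,Y_k)-\bar F(x_k)\rangle$. Using the Poisson equation \eqref{eq:Poisson_eq}, I would substitute $F(x_k,Y_k)-\bar F(x_k)=V_{x_k}(Y_k)-\E_{Y_k}[V_{x_k}(Y_{k+1})]$ and perform the standard ``summation-by-parts''/telescoping decomposition into (i) a martingale-difference term $V_{x_k}(Y_k)-\E_{Y_{k-1}}[V_{x_k}(Y_k)]$ which disappears in expectation, (ii) a telescoping boundary term, and (iii) two ``drift'' correction terms that compare $V_{x_{k}}$ with $V_{x_{k-1}}$ and $\nabla\Phi(x_{k}-x^*)$ with $\nabla\Phi(x_{k-1}-x^*)$, each of which is controlled by $\|x_{k}-x_{k-1}\|_c\leq \alpha_{k-1}\|F(x_{k-1},Y_{k-1})+M_{k-1}\|_c$ times the Lipschitz bounds from Assumption \ref{assump:Poisson_eq}(c) together with the smoothness/equivalence relations. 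Because $\mathcal{Y}$ is bounded, all the $A_i(Y_k),B_i(Y_k)$ appearing here are bounded by the constants $A_i,B_i$, so the correction terms are of order $\alpha_{k-1}(A\Phi(x_{k-1}-x^*)+B)$ for suitable aggregated constants $A,B$ built from $A_1,A_2,B_1,B_2,L_s,l,u$. Combining everything I expect to reach a one-step inequality of the form
\begin{align*}
\E[\Phi(x_{k+1}-x^*)]\leq\bigl(1-\eta\alpha_k+cA\varrho_1(\alpha_k^2+\alpha_k\alpha_{k-1})\bigr)\E[\Phi(x_k-x^*)]+c'B\varrho_1(\alpha_k^2+\alpha_k\alpha_{k-1}),
\end{align*}
and the step-size restrictions in the theorem ($\alpha\leq \eta/(A(5+2\eta)\varrho_1)$ in part (a), $K\geq A\alpha(5\alpha+8)\varrho_1$ in part (b)) are exactly what is needed to absorb the $A\varrho_1\alpha_k^2$ term into half of the negative drift, leaving an effective contraction factor $1-\tfrac{\eta\alpha_k}{2}$, which explains the $\eta/2$ in the exponents.

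Given this clean recursion, part (a) follows by iterating the constant-step-size inequality $u_{k+1}\leq(1-\tfrac{\eta\alpha}{2})u_k+c'B\varrho_1\alpha^2$ and summing the geometric series to get the $\exp(-\eta\alpha k/2)$ transient plus an $O(\alpha)$ noise floor; part (b) follows from the standard lemma for recursions $u_{k+1}\leq(1-\tfrac{\eta\alpha}{k+K})u_k+\tfrac{C}{(k+K)^2}$, which yields the $\bigl(\tfrac{K}{k+K}\bigr)^{\eta\alpha/2}$ decay together with the $O(1/k)$ term, and the factor $1/(\tfrac{\eta\alpha}{2}-1)$ in the bound is what forces $\alpha>2/\eta$. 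The main obstacle I anticipate is the bookkeeping in the Poisson-equation decomposition: one must carefully track that $V_{x_k}$ depends on the current iterate (so the ``telescoping'' is not exact and produces the extra Lipschitz-in-$x$ correction term via Assumption \ref{assump:Poisson_eq}(c)), handle the $k=0$ boundary term using $\|V_{x^*}(y)\|_c\leq B_2$ plus the Lipschitz bound to control $\|V_{x_0}(Y_0)\|_c$, and ensure all constants are assembled so that the step-size thresholds come out exactly as stated. Relative to Theorem \ref{thm:main_thm;bounded} this is the easier case — boundedness of $\mathcal{Y}$ removes the delicate interaction between the growth of $\|x_k-x^*\|$ and the unbounded moments $\hat A_i(y_0)$ — so I would in fact derive Theorem \ref{thm:main_thm;finite} as a corollary of (or in parallel with) the more general unbounded-state argument, simply reading off the bounded-$\mathcal{Y}$ constants.
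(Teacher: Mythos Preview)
Your proposal is correct and follows essentially the same approach as the paper: apply smoothness of $\Phi$, the drift condition, and the Poisson equation to decompose the Markov-noise term into a martingale-difference piece, a telescoping piece $\alpha_k(d_k-d_{k+1})$ with $d_k=\langle\nabla\Phi(x_k-x^*),V_{x_k}(Y_k)\rangle$, and two Lipschitz-controlled correction terms, then absorb the $A\varrho_1\alpha_k^2$ contribution into half of the drift via the stated step-size restrictions. One small caveat worth anticipating: the resulting recursion is not a clean one-step contraction on $\E[\Phi(x_k-x^*)]$ alone---the paper rewrites $\alpha_k(\E[d_k]-\E[d_{k+1}])$ as $(1-\tfrac{\eta\alpha_k}{2})\alpha_{k-1}\E[d_k]-\alpha_k\E[d_{k+1}]+O(\alpha_k^2)$, carries the $\alpha_{k-1}\E[d_k]$ term through the unrolled product, and bounds the boundary terms $\E[d_0],\E[d_{k+1}]$ separately at the end (the latter producing the extra $2B\varrho_1\alpha_k$ term and requiring a rearrangement since $|d_{k+1}|$ is bounded in terms of $\Phi(x_{k+1}-x^*)$).
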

Please refer to Appendix \ref{appendix:proof} for error bounds under other choices of step-sizes and the constants $\{\varrho_i\}_i$, $A$, and $B$.
\begin{remark}
    Compared to the existing literature with finite state Markov chains, such as \cite{chen2023}, we do not need any mixing property of the Markov chain to establish the bounds. We instead assume that the solution to the Poisson equation exists, which is true in finite state Markov chains even when there is no mixing (such as under periodic behavior). This also has an additional benefit of eliminating the poly-logarithmic factors from the bounds. 
\end{remark}
\vspace{-1mm}

\subsection{SA with almost sure convergence but diverging mean square error}\label{sec:counterexample}
In this section, we argue for the necessity of the projection operator $\Pi_{\mathcal{X}}$ for the unbounded state space setting. Specifically, we will construct a counterexample where, without the projection, the iterations converge to $x^*$ almost surely but the mean square error diverges. We again highlight that although the example is taken from \cite{borkar2024}, we use a different proof technique to show divergence. In particular, instead of using a large deviation principle, we present a proof that is based on elementary arguments. 

Consider a uniformized DTMC $\{Q_k\}_{k\geq 0}$ constructed from an $M/M/1$ queue. The DTMC is defined on the state space $\mathbb{\mathcal{Y}}=\{0, 1, 2,\dots\}$. Then, the queue length $Q_k$ evolves according to the following relation:
\begin{align}\label{eq:queue_length}
    Q_{k+1}=\max(0, Q_k+D_{k+1}).
\end{align}
where $\{D_k\}_{k\geq 1}$ is a sequence of i.i.d. random variables satisfying $\mathbb{P}(D_k=1)=\lambda$ and $\mathbb{P}(D_k=-1)=1-\lambda=\nu$. For simplicity, we will assume that $Q_0=0$, i.e., the queue was started with no jobs in the system. It is well known that if $\lambda<\nu$ then DTMC $\{Q_k\}_{k\geq 0}$ is stable and admits a unique stationary distribution which we denote by $\mu$. Let $\E_{Q_k\sim \mu}[Q_k]=\bar{q}<\infty$ and define the following one-dimensional linear stochastic approximation:
\begin{align}\label{eq:counterex_iteration}
    x_{k+1}=x_k+\alpha_k\left(\left(Q_{k+1}-\bar{q}-1\right)x_k+W_k\right)
\end{align}
where $W_k\sim \mathcal{N}(0,1)$ is a i.i.d. sequence of random variables that are also independent of the $\{Q_k\}_{k\geq 0}$ process and the step-size $\alpha_k = \alpha/(k+1)^\xi$ with $\alpha\geq 1$ and $\xi\in [0, 1]$. 

Note that the Markovian noise in the algorithm is given by $Y_k\equiv Q_{k+1}$. Thus, the noisy operator $F(x_k, Q_{k+1}) = \left(Q_{k+1}-\bar{q}-1\right)x_k$ and the stationary expectation is $\bar{F}(x)=-x$. This implies that the root of equation $\bar{F}(x)=0$ is simply $x^*=0$. Consider $\Phi(x)=x^2$ as the Lyapunov function. Then, it is straightforward to verify that all Assumptions \ref{assump:linear_F}-\ref{assump:step-size} are satisfied for the SA in Eq. \eqref{eq:counterex_iteration}. We now present the following theorem to formalize the properties of the counterexample \eqref{eq:counterex_iteration}. The proof is provided in Appendix \ref{appendix:counterexample}.

\begin{theorem}\label{thm:div_ex}
    The SA recursion defined in Eq. \eqref{eq:counterex_iteration} satisfies the following:
    \begin{enumerate}[(a)]
        \item For any $\lambda$ such that $\lambda<0.5$, we have $\lim_{k\to \infty} x_k = 0$ a.s.
        \item If $\lambda\in (\exp(-\log(4)\log(3.4)), 0.5)$, then $\lim_{k\to \infty}\E[x_k^2]=\infty$.
    \end{enumerate}
\end{theorem}
\begin{remark}
    The two results presented above seem surprising since the mean square error diverges even when  the iterates converge almost surely. 
    However, such seemingly contradictory behavior of mean-square and almost sure convergence is well-known in the literature. A classical example related to the gambling and St. Petersburg paradox involves a product of random variables that exhibits similar behavior, and we present it in Appendix \ref{appendix:gamblers_ruin}.
\end{remark}

A natural question for future research is to obtain finite-time bounds that reconcile both the results in the above theorem. This can be done by establishing a concentration result of the following form. Suppose that we run the SA without projection, then for any $\delta>0$, the error satisfies a bound of the following form 
\begin{align*}
    P(\|x_k-x^*\|_c< f(k, \delta))\geq 1-\delta.
\end{align*}
To reconcile both the results in Theorem \ref{thm:div_ex}, the function $f(k,\delta)$ has the following property. There exists a threshold $\delta_0(k)$ such that,
\begin{enumerate}[(i)]
    \item $\delta_0(k)$ is summable, i.e., $\sum_{k=0}^\infty \delta_0(k)<\infty$.
    \item For any sequence $\delta_1(k)$ that is $\omega(\delta_0(k))$, $f(\cdot, \cdot)$ satisfies $\lim_{k\to \infty} f(k, \delta_1(k))= 0$, and
    \item For any sequence $\delta_2(k)$ that is $o(\delta_0(k))$, $f(\cdot, \cdot)$ satisfies $\lim_{k\to \infty} f(k, \delta_2(k))= \infty$.
\end{enumerate} 
In other words, we get a concentration result, where if our high probability guarantee is smaller than  $(1-\delta_0(k) )$, then the bound on the error $f(k, \delta)$ goes to zero. But if we are  more stringent and seek a bound with a larger probability, then the error diverges. Nevertheless, the summability of $\delta_0(k)$ ensures the a.s. convergence of $x_k$ to $x^*$ owing to Borel-Cantelli Lemma. We provide explicit expressions for $\delta_0(k)$ and $f(k, \delta)$ in the context of St. Petersburg paradox in Appendix \ref{appendix:gamblers_ruin}.

As guaranteed by Theorem \ref{thm:main_thm;finite}, if the Markovian noise is sampled from a finite or bounded state space Markov chain then divergence related issues do not exist. However, more caution is required to avoid divergence of iterates when the noise is unbounded. 

To elaborate more on this, let us consider a simple case where the multiplicative noise $\{Q_k\}_{k\geq 0}$ in Eq. \eqref{eq:counterex_iteration} is i.i.d. in nature. In this setting, independence of $\{Q_k\}_{k\geq 0}$ and $x_{k}$ helps us to establish the following one step recursion:
\begin{align*}
    \E[x_{k+1}^2] &= \E[x_k^2(1+\alpha_k(Q_{k+1}-\bar{q}-1))^2]+\alpha_k^2\E[W_k^2]\\
    &= (1-\alpha_k)^2\E[x_k^2]+\alpha_k^2\\
    &\leq (1-\alpha_k)\E[x_k^2]+\alpha_k^2
\end{align*}
which converges to $0$ as $k\to \infty$ (cf. Corollary 2.1.1 and Corollary 2.1.2 in \cite{chen2020finite}). Clearly, the above steps can also be combined with the Lyapunov function technique to non-linear multi-dimensional SA. Thus, the Markovian structure of the noise plays a crucial role in determining the boundedness or potential divergence of the SA iterates. In Appendix \ref{sec:simpler_method}, we discuss the key role set $\mathcal{S}_1$ plays in proving divergence by comparing with another possible method.

\begin{figure}
    \centering
        \begin{tikzpicture}
    % Define the constants for the plot
    \def\kone{4}
    \def\kzero{1}
    \draw[->,thick, black] (6,3.3) -- (6.6,3.3)
    node[pos=1, anchor=west, color=black] {\small{Upper Bound on $Q_l$}};

    \draw[->,thick, black] (6,2.15) -- (6.6,2.15)
    node[pos=1, anchor=west, color=black] {\small{A sample path for $Q_l\in \mathcal{S}_1$}};

    \draw[->,thick, black] (6,1.3) -- (6.6,1.3)
    node[pos=1, anchor=west, color=black] {\small{Lower Bound on $Q_l$}};

    \begin{axis}[
        legend style={
        at={(0.55,1)},         % Position below the plot
        anchor=north,            % Align the top of the legend to this point
        legend columns=2,         % Put 2 entries per row
        cells={anchor=west},
        font=\small,                 % Font size inside legend
        column sep=2pt,              % Horizontal space between columns  
    },
        xlabel={$l$},
        ylabel={$Q_l$},
        xmin=-1, xmax=8,
        ymin=-1, ymax=4,
        axis lines=middle,
        xtick={0, \kzero, \kone, 2*\kone-\kzero+0.1},
        xticklabels={$0$, $k_0$, $k_1$, $2k_1-k_0$},
        ytick={0, \kzero/2},
        yticklabels={$0$, $\bar{q}$},
        xlabel style={anchor=north east, at={(axis description cs:1.05,0.1)}},
        ylabel style={anchor=north east, at={(axis description cs:0.1,1.1)}},
        axis equal image,
    ]
        \addlegendimage{area legend, fill=blue, draw=blue, fill opacity=0.3, mark=none}
        \addlegendentry{$\mathcal{S}_1$}
        
        \addplot[
            color=blue,      % Border color if draw is not none
            fill=blue,       % Fill color
            fill opacity=0.3,
            draw=none       % No border around the fill
        ] coordinates {
            (\kone, \kone/2)          % Drop down to x=7, y=0 to close the shape at the bottom
            (2*\kone-0.5, \kone-0.25)        % Back to the starting point
            (2*\kone-0.5, \kzero/2-0.2)
            (\kone, \kone/2)
        };

        \addplot[
            color=black,  % Line color
            thick,       % Make the line thicker
            % mark=*,      % Add a solid dot at each vertex
            mark size=2pt
        ] coordinates {
            (0, 0) 
            (\kone, \kone/2) 
            (\kone+0.4, \kone/2-0.2)
            (\kone+0.8, \kone/2)
            (\kone+1.2, \kone/2+0.2)
            (\kone+1.6, \kone/2+0.4)
            (\kone+2, \kone/2+0.2)
            (\kone+2.4, \kone/2)
            (\kone+2.8, \kone/2-0.2)
            (\kone+3.2, \kone/2)
            (\kone+3.6, \kone/2+0.2)
        };

        \addplot[
            color=black,  % Line color
            thick,  
            dashed,% Make the line thicker
            % mark=*,      % Add a solid dot at each vertex
            mark size=2pt
        ] coordinates {
            (\kone, \kone/2) 
            (2*\kone - 0.5, \kzero/2-0.2)
        };
        \node[rotate=-28] at (5.4,0.95) {\small{Slope $=-1$}};

        \addplot[
            color=black,  % Line color
            dashed,       % Make the line thicker
            thick,
            % mark=*,      % Add a solid dot at each vertex
            mark size=2pt
        ] coordinates {
            (0, \kzero/2) 
            (2*\kone-0.5, \kzero/2) 
        };

        \addplot[
            color=black,  % Line color
            dashed,       % Make the line thicker
            thick,
            % mark=*,      % Add a solid dot at each vertex
            mark size=2pt
        ] coordinates {
            (0, 0) 
            (2*\kone-0.5, \kone-0.25) 
        };
        \node[rotate=28] at (5.4,3) {\small{Slope $=1$}};
    \end{axis}
\end{tikzpicture}
    \caption{$\mathcal{S}_1=\{\{Q_l\}_{l\geq 0}; Q_0=0, D_l=1~\forall 0\leq l\leq k_1\}$. For illustrative purposes, we represent $Q_l$ as a continuous piecewise linear function by linearly interpolating queue lengths between the time instants.}
    \label{fig:bad sets}
\end{figure}

Finally, we end this discussion by giving another intuitive perspective based on Figure \ref{fig:bad sets} highlighting the critical role Markovian noise plays in divergence. For a fair comparison, suppose that $\{Q_l\}_{l\geq 0}$ are independent and have geometric distribution (same as the stationary distribution of the $M/M/1$ queue). Recall from the previous analysis that we need the queue length $Q_l=\mathcal{O}(l)$ for a sustained time frame $[k_0, k]$. In the i.i.d. setting, $P(Q_l=\mathcal{O}(l))=(1-p)^{\mathcal{O}(l)}p$ which leads to the joint probability $P(Q_l=\mathcal{O}(l),k_0\leq l\leq k)=\prod_{l=k_0}^{k}(1-p)^{\mathcal{O}(l)}p=(1-p)^{\mathcal{O}(k^2)}p^{\mathcal{O}(k)}$. Observe that the exponent here has quadratic growth, making the decay rate too rapid that cannot be overpowered by any exponential growth of the random variable and therefore ensuring that the algorithm remains stable. On the other hand, when $\{Q_l\}_{l\geq 0}$ is Markovian, it retains the memory of the past states. Although the probability of climbing up to a large queue length is still small, the probability that the sample path will meander around this large value for a substantial time is relatively higher. This restricts the abrupt drop in the probability of such a sample path as seen in the i.i.d. case, allowing the exponential growth of the multiplicand to dominate and ultimately cause divergence.

\section{Policy Evaluation for Infinite State Space MDPs}\label{sec:RL}
In this section, we consider the infinite-horizon MDP which is specified by the tuple $(\mathcal{S}, \mathcal{A}, \mathcal{R}, P)$. Here, $\mathcal{S}$ is the state space which may be countably infinite, $\mathcal{A}$ is the finite action space, $\mathcal{R}:\mathcal{S}\times \mathcal{A}\to \mathbb{R}$ is the reward function, and $P:\mathcal{S}\times\mathcal{S}\times \mathcal{A}\to [0,1]$ is the transition kernel. At each time step $k=0,1,2,\dots$, the agent in state $S_k\in \mathcal{S}$ selects an action $A_k\in \mathcal{A}$ sampled from a policy $\pi(\cdot|S_k)$, receives a reward $\mathcal{R}(S_k, A_k)$, and transitions to the next state $S_{k+1}$ sampled from $P(\cdot|S_k, A_k)$.

Consider the problem of evaluating the performance of a policy $\pi$ from data generated by applying $\pi$ to the MDP. Let us denote $P_{\pi}(s'|s)=\sum_{a\in S} P(s'|s,a)\pi(a|s)$ as the transition probabilities, and $\mathcal{R}_\pi(s)=\sum_{a\in \mathcal{A}}\mathcal{R}(s,a)\pi(a|s)$ as the average reward for each state. For clarity, we will drop the subscript $\pi$ from the notation wherever it is evident. We assume that the policy $\pi$ has the following property:
\begin{assumption}\label{assump:dist_td}
    The Markov chain generated by policy $\pi$ is irreducible, aperiodic and has a unique stationary distribution given by $\mu$. Furthermore, the rewards have a finite fourth moment under $\mu$.
    \begin{align*}
        \E_{\mu}[\mathcal{R}^4(S_k,A_k)]=\hat{r}^2<\infty,
    \end{align*}
\end{assumption}
where $\E_{\mu}[\cdot]$ denotes the stationary expectation. Let $\bar{r}=\sum_{s\in \mathcal{S}}\mathcal{R}_{\pi}(s)$. Then, Assumption \ref{assump:dist_td} is sufficient for the existence of a differential value function $V^*: \mathcal{S}\to \mathbb{R}$ that satisfies the Bellman/Poisson equation $\mathcal{B}_{\pi}V^*=V^*$ \cite{derman1967}. Here  $\mathcal{B}$ is defined as:
\begin{align}\label{eq:avg_reward_bellman_eq}
    \mathcal{B}_{\pi}V(s)=\mathcal{R}_{\pi}(s)+\sum_{s'\in \mathcal{S}}P(s'|s)V(s')-\bar{r},~\forall s\in \mathcal{S}.
\end{align}
It should be noted that the average-reward Bellman equation has a unique solution only up to the additive constant. Thus, if there exists a $c\in \mathbb{R}$ such that $V'(s)=V^*(s)+c$, for all $s\in \mathcal{S}$, then $V'$ is also a solution to the Bellman equation.  

Since the state space is infinite, directly estimating $V^*$ from data samples is intractable. Hence, we will use linear function approximation (LFA) to approximate $V^*$. Denote $\psi(s)=(\psi_1(s), \psi_2(s), \dots, \psi_d(s))^T\in \mathbb{R}^d$ as the feature vector for state $s$. Consider an arbitrary indexing of the state space $\mathcal{S}=\{s_1, s_2, s_3,\dots\}$ such that $P(s_{i+1}|s_i)>0$ for all $i\geq 1$. For notational ease, we will concatenate the feature vectors $\psi(s_i)$ into a feature matrix $\Psi$ which has $d$ columns and infinite rows such that the $i$-th row of $\Psi$ corresponds to $s_i$. Let $\Lambda$ be a diagonal matrix (infinite dimensional) with diagonal entries as $\mu(s)$. Then, we have the following assumption on $\Psi$.
\begin{assumption}\label{assump:feature_mat}
    \begin{enumerate}[(a)]
        \item The columns of matrix $\Psi$ are linearly independent. More explicitly,
        \begin{align*}
            \sum_{j=1}^da_j\psi_j(s)=0,~\forall s\in \mathcal{S}\implies a_j=0,~1\leq j\leq d.
        \end{align*}
        \item The columns $\psi_j$ of $\Psi$ satisfy:
        \begin{align*}
            \|\psi_j\|_{\Lambda}^2:=\sum_{i=1}^\infty\mu(s_i)\psi_j^2(s_i)\leq \hat{\psi}^2 <\infty,~ 1\leq j\leq d.
        \end{align*}
    \end{enumerate}
\end{assumption}
\vspace{-2mm}
\begin{remark}
    The assumption of linear independence holds without loss of generality. If any columns of the matrix are linearly dependent, they can be removed without affecting the approximation accuracy. Part (b) states that the $\mu$-weighted $\ell_2$ norm of columns of $\Psi$ are bounded. Note that due to Assumption \ref{assump:dist_td}, $\mu(s)>0$ for all $s\in \mathcal{S}$, hence $\|\cdot\|_{\Lambda}$ is a valid norm. 
\end{remark}

\begin{assumption}\label{assump:stable_markov}
    For any initial state $s_0\in \mathcal{S}$ and $m\geq0$, there exist functions $f_1, f_2, f_3:\mathcal{S}\to [0,\infty)$ and a constant $\rho\in (0,1)$ satisfying the following:
    \begin{align*}
        |\E_{s_0}[\mathcal{R}(S_k, A_k)]-\E_{\mu}[\mathcal{R}_{\pi}(\tilde{S}_k)]|&\leq \rho^kf_1(s_0),\\
        \|\E_{s_0}[\psi(S_k)]-\E_{\mu}[\psi(\tilde{S}_k)]\|_2&\leq \rho^kf_1(s_0),\\
        \|\E_{s_0}[\psi(S_k)\psi(S_{k+m})^T]-\E_{\mu}[\psi(\tilde{S}_k)\psi(\tilde{S}_{k+m})^T]\|_2&\leq \rho^kf_1(s_0),\\
        \|\E_{s_0}[\psi(S_k)\mathcal{R}(S_{k+m}, A_{k+m})]-\E_{\mu}[\psi(\tilde{S}_k)\mathcal{R}_{\pi}(\tilde{S}_{k+m})]\|_2&\leq \rho^kf_1(s_0),
    \end{align*}
    where for all $k\geq 0$, $f_1(\cdot)$ satisfies: $ \E_{s_0}[f^4_1(S_k)]\leq f_2(s_0)$. Furthermore, for all $k\geq 0$, we have
    \begin{align*}
        \E_{s_0}[\|\psi(S_k)\|_2^4]\leq f_3(s_0);~~\E_{s_0}[\mathcal{R}^4(S_k, A_k)]\leq f_3(s_0).
    \end{align*}
\end{assumption}

\begin{remark}
    Note that these assumptions are always true for finite state space. For infinite-state space, they quantify the stability of the Markov chain. Overall, while these assumptions are technical, they are %One might presume that these constraints are very stringent, but they are 
    fairly mild in many practical applications of interest. For example, in stable queueing systems with downward drift, the stationary distribution is light-tailed and decreases geometrically fast with queue length. Moreover, one can show rapid convergence of $P^m(\cdot|s_0)\to \mu$ for these Markov chains \cite{stamoulis1990settling, meyn1994, tweedie1996}. Thus, for rewards and feature vectors with polynomial growth with state, these assumptions are readily satisfied. 
\end{remark}
\begin{remark}
    Although, these assumptions imply certain level of mixing in the Markov chain, it remains unclear if one can use the techniques in the existing literature for this setting. This is because these works use the finiteness of the noise to control the growth rate of the iterate by picking small enough step-size. However, this is not possible in the case of unbounded space since the noise can be arbitrarily large.
\end{remark}

\subsection{Average-Reward TD\texorpdfstring{$(\lambda)$} \ }
We now define a modified version of the Bellman operator which is essential for understanding TD$(\lambda)$. For any $\lambda\in [0,1)$, define $\mathcal{B}_{\pi}^{(\lambda)}(V)=(1-\lambda)\left(\sum_{m=0}^\infty\lambda^m\mathcal{B}_{\pi}^m(V)\right)$, where $\mathcal{B}_{\pi}^m(\cdot)$ is the $m$-step Bellman operator. It is easy to verify that $\mathcal{B}_{\pi}^{(\lambda)}$ and $\mathcal{B}_{\pi}$ have the same set of fixed points. Since we are restricting our search for the value function in the subspace spanned by $\Psi$, we instead solve the corresponding $\lambda$-weighted projected Bellman equation:
\begin{align}\label{eq:proj_td}
    \Psi\theta=\Pi_{\Lambda,\Psi}\left(\mathcal{B}_{\pi}^{(\lambda)}\Psi\theta\right),
\end{align}
where $\theta\in \mathbb{R}^d$ is the parameter variable and $\Pi_{\Lambda,\Psi}=\Psi(\Psi^T\Lambda\Psi)^{-1}\Psi^T\Lambda$ ($(\Psi^T\Lambda\Psi)^{-1}$ is a $d\times d$ matrix which well defined due to Assumption \ref{assump:feature_mat}) is the projection operator onto the column space of $\Psi$ with respect to $\|\cdot\|_{\Lambda}$. Let $E_{\Psi}$ be a subspace defined as follows:
\begin{align*}
    E_{\Psi}&=\text{span}\{\theta|\psi(s_i)^T\theta=1, \forall i\} \\
    &=\begin{cases}
        \{c\theta_e|c\in\mathbb{R}\},  &\hspace{-2mm}\text{if } \exists\theta_e\in \mathbb{R}^d \text{ and } \psi(s_i)^T\theta_e=1, \forall i\\
        \{0\}, &\text{otherwise}\\
    \end{cases}
\end{align*}
Observe that if $E_{\Psi}\neq \{0\}$, then Eq. \ref{eq:proj_td} has infinitely many solutions of the form $\{\theta^*+c\theta_e|c\in\mathbb{R}\}$, where $\theta^*$ is the solution in the orthogonal complement of $E_{\Psi}$ denoted by $E_{\Psi}^{\perp}$. As shown in Part (b) of Proposition \ref{prop:td_prop}, this solution is unique in the subspace $E_{\Psi}^{\perp}$. Let $\Pi_{2, E^{\perp}_{\Psi}}(\theta)=\theta-\theta_e\langle\theta, \theta_e\rangle/\|\theta_e\|_2^2$, which is the projection operator onto the subspace $E^{\perp}_{\Psi}$\footnote{This should not be confused with $\Pi_{\mathcal{X}}(\cdot)$ which is projection onto the compact set $\mathcal{X}$.} and $\mathcal{X}$ be an $\ell_2$-ball in $\mathbb{R}^{d+1}$ such that $[\bar{r},\theta^{*T}]^T\in \mathcal{X}$. Then, we have Algorithm \ref{algo:TD} to estimate $\bar{r}$ and $\theta^*$.
%\vspace{-2mm}

\begin{algorithm2e}[!ht]
     % \SetAlgoLined
     \SetKwInOut{Input}{Input}
     \SetKwInOut{Output}{Output}
     \Input{$\lambda \in [0,1)$,  $c_\alpha > 0$, basis functions $\{\psi_i\}^d_{i=1}$, step-size sequence $\{\alpha_k\}_{k \geq 0}$.}
     Initialize $z_{-1} = 0$, $\bar{r}_0\in \mathbb{R}$ and $\theta_0\in E^{\perp}_{\Psi}$ arbitrarily.\\
     \For{$k=0,1,\ldots$}{
     	 Observe $(S_k,\mathcal{R}(S_k, A_k),S_{k+1})$\\%, where $A_k\sim \pi(\cdot|S_k)$\\
         $\delta_k = \mathcal{R}(S_k, A_k) - \bar{r}_k + \psi(S_{k+1})^T \theta_k - \psi(S_k)^T \theta_k$\\
         $z_k = \lambda z_{k-1} + \psi(S_k)$ \\%\texttt{~~// Eligibility trace}\\
         %\texttt{~~// TD error}\\
         $\tilde{r}_{k+1} = \bar{r}_k + c_\alpha \alpha_k (\mathcal{R}(S_k, A_k) - \bar{r}_k)$\\%\texttt{~~// Average-reward estimate}\\
         $\tilde{\theta}_{k+1} = \theta_k + \alpha_k \delta_k \Pi_{2, E^{\perp}_{\Psi}}z_k$\\%\texttt{~~// Parameter Update}\\
         $[\bar{r}_{k+1}, \theta_{k+1}^T] = \Pi_{\mathcal{X}}([\tilde{r}_{k+1}, \tilde{\theta}_{k+1}^T])$%\texttt{~~// Projection}\\
        }
\caption{Average-reward TD($\lambda$) with LFA} 
\label{algo:TD}
\end{algorithm2e}
%\vspace{-3mm}
The algorithm is essentially equivalent to TD$(\lambda)$ for average-reward setting, however one difference lies in the update of the parameter $\theta_k$. Specifically, we project the eligibility trace vector $z_k$ onto $E^{\perp}_{\Psi}$ which restricts the iterates $\{\theta_k\}_{k\geq 0}$ to the subspace $E^{\perp}_{\Psi}$. This ensures that the convergent point of the algorithm unique. Finally, we use projection onto $\mathcal{X}$ to control the growth of the concatenated vector $[\bar{r}_{k+1}, \theta_{k+1}^T]$. 

\subsubsection{Properties of TD\texorpdfstring{$(\lambda)$}\ \ algorithm}
To transform Algorithm \ref{algo:TD} in the form of iteration \eqref{eq:main_rec}, we construct a process $\mathcal{M}_{Y}=\{Y_k=(S_k, A_k, S_{k+1}, z_k)\}_{k\geq 0}$ taking values in the space $\mathcal{Y}:=\mathcal{S}\times \mathcal{A}\times \mathcal{S}\times \mathbb{R}^d$. It is easy to verify that $Y_k$ forms a Markov process in the continuous unbounded state space\footnote{We would like to highlight the distinction between infinite and unbounded state space here. In the analysis of TD$(\lambda)$ for finite Markov chains, the process $Y_k$ also has infinite state space, but it is \textit{bounded}.}. Let us define $x_k:=[\bar{r}_k, \theta_k^T]^T$, then the iterations can be compactly written as:
\begin{align}\label{eq:linear_SA}
    x_{k+1}&=\Pi_{\mathcal{X}}\left(x_k+\alpha_k F(x_k, Y_k)\right),
\end{align}
where $F(x_k, Y_k)=T(Y_k)x_k+b(Y_k)$ and 
$$T(Y_k)=\begin{bmatrix}
        -c_\alpha & 0\\
        -\Pi_{2, E^{\perp}_{\Psi}}z_k & \Pi_{2, E^{\perp}_{\Psi}}z_k\left(\psi(S_{k+1})^T-\psi(S_k)^T\right)
    \end{bmatrix};~~b(Y_k)=\begin{bmatrix}
            c_\alpha\mathcal{R}(S_k, A_k)\\
            \Pi_{2, E^{\perp}_{\Psi}}\mathcal{R}(S_k, A_k)z_k
        \end{bmatrix}.$$

To consider the stationary behavior of $Y_k$, let $\{\tilde{S}_k, A_k\}_{k\geq 0}$ denote the stationary process. Then, $\tilde{z}_k:=\sum_{\ell=-\infty}^k\lambda^{k-\ell}\psi(\tilde{S}_\ell)$ and $\tilde{Y}_k=(\tilde{S}_k, A_k, \tilde{S}_{k+1}, \tilde{z}_k)$ are the stationary analogs of $z_k$ and $Y_k$, respectively. Let the stationary expectation of the matrices $T(\tilde{Y}_k)$ and $b(\tilde{Y}_k)$ be denoted by $\bar{T}$ and $\bar{b}$. Then, we have the following lemma for $\bar{T}$ and $\bar{b}$ whose proof is given in \ref{appendix:stationary_exp_Tb}. 
\begin{lemma}\label{lem:stationary_exp_Tb}
    Under Assumption \ref{assump:dist_td} and \ref{assump:feature_mat}, the stationary expectations $\bar{T}$ and $\bar{b}$ are finite and given by
    \begin{align*}%\label{eq:stationary_exp_Ab}
        \bar{T}&=\begin{bmatrix}
        -c_\alpha & 0\\
        -\frac{1}{(1-\lambda)}\Pi_{2, E^{\perp}_{\Psi}}\Psi^T\mu & \Pi_{2, E^{\perp}_{\Psi}}\Psi^T\Lambda(P^{(\lambda)}-I)\Psi
        \end{bmatrix};~~ \bar{b}=\begin{bmatrix}
            c_\alpha\bar{r}\\
            \Pi_{2, E^{\perp}_{\Psi}}\Psi^T\Lambda \mathcal{R}^{(\lambda)}
        \end{bmatrix},
    \end{align*}
    where $P^{(\lambda)}=(1-\lambda)\sum_{m=0}^\infty \lambda^mP^{m+1}$ and $\mathcal{R}^{(\lambda)}=(1-\lambda)\sum_{m=0}^\infty\lambda^m\sum_{l=0}^mP^l\mathcal{R}_{\pi}$. 
\end{lemma}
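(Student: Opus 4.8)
The idea is to compute $\bar T=\E[T(\tilde Y_k)]$ and $\bar b=\E[b(\tilde Y_k)]$ block by block, using stationarity of $\{\tilde S_k,A_k\}_{k\ge 0}$, the Markov property, and the explicit geometric form $\tilde z_k=\sum_{\ell=-\infty}^{k}\lambda^{k-\ell}\psi(\tilde S_\ell)$. The $(1,1)$ and $(1,2)$ entries of $T(Y_k)$ are the deterministic constants $-c_\alpha$ and $0$, and the first coordinate of $b(Y_k)$ is $c_\alpha\mathcal R(S_k,A_k)$ with stationary mean $c_\alpha\sum_s\mu(s)\mathcal R_\pi(s)=c_\alpha\bar r$; these settle the first rows immediately. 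Since $\Pi_{2,E_\Psi^\perp}$ is a fixed matrix it pulls out of the expectation, so everything reduces to evaluating the four expectations $\E[\tilde z_k]$, $\E[\tilde z_k\psi(\tilde S_k)^T]$, $\E[\tilde z_k\psi(\tilde S_{k+1})^T]$, and $\E[\mathcal R(\tilde S_k,A_k)\tilde z_k]$.

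The first real step is to justify interchanging the expectation with the infinite eligibility-trace sum, which is where the stability hypotheses enter. Writing $\E[\tilde z_k\psi(\tilde S_{k+m})^T]=\sum_{j\ge 0}\lambda^j\E[\psi(\tilde S_{k-j})\psi(\tilde S_{k+m})^T]$, Cauchy--Schwarz bounds the $j$-th summand in Frobenius norm by $\lambda^j\,\E\|\psi(\tilde S)\|_2^2$, and stationarity gives $\E\|\psi(\tilde S)\|_2^2=\sum_{i\ge1}\mu(s_i)\|\psi(s_i)\|_2^2=\sum_{j=1}^d\|\psi_j\|_\Lambda^2\le d\hat\psi^2<\infty$ by Assumption \ref{assump:feature_mat}(b). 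Hence the series is dominated by a convergent geometric series, $\tilde z_k$ is well defined (a.s.\ and in $L^1$), and Fubini/Tonelli applies; the same reasoning with $\E_\mu[\mathcal R^2]<\infty$ (implied by the finite fourth moment in Assumption \ref{assump:dist_td}) and Cauchy--Schwarz handles $\E[\mathcal R(\tilde S_k,A_k)\tilde z_k]$. This simultaneously establishes the finiteness claimed in the lemma.

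Once the interchange is legitimate, each expectation follows from a one-line stationarity/Markov argument. Stationarity gives $\E[\psi(\tilde S_\ell)]=\Psi^T\mu$, so $\E[\tilde z_k]=\tfrac{1}{1-\lambda}\Psi^T\mu$, the $(2,1)$ block of $\bar T$. Conditioning on the state at time $k-j$ and using the Markov property, $\E[\psi(\tilde S_{k-j})\psi(\tilde S_{k+m})^T]=\Psi^T\Lambda P^{j+m}\Psi$, while conditioning on the state at time $k$ collapses the reward to $\mathcal R_\pi$, giving $\E[\psi(\tilde S_{k-j})\mathcal R(\tilde S_k,A_k)]=\Psi^T\Lambda P^{j}\mathcal R_\pi$. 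Summing the $\lambda$-series yields $\E[\tilde z_k\psi(\tilde S_k)^T]=\Psi^T\Lambda\big(\sum_{j\ge0}\lambda^jP^j\big)\Psi$ and $\E[\tilde z_k\psi(\tilde S_{k+1})^T]=\Psi^T\Lambda\big(\sum_{j\ge0}\lambda^jP^{j+1}\big)\Psi$, so their difference is $\Psi^T\Lambda\big(\sum_{j\ge0}\lambda^j(P^{j+1}-P^j)\big)\Psi$. The generating-function identities $\sum_{j\ge0}\lambda^jP^{j+1}=\tfrac{1}{1-\lambda}P^{(\lambda)}$ and $\sum_{j\ge0}\lambda^jP^j=I+\tfrac{\lambda}{1-\lambda}P^{(\lambda)}$ then give $\sum_{j\ge0}\lambda^j(P^{j+1}-P^j)=P^{(\lambda)}-I$, producing the $(2,2)$ block $\Pi_{2,E_\Psi^\perp}\Psi^T\Lambda(P^{(\lambda)}-I)\Psi$. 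Finally $\E[\mathcal R(\tilde S_k,A_k)\tilde z_k]=\Psi^T\Lambda\big(\sum_{j\ge0}\lambda^jP^j\big)\mathcal R_\pi$, and swapping the order of summation in the definition of $\mathcal R^{(\lambda)}$ shows $(1-\lambda)\sum_{m\ge0}\lambda^m\sum_{l=0}^{m}P^l=\sum_{l\ge0}\lambda^lP^l$, so this equals $\Psi^T\Lambda\mathcal R^{(\lambda)}$, the $(2,1)$ block of $\bar b$.

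The only genuine obstacle is the interchange-of-limits step in the unbounded-state setting; once the Cauchy--Schwarz bound combined with the $\mu$-weighted norm bound (Assumption \ref{assump:feature_mat}) and the finite reward moment (Assumption \ref{assump:dist_td}) dominates the summands by a geometric series, the remainder --- the stationary/Markov reductions and the collapsing of the $\lambda$-geometric series into $P^{(\lambda)}$ and $\mathcal R^{(\lambda)}$ --- is routine algebra that I would not spell out in full.
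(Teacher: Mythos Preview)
Your proposal is correct and follows essentially the same route as the paper: both compute the blocks of $\bar T$ and $\bar b$ by evaluating the stationary expectations $\E[\tilde z_k]$, $\E[\tilde z_k\psi(\tilde S_k)^T]$, $\E[\tilde z_k\psi(\tilde S_{k+1})^T]$, $\E[\tilde z_k\mathcal R(\tilde S_k,A_k)]$ and then collapse the $\lambda$-series via the identity $\sum_{j\ge0}\lambda^j(P^{j+1}-P^j)=P^{(\lambda)}-I$. The paper packages those four stationary expectations into a separate lemma (Lemma~\ref{lem:benvan}, borrowed from \cite{tsitsiklis1997}) and omits the Fubini justification, whereas you derive them inline and make the Cauchy--Schwarz/Assumption~\ref{assump:feature_mat}(b) domination explicit; this is a presentational difference only.
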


Now using Lemma \ref{lem:stationary_exp_Tb} and Assumptions \ref{assump:dist_td}-\ref{assump:stable_markov}, we have the following Proposition for the verification of the Assumptions \ref{assump:linear_F}-\ref{assump:Poisson_eq}.   
\begin{proposition}\label{prop:td_prop}
    Under Assumptions \ref{assump:dist_td} and \ref{assump:feature_mat}, the TD($\lambda$) algorithm satisfies the following:
    \begin{enumerate}[(a)]
        \item The operator $F(x_k, Y_k)$ defined in Eq. \eqref{eq:linear_SA} has the following properties:
        \begin{enumerate}[(1)]
            \item $\|F(x_k, Y_k)\|_2\leq \|T(Y_k)\|_2\|x-x^*\|_2+\|b(Y_k)\|_2+\|T(Y_k)\|_2\|x^*\|_2$, where $\E_{y_0}[\|T(Y_k)\|^2_2]\leq \hat{T}(s_0, s_1)$ and $\E_{y_0}[\|b(Y_k)\|^2_2]\leq \hat{b}(s_0, s_1)$.
            \item Define $\bar{F}(x)= \E_{\tilde{Y}\sim \mu}[F(x, \tilde{Y})]$. Then, $\bar{F}(x)$ exists and is given by $\bar{F}(x)=\bar{T}x+\bar{b}$.
            \item There exists a unique $\theta^*\in E_{\Psi}^{\perp}$ such that $x^*=(\bar{r}, \theta^{*T})^T$ solves $\bar{T}x+\bar{b}=0$. Furthermore, it is also one of the solutions to the Projected-Bellman equation $\mathcal{B}_{\pi}^{(\lambda)}(\Psi\theta)=\Psi\theta$. \end{enumerate}
        \item There exists a solution to the Poisson equation \eqref{eq:Poisson_eq} for the Markov chain $\mathcal{M}_Y$ which satisfies Assumption \ref{assump:Poisson_eq} with $\hat{A}_2^2(y_0)=9\hat{g}(s_0, s_1)$ and $\hat{B}_2^2(y_0)=2(\|x^*\|_2^2+1)\hat{g}(s_0, s_1)+8c^2_\alpha\sqrt{f_2(s_0)+f_2(s_1)}/(1-\rho)^2$.
    \end{enumerate}
\end{proposition}
Refer to Appendix \ref{appendix:td_prop} for proof and the constants $\hat{T}(s_0, s_1)$, $\hat{b}(s_0, s_1)$, and $\hat{g}(s_0, s_1)$.

\subsubsection{Finite Sample Bounds for TD\texorpdfstring{$(\lambda)$}\ }
We pick $\Phi(x_k-x^*)=(\bar{r}_k-\bar{r})^2+\|\theta_k-\theta^*\|_2^2$ as our Lyapunov function. A key insight in \cite{zhang2021} that established the negative drift was the observation that for any function in the set $\{V|\sum_{s\in \mathcal{S}}V(s)=0, \sum_{s\in \mathcal{S}}
V^2(s)=1\}$, there exists a $\Delta>0$ such that 
\begin{align*}
    V^T\Lambda(I-P^{(\lambda)})V\geq \Delta.
\end{align*}
However, in general such an inequality is not true for infinite state space, as explained in Appendix \ref{appendix:challenge}.

To establish a similar drift condition in our setting, we leverage the fact that the matrix $\Psi$ has a finite number of linearly independent columns. This effectively restricts the value function to a finite-dimensional subspace, allowing us to prove the following lemma. For proof please refer to Appendix \ref{appendix:contraction_td}.

\begin{lemma}\label{lem:contraction_td}
    Under Assumption \ref{assump:dist_td} and  \ref{assump:feature_mat}, we have
    \begin{align*}%\label{eq:kernel_contraction}
        \Delta:=\min_{\theta\in E^{\perp}_{\Psi},\|\theta\|_2=1}\theta^T\Pi_{2, E^{\perp}_{\Psi}}\Psi^T\Lambda(I-P^{(\lambda)})\Psi\theta>0.
    \end{align*}
    Furthermore, when $c_\alpha\geq \Delta+\sqrt{\frac{d^2\hat{\psi}^4}{\Delta^2(1-\lambda)^4}-\frac{d\hat{\psi}^2}{(1-\lambda)^2}}$, we have $-x^T\bar{T}x\geq \frac{\Delta}{2}\|x\|_2^2$ for all $x\in \mathbb{R}\times E^{\perp}_{\Psi}$.
\end{lemma}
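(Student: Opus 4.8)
The plan is to split the lemma into its two halves. For the first claim, that $\Delta>0$, the key observation is that $E^{\perp}_{\Psi}$ is a \emph{finite}-dimensional subspace and the unit sphere $\{\theta\in E^{\perp}_{\Psi}:\|\theta\|_2=1\}$ is therefore compact. Since $\theta\mapsto \theta^T\Pi_{2, E^{\perp}_{\Psi}}\Psi^T\Lambda(I-P^{(\lambda)})\Psi\theta$ is continuous, the minimum $\Delta$ is attained at some $\theta_0$, so it suffices to show the quadratic form is strictly positive for every nonzero $\theta\in E^{\perp}_{\Psi}$. First I would note that $V:=\Psi\theta$ is a nonzero element of the column space of $\Psi$ (nonzero because the columns are linearly independent by Assumption \ref{assump:feature_mat}(a) and $\theta\neq 0$), so it suffices to prove $V^T\Lambda(I-P^{(\lambda)})V>0$ for such $V$. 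Here I would reuse the contraction property of $P^{(\lambda)}$ in the $\|\cdot\|_{\Lambda}$-norm — namely that $P$, and hence $P^{(\lambda)}=(1-\lambda)\sum_{m\ge0}\lambda^m P^{m+1}$, is a non-expansion with respect to $\|\cdot\|_\Lambda$ and is a strict contraction on the subspace orthogonal (in $\Lambda$) to the all-ones vector. The only way $V^T\Lambda(I-P^{(\lambda)})V=0$ is if $V$ is constant, i.e.\ $V=c\mathbf{1}$; but $V=\Psi\theta=c\mathbf{1}$ with $c\neq0$ forces $\theta\in E_{\Psi}$, contradicting $\theta\in E^{\perp}_{\Psi}\setminus\{0\}$, and $c=0$ contradicts $V\neq0$. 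Hence the form is strictly positive on the sphere and $\Delta>0$. (One subtlety: because $\Pi_{2,E^{\perp}_{\Psi}}$ only acts on the $\theta$-block and $\theta$ already lies in $E^{\perp}_{\Psi}$, it acts as the identity there, so $\theta^T\Pi_{2,E^{\perp}_{\Psi}}\Psi^T\Lambda(I-P^{(\lambda)})\Psi\theta=\theta^T\Psi^T\Lambda(I-P^{(\lambda)})\Psi\theta=V^T\Lambda(I-P^{(\lambda)})V$, which is what ties the two expressions together.)

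For the second claim I would write $x=(r,\theta^T)^T$ with $\theta\in E^{\perp}_{\Psi}$ and expand the quadratic form $-x^T\bar{T}x$ using the block structure of $\bar{T}$ from Lemma \ref{lem:stationary_exp_Tb}. The diagonal blocks contribute $c_\alpha r^2$ and $\theta^T\Pi_{2,E^{\perp}_{\Psi}}\Psi^T\Lambda(I-P^{(\lambda)})\Psi\theta\ge\Delta\|\theta\|_2^2$ by the first part. The off-diagonal block contributes a cross term of the form $\tfrac{1}{1-\lambda}\,r\,\theta^T\Pi_{2,E^{\perp}_{\Psi}}\Psi^T\mu$, which I would bound below using Cauchy--Schwarz: its magnitude is at most $\tfrac{1}{1-\lambda}\|\Psi^T\mu\|_2\,|r|\,\|\theta\|_2$, and $\|\Psi^T\mu\|_2\le \sqrt{d}\,\hat\psi$ follows from Assumption \ref{assump:feature_mat}(b) since each coordinate $\sum_i\mu(s_i)\psi_j(s_i)\le(\sum_i\mu(s_i)\psi_j^2(s_i))^{1/2}\le\hat\psi$ by Jensen/Cauchy--Schwarz. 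So we get
\begin{align*}
-x^T\bar{T}x \ge c_\alpha r^2 + \Delta\|\theta\|_2^2 - \frac{\sqrt{d}\,\hat\psi}{1-\lambda}\,|r|\,\|\theta\|_2.
\end{align*}
This is a quadratic form in $(|r|,\|\theta\|_2)$ with matrix $\begin{bmatrix}c_\alpha & -\tfrac{\sqrt d\hat\psi}{2(1-\lambda)}\\ -\tfrac{\sqrt d\hat\psi}{2(1-\lambda)} & \Delta\end{bmatrix}$, and I want it to dominate $\tfrac{\Delta}{2}(r^2+\|\theta\|_2^2)$. Subtracting $\tfrac{\Delta}{2}I$, the requirement becomes that $\begin{bmatrix}c_\alpha-\Delta/2 & -\tfrac{\sqrt d\hat\psi}{2(1-\lambda)}\\ -\tfrac{\sqrt d\hat\psi}{2(1-\lambda)} & \Delta/2\end{bmatrix}\succeq0$, i.e.\ $(c_\alpha-\Delta/2)(\Delta/2)\ge \tfrac{d\hat\psi^2}{4(1-\lambda)^2}$, which rearranges to $c_\alpha\ge \tfrac{\Delta}{2}+\tfrac{d\hat\psi^2}{2\Delta(1-\lambda)^2}$. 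Matching this against the stated condition $c_\alpha\ge\Delta+\sqrt{\tfrac{d^2\hat\psi^4}{\Delta^2(1-\lambda)^4}-\tfrac{d\hat\psi^2}{(1-\lambda)^2}}$ is then a routine (if slightly tedious) algebraic verification that the stated threshold is at least as large as the one derived; I would present that comparison but not dwell on it.

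I expect the main obstacle to be the first part — establishing $\Delta>0$ — because it is where the genuinely new idea lives: in the infinite-state setting the uniform bound $V^T\Lambda(I-P^{(\lambda)})V\ge\Delta\|V\|_\Lambda^2$ over \emph{all} zero-mean unit-norm $V$ fails (this is precisely the gap flagged in Appendix \ref{appendix:challenge}), so the argument must exploit that $V$ is confined to the $d$-dimensional column space of $\Psi$ to recover compactness and hence a strictly positive minimum. The care needed is to (i) argue the form is strictly positive pointwise using the non-expansiveness/contraction structure of $P^{(\lambda)}$ together with the characterization of its fixed directions (constants), and (ii) rule out the degenerate case $V=\Psi\theta$ constant by invoking the definition of $E_{\Psi}$ and $\theta\in E^{\perp}_{\Psi}\setminus\{0\}$. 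The rest is linear algebra with the explicit block matrix and Cauchy--Schwarz, which should go through without incident.
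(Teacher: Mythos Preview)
Your plan is correct and follows essentially the same route as the paper: compactness of the unit sphere in the finite-dimensional subspace $E_\Psi^\perp$ plus pointwise strict positivity for Part 1, and the block expansion of $-x^T\bar T x$ with Cauchy--Schwarz on the cross term for Part 2. Your $2\times 2$ PSD computation in Part 2 is in fact the same calculation the paper does via the substitution $z=r^2$; both yield the tight threshold $c_\alpha\ge \tfrac{\Delta}{2}+\tfrac{d\hat\psi^2}{2\Delta(1-\lambda)^2}$, and the stated threshold in the lemma is simply a (larger) sufficient condition.

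One caution on Part 1: the sentence ``$P^{(\lambda)}$ is a strict contraction on the subspace orthogonal (in $\Lambda$) to the all-ones vector'' is precisely the statement that \emph{fails} in infinite state space---there is no uniform contraction factor, which is the whole point of Appendix \ref{appendix:challenge} and which you yourself flag in your obstacles paragraph. So you cannot invoke it as a black box. What survives, and what the paper uses, is the weaker pointwise fact: the Dirichlet-form identity
\[
V^T\Lambda(I-P)V \;=\; \tfrac12\,\E_\mu\big[(V(S_{k+1})-V(S_k))^2\big]
\]
is nonnegative, and equals zero iff $V$ is constant along every transition with positive probability, hence (by irreducibility) iff $V$ is constant. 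Combined with your observation that $\Psi\theta$ constant forces $\theta\in E_\Psi$, this gives strict positivity for each nonzero $\theta\in E_\Psi^\perp$, after which your compactness argument closes the proof. The paper makes this explicit via its auxiliary Lemmas \ref{lem:strict_drift}--\ref{lem:strict_neg}; your version will go through once you replace the ``strict contraction'' phrasing with the Dirichlet identity.
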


Since $\Phi(x)$ is $\ell_2$-norm squared and has a negative drift, Assumption \ref{assump:Lyapunov_fn} is also verified. With all the assumptions now satisfied, we can apply Theorem \ref{thm:main_thm;bounded} to get the following sample complexity for TD($\lambda$).

\begin{theorem}\label{thm:td}
    Consider the iterates $\{\theta_k, \bar{r}_k\}_{k\geq 0}$ generated by Algorithm \ref{algo:TD} under Assumption \ref{assump:dist_td}-\ref{assump:stable_markov} and $c_\alpha\geq \Delta+\sqrt{\frac{d^2\hat{\psi}^4}{\Delta^2(1-\lambda)^4}-\frac{d\hat{\psi}^2}{(1-\lambda)^2}}$.
    \begin{enumerate}[(a)]
        \item When $\alpha_k\equiv\alpha\leq 1$, then for all $k\geq 0$:
       \begin{align*}
           \E[(\bar{r}_{k+1}-\bar{r})^2+\|\theta_{k+1}-\theta^*\|_2^2]\leq \varphi_{V,0}\exp\left(-\frac{\Delta\alpha k}{2}\right)+3\hat{C}_{V}(s_0, s_1)\alpha+\frac{12\hat{C}_{V}(s_0, s_1)\alpha}{\Delta}.
       \end{align*}
       \item 
       When $\xi=1$, $\alpha> 1/\Delta$ and $K\geq \max\{\alpha, 2\}$, then for all $k\geq 0$:
       \begin{align*}
           \E[(\bar{r}_{k+1}-\bar{r})^2+\|\theta_{k+1}-\theta^*\|_2^2]\leq \varphi_{V,0}\left(\frac{K}{k+K}\right)^{\frac{\Delta\alpha}{2}}+\frac{\hat{C}_V(s_0, s_1)\alpha}{k+K}+\frac{4(6+2\Delta)\hat{C}_V(s_0, s_1)e\alpha^2}{\left(\frac{\Delta\alpha}{2}-1\right)(k+K)}.
       \end{align*}
    \end{enumerate}
\end{theorem}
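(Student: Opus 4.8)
The plan is to derive Theorem~\ref{thm:td} as a direct corollary of Theorem~\ref{thm:main_thm;bounded} applied to the linear stochastic approximation recursion \eqref{eq:linear_SA}. The bulk of the work is bookkeeping: checking that every hypothesis of the general-purpose theorem has been verified for the TD$(\lambda)$ setting and then tracking the instantiated constants. First I would recall that Proposition~\ref{prop:td_prop} already establishes Assumptions~\ref{assump:linear_F}--\ref{assump:Poisson_eq} (with $M_k\equiv 0$, so Assumption~\ref{assump:linear_M} holds trivially with $A_3=B_3=0$), giving the explicit moment bounds $\hat A_1,\hat B_1$ in terms of $\hat T(s_0,s_1),\hat b(s_0,s_1)$ and $\hat A_2^2(y_0)=9\hat g(s_0,s_1)$, $\hat B_2^2(y_0)$ as stated there. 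Next, Lemma~\ref{lem:contraction_td} supplies the negative-drift ingredient: under the stated lower bound on $c_\alpha$ we have $-x^T\bar T x\geq \tfrac{\Delta}{2}\|x\|_2^2$ on $\mathbb{R}\times E_\Psi^\perp$, and since the iterates stay in this subspace (the $\Pi_{2,E_\Psi^\perp}$ projection in the update) and $\bar F(x)=\bar T(x-x^*)$ (because $\bar T x^*+\bar b=0$), taking $\Phi(x)=(\bar r-\bar r^*)^2+\|\theta-\theta^*\|_2^2$ gives $\langle\nabla\Phi(x-x^*),\bar F(x)\rangle = 2(x-x^*)^T\bar T(x-x^*)\leq -\Delta\Phi(x-x^*)$, so Eq.~\eqref{assump:neg_drift} holds with $\eta=\Delta$. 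The remaining parts of Assumption~\ref{assump:Lyapunov_fn}: smoothness \eqref{assump:smoothness} holds with $L_s=2$ and $\|\cdot\|_s=\|\cdot\|_2$, the equivalence \eqref{assump:equivalence_rel} holds with $l=u=1$ taking $\|\cdot\|_c=\|\cdot\|_2$, and nonexpansivity \eqref{assump:non-expansive} holds because $\Pi_{\mathcal{X}}$ is $\ell_2$-projection onto a ball containing $x^*$.

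Having checked all hypotheses, I would then simply invoke Theorem~\ref{thm:main_thm;bounded}, specializing $\eta\to\Delta$. Part (a) of that theorem with $\alpha\leq 1$ gives the constant-step bound $\E[\|x_{k+1}-x^*\|_2^2]\leq \varphi_0\exp(-\Delta\alpha k)+3\varphi_1\hat C(y_0)\alpha+\tfrac{6\varphi_1\hat C(y_0)\alpha}{\Delta}$; re-indexing $k+1\to k$ and absorbing factors appropriately, and defining $\varphi_{V,0}$ from $\varphi_0$ and $\hat C_V(s_0,s_1)$ from $\varphi_1\hat C(y_0)$ with the TD-specific moment constants substituted, yields part (a) of Theorem~\ref{thm:td} — note the exponent becomes $\Delta\alpha k/2$ and the trailing coefficient $12/\Delta$ rather than $6/\Delta$, which I would account for by checking whether the general theorem's statement is applied after a Cauchy--Schwarz or AM--GM step that halves the effective drift (as in the bounded-state Theorem~\ref{thm:main_thm;finite}(a)), or whether it comes from the $\|x^*\|_2$ cross-term in Proposition~\ref{prop:td_prop}(a)(1) inflating $\hat C(y_0)$. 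Part (b) follows identically from Theorem~\ref{thm:main_thm;bounded}(b) with $\xi=1$, the condition $\alpha>2/\Delta$ (matching the halved drift $\eta\alpha/2 = \Delta\alpha/2 >1$), $K\geq\max\{\alpha,2\}$, giving the $\mathcal{O}(1/k)$ rate with the displayed constants.

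The main obstacle — and the only genuinely non-mechanical part — is the precise derivation of the instantiated constants $\varphi_{V,0}$ and $\hat C_V(s_0,s_1)$ and reconciling the factor-of-two discrepancies between the generic bound ($\exp(-\eta\alpha k)$, coefficient $6/\eta$) and the TD bound ($\exp(-\Delta\alpha k/2)$, coefficient $12/\Delta$). This requires pinning down whether the effective negative drift seen by Theorem~\ref{thm:main_thm;bounded} is $\Delta$ or $\Delta/2$ after the operator $F$ is decomposed into its mean part plus Poisson-equation noise; the likely resolution is that although Lemma~\ref{lem:contraction_td} gives drift $\Delta$ for $\bar T$, the general theorem's internal analysis loses a factor of two when combining the drift term with the second-order (smoothness) term and the Markov-noise correction, exactly as reflected in Theorem~\ref{thm:main_thm;finite}'s $\eta/2$ exponents. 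I would therefore state Theorem~\ref{thm:td} as the specialization of Theorem~\ref{thm:main_thm;bounded} with the understanding that the relevant effective rate constant is $\Delta/2$, and collect $\varphi_{V,0}$, $\hat C_V(s_0,s_1)$ by substituting $\hat A_1^2,\hat B_1^2,\hat A_2^2=9\hat g,\hat B_2^2$ from Proposition~\ref{prop:td_prop} (together with $\|x^*\|_2$ and the $f_2,f_3$ bounds from Assumption~\ref{assump:stable_markov}) into the definitions of $\varphi_0,\varphi_1,\hat C(y_0)$ given in Appendix~\ref{appendix:proof}, deferring the full arithmetic to the appendix.
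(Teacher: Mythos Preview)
Your approach is exactly the paper's: verify Assumptions~\ref{assump:linear_F}--\ref{assump:Lyapunov_fn} via Proposition~\ref{prop:td_prop} and Lemma~\ref{lem:contraction_td}, then read off the bounds from Theorem~\ref{thm:main_thm;bounded} with the $\ell_2$ Lyapunov function. The paper's proof in Appendix~\ref{appendix:td_thm} is nothing more than this constant-tracking, setting $A_3=B_3=0$, $\|\cdot\|_c=\|\cdot\|_s=\|\cdot\|_2$, $\varphi_1=1$, and defining $\hat C_V(s_0,s_1):=\hat C(y_0)$.

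Your one point of confusion---the factor-of-two mismatch---has a simpler resolution than any you propose. Theorem~\ref{thm:main_thm;bounded} does \emph{not} internally halve the drift (that is only Theorem~\ref{thm:main_thm;finite}); nor does it come from cross-terms in $\hat C$. The paper simply takes $\eta=\Delta/2$, reading the drift constant straight off Lemma~\ref{lem:contraction_td}'s conclusion $-x^T\bar T x\ge \tfrac{\Delta}{2}\|x\|_2^2$ without multiplying by the $2$ from $\nabla\Phi(z)=2z$. Substituting $\eta=\Delta/2$ into Theorem~\ref{thm:main_thm;bounded}(a)--(b) reproduces every displayed constant in Theorem~\ref{thm:td} exactly: the exponent $\eta\alpha k\to\Delta\alpha k/2$, the condition $\alpha>1/\eta\to\alpha>2/\Delta$, and the coefficient $6/\eta\to12/\Delta$. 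Your computation that $\eta=\Delta$ actually works is correct and would yield a tighter statement; the paper is just being conservative (or slightly careless).
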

Refer to Appendix \ref{appendix:td_thm} for rate of convergence for the constants $\varphi_{V,0}$ and $\hat{C}_V(s_0, s_1)$.
\begin{remark}
    It is evident from the above bound that to find a pair $(r, \theta)$ such that $\E[|r-\bar{r}|]\leq \epsilon$ and $\E[\|\theta-\theta^*\|_2]\leq \epsilon$, one needs at most $\mathcal{O}\left(1/\epsilon^2\right)$ number of samples.
\end{remark}

Remarkably, one can show the convergence of Algorithm \ref{algo:TD} even without using $\Pi_{\mathcal{X}}(\cdot)$ in the final step. Recall that due to projection of the iterates on $E_{\Psi}^{\perp}$, the fixed point $\theta^*$ is unique. Thus, with an additional assumption on Markov chain, we can apply the general result on SA from \cite{benveniste2012} (Theorem 17, Page 239) to show a.s. convergence.

\begin{theorem}\label{lem:avg_TD_ASconvergence}
    Suppose that in addition to Assumptions \ref{assump:dist_td}-\ref{assump:stable_markov}, we have $\E_{s_0}[f^q_1(S_k)]\leq f_1^q(s_0)$ for all $q> 0$. Then, the Algorithm \ref{algo:TD} a.s. converges to $(\bar{r},\theta^*)$.
\end{theorem}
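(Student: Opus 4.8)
The plan is to drop the projection step $\Pi_{\mathcal{X}}(\cdot)$ from Algorithm~\ref{algo:TD}, observe that the iterates $x_k=[\bar r_k,\theta_k^T]^T$ then stay in the \emph{finite-dimensional} subspace $\mathbb{R}\times E_\Psi^\perp$ (since $\theta_0\in E_\Psi^\perp$ and every increment is a multiple of $\Pi_{2,E_\Psi^\perp}z_k\in E_\Psi^\perp$), rewrite the recursion in the canonical form $x_{k+1}=x_k+\alpha_k F(x_k,Y_k)$ with $F(x,y)=T(y)x+b(y)$, $\bar F(x)=\bar Tx+\bar b$, and $Y_k=(S_k,A_k,S_{k+1},z_k)$ the Markov process already introduced, and then verify the hypotheses of the general a.s.-convergence theorem for stochastic approximation with Markov dynamics in~\cite{benveniste2012} (Theorem~17, p.~239) and invoke it.

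The hypotheses to check are: (i) existence of a solution $V_x$ to the Poisson equation~\eqref{eq:Poisson_eq} together with the Lipschitz-in-$x$ and growth bounds in Assumption~\ref{assump:Poisson_eq}(c) — this is exactly Proposition~\ref{prop:td_prop}(b); (ii) moment bounds along the chain for $A_1(\cdot),A_2(\cdot),B_1(\cdot),B_2(\cdot)$, which Assumption~\ref{assump:stable_markov} supplies up to the fourth moment and which the extra hypothesis $\E_{s_0}[f_1^q(S_k)]\le f_1^q(s_0)$ for all $q>0$ upgrades to all orders — this stronger control is precisely what classical a.s.-convergence statements need, in contrast to the finite-time $L^2$ bounds of Theorem~\ref{thm:main_thm;bounded}; (iii) the step-size conditions $\sum_k\alpha_k=\infty$ and $\sum_k\alpha_k^{1+\beta}<\infty$ for some $\beta>0$, which hold under Assumption~\ref{assump:step-size} whenever $\xi\in(1/2,1]$; and (iv) the mean-field ODE $\dot x=\bar F(x)=\bar Tx+\bar b$ has a globally asymptotically stable equilibrium. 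For (iv), by Proposition~\ref{prop:td_prop}(a)(3) the point $x^*=-\bar T^{-1}\bar b=[\bar r,\theta^{*T}]^T$ is the unique equilibrium in $\mathbb{R}\times E_\Psi^\perp$, and Lemma~\ref{lem:contraction_td} gives $-x^T\bar Tx\ge\frac{\Delta}{2}\|x\|_2^2$ on $\mathbb{R}\times E_\Psi^\perp$ under the stated lower bound on $c_\alpha$; hence $W(x)=\|x-x^*\|_2^2$ obeys $\dot W=2(x-x^*)^T\bar T(x-x^*)\le-\Delta W$, so $x^*$ is globally exponentially stable.

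The main obstacle is establishing that the \emph{un-projected} iterates are almost surely bounded, i.e.\ the stability of the SA, since Theorem~17 of~\cite{benveniste2012} is essentially a local statement that presumes the iterates return infinitely often to a compact set (or requires stability as a separate hypothesis). I would exploit the linear structure together with the strong negative drift on the subspace: the vector field is affine and the "field at infinity'' is $x\mapsto\bar Tx$, whose ODE $\dot x=\bar Tx$ is globally exponentially stable by the same $W$-argument, so a scaled-ODE (Borkar--Meyn-type) argument yields a.s.\ boundedness of $\{x_k\}$. Alternatively, one can bound $\E[\|x_k-x^*\|_2^2]$ uniformly in $k$ by repeating the drift computation behind Theorem~\ref{thm:main_thm;bounded} \emph{without} using nonexpansivity~\eqref{assump:non-expansive} — the argument goes through on $\mathbb{R}\times E_\Psi^\perp$ because $\bar T$ is already a strict contraction there — and then convert this uniform $L^2$ bound into a.s.\ boundedness via a Robbins--Siegmund/supermartingale argument. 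Once boundedness is in hand, Theorem~17 of~\cite{benveniste2012} applies and gives $x_k\to x^*$ almost surely, i.e.\ $(\bar r_k,\theta_k)\to(\bar r,\theta^*)$ a.s.
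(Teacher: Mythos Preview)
Your proposal is correct and mirrors the paper's own argument: the paper's proof is nothing more than the observation that, once $\Pi_{\mathcal{X}}$ is dropped, the iterates live in $\mathbb{R}\times E_\Psi^\perp$ where the fixed point is unique (Proposition~\ref{prop:td_prop}(a)(3) and Lemma~\ref{lem:contraction_td}), and then a direct appeal to Theorem~17, p.~239 of~\cite{benveniste2012}. Your enumeration of the hypotheses (Poisson-equation solution via Proposition~\ref{prop:td_prop}(b), the all-moments condition supplied by the extra assumption $\E_{s_0}[f_1^q(S_k)]\le f_1^q(s_0)$, step-size summability, and global asymptotic stability of $\dot x=\bar Tx+\bar b$ on $\mathbb{R}\times E_\Psi^\perp$) is exactly what that citation requires, and the paper does not elaborate further.
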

\begin{remark}
    Previous works showed a.s. convergence when $E_{\Psi}=\{0\}$ since the limit point is unique in this case. However, by utilizing the uniqueness of solution in $E_{\Psi}^{\perp}$ and the ease of projection in such a space, we eliminate any such assumptions on $\Psi$. As highlighted before, one does not need the final projection onto the bounded set for asymptotic convergence.
\end{remark}

\subsection{Approximation Error due to LFA}
Now we will provide a bound on the approximation error of the limiting weights $\theta^*$. In contrast to discounted setting, which admits a unique solution, the average-reward Bellman equation has an infinite number of solutions since any constant function added to one of the solutions also satisfies the Bellman equation. To handle this anomaly, we will slightly modify the definition of error in the following sense. Fix $V^*$ as the unique solution of Eq. \eqref{eq:avg_reward_bellman_eq} such that $\E_{\mu}[V^*(S)]=0$. Then, the approximation error for any function in the feature space $\Psi\theta$ for $\theta\in \mathbb{R}^d$ is obtained by adding a constant $c$ to each element of $V^*$ and choosing the least possible error from this shift. More specifically, with slight abuse of notation, let $e$ be an infinite-dimensional vector with all the entries equal to 1. Then, the approximation error of $\theta$ is defined as the following:
\begin{align*}
    \inf_{c\in\mathbb{R}}\sum_{i=1}^\infty\mu(s_i)(\psi(s_i)^T\theta-c-V^*(s_i))^2=\inf_{c\in\mathbb{R}}\|\Psi\theta-ce-V^*\|^2_{\Lambda}=\|\Psi\theta-(\mu^T\Psi\theta) e-V^*\|^2_{\Lambda}.
\end{align*}
where for the last inequality we used the fact that the minimum is obtained at $c^*=\E_{\mu}[\psi(S)^T\theta-V^*]=\E_{\mu}[\psi(S)^T\theta]$. Note that for any function $x: \mathcal{S}\to \mathbb{R}$, we have $x-\mu^Txe=(I-e\mu^T)x$. Thus, we can define $\bar{\Psi}=(I-e\mu^T)\Psi$ and write the above equation as
\begin{align}\label{eq:error-def}
    p(\Psi\theta-V^*):=\sqrt{\inf_{c\in\mathbb{R}}\sum_{i=1}^\infty\mu(s_i)(\psi(s_i)^T\theta-c-V^*(s_i))^2}=\|\bar{\Psi}\theta-V^*\|_{\Lambda}.
\end{align}
Define $\bar{\Pi}_{\Lambda,\Psi}=\bar{\Psi}(\bar{\Psi}^T\Lambda\bar{\Psi})^{-1}\bar{\Psi}^T\Lambda$. Then, it can be verified that the solution $\theta^*\in E_{\Psi}^{\perp}$ satisfies $\bar{\Pi}_{\Lambda,\Psi}(\mathcal{B}_{\pi}^{(\lambda)}\bar{\Psi}\theta^*)=\bar{\Psi}\theta^*$ \cite[Section 4]{tsitsiklis1999}.
Since the approximation error is now expressed in terms of $\bar{\Psi}$, and we know that $\theta^*$ also satisfies the modified Bellman equation with $\bar{\Psi}$ as the new feature matrix, we will work with $\bar{\Pi}_{\Lambda,\Psi}$ instead of $\Pi_{\Lambda,\Psi}$ for easier analysis. 

Next, recall that in the discounted setting, the contraction property of the Bellman operator plays a crucial role in obtaining the approximation error \cite[Lemma 6]{tsitsiklis1997}. In contrast, no such discount factor exists in the average-reward case and as a result, we need to use the mixing property of the Markov kernel $P$ to get the desired contraction and eventually obtain the upper bound on the approximation error. Recall that the Markov kernel associated with the $\lambda$-weighted projected Bellman equation is given by $P^{(\lambda)}$ in Lemma \ref{lem:stationary_exp_Tb}. Thus, we will assume the following about the mixing property of $P^{(\lambda)}$. 
\begin{assumption}\label{assump:mixing-factor}
    There exists a constant $\nu_{\lambda}$ such that for all $\lambda$  the following relation holds true 
    \begin{align*}
        \nu_{\lambda}=1-\max_{\substack{\mu^Tx=0}}
        \frac{x^T\Lambda P^{(\lambda)}x}{x^T\Lambda x}>0.
    \end{align*}
\end{assumption}

Finally, following the idea in \cite{tsitsiklis1999}, one can obtain a much tighter approximation bound by using the following simple trick. Let $\delta \in (0,1]$ and define a lazy Markov chain as $P_{\delta}=(1-\delta)I+\delta P$. Define $\mathcal{R}^{(\delta)}_{\pi}(s)=\delta \mathcal{R}_{\pi}(s)$ for all $s\in \mathcal{S}$, then note that $V^*$ is also the solution to the following auxiliary Bellman equation:
\begin{align}\label{eq:avg-bellman-op}
    \mathcal{B}^{(\delta)}_{\pi}V(s)=\mathcal{R}^{(\delta)}_{\pi}(s)+\sum_{s'\in \mathcal{S}}P_{\delta}(s'|s)V^*(s')-\delta \bar{r}=V^*(s).
\end{align}
Furthermore, using the linearity of the projection operator, it is easy to show that $\theta^*$ also satisfies the corresponding projected Bellman equation:
\begin{align*}
    \bar{\Pi}_{\Lambda,\Psi}\left(\mathcal{B}^{(\delta)}_{\pi}\bar{\Psi}\theta^*\right)&=\bar{\Pi}_{\Lambda,\Psi}\left(\mathcal{R}^{(\delta)}_{\pi}(s)+\sum_{s'\in \mathcal{S}}P_{\delta}(s'|s)\bar{\Psi}\theta^*-\delta \bar{r}\right)\\
    &=\bar{\Pi}_{\Lambda,\Psi}\left(\delta\mathcal{R}_{\pi}(s)+\delta\sum_{s'\in \mathcal{S}}P(s'|s)\bar{\Psi}\theta^*-\delta \bar{r}\right)+(1-\delta)\bar{\Pi}_{\Lambda,\Psi}\left(\bar{\Psi}\theta^*\right)\\
    &=\delta\bar{\Pi}_{\Lambda,\Psi}\left(\mathcal{B}_{\pi}\bar{\Psi}\theta^*\right)+(1-\delta)\bar{\Psi}\theta^*\\
    &=\bar{\Psi}\theta^*.
\end{align*}
% Similarly, we can show that $\theta^*$ is one of the solutions to the $\lambda$-weighted projected Bellman equation. 
To extend the above idea to the $\lambda$-weighted projected Bellman operator, we define $P_{\delta}^{(\lambda)}=(1-\delta)I+\delta P^{(\lambda)}$ and $\mathcal{B}_{\pi}^{(\lambda, \delta)}=(1-\delta)+\delta \mathcal{B}_{\pi}^{(\lambda)}$, and use similar steps as before to show that $\bar{\Pi}_{\Lambda,\Psi}(\mathcal{B}^{(\lambda, \delta)}_{\pi}\bar{\Psi}\theta^*)=\bar{\Psi}\theta^*$. Clearly, now we can leverage the freedom of choosing $\delta$ and get the tightest possible contraction factor by optimizing over it. Specifically, we define the contraction factor as
\begin{align*}
    \gamma_{\lambda}=\inf_{\delta} \sup_{x} \frac{\|\bar{\Pi}_{\Lambda,\Psi}(P_{\delta}^{(\lambda)}x)\|_{\Lambda}}{\|x\|_{\Lambda}}=\inf_{\delta}\|\bar{\Pi}_{\Lambda,\Psi}P_{\delta}^{(\lambda)}\|_{\Lambda}.
\end{align*}
where $\|\cdot\|_{\Lambda}$ is the operator norm. Now, we establish the approximation error for TD$(\lambda)$-Learning with LFA in the following  whose proof is given in Appendix \ref{sec:LFA_error}.
\begin{theorem}\label{thm:LFA_error}
    Let $\theta^*\in E_{\Psi}^{\perp}$ be the unique solution to the $\lambda$-weighted projected Bellman equation Eq. \eqref{eq:proj_td}. Under Assumptions \ref{assump:dist_td}-\ref{assump:mixing-factor}, we have the following:
    \begin{enumerate}[(a)]
        \item  For each $\lambda$, $\gamma_{\lambda}\leq \sqrt{1-\nu_\lambda/2}<1$.
        \item The approximation error is upper bounded by
        \begin{align*}
            p(\Psi\theta^*-V^*)\leq\frac{1}{\sqrt{1-\gamma_{\lambda}^2}}\inf_{\theta\in \mathbb{R}^d}p(\Psi\theta-V^*).
        \end{align*}
    \end{enumerate}
\end{theorem}
Part (a) provides an upper bound on the contraction factor in terms of the mixing factor of $P^{(\lambda)}$ and ensures that it is strictly less than 1. In part (b), we observe that the upper bound is a multiple of $\inf_{\theta\in \mathbb{R}^d}\|\bar{\Psi}\theta-V^*\|_{\Lambda}$ which is the least error that can be achieved using the feature matrix $\bar{\Psi}$. Additionally, if the solution $V^*$ lies in the range space of $\bar{\Psi}$, or in other words if $V^*=\Psi\theta+ce$ for some $\theta\in \mathbb{R}^d$ and $c\in \mathbb{R}$, then the limiting solution $\theta^*$ to TD$(\lambda)$ algorithm \ref{algo:TD} solves the Bellman equation \eqref{eq:avg_reward_bellman_eq} exactly. 

Note that we assumed mixing on $P^{(\lambda)}$ which has no physical significance and only emerged in the analysis of our Algorithm \ref{algo:TD}. Instead, it is more natural to assume the mixing property on the original underlying Markov kernel $P$ and get bounds in terms of $P$. Unfortunately, to the best of our knowledge, there does not exist any closed form relationship between the mixing factor of $P$ and $P^{(\lambda)}$ even in finite-state setting for a general Markov chain. Nevertheless, for the special case of reversible Markov chains, we can obtain bounds in terms of the absolute spectral gap of $P$.

\subsubsection{Special Case of Reversible Markov Chains}
Many queueing systems, such as birth-death chains, exhibit rich structural properties including reversibility of the underlying Markov operator $P$. Thus, we will now specialize the results in Theorem \ref{thm:LFA_error} for the case of reversible Markov chains to give a more interpretable result. Specifically, we will characterize the contraction factor $\gamma_{\lambda}$ in terms of the absolute spectral gap of the Markov operator $P$. To this end, we will assume the following: 
\begin{altassumption}\label{assump:spec-norm}
    The absolute spectral gap of $P$ satisfies:
     \begin{align*}
         \nu:=1-\max_{\substack{\mu^Tx=0}}
        \left|\frac{x^T\Lambda Px}{x^T\Lambda x}\right|>0.
     \end{align*}
\end{altassumption} 
\begin{remark}
    It is worth noting the relation between $\rho$ in Assumption \ref{assump:stable_markov} and the absolute spectral gap of $P$ for the reversible setting. By \cite[Proposition 22.2.8]{douc2018markov} for reversible Markov chains, the assumption holds by setting $\rho=1-\nu$.
\end{remark}
\newpage
Now we present the following lemma that characterizes $\gamma_{\lambda}$ in terms of $\nu$.
\begin{lemma}\label{lem:rev-td-error}
     Under Assumptions \ref{assump:dist_td}-\ref{assump:stable_markov} and \ref{assump:spec-norm}, for each $\lambda$, $\gamma_{\lambda}\leq ((1-\lambda)(1-\nu))/(1-(1-\nu)\lambda)$ and $\lim_{\lambda \uparrow 1} \gamma_{\lambda}=0$.
\end{lemma}

\begin{remark}
     Our upper bound on $\gamma_{\lambda}$ suggests that larger value of $\lambda$ leads to lower approximation error. However, note that the constants $\phi_{V, 0}$ and $\hat{C}_V(s_0, s_1)$ in Theorem \ref{thm:td} scale inversely with respect to $1-\lambda$ (see Appendix \ref{appendix:td_prop} and \ref{appendix:td_thm}). Thus, when $\lambda$ is very close to 1, these constants will become extremely large which might slow down the convergence. In summary, combining Theorem \ref{thm:td} and Theorem \ref{thm:LFA_error}, our bounds suggest choosing a moderate value for $\lambda$ to balance the trade-off between faster rate of convergence and smaller approximation error.
\end{remark}

\section{Generalized Linear Model with Markovian data}\label{sec:GLR}
In this section, we will study the statistical learning problem of estimating an unknown parameter $x^*\in\mathbb{R}^d$ observed through the sequence of pairs $\{(z_k, \phi_k)\}_{\geq 0}$ governed by the following generalized linear model:
\begin{align}\label{eq:GLM}
    z_k=g(\phi_k^Tx^*)+v_k
\end{align}
 where $\{v_k\}_{k\geq 0}$ is additive zero-mean i.i.d. noise such that $\E[v_k^2]=\sigma_v^{(2)}$, $\phi_k\in \mathbb{R}^d$ denotes the Markovian regressor, and $z_k$ denotes the received signal. 
 Further, $g: \mathbb{R}\to \mathbb{R}$ is a known link/activation function (possibly non-linear) that satisfies the following properties.
\begin{assumption}\label{assump:act_fns}
    For all $z, z'\in \mathbb{R}$, we have:
    \begin{align*}
        |g(z)-g(z')|&\leq L_1|z-z'|\tag{Lipschitz Continuity}\\
        (g(z)-g(z'))(z-z') &\geq \mu|z-z'|^2\tag{Strong Monotonicity}
    \end{align*}
    Furthermore, since $g(\cdot)$ is Lipschitz continuous, it is differentiable almost everywhere by virtue of Rademacher theorem \cite{folland1999real}. We assume that for any $x_1, x_2\in \mathbb{R}^d$ there exists a $L_2>0$ such that the gradient of $g(\cdot)$, wherever defined, with respect to $\phi$ satisfies:
    \begin{align*}
        \|\nabla_{\phi}g(\phi^Tx_1)-\nabla_{\phi}g(\phi^Tx_2)\|_2\leq L_2\|x_1-x_2\|_2.
    \end{align*}
    For notational simplicity, we define $L=\max\{L_1, L_2\}$ as the common Lipschitz constant for both $g(\cdot)$ and $\nabla_{\phi}g(\cdot)$.
\end{assumption}
\begin{remark}
     Some of the well-known examples of non-linear functions which satisfy the above assumption include activations functions used in neural networks such as Leaky ReLU, Maxout or the identity function. Other popular functions include ReLU, GeLU, sigmoid, etc. but they do not satisfy the strong monotonicity condition. Specifically, strong monotonicity implies that gradient of the function $g$ is bounded below by $\mu>0$, however, the aforementioned functions have vanishing gradients. Nevertheless, we remark that if one can ensure, independent of the setup, that the input to these activation functions is from a bounded set, then the strong monotonicity condition holds and our results can easily be carried to these cases. 
\end{remark}

We suppose that the sequence of regressors $\{\phi_k\}_{k\geq 0}$ is generated according to an auto-regressive process which satisfies following assumption.
\begin{assumption}\label{assump:auto-reg}
    Let $M\in \mathbb{R}^{d\times d}$ matrix whose eigenvalues lie strictly inside a unit circle. Then, the Markov process $\{\phi_k\}_{k\geq 0}$ is given by:
    \begin{align*}
        \phi_{k+1}=M\phi_k+w_k
    \end{align*}
    where $\{w_k\}_{k\geq 0}$ is an i.i.d. sequence of zero-mean noise with $\E[\|w_1\|_2^4]=\sigma_w^{(4)}$
    and $\phi_0\in \mathbb{R}^d$ is the initial state. 
\end{assumption}
\begin{remark}
    Note that in contrast to previous work \cite{kotsalis2022simple}, we do not impose the bounded support condition for $w_k$. As a result, $w_k$ can even be a noise sample from unbounded distributions, provided the fourth moment is finite such as in the case of Gaussian or exponential distributions. Another related work in this setting is \cite{nagaraj2020least} that studies linear regression with Markovian data. However, their analysis is restricted to the special case where $w_k$ is Gaussian noise.
\end{remark}
\begin{remark}
    The problem of filter design in signal processing can also be modeled using this framework by setting $g(\cdot)$ as the identity function \cite{benveniste2012, Kushner1997StochasticAA}. In signal processing, $\phi_k$ denotes the received signal sequence where the dimension of $\phi_k$ corresponds to the window length, $z_k$ corresponds to the original signal sent from sender and $x^*$ is the optimal filter at the receiver's end which minimizes the mean-square error between original and processed signal.
\end{remark}
Note that since the eigenvalues of $M$ lie strictly inside the unit circle, there exists constants $D\in [1, \infty)$ and $\rho\in (0, 1)$ such that $\|M^k\|_2\leq D\rho^k$ \cite{horn2012matrix}. Using this property in conjunction with the martingale convergence theorem, it is easy to argue that the process $\{\phi_k\}_{k\geq 0}$ converges a.s. to $\phi_{\infty}$ that has a unique stationary distribution which we denote by $\mu_{\phi}$ \cite[Part 2, Chapter 2]{benveniste2012}. Furthermore, this implies that $\Sigma_\phi(k):=\E[\phi_k\phi_k^T]$ converges to $\Sigma_\phi^*\succ 0$ which is given by the solution to the following Lyapunov equation
\begin{align*}
    \Sigma_\phi^*=M\Sigma_\phi^*M^T+\Sigma_w
\end{align*}
where $\Sigma_w=\E[w_kw_k^T]$. 

Let $\mathcal{X}$ be a sufficiently large ball around the origin such that $x^*\in \mathcal{X}$. Then, given the above assumption, we will run the following algorithm to estimate $x^*$, also called Least Mean Squares (LMS) in the signal processing literature. In particular, we have the following algorithm
\begin{align}\label{eq:LMS}
    x_{k+1}=\Pi_{\mathcal{X}}\left(x_k+\alpha_k\phi_k(z_k-g(\phi_k^Tx_k))\right)
\end{align}
where $\alpha_k$ is chosen such that Assumption \ref{assump:step-size} is satisfied.

\subsection{Properties of LMS algorithm}
To reformulate the Algorithm \eqref{eq:LMS} into the framework of Eq. \eqref{eq:main_rec}, we define the Markovian noise as $Y_k=(\phi_k, z_k)$. Let $\mathcal{M}_Y=\{Y_k\}_{k\geq 0}$ be the Markov process that takes values in $\mathcal{Y}:=\mathbb{R}^d\times \mathbb{R}$. It is easy to argue using Bayes' Rule and independence of $v_k$ that if $\phi_k\sim \mu_{\phi}$, then $\text{Law}(Y_{k+1})=\text{Law}(Y_k)$. In other words, $\{Y_k\}_{k\geq 0}$ has a unique stationary distribution (dependent upon the distribution of $v_k$) which we denote by $\mu_Y$. Define the operator $F(\cdot, \cdot)$ as follows
\begin{align*}
    F(x_k, Y_k)=\phi_k(z_k-g(\phi_k^Tx_k)).
\end{align*}
Then, the algorithm can be written as
\begin{align}\label{eq:LMS_reform}
    x_{k+1}=\Pi_{\mathcal{X}}\left(x_k+\alpha_kF(x_k, Y_k)\right).
\end{align} 
Using Assumption \ref{assump:auto-reg}, we have the following proposition to verify the required assumptions on $F(\cdot, \cdot)$ and the Markov process $\{Y_k\}_{k\geq 0}$. The proof is provided in Appendix \ref{sec:prop_LMS}.
\begin{proposition}\label{prop:LMS}
    Under Assumption \ref{assump:auto-reg}, the LMS algorithm satisfies the following:
    \begin{enumerate}[(a)]
        \item The operator $F(x_k, Y_k)$ defined in Eq. \eqref{eq:LMS_reform} has the following properties:
        \begin{enumerate}[(1)]
            \item $\|F(x_k, Y_k)\|_2\leq A_1(Y_k)\|x_k-x^*\|_2+B_1(Y_k)$, where $\hat{A}_1^2(y_0)= L^2D^4(\|\phi_0\|_2^4+\sigma_w^{(4)})/(1-\rho)^4$ and $\hat{B}_1^2(y_0)= D^2\sigma_v^{(2)}\sqrt{(\|\phi_0\|_2^4+\sigma_w^{(4)})}/(1-\rho)^2$.
            \item Define $\bar{F}(x)= \E_{\tilde{Y}\sim \mu_Y}[F(x, \tilde{Y})]$. Then, $\bar{F}(x)$ exists and is given by $\bar{F}(x)=\E_{\tilde{Y}\sim \mu_Y}\left[\tilde{\phi}(g(\tilde{\phi}^Tx)-g(\tilde{\phi}^Tx^*))\right]$. Furthermore, let $\lambda_{min}^{\phi}$ be the smallest eigenvalue of $\Sigma^*_{\phi}$, then $\langle x-x^*, \bar{F}(x)\rangle<-\mu\lambda_{min}^{\phi}\|x-x^*\|_2^2$.
            \end{enumerate}
            \item There exists a solution to the Poisson equation \eqref{eq:Poisson_eq} for the Markov chain $\mathcal{M}_Y$ which satisfies Assumption \ref{assump:Poisson_eq} with $\hat{A}_2^2(y_0)=32L^2D^8\left(\|\phi_0\|_2^4+\sqrt{\sigma_w^{(4)}}(\|\phi_0\|_2^2+ \sigma_w^{(4)} \right)/(1-\rho)^{10}$ and $\hat{B}_2^2(y_0)=\hat{B}_1^2(y_0)$.
    \end{enumerate}
    
\end{proposition}

\subsection{Finite Sample Bounds for LMS algorithm}
We again choose $\Phi(x-x^*)=\|x-x^*\|_2^2/2$ as our Lyapunov function. Then, using Proposition \ref{prop:LMS}, it is straightforward to see that $\eta=\mu\lambda_{min}^{\phi}$ and $L_s=1$. Now, we apply Theorem \ref{thm:main_thm;finite} to LMS to obtain the following finite-time sample complexity.

\begin{theorem}\label{thm:LMS}
    Consider the iterates $\{x_k\}_{k\geq 0}$ generated by iteration \eqref{eq:LMS} under Assumption \ref{assump:auto-reg}.  
    \begin{enumerate}[(a)]
        \item When $\alpha_k\equiv\alpha\leq  1$, then for all $k\geq 0$:
       \begin{align*}
           \E[\|x_{k+1}&-x^*\|_2^2]\leq \varphi_{L,0}\exp\left(-\mu\lambda_{min}^{\phi}\alpha k\right)+3\hat{C}_L(z_0, \phi_0)\alpha+\frac{6\hat{C}_L(z_0, \phi_0)\alpha}{\mu\lambda_{min}^{\phi}}.
       \end{align*}
       \item When $\xi=1$, $\alpha> 1/(\mu\lambda_{min}^{\phi})$ and $K\geq \max\{\alpha, 2\}$, then for all $k\geq 0$:
       \begin{align*}
           \hspace{-4mm}\E[\|x_{k+1}-x^*\|_2^2]&\leq \varphi_{L,0}\left(\frac{K}{k+K}\right)^{\mu\lambda_{min}^{\phi}\alpha}+\frac{\hat{C}_L(z_0, \phi_0)\alpha}{k+K}+\frac{4(6+4\mu\lambda_{min}^{\phi})\hat{C}_L(z_0, \phi_0)e\alpha^2}{\left(\mu\lambda_{min}^{\phi}\alpha-1\right)(k+K)}.
       \end{align*}
    \end{enumerate}
\end{theorem}
Refer to Appendix \ref{sec:thm_LMS} for the constants $\varphi_{L, 0}$ and $\hat{C}_L(z_0, \phi_0)$.
\begin{remark}
    Our bounds extend the performance analysis of the LMS algorithm in two notable ways compared to \cite{nagaraj2020least}. As highlighted before, the noise sequence $\{w_k\}_{k\geq 0}$ in \cite{nagaraj2020least} is assumed to be Gaussian, whereas our error bounds hold for any noise with finite-fourth moment. Furthermore, unlike their setting where $g(\cdot)$ is restricted to be the identity function, we allow $g(\cdot)$ to be non-linear satisfying some conditions. However, it must be noted that they do not require projecting the iterates onto a bounded set, as their bound is established conditioned on a high-probability event.
\end{remark}

\section{Control Problem in Reinforcement Learning: \texorpdfstring{$Q$} /-learning}\label{sec:Q-learning}
In this section, we will consider the control problem in discounted-reward RL for finite state space MDPs. We will carry the same notation for MDP parameters as introduced in Section \ref{sec:RL} for this section. The goal here is to maximize the expected cumulative discounted-reward. More formally, let $\gamma\in (0,1)$ be the discount factor and $\pi$ be a policy, define the state value function $V_{\pi}:\mathcal{S}\to \mathbb{R}$ as:
\begin{align*}
    V_{\pi}(s)=\E_{\pi}\left[\sum_{k=0}^\infty\gamma^k\mathcal{R}(S_k,A_k)|S_0=s\right]
\end{align*}
Then the objective of the control problem is to directly find an optimal policy $\pi^*$ such that $V_{\pi^*}(s)\geq V_{\pi}(s),~\forall s\in \mathcal{S}$ and any policy $\pi$.  It can be shown that, under mild conditions, such a policy always exists \cite{puterman2014markov}.

$Q$-learning \cite{watkins1992} is one of the most popular algorithms for finding the optimal policy by running the following iteration:
\begin{align}\label{eq:Q-learning}
    &Q_{k+1}(s,a)=Q_k(s,a)+\alpha_k\mathbbm{1}\{S_k=s, A_k=a\}\left(\mathcal{R}(s,a)+\gamma\max_{a'\in \mathcal{A}}Q_k(S_{k+1}, a')-Q_k(s,a)\right).
\end{align}
where $\{(S_k, A_k)\}_{k\geq 0}$ is a sample trajectory collected using a suitable behavior policy $\pi_b$ and $\mathbbm{1}\{\cdot\}$ is the indicator function. 
It can be shown that the Algorithm \eqref{eq:Q-learning} converges to $Q^*$ which is the unique fixed point of the Bellman optimality operator $\mathcal{B}(Q)$ defined by:
\begin{align*}
    \mathcal{B}(Q)=\mathcal{R}(s,a)+\gamma\sum_{s'\in \mathcal{S}}P(s'|s,a)\max_{a'\in \mathcal{A}}Q(s', a').
\end{align*}
Since $Q^*$ and the optimal policy $\pi^*$ satisfy the following relation: $\pi^*(\cdot|s)\in \argmax_{a\in\mathcal{A}} Q^*(s, a)$ \cite{bertsekas1996}, estimation of $Q^*$ directly related to finding the optimal policy.
We make the following standard assumption on the Markov chain generated by $\pi_b$.
\begin{assumption}\label{assump:Irreducible}
    The behavior policy $\pi_b$ satisfies $\pi_b(a|s)>0$ for all $(s,a)$ and the Markov chain $\mathcal{M}_S^{\pi_b}=\{S_k\}$ induced by $\pi_b$ is irreducible.
\end{assumption}
\begin{remark}
    The condition that $\pi(a|s)>0$ for all $(s,a)$ and the irreducibility of the induced Markov chain $\mathcal{M}_S^{\pi_b}$ is a standard assumption which ensures that all state action pairs are visited infinitely often \cite{bertsekas1996}. Moreover, since the MDP is finite, Assumption \ref{assump:Irreducible} implies that there exists a unique stationary distribution, which we denote as $\mu_b\in \Delta^{|\mathcal{S}|}$.
\end{remark}
\begin{remark}
     Recent works on the finite-time analysis of $Q$-learning often leverage the geometric mixing of Markov chain to handle Markovian noise \cite{li2020, qu2020, chen2023, zhang2024constant}. To ensure geometric mixing, these works commonly assume that $\mathcal{M}_S^{\pi_b}$ is also aperiodic, which is crucial to achieving this property. However, in our case, we do not require the aperiodicity assumption, since we utilize the solution to the Poisson equation, which exists under Assumption \ref{assump:Irreducible}. This flexibility is significant; often, one can design more effective behavior policies from a wider class of distributions to balance the trade-off between exploration and exploitation.
\end{remark}

\subsection{Properties of the \texorpdfstring{$Q$} /-learning Algorithm}
To apply Theorem \ref{thm:main_thm;finite} to $Q$-learning we first rearrange the iteration \eqref{eq:Q-learning} in the form of \eqref{eq:main_eq} and verify the assumptions.
\begin{align}
    Q_{k+1}(s,a)&=Q_k(s,a)+\alpha_k\mathbbm{1}\{S_k=s, A_k=a\}\left(\mathcal{R}(s,a)+\gamma\max_{a'\in \mathcal{A}}Q_k(S_{k+1}, a')-Q_k(s,a)\right)\nonumber\\
    &=Q_k(s,a)+\alpha_k\big(F(Q_k, (S_k, A_k))(s,a)+M_k(Q_k)(s,a)\big)
\end{align}
where
\begin{align*}
    F(Q, (S, A))(s,a)=\mathbbm{1}\{S=s, A=a\}\left(\mathcal{R}(s,a)+\gamma\sum_{s'\in \mathcal{S}}P(s'|s,a)\max_{a'\in \mathcal{A}}Q(s', a')-Q(s,a)\right),
\end{align*}

and
\begin{align*}
    M_k(Q)(s,a)=\gamma\mathbbm{1}\{S_k=s, A_k=a\}\left(\max_{a'\in \mathcal{A}}Q(S_{k+1}, a')-\sum_{s'\in \mathcal{S}}P(s'|S_k,A_k)\max_{a'\in \mathcal{A}}Q(s', a')\right).
\end{align*}

Furthermore, denote $Y_k=(S_k, A_k)$. It is easy to verify that the process $\mathcal{M}_Q=\{Y_k\}_{k\geq 0}$ is a Markov chain whose state space $\mathcal{Y}:=\mathcal{S}\times \mathcal{A}$ is finite. Then, $Q$-learning algorithm can be written as
\begin{align*}
    Q_{k+1}=Q_k+\alpha_k(F(Q_k, Y_k)+M_k(Q_k))
\end{align*}
Note that by Assumption \ref{assump:Irreducible}, the Markov chain $\mathcal{M}_Q$ is irreducible and therefore it has a unique stationary distribution given by $\mu_Q(s,a)=\mu_b(s)\pi_b(a|s)$ for all $(s,a)\in \mathcal{S}\times \mathcal{A}$. Let $y_0$ be some arbitrary state in $\mathcal{Y}$. For any $y\in \mathcal{Y}/\{y_0\}$, define $\tau_{y_0}^{y}$ as the expected hitting time of state $y_0$ starting from state $y$. Let $\tau_{y_0}$ denote $\max_{y\in \mathcal{Y}}\tau_{y_0}^{y}$, which is a well-defined quantity in finite state space \cite{meyn2012}. Furthermore, let $\Lambda\in \mathbb{R}^{|\mathcal{S}||\mathcal{A}|\times|\mathcal{S}||\mathcal{A}|}$ be diagonal matrix with $\{\mu_Q(s,a)\}$ as diagonal entries. Then, we have the following proposition whose proof can be found in Appendix \ref{appendix:Q-learning_prop}.
\begin{proposition}\label{prop:Q-learning}
Under Assumption \ref{assump:Irreducible}, the $Q$-learning algorithm satisfies the following:
    \begin{enumerate}[(a)]
        \item For any $Q, Q_1, Q_2\in \mathbb{R}^{|\mathcal{S}||\mathcal{A}|}$ and $y\in \mathcal{Y}$, the operator $F(Q, y)$ has the following properties:
        \begin{enumerate}[(1)]
            \item The operator $F(Q,y)$ satisfies: $\|F(Q, y)\|_{\infty}\leq 2\|Q-Q^*\|_{\infty}+\|Q^*\|_{\infty}$ and $\|F(Q_1, y)-F(Q_2, y)\|_{\infty}\leq 2\|Q-Q^*\|_{\infty}$.
            \item Define $\bar{F}(Q)= \E_{Y\sim \mu_Q}[F(Q, Y)]$. Then, $\bar{F}(Q)=\Lambda(\mathcal{B}(Q)-Q)$, where $\mathcal{B}(Q)$ is the Bellman optimality operator.
            \item The solution to Bellman equation, i.e., $Q^*$ is also the unique root of equation $\bar{F}(Q)=0$.
        \end{enumerate}
        \item There exists a solution to the Poisson equation \eqref{eq:Poisson_eq} for the Markov chain $\mathcal{M}_Q$ which satisfies Assumption \ref{assump:Poisson_eq} with $A_2=4\tau_{y_0}$ and $B_2=0$.
        \item The noise sequence $M_k(Q_k)$ is a martingale difference sequence and satisfies Assumption \ref{assump:linear_M} with constants $A_3=2$ and $B_3=2\|Q^*\|_{\infty}$.
    \end{enumerate}
\end{proposition}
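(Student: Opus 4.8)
The plan is to verify the three enumerated claims directly from the explicit formulas for $F$, $M_k$, and the Poisson solution, leaning on Assumption \ref{assump:Irreducible} (irreducibility of $\mathcal{M}_Q$, hence a unique stationary distribution $\mu_Q$ and finite expected hitting times on the finite state space $\mathcal{Y}=\mathcal{S}\times\mathcal{A}$).

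For part (a), I would start from the closed form $F(Q,(s,a))(y) = \mathbbm{1}\{(s,a)=y\}\big(\mathcal{R}(s,a)+\gamma\sum_{s'}P(s'|s,a)\max_{a'}Q(s',a') - Q(s,a)\big)$. The bracketed term is exactly $(\mathcal{B}(Q)-Q)(s,a)$, and since $Q^*$ is the fixed point of $\mathcal{B}$ and $\mathcal{B}$ is a $\gamma$-contraction in $\|\cdot\|_\infty$, we get $|(\mathcal{B}(Q)-Q)(s,a)| = |(\mathcal{B}(Q)-\mathcal{B}(Q^*))(s,a) - (Q-Q^*)(s,a)| \le \gamma\|Q-Q^*\|_\infty + \|Q-Q^*\|_\infty \le 2\|Q-Q^*\|_\infty$; adding and subtracting $Q^*$ similarly gives the $2\|Q-Q^*\|_\infty + \|Q^*\|_\infty$ bound once one notes $\mathcal{B}(Q^*)-Q^* = 0$ so the ``constant'' contribution is $0$ — actually the extra $\|Q^*\|_\infty$ comes from writing $\mathcal{B}(Q)-Q = (\mathcal{B}(Q)-\mathcal{B}(Q^*)) - (Q - Q^*)$, which is already bounded by $2\|Q-Q^*\|_\infty$, so the statement as written is slightly loose but certainly holds. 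The indicator only kills or keeps a coordinate, so taking the sup over $y$ preserves the bound. The Lipschitz claim is the same computation without the $Q^*$ additive term, using that $\mathcal{B}$ is $\gamma$-Lipschitz. For item (2), $\E_{Y\sim\mu_Q}[\mathbbm{1}\{Y=(s,a)\}\,g(s,a)] = \mu_Q(s,a)g(s,a)$ coordinatewise, which is precisely $\Lambda(\mathcal{B}(Q)-Q)$. Item (3) follows because $\mu_Q(s,a)>0$ for all $(s,a)$ (product of $\mu_b(s)>0$ from irreducibility and $\pi_b(a|s)>0$ by assumption), so $\bar F(Q)=0 \iff \mathcal{B}(Q)=Q \iff Q = Q^*$ by uniqueness of the Bellman fixed point.

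For part (b), I would construct the Poisson solution via the standard hitting-time representation: fix the reference state $y_0$ and set $V_Q(z) = \E_z\big[\sum_{k=0}^{\tau_{y_0}-1}\big(F(Q,Y_k) - \bar F(Q)\big)\big]$ where $\tau_{y_0}$ is the first hitting time of $y_0$ (with the convention that the sum is empty when $z=y_0$). One checks by the Markov property / first-step analysis that this solves $V_Q(z) = F(Q,z) - \bar F(Q) + \E_z[V_Q(Y_1)]$, matching Eq. \eqref{eq:Poisson_eq}. Then the bounds in Assumption \ref{assump:Poisson_eq}(c) come from $\|V_{Q_2}(z) - V_{Q_1}(z)\|_\infty \le \E_z[\tau_{y_0}]\cdot \sup_k \|F(Q_2,Y_k)-F(Q_1,Y_k) - (\bar F(Q_2)-\bar F(Q_1))\|_\infty \le \tau_{y_0}\cdot(2\|Q_1-Q_2\|_\infty + 2\|Q_1-Q_2\|_\infty)$ using part (a)(1) for both $F$ and $\bar F$ (Jensen on $\bar F$), giving $A_2 = 4\tau_{y_0}$; and since $F(Q^*,y) = \mathbbm{1}\{y=\cdot\}(\mathcal{B}(Q^*)-Q^*) = 0$ identically, we have $V_{Q^*}\equiv 0$, hence $B_2 = 0$. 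The finiteness of $\tau_{y_0}=\max_y \E_y[\tau_{y_0}^y]$ on a finite irreducible chain is the only place stability is invoked.

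For part (c), I would verify $\E[M_k(Q_k)|\mathcal{F}_k] = 0$: conditioned on $\mathcal{F}_k$ (which fixes $S_k,A_k,Q_k$), $S_{k+1}\sim P(\cdot|S_k,A_k)$, so $\E[\max_{a'}Q_k(S_{k+1},a')|\mathcal{F}_k] = \sum_{s'}P(s'|S_k,A_k)\max_{a'}Q_k(s',a')$, making the bracket mean-zero. For the growth bound, $\|M_k(Q)\|_\infty \le \gamma\big(\|Q\|_\infty + \|Q\|_\infty\big) = 2\gamma\|Q\|_\infty \le 2\gamma(\|Q-Q^*\|_\infty + \|Q^*\|_\infty) \le 2\|Q-Q^*\|_\infty + 2\|Q^*\|_\infty$, which is Assumption \ref{assump:linear_M}(b) with $A_3 = 2$, $B_3 = 2\|Q^*\|_\infty$. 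I expect the only mildly delicate point to be the first-step-analysis verification that the hitting-time formula genuinely solves the Poisson equation (and the bookkeeping of the $z=y_0$ boundary case), together with confirming that $\tau_{y_0}$ as defined dominates all the conditional expectations $\E_z[\tau_{y_0}]$ appearing in the bounds; everything else is a direct substitution into the definitions.
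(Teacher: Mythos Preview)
Your proposal is correct and follows essentially the same approach as the paper's proof: the paper uses the same decomposition for part~(a), constructs the Poisson solution via the hitting-time representation $V_Q(y)=\E_y\big[\sum_{n=0}^{\tau-1}(F(Q,Y_n)-\bar F(Q))\big]$ (citing Borkar for the verification that it solves \eqref{eq:Poisson_eq}), and obtains $A_2=4\tau_{y_0}$, $B_2=0$, $A_3=2$, $B_3=2\|Q^*\|_\infty$ by exactly the bounds you indicate. The only cosmetic difference is that the paper uses the first-return convention $\tau=\min\{n>0:Y_n=y_0\}$ rather than declaring the sum empty at $y_0$, but both conventions yield $V_Q(y_0)=0$ and the same constants.
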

Finally, we highlight the construction of a suitable Lyapunov function to study the convergence properties of the $Q$-learning algorithm.

\subsection{Finite Sample Bounds for \texorpdfstring{$Q$} /-Learning}
The authors in \cite{chen2020finite} showed that the Generalized Moreau Envelope can serve as a Lyapunov function for any operator which has the contraction property under a non-smooth norm. Specifically, consider the function $\Phi(x)=\min_{u\in\mathbb{R}^d}\left\{\frac{1}{2}\|u\|_c^2+\frac{1}{2\omega}\|x-u\|_p^2\right\}$ where $\omega>0$ and $p\geq 2$. The function $\Phi(\cdot)$ is known to be a smooth approximation of the function $\frac{1}{2}\|x\|_c^2$, with the smoothness parameter $\frac{p-1}{\omega}$. Further details on the properties of $\Phi(\cdot)$ can be found in \cite{beck2017first}. 

For $Q$-learning $\|\cdot\|_c=\|\cdot\|_{\infty}$, which by the properties of $\ell_p$ norms implies that $l_{cs}=1$ and $u_{cs}=(|\mathcal{S}||\mathcal{A}|)^{1/p}$. To verify Assumptions \ref{assump:Lyapunov_fn} in the context of $Q$-learning, we will need the following lemma whose proof can be found in \cite{chen2020finite}.
\begin{lemma}\label{lem:moreau_prop}
    Assign $p=2\log (|\mathcal{S}||\mathcal{A}|)$ and $\omega=\left(\frac{1}{2}+\frac{1}{2(1-(1-\gamma)\Lambda_{min})}\right)^2-1$, where $\Lambda_{min}=\min_{(s,a)} \{\mu_b(s)\pi_b(a|s)\}>0$ due to Assumption \ref{assump:Irreducible}. Then, the function $\Phi(x)$ satisfies the following properties:
    \begin{enumerate}[(a)]
        \item For all $Q\in \mathbb{R}^{|\mathcal{S}||\mathcal{A}|}$ we have $\langle \nabla \Phi(Q-Q^*), \bar{F}(Q)\rangle \leq -(1-\gamma)\Lambda_{min} \Phi(Q-Q^*)$.
        \item $\Phi(x)$ is convex, and $\frac{p-1}{\omega}$-smooth with respect to $\|\cdot\|_p$. That is $\Phi(y)\leq \Phi(s)+\langle \nabla \Phi(x), y-x\rangle+\frac{p-1}{2\omega}\|x-y\|_p^2$ for all $x,y\in \mathbb{R}^d$.
        \item Let $l=2(1+\omega/\sqrt{e})$ and $u=2(1+\omega)$. Then, we have $l\Phi(x)\leq \|x\|_c^2\leq u\Phi(x)$.
    \end{enumerate}
\end{lemma}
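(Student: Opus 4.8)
The plan is to treat the three parts separately, since \eqref{assump:smoothness}-type smoothness and the norm equivalence in part (c) are intrinsic properties of the generalized Moreau envelope $\Phi(x)=\min_{u}\{\tfrac12\|u\|_c^2+\tfrac1{2\omega}\|x-u\|_p^2\}$ and make no reference to $Q$-learning, whereas part (a) is where the Bellman structure from Proposition~\ref{prop:Q-learning} enters. Throughout I would work with the inf-convolution viewpoint $\Phi=(\tfrac12\|\cdot\|_c^2)\,\square\,(\tfrac1{2\omega}\|\cdot\|_p^2)$, the existence and uniqueness of the inner minimizer $u^\star(x)$ (the kernel being strictly convex), the gradient formula $\nabla\Phi(x)=\tfrac1\omega\,\nabla\big(\tfrac12\|\cdot\|_p^2\big)\big(x-u^\star(x)\big)$, and the ``shrinkage'' property that $u^\star(x)$ lies coordinatewise in the interval between $0$ and $x$ — if some coordinate of $u^\star$ fell outside $[0,x_i]$ one could clip it and strictly decrease the objective.

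For part (b): convexity of $\Phi$ is immediate since an inf-convolution of convex functions is convex, and for the $\tfrac{p-1}{\omega}$-smoothness with respect to $\|\cdot\|_p$ I would invoke standard Moreau-envelope calculus — the envelope formed with the kernel $\tfrac1{2\omega}\|\cdot\|_p^2$ inherits smoothness with parameter $\tfrac{p-1}{\omega}$ from the $(p-1)$-smoothness of $\tfrac12\|\cdot\|_p^2$, exactly as recorded in \cite{beck2017first} and used in \cite{chen2020finite}. For part (c): the lower bound on $\Phi$ follows from $\|\cdot\|_p\ge\|\cdot\|_c$ together with the elementary inequality $a\|u\|^2+b\|x-u\|^2\ge\tfrac{ab}{a+b}\|x\|^2$ (a consequence of the triangle inequality and Cauchy--Schwarz) applied with $a=\tfrac12,\ b=\tfrac1{2\omega}$, giving $\Phi(x)\ge\tfrac1{2(1+\omega)}\|x\|_c^2$, i.e.\ $\|x\|_c^2\le u\,\Phi(x)$; the upper bound on $\Phi$ comes from evaluating the defining minimization at $u=tx$, using the norm equivalence $\|x\|_p\le\sqrt e\,\|x\|_c$ (valid because $p=2\log(|\mathcal S||\mathcal A|)$ forces $(|\mathcal S||\mathcal A|)^{1/p}=\sqrt e$), and optimizing the scalar $t$, which produces the stated $l$.

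For part (a), set $x=Q-Q^*$. By Proposition~\ref{prop:Q-learning} we have $\bar F(Q)=\Lambda(\mathcal B(Q)-Q)$, and since $\mathcal B(Q^*)=Q^*$ we may write $\mathcal B(Q)-Q=\big(\mathcal B(Q)-\mathcal B(Q^*)\big)-x$, so
\[
\langle\nabla\Phi(x),\bar F(Q)\rangle=\big\langle\nabla\Phi(x),\Lambda\big(\mathcal B(Q)-\mathcal B(Q^*)\big)\big\rangle-\langle\nabla\Phi(x),\Lambda x\rangle .
\]
For the second term I would use the shrinkage property to conclude $[\nabla\Phi(x)]_i\,x_i\ge0$ for every coordinate $i$, hence $\langle\nabla\Phi(x),\Lambda x\rangle\ge\Lambda_{min}\langle\nabla\Phi(x),x\rangle$ because $\Lambda$ is diagonal with entries at least $\Lambda_{min}$. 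For the first term I would invoke the classical fact that the Bellman optimality operator $\mathcal B$ is a $\gamma$-contraction in $\|\cdot\|_\infty$, so $\|\mathcal B(Q)-\mathcal B(Q^*)\|_\infty\le\gamma\|x\|_\infty$, bound the inner product by Hölder against $\nabla\Phi(x)$ (noting $\|\Lambda v\|_\infty\le\|v\|_\infty$), and then convert both contributions into multiples of $\Phi(x)$ using the sandwich $l\Phi\le\|\cdot\|_c^2\le u\Phi$ and the smoothness estimate; the particular value $\omega=\big(\tfrac12+\tfrac1{2(1-(1-\gamma)\Lambda_{min})}\big)^2-1$ is precisely the one for which the net coefficient of $\Phi(x)$ collapses to $-(1-\gamma)\Lambda_{min}$.

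The main obstacle is this last bookkeeping step: one must relate $\langle\nabla\Phi(x),x\rangle$, the dual norm of $\nabla\Phi(x)$, and $\Phi(x)$ tightly enough that the $\|\cdot\|_\infty$-contraction factor $\gamma$ of $\mathcal B$ survives into a genuine negative drift on $\Phi$ without losing the $\Lambda_{min}$ factor or incurring a spurious constant — equivalently, quantifying exactly how much the envelope smoothing costs relative to $\tfrac12\|\cdot\|_\infty^2$. This is where the optimal choice of $\omega$ is pinned down, and it is the only genuinely delicate part; the remainder is routine convex analysis of Moreau envelopes as in \cite{chen2020finite, beck2017first}.
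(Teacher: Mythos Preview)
The paper does not supply its own proof of this lemma: it states explicitly that ``the proof can be found in \cite{chen2020finite}'' and moves on. Your sketch is essentially the argument of that reference --- inf-convolution calculus for (b), the triangle-inequality/optimization bounds for (c), and for (a) the decomposition $\bar F(Q)=\Lambda\big((\mathcal B(Q)-\mathcal B(Q^*))-(Q-Q^*)\big)$ combined with the $\gamma$-contraction of $\mathcal B$ in $\|\cdot\|_\infty$, the coordinatewise sign property of $\nabla\Phi$, and the norm sandwich from (c) --- so there is no divergence to discuss. The only point worth flagging is that your scalar optimization in (c), as written, yields $l=2(1+\omega/e)$ rather than the stated $l=2(1+\omega/\sqrt{e})$; to recover the exact constant you need the sharper form of the envelope bound from \cite{chen2020finite} rather than the crude ``plug in $u=tx$ and use $\|x\|_p^2\le e\|x\|_\infty^2$'' estimate, but this is bookkeeping and not a conceptual gap.
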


With all the assumptions satisfied, we can apply Theorem \ref{thm:main_thm;finite} to $Q$-learning. 
\begin{theorem}\label{thm:Q-learning}
    Consider the iterates $\{Q_k\}_{k\geq 0}$ generated by iteration \eqref{eq:Q-learning} under Assumption \ref{assump:Irreducible}.  
    \begin{enumerate}[(a)]
        \item When $\alpha_k\equiv\alpha\leq  \min\left\{1, \frac{\eta_Q}{A_Q(5+2\eta_Q)\varrho_{Q,1}}\right\}$, then for all $k\geq 0$:
       \begin{align*}
           \E[\|Q_{k+1}&-Q^*\|_{\infty}^2]\leq \varrho_{Q,0}\exp\left(\frac{-\eta_Q\alpha k}{2}\right)%+18B_{Q}\varrho_{Q,1}\alpha
           +\frac{58B_Q\varrho_{Q,1}\alpha}{\eta_Q}.
       \end{align*}
       \item When $\xi=1$, $\alpha> \frac{2}{\eta_Q}$ and $K\geq \max\{A_Q\alpha(5\alpha+8)\varrho_{Q,1}, 2\}$, then for all $k\geq 0$:
       \begin{align*}
           \hspace{-4mm}\E[\|Q_{k+1}-Q^*\|_{\infty}^2]&\leq \varrho_{Q,0}\left(\frac{K}{k+K}\right)^{\frac{\eta_Q\alpha}{2}}+\frac{2B_Q\varrho_{Q,1}\alpha}{k+K}+\frac{72B_Q\varrho_{Q,1}e\alpha^2}{\left(\frac{\eta_Q\alpha}{2}-1\right)(k+K)}.
       \end{align*}
    \end{enumerate}
\end{theorem}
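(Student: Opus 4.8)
The plan is to obtain Theorem~\ref{thm:Q-learning} as a direct instantiation of the general-purpose Theorem~\ref{thm:main_thm;finite}, since the $Q$-learning recursion \eqref{eq:Q-learning} has already been recast in the canonical form $Q_{k+1}=Q_k+\alpha_k(F(Q_k,Y_k)+M_k(Q_k))$ with $Y_k=(S_k,A_k)$ a \emph{finite}-state Markov chain and with no projection, i.e.\ $\mathcal{X}\equiv\mathbb{R}^{|\mathcal{S}||\mathcal{A}|}$. So the only work is to (i) certify that Assumptions~\ref{assump:linear_F}--\ref{assump:step-size} hold in this setting, and (ii) specialize and cosmetically simplify the resulting constants.

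First I would invoke Proposition~\ref{prop:Q-learning}. Part (a1) gives Assumption~\ref{assump:linear_F} with the constant functions $A_1(y)\equiv 2$ and $B_1(y)\equiv\|Q^*\|_\infty$ (so $A_1=2$, $B_1=\|Q^*\|_\infty$ in the bounded-state notation); part (c) gives Assumption~\ref{assump:linear_M} with $A_3=2$ and $B_3=2\|Q^*\|_\infty$, the martingale-difference identity $\E[M_k(Q_k)\mid\mathcal{F}_k]=0$ holding because $M_k(Q_k)(s,a)$ is exactly the deviation of $\max_{a'}Q_k(S_{k+1},a')$ from its conditional mean given $(S_k,A_k)$; and part (b) gives Assumption~\ref{assump:Poisson_eq}, with the Poisson solution satisfying the Lipschitz/boundedness bounds \eqref{assump:value_fn-prop} with $A_2=4\tau_{y_0}$ and $B_2=0$. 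Since $\mathcal{Y}$ is finite, the second-moment bounds in Assumption~\ref{assump:Poisson_eq}(d) are automatic, and the existence of the Poisson solution only requires irreducibility (Assumption~\ref{assump:Irreducible}), not aperiodicity—this is precisely why the Poisson-equation route buys a larger class of behavior policies than a mixing-time argument.

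Next I would invoke Lemma~\ref{lem:moreau_prop} with the stated $p=2\log(|\mathcal{S}||\mathcal{A}|)$ and $\omega=(\tfrac12+\tfrac1{2(1-(1-\gamma)\Lambda_{min})})^2-1$ to verify Assumption~\ref{assump:Lyapunov_fn} for $\Phi$ the generalized Moreau envelope of $\tfrac12\|\cdot\|_\infty^2$: negative drift \eqref{assump:neg_drift} holds with $\eta_Q=(1-\gamma)\Lambda_{min}$, smoothness \eqref{assump:smoothness} holds with $L_s=(p-1)/\omega$ in the norm $\|\cdot\|_p$, and the norm-equivalence \eqref{assump:equivalence_rel} holds with $l=2(1+\omega/\sqrt e)$, $u=2(1+\omega)$; the nonexpansivity condition \eqref{assump:non-expansive} is trivial because $\Pi_{\mathcal{X}}$ is the identity. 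Assumption~\ref{assump:step-size} is imposed directly on the step sizes. With every hypothesis met, Theorem~\ref{thm:main_thm;finite} applies verbatim, producing bounds of the displayed form in which $A_Q,B_Q,\varrho_{Q,0},\varrho_{Q,1}$ are the quantities built from $(A_1,A_2,A_3,B_1,B_2,B_3,\eta_Q,L_s,l,u)$ exactly as in Appendix~\ref{appendix:proof}.

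The last step, and the only place real bookkeeping occurs, is simplifying the constants. Using $\eta_Q=(1-\gamma)\Lambda_{min}\le 1$, the two additive noise terms $18B\varrho_1\alpha+40B\varrho_1\alpha/\eta$ of Theorem~\ref{thm:main_thm;finite}(a) collapse into the single $58B_Q\varrho_{Q,1}\alpha/\eta_Q$, and since $8B(5+4\eta)\le 72B$ the last term of Theorem~\ref{thm:main_thm;finite}(b) becomes $72B_Q\varrho_{Q,1}e\alpha^2/((\eta_Q\alpha/2-1)(k+K))$; the step-size conditions $\alpha\le\min\{1,\eta_Q/(A_Q(5+2\eta_Q)\varrho_{Q,1})\}$ and $K\ge\max\{A_Q\alpha(5\alpha+8)\varrho_{Q,1},2\}$ are inherited unchanged. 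I expect no genuine obstacle here; the only thing to be careful about is threading the bounded-state constants $A_2=4\tau_{y_0}$, $B_2=0$ from Proposition~\ref{prop:Q-learning}(b) into the precise definitions of $A_Q$ and $B_Q$, and observing that the claimed improvement over \cite{chen2023}—the absence of the $\mathcal{O}(\log k)$ and $\mathcal{O}(\log\tfrac1{1-\gamma})$ factors—comes precisely from replacing the mixing time $t_{mix}\sim\log(1/(1-\gamma))$ by the hitting-time-based constant $\tau_{y_0}$ appearing in the Poisson bound.
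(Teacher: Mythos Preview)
Your proposal is correct and follows essentially the same route as the paper: verify Assumptions~\ref{assump:linear_F}--\ref{assump:Lyapunov_fn} via Proposition~\ref{prop:Q-learning} and Lemma~\ref{lem:moreau_prop}, apply Theorem~\ref{thm:main_thm;finite} in the bounded-state regime, and then simplify the constants using $\eta_Q=(1-\gamma)\Lambda_{min}\le 1$ (so that $18+40/\eta_Q\le 58/\eta_Q$ and $8(5+4\eta_Q)\le 72$). The paper's proof additionally spells out the numerical values $A_Q=(2+2+1)^2=25$, $B_Q=(\|Q^*\|_\infty+2\|Q^*\|_\infty+0)^2=9\|Q^*\|_\infty^2$, and works out $\varrho_{Q,1}$ and $\varrho_{Q,0}$ explicitly, but the logical structure is identical to what you describe.
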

The exact characterization of the constants can be found in Appendix \ref{appendix:Q-learning_thm}.
\begin{remark}
    Note that compared to \cite{chen2023}, our bounds are applicable for finite state Markov chains which do not mix, and hence removing the requirement on the behavior policy to induce an aperiodic chain. Due to this flexibility, one can often design more effective behavior policies from a wider class of distributions to balance the trade-off between exploration and exploitation. Furthermore, as proved in Corollary \ref{cor:Q_sample} for the case of constant step size, the above sample complexity immediately implies $\mathcal{O}\left(\epsilon^{-2}\log (1/\epsilon)\right)\mathcal{O}\left((1-\gamma)^{-5}\right)\mathcal{O}\left(\Lambda_{min}^{-3}\right)$, thus removing any poly-logarithmic factors with respect to $\Lambda_{min}$ or $1/(1-\gamma)$. For decreasing step-size, we also eliminate the $\log k$ factors from the upper bound.
\end{remark}

\section{Application in Optimization: Cyclic Block Coordinate Descent}\label{sec:opt}
\vspace{-1mm}

Consider an optimization problem $\min_{x\in\mathbb{R}^d} f(x)$ where the objective function $f(x)$ is $\mu$-strongly convex and $L$-smooth. Denote $x^*$ as the unique minimizer of $f(x)$. We assume that any vector $x$ can be partitioned into $p$ blocks as follows:
\begin{align*}
    x=(x(1), x(2), \hdots, x(p)),
\end{align*}
where $x(i)\in \mathbb{R}^{d_i}$ with $d_i\geq 1$ for all $1\leq i\leq p$ and satisfying $\sum_{i=1}^pd_i=d$. Furthermore, $\nabla_{i} f(x)$ denotes the partial derivatives with respect to the $i$-th block. Suppose that we have access to the partial gradients only through a noisy oracle which for any $x\in\mathbb{R}^d$ and block $i$ returns $\nabla_{i} f(x)+w$. Here $w$ represents the noise with appropriate dimension  which satisfies the following assumption.
\begin{assumption}\label{assump:noise_opt}
    Let $\mathcal{F}_k$ be the $\sigma$-field generated by $\{x_i, w_i\}_{0\leq i\leq k-1}\cup\{x_k\}$. Then, there exists constants $C_1, C_2\geq 0$ such that for all $k\geq 0$: (a) $\E[w_k|\mathcal{F}_k]=0$, (b) $\|w_k\|_2\leq C_1\|x_k-x^*\|_2+C_2$.
\end{assumption}
Assumption \ref{assump:noise_opt} is a standard assumption in optimization and basically implies that $w_k$ is a martingale difference sequence with respect to $\mathcal{F}_k$ and grows linearly with the iterates. Then, we have Algorithm \ref{algo:opt} to estimate $x^*$.
\begin{algorithm2e}[!ht]
		Initialize $x_0\in\mathbb{R}^d$, and step-size $\{\alpha_k\}_{k\geq 0}$.\\
		\For{$k=0,1,\dots$}{
		   Set $i(k)=k \Mod p +1$\\
              $x_{k+1}(j)=\begin{cases}
                  x_k(j)+\alpha_k(-\nabla_{j} f(x_k)+w_k),~\text{if $j=i(k)$}\\
                  x_k(j),~\text{otherwise}
              \end{cases}
                        $
        }
 \caption{Stochastic Cyclic Block Coordinate Descent (SCBCD)} 
\label{algo:opt}
\end{algorithm2e} 

Without loss of generality, we assume that at $k=0$ we update the first block. At each time step $k$, we cyclically update a block, where the block index $i(k)$ is determined through the modulo function. The oracle provides a noise gradient $-\nabla_{j} f(x_k)+w_k$ and the block corresponding to $i(k)$ gets updated while the rest of the blocks remain unchanged.

\vspace{-2mm}
\subsection{Properties of SCBCD}
\vspace{-2mm}
To fit Algorithm \ref{algo:opt} in the framework of \eqref{eq:main_rec}, we will set up some notation. Define the matrices $U_i\in \mathbb{R}^{d\times d_i}, ~1\leq i\leq p$ that satisfy
\begin{align*}
    \left(U_1, U_2,\dots, U_p\right)=I_d.
\end{align*}
Note that $x(i)=U_i^Tx$ for any vector $x\in \mathbb{R}^d$ and similarly the partial derivatives with respect to the $i$-th block can be written as $\nabla_{i} f(x_k)=U_{i}^T\nabla f(x_k)$. 
Thus, we rewrite the update equation as follows:
\begin{align}\label{eq:opt}
    x_{k+1}&=x_k+\alpha_k(-U_{i(k)}\nabla_{i(k)} f(x_k)+U_{i(k)}w_k)\nonumber\\
    &=x_k+\alpha_k(F(x_k, i(k))+M_k)
\end{align}
where $F(x_k, i(k))=-U_{i(k)}\nabla_{i(k)} f(x_k)$ and $M_k=U_{i(k)}w_k$. Observe that $\mathcal{M}_{U}=\{i(k)\}_{k\geq 0}$ can be viewed as a periodic Markov chain defined on the state space $\mathcal{S}=\{1,2,\dots,p\}$ with transition probabilities given as $P(i \Mod p+1|i)=1$, $~\forall i\in\mathcal{S}$. Furthermore, it is easy to verify that $\mu(i)=1/p,~\forall i\in \mathcal{S}$ is the unique stationary distribution for this Markov chain. This implies $\E_{i\sim \mu}\left[U_{i}\nabla_i f(x)\right]=\nabla f(x)/p$. Thus, solving for $\nabla f(x)=0$ is equivalent to finding the root of $\E_{i\sim \mu}\left[U_{i}\nabla_i f(x)\right]=0$. It is now easy to verify from Eq. \eqref{eq:opt} that all the Assumptions \ref{assump:linear_F}-\ref{assump:Poisson_eq} are satisfied as summarized in the following proposition. We provide the proof in Appendix \ref{appendix:opt}
\begin{proposition}\label{prop:opt}
The SCBCD algorithm has the following properties:
    \begin{enumerate}[(a)]
        \item The operator $F(x, i)$, satisfies:  $\|F(x, i)\|_2\leq L\|x-x^*\|_2,~\forall i\in \mathcal{S}$.
        \item There exists a solution to the Poisson equation \eqref{eq:Poisson_eq} for the Markov chain $\mathcal{M}_U$ which satisfies Assumption \ref{assump:Poisson_eq} with $A_2=\max\{L, 1\}$ and $B_2=0$.
        \item The noise sequence $M_k$ is a martingale difference sequence and satisfies: $\|M_k\|_2\leq C_1\|x-x^*\|_2+C_2$.
    \end{enumerate}
\end{proposition}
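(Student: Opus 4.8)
The plan is to verify the three items in turn; each reduces to a short computation once we record two elementary facts about the block-selection matrices: every $U_i$ is a column submatrix of $I_d$, hence has orthonormal columns, so $\|U_iv\|_2=\|v\|_2$ for all $v$, and $U_iU_i^T$ is the diagonal $0/1$ matrix projecting onto the $i$-th block, with $\sum_{i=1}^p U_iU_i^T=I_d$. I will also use $\nabla f(x^*)=0$ throughout, since $x^*$ is the (unique) minimizer.

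\emph{Part (a)} is immediate: writing $F(x,i)=-U_i\nabla_i f(x)=-U_iU_i^T\nabla f(x)$, orthonormality of the columns of $U_i$ gives $\|F(x,i)\|_2=\|U_i^T\nabla f(x)\|_2\le\|\nabla f(x)\|_2=\|\nabla f(x)-\nabla f(x^*)\|_2\le L\|x-x^*\|_2$ by $L$-smoothness. \emph{Part (c)} is equally short: since $i(k)=k\bmod p+1$ is deterministic, hence $\mathcal F_k$-measurable, $\E[M_k\mid\mathcal F_k]=U_{i(k)}\E[w_k\mid\mathcal F_k]=0$ by Assumption \ref{assump:noise_opt}(a), and $\|M_k\|_2=\|U_{i(k)}w_k\|_2=\|w_k\|_2\le C_1\|x_k-x^*\|_2+C_2$ by Assumption \ref{assump:noise_opt}(b), which is exactly Assumption \ref{assump:linear_M} with $A_3=C_1$, $B_3=C_2$.

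For \emph{part (b)} I would first compute $\bar F(x)=\E_{i\sim\mu}[F(x,i)]=-\tfrac1p\sum_{i=1}^p U_iU_i^T\nabla f(x)=-\tfrac1p\nabla f(x)$, which exists for all $x$, with $\bar F(x^*)=0$. Because $\mathcal M_U$ is the deterministic cycle $1\to2\to\cdots\to p\to1$ with shift $\sigma(i)=i\bmod p+1$, the Poisson equation \eqref{eq:Poisson_eq} collapses to the recurrence $V_x(i)-V_x(\sigma(i))=F(x,i)-\bar F(x)$ around this single cycle. Summing the right side over the whole cycle gives $\sum_{i=1}^p\big(F(x,i)-\bar F(x)\big)=-\nabla f(x)+\nabla f(x)=0$, which is precisely the solvability condition, so a solution exists and is unique up to an additive constant; pinning it down by $V_x(p):=0$ yields the closed form $V_x(i)=\sum_{j=i}^{p-1}\big(F(x,j)-\bar F(x)\big)=-P_i\nabla f(x)$ with $P_i:=\sum_{j=i}^{p-1}U_jU_j^T-\tfrac{p-i}{p}I$. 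This $P_i$ is diagonal with entries $\tfrac{i}{p}$ on blocks $i,\dots,p-1$ and $-\tfrac{p-i}{p}$ elsewhere, so $\|P_i\|_2\le1$. Hence $\|V_{x_2}(i)-V_{x_1}(i)\|_2=\|P_i(\nabla f(x_2)-\nabla f(x_1))\|_2\le L\|x_2-x_1\|_2$, giving $A_2(i)\equiv L$ and $A_2=\max\{L,1\}$, while $F(x^*,j)=\bar F(x^*)=0$ forces $V_{x^*}\equiv0$, so $B_2=0$; the finite-second-moment conditions of Assumption \ref{assump:Poisson_eq}(d) are trivial since $\mathcal Y=\{1,\dots,p\}$ is finite and all the bounding functions are constants.

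The only genuinely non-mechanical step is the bound $\|P_i\|_2\le1$ in part (b): a term-by-term triangle-inequality estimate of $\|V_{x_2}(i)-V_{x_1}(i)\|_2$ over the cycle would produce a constant growing with $p$, whereas recognizing the Poisson solution as a \emph{single} complementary block-projection applied to $\nabla f$ collapses its operator norm to a dimension-free $\le1$, which is what delivers the clean $A_2=\max\{L,1\}$. Everything else is bookkeeping.
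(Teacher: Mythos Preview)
Your proposal is correct and follows essentially the same approach as the paper: an explicit closed-form Poisson solution is written down, recognized as a diagonal (block-projection) matrix applied to $\nabla f(x)$, and the operator norm of that matrix is bounded by $1$ to obtain the Lipschitz constant $L$. The only cosmetic difference is the normalization: you pin $V_x(p)=0$ while the paper pins $V_x(1)=0$, which shifts the formula but leads to the same bound.
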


\subsection{Finite Sample Bounds for SCBCD}
We choose $\Phi(x-x^*)=\|x-x^*\|_2^2/2$ as our Lyapunov function. This immediately implies the properties of $\Phi(x-x^*)$ in Assumption \ref{assump:Lyapunov_fn}. In addition, $\eta=\mu/p$ and $L_s=1$ by smoothness and strong convexity of $f(x)$. We apply Theorem \ref{thm:main_thm;finite} to SCBCD to obtain the following finite-time sample complexity.
\begin{theorem}\label{thm:opt}
    Consider the iterates $\{x_k\}_{k\geq 0}$ generated by Algorithm \ref{algo:opt} under Assumption \ref{assump:noise_opt}. 
    \begin{enumerate}[(a)]
        \item When $\alpha_k\equiv\alpha\leq \min\left\{1, \frac{\mu}{A_G(5p+2\mu)\varrho_{G,1}}\right\}$, then for all $k\geq 0$:
       \begin{align*}
           \hspace{-1mm}\E[\|x_{k+1}-x^*\|_2^2]&\leq \varrho_{G,0}\exp\left(\frac{-\mu\alpha k}{2p}\right)+18B_{G}\varrho_{G,1}\alpha+\frac{40pB_{G}\varrho_{G,1}\alpha}{\mu}.
       \end{align*}
       \item When $\xi=1$, $\alpha> \frac{2p}{\mu}$ and $K\geq \max\{A_G\alpha(5\alpha+8)\varrho_{G,1}, 2\}$, then for all $k\geq 0$:
       \begin{align*}
           \hspace{-1mm}\E[\|x_{k+1}-x^*\|_2^2]&\leq \varrho_{G,0}\left(\frac{K}{k+K}\right)^{\frac{\mu\alpha}{p}}+\frac{2B_{G}\varrho_{G,1}\alpha}{k+K}+\frac{16B_{G}\left(5p+4\mu\right)\varrho_{G,1}e\alpha^2}{(\mu\alpha-2p)(k+K)}.
       \end{align*}
    \end{enumerate}
\end{theorem}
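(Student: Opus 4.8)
\textbf{Proof proposal for Theorem \ref{thm:opt}.}

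The plan is to recognize that Theorem \ref{thm:opt} is a direct corollary of Theorem \ref{thm:main_thm;finite} once all the hypotheses (Assumptions \ref{assump:linear_F}--\ref{assump:step-size}) have been checked for the SCBCD recursion \eqref{eq:opt}, which is already done: Proposition \ref{prop:opt} verifies Assumptions \ref{assump:linear_F}--\ref{assump:Poisson_eq} (with $A_1\equiv L$, $B_1\equiv 0$, $A_2=\max\{L,1\}$, $B_2=0$, and $M_k$ a martingale difference sequence with $A_3=C_1$, $B_3=C_2$), the choice $\Phi(x-x^*)=\tfrac12\|x-x^*\|_2^2$ together with $\mu$-strong convexity and $L$-smoothness of $f$ gives Assumption \ref{assump:Lyapunov_fn} with norms $\|\cdot\|_c=\|\cdot\|_s=\|\cdot\|_2$, negative-drift constant $\eta=\mu/p$ (this is the key point: $\langle\nabla\Phi(x-x^*),\bar F(x)\rangle=-\tfrac1p\langle x-x^*,\nabla f(x)\rangle\le -\tfrac{\mu}{p}\Phi(x-x^*)$ by strong convexity), smoothness constant $L_s=1$, equivalence constants $l=u=2$, and trivial nonexpansivity since $\mathcal{X}=\mathbb{R}^d$; and Assumption \ref{assump:step-size} is just the stated form of $\alpha_k$. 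So the proof is essentially bookkeeping: instantiate the generic bounds of Theorem \ref{thm:main_thm;finite} with these concrete constants.

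Concretely, first I would substitute $\eta=\mu/p$ into both parts of Theorem \ref{thm:main_thm;finite}. In part (a) the exponent becomes $\exp(-\eta\alpha k/2)=\exp(-\mu\alpha k/(2p))$ and the step-size restriction $\alpha\le\min\{1,\eta/(A(5+2\eta)\varrho_1)\}$ becomes $\alpha\le\min\{1,\mu/(A_G(5p+2\mu)\varrho_{G,1})\}$ after clearing the factor of $p$ (writing $A_G$, $\varrho_{G,\cdot}$, $B_G$ for the problem-specific versions of the generic $A$, $\varrho_\cdot$, $B$ defined in Appendix \ref{appendix:proof}); the additive terms $18B\varrho_1\alpha+40B\varrho_1\alpha/\eta$ become $18B_G\varrho_{G,1}\alpha+40pB_G\varrho_{G,1}\alpha/\mu$. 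In part (b) the condition $\alpha>2/\eta$ becomes $\alpha>2p/\mu$, the decay exponent $\eta\alpha/2$ becomes $\mu\alpha/(2p)$, and I would verify that the constant $K\ge\max\{A\alpha(5\alpha+8)\varrho_1,2\}$ carries over verbatim as $K\ge\max\{A_G\alpha(5\alpha+8)\varrho_{G,1},2\}$. The last transient term $8B(5+4\eta)\varrho_1 e\alpha^2/((\eta\alpha/2-1)(k+K))$ becomes, after multiplying numerator and denominator by $p$, $16B_G(5p+4\mu)\varrho_{G,1}e\alpha^2/((\mu\alpha-2p)(k+K))$, matching the stated bound.

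The only genuine content beyond substitution is confirming the negative-drift constant and the norm-equivalence constants, and checking that the generic constants $A$ and $B$ from Theorem \ref{thm:main_thm;finite}—which are built from $\hat A_1,\hat A_2,\hat B_1,\hat B_2,A_3,B_3$ and the Lyapunov parameters—reduce correctly here; since $B_1=B_2=0$ and the state space of $\mathcal{M}_U$ is the finite (indeed periodic) set $\{1,\dots,p\}$, all the moment bounds in Assumption \ref{assump:Poisson_eq}(d)(2) are finite and deterministic, so $A_G$ and $B_G$ are explicit constants depending only on $L$, $C_1$, $C_2$, $p$, and $\mu$. I expect no real obstacle—the main ``work'' is the careful tracking of the factor $p$ coming from $\eta=\mu/p$ through every term and making sure the step-size and $K$ conditions are stated consistently with that substitution; the full derivation and the closed forms of $\varrho_{G,0}$, $\varrho_{G,1}$ are deferred to Appendix \ref{appendix:opt}.
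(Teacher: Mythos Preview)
Your proposal is correct and follows essentially the same approach as the paper: the paper's proof of Theorem \ref{thm:opt} (Appendix \ref{appendix:opt_thm}) is nothing more than identifying the constants $A_G=(L+C_1+1)^2$, $B_G=C_2^2$, $\varrho_{G,1}=\max\{L,1\}$, and $\varrho_{G,0}$, then invoking Theorem \ref{thm:main_thm;finite} with $\eta=\mu/p$, exactly as you outline. One small bookkeeping note: your substitution gives the decay exponent $\eta\alpha/2=\mu\alpha/(2p)$, which is what Theorem \ref{thm:main_thm;finite}(b) actually yields, whereas the displayed statement of Theorem \ref{thm:opt}(b) shows $\mu\alpha/p$; this appears to be a typo in the statement rather than an error in your reasoning.
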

For the constants $\{\varrho_{G,i}\}_i$ $A_G,$ and $B_G$ refer to Appendix \ref{appendix:opt_thm}.
\begin{remark}
    In the noisy case, we obtain the $\mathcal{O}\left(1/k\right)$ rate of convergence similar to the randomized BCD in \cite{lan2020first}. Moreover, setting $B_G=0$ in the noiseless case, one obtains a geometric rate of convergence with a sample complexity of $\mathcal{O}\left(p^2L^3\log\left(1/\epsilon\right)/\mu^2\right)$. In the most general setting, our bound is optimal with respect to $p$ as shown in \cite{sun2021}. However, we remark that the dependence on the condition number $L/\mu$ is sub-optimal due to universal framework of our theorem. Nevertheless, one can improve upon the constants by using $f(x)-f(x^*)$ as the Lyapunov function and refining our analysis with the additional structure.
\end{remark}

\section{Conclusion}
In this work, we established a general-purpose theorem to obtain performance bounds for a general class of non-linear SA corrupted with unbounded Markovian noise. We handled the Markovian noise by using the solution of Poisson's equation to decompose it into a martingale difference term and some other higher order manageable terms. This enabled us to extend the convergence analysis of SA with martingale difference or i.i.d. noise to Markovian noise. To illustrate the power of our theorem, we studied four different settings: TD learning with eligibility traces for policy evaluation in infinite state space MDPs, generalized linear regression with Markovian data, $Q$-Learning for discounted-reward RL in finite state settings, and Stochastic CBCD for stochastic distributed optimization.

Some interesting future directions are proving some concentration results without projecting the iterates for SA with unbounded Markovian noise. In CBCD, potential natural extensions could involve using the periodic Markov chain perspective to study non-smooth functions, exploring SCBCD with block dependent step-size, and reducing $p$ dependence in specialized cases. 

\begin{acknowledgment}
    We thank Zaiwei Chen for pointing to us the connection between CBCD and Markovian noise that is periodic. We also thank Prof. Ashwin Pananjady for insightful discussion and providing useful comments on the counterexample for diverging mean square error.
\end{acknowledgment}

\bibliographystyle{ieeetr}
\bibliography{refs}

\clearpage
\appendix

\section{Proof of the Main Theorems \ref{thm:main_thm;bounded} and \ref{thm:main_thm;finite}}\label{appendix:proof}
First, we set up some notations for characterizing all the constants in the theorems.

\textbf{Common Notation:} Since we are working with finite-dimensional space $\mathbb{R}^d$, there exists positive constants such that $l_{cs}\|x\|_c\leq \|x\|_s\leq u_{cs}\|x\|_c$ and $l_{2s}\|x\|_2\leq \|x\|_s\leq u_{2s}\|x\|_2$. Denote $\|\cdot\|_{s^*}$ as the dual norm of $\|\cdot\|_s$ and $\kappa:=\left(\frac{\xi}{\alpha}+\eta\right)$.

\textbf{Notation for Theorem \ref{thm:main_thm;bounded}:} Let $\max_{x\in\mathcal{X}}\|x\|_c=M/2$, when $\mathcal{X}$ is an $\ell_2$ ball such that $x^*\in \mathcal{X}$. Then, we define the following constants for Theorem \ref{thm:main_thm;bounded}.
$$
    \hat{A}(y_0)=\hat{A}_1^2(y_0)+\hat{A}_2^2(y_0)+A_3^2; ~\hat{B}(y_0)=\hat{B}_1^2(y_0)+\hat{B}_2^2(y_0)+B_3^2;~ \hat{C}(y_0)=\hat{A}(y_0)M^2+\hat{B}(y_0);
$$
$$
    \varphi_1=\frac{uL_su_{2s}u^2_{cs}}{l_{2s}};~\varphi_0=\frac{u}{l}\|x_0-x^*\|_c^2+2\varphi_1\hat{C}(y_0).
$$

\textbf{Notation for Theorem \ref{thm:main_thm;finite}:} When the state space $\mathcal{Y}$ is bounded, the we define the following constants for Theorem \ref{thm:main_thm;finite}.
$$A=(A_1+A_3+1)^2;~B=\left(B_1+B_3+\frac{B_2}{A_2}\right)^2;$$
$$\varrho_1=uL_su_{cs}^2A_2;~ \varrho_0=\frac{2u(1+2A\varrho_1)}{l}\|x_0-x^*\|_c^2+4B\varrho_1.$$

\begin{theorem}
   Suppose that we run the Markov chain with initial state $y_0$. When the state space $\mathcal{Y}$ is unbounded and the set $\mathcal{X}$ is an $\ell_2$-ball of radius $R/2$, then under the Assumptions \ref{assump:linear_F}-\ref{assump:step-size}, $\{x_k\}_{k\geq 0}$ in the iterations \eqref{eq:main_eq} satisfy the following:
   \begin{enumerate}[(a)]%\label{eq:main_thm;bounded}
       \item When $\alpha_k\equiv\alpha\leq 1$, then for all $k\geq 0$:
       \begin{align*}
           \E[\|x_{k+1}-x^*\|_c^2]&\leq \varphi_0\exp\left(-\eta\alpha k\right)+3\varphi_1\hat{C}(y_0)\alpha+\frac{6\varphi_1\hat{C}(y_0)\alpha}{\eta}.
       \end{align*}
       \item When $\xi=1$, $\alpha> \frac{1}{\eta}$ and $K\geq \max\{\alpha, 2\}$, then for all $k\geq 0$:
       \begin{align*}
           E[\|x_{k+1}-x^*\|_c^2]&\leq \varphi_0\left(\frac{K}{k+K}\right)^{\eta\alpha}+\frac{\varphi_1\hat{C}(y_0)\alpha}{k+K}+\frac{4(6+4\eta)\varphi_1\hat{C}(y_0)e\alpha^2}{\left(\frac{\eta\alpha}{2}-1\right)(k+K)}.
       \end{align*}
         \item When $\xi<1$, $\alpha>0$ and $K\geq\max\{\alpha^{1/\xi},2\}$, then for all $k\geq 0$:
        \begin{align*}
            &\E[\|x_{k+1}-x^*\|_c^2]\leq\varphi_0\exp\Bigg(\frac{-\eta\alpha}{(1-\xi)}\left[(k+K)^{1-\xi}-K^{1-\xi}\right]\Bigg)+\frac{\varphi_1\hat{C}(y_0)\alpha}{(k+K)^\xi}+\frac{2(6+4\kappa)\varphi_1\hat{C}(y_0)\alpha}{\eta(k+K)^\xi}.
        \end{align*}
   \end{enumerate}
\end{theorem}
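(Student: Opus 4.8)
The plan is to extract a scalar one-step recursion for a suitably modified Lyapunov function and then unroll it in each of the three step-size regimes. Write $\tilde x_{k+1}=x_k+\alpha_k(F(x_k,Y_k)+M_k)$ for the pre-projection iterate, and abbreviate $\Phi_k=\Phi(x_k-x^*)$, $\nabla\Phi_k=\nabla\Phi(x_k-x^*)$, $F_k=F(x_k,Y_k)$, $\epsilon_k=F_k-\bar F(x_k)$. The first step is to use nonexpansivity \eqref{assump:non-expansive} to replace $x_{k+1}$ by $\tilde x_{k+1}$ and then apply smoothness \eqref{assump:smoothness}, giving
\[
\Phi_{k+1}\le \Phi_k+\alpha_k\langle\nabla\Phi_k,\bar F(x_k)\rangle+\alpha_k\langle\nabla\Phi_k,\epsilon_k\rangle+\alpha_k\langle\nabla\Phi_k,M_k\rangle+\tfrac{L_s}{2}\alpha_k^2\|F_k+M_k\|_s^2 .
\]
The first inner product is $\le-\eta\Phi_k$ by the drift condition \eqref{assump:neg_drift}; the $M_k$ term is killed by $\E[\,\cdot\mid\mathcal F_k]$ (using that $x_k$, hence $\nabla\Phi_k$, is $\mathcal F_k$-measurable and $\E[M_k\mid\mathcal F_k]=0$); and the quadratic term is bounded via Assumptions~\ref{assump:linear_F}--\ref{assump:linear_M}. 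The key point for the unbounded setting is that on the $\ell_2$-ball $\mathcal X$ we have $\|x_k-x^*\|_c\le M$, so $\E\|F_k+M_k\|_s^2\lesssim \hat C(y_0)$ is a \emph{constant} with no $\Phi_k$ dependence — this is exactly why the projection is imposed and why $M$ appears inside $\hat C(y_0)$.

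The crux is the non-martingale term $\langle\nabla\Phi_k,\epsilon_k\rangle$, which I would handle through the Poisson equation \eqref{eq:Poisson_eq}: taking $z=Y_k$, $x=x_k$ gives $\epsilon_k=V_{x_k}(Y_k)-\E[V_{x_k}(Y_{k+1})\mid\mathcal F_k]$. Set $G_k=\langle\nabla\Phi_k,V_{x_k}(Y_k)\rangle$ and $H_k=\langle\nabla\Phi_k,V_{x_k}(Y_{k+1})\rangle$, so the tower property yields $\E\langle\nabla\Phi_k,\epsilon_k\rangle=\E[G_k]-\E[H_k]$. Since $H_k$ differs from $G_{k+1}$ only through the change of index, the remainder $R_k:=G_{k+1}-H_k=\langle\nabla\Phi_{k+1}-\nabla\Phi_k,V_{x_{k+1}}(Y_{k+1})\rangle+\langle\nabla\Phi_k,V_{x_{k+1}}(Y_{k+1})-V_{x_k}(Y_{k+1})\rangle$ is higher order: bounding $\|\nabla\Phi_{k+1}-\nabla\Phi_k\|_{s^*}\le L_s\|x_{k+1}-x_k\|_s$ from smoothness, using the Lipschitz bounds \eqref{assump:value_fn-prop} on $V_x$, and $\|x_{k+1}-x_k\|_2\le\alpha_k\|F_k+M_k\|_2$ (nonexpansivity of the $\ell_2$-projection), one gets $\|R_k\|\le C\alpha_k$ times products of $A_1(Y_k),B_1(Y_k),A_2(Y_{k+1}),B_2(Y_{k+1}),A_3,B_3$ and $M$; a Cauchy–Schwarz step decouples the functions of $Y_k$ from those of $Y_{k+1}$, and the uniform-in-$k$ second-moment bounds of Assumption~\ref{assump:Poisson_eq}(d) give $\E\|R_k\|\lesssim\alpha_k\hat C(y_0)$. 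Rearranging and defining the modified Lyapunov function $W_k:=\E[\Phi_k]+\alpha_{k-1}\E[G_k]$ (with $W_0$ absorbing the one-time term $\alpha_0|\E G_0|\lesssim\varphi_1\hat C(y_0)$, finite because $Y_0=y_0$ is deterministic so $A_2(y_0)\le\hat A_2(y_0)$, $B_2(y_0)\le\hat B_2(y_0)$), the per-step inequality collapses to
\[
W_{k+1}\le(1-\eta\alpha_k)W_k+D\alpha_k^2,\qquad D\asymp\varphi_1\hat C(y_0),
\]
where the slowly varying step size adds only $O(\alpha_k^2)$ since $|\alpha_{k-1}-\alpha_k|=O(\alpha_k^2)$ and $|\E G_k|$ is bounded (again by $\|x_k-x^*\|_c\le M$).

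It remains to unroll this scalar recursion. For $\alpha_k\equiv\alpha$ one obtains $W_k\le(1-\eta\alpha)^kW_0+D\alpha^2\sum_{j\ge0}(1-\eta\alpha)^j\le e^{-\eta\alpha k}W_0+D\alpha/\eta$. For $\xi=1$, $\alpha_k=\alpha/(k+K)$ with $\eta\alpha>1$, the standard estimate $\prod_{j=0}^{k-1}(1-\tfrac{\eta\alpha}{j+K})\le(\tfrac{K}{k+K})^{\eta\alpha}$ together with the corresponding bound on $\sum_j\alpha_j^2\prod_{i>j}(1-\eta\alpha_i)$ produces the $\mathcal O\!\big(1/(k+K)\big)$ tail. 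For $\xi<1$ the sum $\sum_j\alpha_j$ is compared to an integral, yielding the stretched-exponential factor $\exp\!\big(-\tfrac{\eta\alpha}{1-\xi}[(k+K)^{1-\xi}-K^{1-\xi}]\big)$ and an $\mathcal O\!\big(\alpha/(k+K)^\xi\big)$ tail. Passing from $W_k$ back to $\E\|x_k-x^*\|_c^2$ via $W_k\ge\E[\Phi_k]-\alpha_{k-1}|\E G_k|$ and $\|x\|_c^2\le u\Phi(x)$ from \eqref{assump:equivalence_rel} — the discarded $\alpha_{k-1}|\E G_k|\lesssim\varphi_1\hat C(y_0)\alpha$ accounting for the additive $\mathcal O(\varphi_1\hat C(y_0)\alpha)$-type terms — gives all three bounds after identifying $\varphi_0,\varphi_1$. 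The main obstacle I anticipate is the remainder-term bookkeeping in the second paragraph: verifying that every cross term such as $A_1(Y_k)A_2(Y_{k+1})$ is genuinely $O(\alpha_k^2)$ in expectation while keeping constants explicit, since only \emph{second} moments of $A_i(\cdot),B_i(\cdot)$ are available and must be controlled uniformly in $k$ — this is precisely where unboundedness of $\mathcal Y$ makes the analysis delicate.
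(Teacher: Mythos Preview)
Your proposal is correct and follows essentially the same route as the paper: the Poisson-equation decomposition of the Markov noise, the telescoping via $G_k=\langle\nabla\Phi_k,V_{x_k}(Y_k)\rangle$ (the paper's $d_k$), bounding the two remainder pieces by $O(\alpha_k^2)\hat C(y_0)$ through $ab\le(a^2+b^2)/2$ and the second-moment hypotheses, and then unrolling the scalar recursion in the three step-size regimes. Your packaging into the modified Lyapunov $W_k=\E[\Phi_k]+\alpha_{k-1}\E[G_k]$ is exactly the paper's Lemma~\ref{lem:telscoping_rearrange} rewritten; the only cosmetic difference is that the paper adds and subtracts $\langle\nabla\Phi_{k+1},V_{x_k}(Y_{k+1})\rangle$ (giving its $T_{1,1},T_{1,2}$) whereas you add and subtract $\langle\nabla\Phi_k,V_{x_{k+1}}(Y_{k+1})\rangle$, which leads to the same estimates.
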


\begin{theorem}%\label{thm:main_thm;finite} 
When the state space $\mathcal{Y}$ is compact and the set $\mathcal{X}\equiv\mathbb{R}^d$, then under the Assumptions \ref{assump:linear_F}-\ref{assump:step-size}, $\{x_k\}_{k\geq 0}$ in the iterations \eqref{eq:main_eq} satisfy the following:
\begin{enumerate}[(a)]%\label{eq:main_thm;finite}
       \item When $\alpha_k\equiv\alpha\leq \min\left\{1, \frac{\eta}{A(5+2\eta)\varrho_1}\right\}$, then for all $k\geq 0$:
       \begin{align*}
           \E[\|x_{k+1}-x^*\|_c^2]&\leq \varrho_0\exp\left(\frac{-\eta\alpha k}{2}\right)+18B\varrho_1\alpha+\frac{40B\varrho_1\alpha}{\eta}.
       \end{align*}
       \item When $\xi=1$, $\alpha> \frac{2}{\eta}$ and $K\geq \max\{A\alpha(5\alpha+8)\varrho_1, 2\}$, then for all $k\geq 0$:
       \begin{align*}
           E[\|x_{k+1}-x^*\|_c^2]&\leq \varrho_0\left(\frac{K}{k+K}\right)^{\frac{\eta\alpha}{2}}+\frac{2B\varrho_1\alpha}{k+K}+\frac{8B(5+4\eta)\varrho_1e\alpha^2}{\left(\frac{\eta\alpha}{2}-1\right)\left(k+K\right)}.
       \end{align*}
        \item When $\xi<1$, $\alpha>0$ and $K\geq \max\left\{\left(\frac{2A\alpha(5+2\kappa)\varrho_1}{\eta}\right)^{1/\xi}, 2\right\}$, then for all $k\geq 0$:
        \begin{align*}
            \E[\|x_{k+1}-x^*\|_c^2]&\leq\varrho_0\exp\Bigg(\frac{-\eta\alpha}{2(1-\xi)}\Big[(k+K)^{1-\xi}-K^{1-\xi}\Big]\Bigg)+\frac{2B\varrho_1\alpha}{(k+K)^\xi}+\frac{8B(5+2\kappa)\varrho_1\alpha}{\eta(k+K)^\xi}.
        \end{align*}
   \end{enumerate}
\end{theorem}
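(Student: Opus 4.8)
The plan is to run a single recursive argument on the Lyapunov potential $\Phi(x_k-x^*)$ and then convert it to a bound on $\E[\|x_k-x^*\|_c^2]$ via the equivalence relation \eqref{assump:equivalence_rel}. First I would expand $\Phi(x_{k+1}-x^*)$ using smoothness \eqref{assump:smoothness} and nonexpansivity \eqref{assump:non-expansive}, writing $x_{k+1}=\Pi_{\mathcal{X}}(x_k+\alpha_k(F(x_k,Y_k)+M_k))$, to get
\begin{align*}
\Phi(x_{k+1}-x^*)\le \Phi(x_k-x^*)+\alpha_k\langle\nabla\Phi(x_k-x^*),F(x_k,Y_k)+M_k\rangle+\tfrac{L_s\alpha_k^2}{2}\|F(x_k,Y_k)+M_k\|_s^2.
\end{align*}
The cross term is the crux: it is $\alpha_k\langle\nabla\Phi,\bar F(x_k)\rangle$ plus the ``Markov noise'' term $\alpha_k\langle\nabla\Phi(x_k-x^*),F(x_k,Y_k)-\bar F(x_k)\rangle$ plus the martingale term involving $M_k$. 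The first piece is controlled by the negative drift \eqref{assump:neg_drift}, yielding $-\eta\alpha_k\Phi(x_k-x^*)$; the $M_k$ piece has zero conditional mean given $\mathcal{F}_k$, so it vanishes in expectation. The quadratic term is bounded by Assumptions \ref{assump:linear_F} and \ref{assump:linear_M} in terms of $A_1(Y_k)^2\|x_k-x^*\|_c^2+B_1(Y_k)^2$ and similarly for $M_k$; in the bounded-state case these are replaced by the uniform bounds $A,B$.

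The main obstacle, and the part requiring the Poisson-equation machinery, is handling the Markov noise term $\langle\nabla\Phi(x_k-x^*),F(x_k,Y_k)-\bar F(x_k)\rangle$, since $Y_k$ is correlated across time and unbounded. Here I would substitute $F(x_k,Y_k)-\bar F(x_k)=V_{x_k}(Y_k)-\E_{Y_k}[V_{x_k}(Y_{k+1})]$ from the Poisson equation \eqref{eq:Poisson_eq}, and then perform the standard telescoping/summation-by-parts trick: write $\E_{Y_k}[V_{x_k}(Y_{k+1})]=V_{x_k}(Y_{k+1})-(V_{x_k}(Y_{k+1})-\E_{Y_k}[V_{x_k}(Y_{k+1})])$, where the last difference is a martingale-difference increment with respect to $\mathcal{F}_{k+1}$. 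This reorganizes the noise into (i) a telescoping term $\langle\nabla\Phi(x_k-x^*),V_{x_k}(Y_k)\rangle-\langle\nabla\Phi(x_{k+1}-x^*),V_{x_{k+1}}(Y_{k+1})\rangle$, (ii) a genuine martingale-difference term that dies in expectation, and (iii) higher-order ``remainder'' terms coming from the discrepancies $\langle\nabla\Phi(x_{k+1}-x^*)-\nabla\Phi(x_k-x^*),\cdot\rangle$ and $\langle\nabla\Phi(x_{k+1}-x^*),V_{x_{k+1}}(Y_{k+1})-V_{x_k}(Y_{k+1})\rangle$, each of which carries an extra factor of $\alpha_k$ (because $x_{k+1}-x_k=O(\alpha_k)$) and is bounded using the Lipschitz bound \eqref{assump:value_fn-prop} on $V$, the smoothness of $\Phi$, and the growth bounds, producing terms of order $\alpha_k^2(\hat A(y_0)\Phi(x_k-x^*)+\hat C(y_0))$. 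The projection onto the $\ell_2$-ball $\mathcal{X}$ of radius $R/2$ in the unbounded case is what lets me uniformly bound $\|x_k-x^*\|_c$ by $M$ inside these remainder estimates, so that $A_1(Y_k)^2\|x_k-x^*\|_c^2 \le A_1(Y_k)^2 M^2$ and the whole remainder collapses to $\alpha_k^2\hat C(y_0)$ after taking expectations and using Assumption \ref{assump:Poisson_eq}(d) to bound $\E_{y_0}[A_1^2(Y_k)]$ etc. by $\hat A_1^2(y_0)$ and friends.

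Assembling these pieces gives a one-step inequality of the schematic form
\begin{align*}
\E[\Phi(x_{k+1}-x^*)]\le (1-\eta\alpha_k+c_1\alpha_k^2)\,\E[\Phi(x_k-x^*)]+c_2\alpha_k^2\hat C(y_0)+\big(\text{telescoping remainder}\big),
\end{align*}
where the telescoping remainder, after summing, contributes only $O(\alpha_k)$ lower-order terms (not $O(\alpha_k^2)$) to the final bound, which is exactly why the stationary error in part (a) is $O(\alpha)$ rather than $O(\alpha^2)$. In the bounded case one can afford the cross term's $A$-dependence to be absorbed by choosing $\alpha$ (or $K$) small enough so that $\eta\alpha_k - c_1\alpha_k^2 \ge \tfrac{\eta}{2}\alpha_k$, which is where the conditions $\alpha\le\eta/(A(5+2\eta)\varrho_1)$ and $K\ge A\alpha(5\alpha+8)\varrho_1$ come from, and which accounts for the factor $\tfrac12$ in the exponent in Theorem~\ref{thm:main_thm;finite}; in the unbounded case the projection already provides the needed boundedness so no such small-step restriction on the drift rate is needed and the exponent stays $\eta\alpha$. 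Finally I would solve the scalar recursion: for constant step size this yields the geometrically decaying term $\varphi_0\exp(-\eta\alpha k)$ plus a geometric series summing to $O(\alpha/\eta)$; for $\xi=1$ I would unroll the product $\prod_{j}(1-\eta\alpha_j)$, bound it by $(K/(k+K))^{\eta\alpha}$, and bound the accumulated noise $\sum_j \alpha_j^2 \prod_{i>j}(1-\eta\alpha_i)$ by $O(\alpha^2/((\eta\alpha-1)(k+K)))$ using a standard lemma on such sums (this is where $\alpha>1/\eta$, resp. $\alpha>2/\eta$, is required); for $\xi<1$ the product becomes $\exp(-\tfrac{\eta\alpha}{1-\xi}[(k+K)^{1-\xi}-K^{1-\xi}])$ and the noise sum is $O(\alpha/(k+K)^\xi)$. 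Multiplying through by $u/l$ via \eqref{assump:equivalence_rel} and tracking constants into $\varphi_0,\varphi_1,\hat C(y_0)$ (resp. $\varrho_0,\varrho_1,A,B$) gives the stated bounds.
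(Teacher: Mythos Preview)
Your proposal is correct and follows essentially the same approach as the paper: expand $\Phi(x_{k+1}-x^*)$ via smoothness and nonexpansivity, decompose the cross term using the Poisson equation into a martingale piece, a telescoping piece $d_k-d_{k+1}$ where $d_k=\langle\nabla\Phi(x_k-x^*),V_{x_k}(Y_k)\rangle$, and two $O(\alpha_k^2)$ remainders (corresponding to the paper's $T_{1,1},T_{1,2}$), then fold the telescoping term into the recursion so that after unrolling only boundary terms of order $\alpha_k$ survive, and finally solve the scalar recursion by the standard product/sum estimates. Your identification of where the step-size conditions and the $\tfrac12$ in the exponent arise in the bounded case (absorbing the $\alpha_k^2$-proportional-to-$\Phi$ terms into the drift) is exactly right; the one small point you leave implicit is that in the bounded case $\E[|d_{k+1}|]$ itself is bounded by a constant times $\E[\Phi(x_{k+1}-x^*)]+B$, so the surviving boundary term $-\alpha_k\E[d_{k+1}]$ must also be moved to the left-hand side, which is why the paper additionally uses $\alpha_k A\varrho_1\le\tfrac12$ before reading off the final bound.
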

Before starting the proof of the theorems, we have the following lemma which decomposes the Lyapunov function at time $k+1$ using its properties in Assumption \ref{assump:Lyapunov_fn} and the recursion \eqref{eq:main_rec}, thereby establishing a one-step recursive relation. Define the following terms:
\begin{align*}
    T_{1,1}&=\alpha_k\langle\nabla \Phi(x_{k+1}-x^*)-\nabla \Phi(x_{k}-x^*), V_{x_k}(Y_{k+1}))\rangle,\\
    T_{1,2}&=\alpha_k\langle\nabla \Phi(x_{k+1}-x^*), V_{x_{k+1}}(Y_{k+1})-V_{x_k}(Y_{k+1}))\rangle,\\
    T_{2}&=\frac{L_s\alpha_k^2}{2}\|F(x_k, Y_k)+M_k\|_s^2,\\
    d_k&=\langle\nabla \Phi(x_k-x^*), V_{x_k}(Y_{k})\rangle.
\end{align*}

\begin{lemma}\label{lem:one_step_rec}
Under the Assumptions \ref{assump:linear_F}-\ref{assump:step-size}, we have the following one-step recursive relation:
    \begin{align}\label{eq:main_rec_to_solve}
        \E[\Phi(x_{k+1}-x^*)]&\leq(1-\eta\alpha_k)\E[\Phi(x_k-x^*)]+\alpha_k(\E[d_k]-\E[d_{k+1}])+\E[T_{1,1}]+\E[T_{1,2}]+\E[T_2].
    \end{align}
\end{lemma}
\begin{proof}
    Using the property \eqref{assump:non-expansive} of the Lyapunov function and the iteration \eqref{eq:main_eq}, we have
    \begin{align}
        \Phi(x_{k+1}-x^*)=&\Phi(\Pi_{\mathcal{X}}\left(x_k+\alpha_k(F(x_k, Y_k)+M_k)\right)-x^*)\nonumber\\
        \leq&\Phi(x_k+\alpha_k(F(x_k, Y_k)+M_k)-x^*)\nonumber\\
        \leq& \Phi(x_k-x^*)+\langle\nabla \Phi(x_k-x^*), \alpha_k(F(x_k, Y_k)+M_k)\rangle+\frac{L_s}{2}\|\alpha_k(F(x_k, Y_k)+M_k)\|_s^2\nonumber\\
        =&\Phi(x_k-x^*)+\alpha_k\langle\nabla \Phi(x_k-x^*), \bar{F}(x_k)\rangle+\underbrace{\alpha_k\langle\nabla \Phi(x_k-x^*), F(x_k, Y_k)-\bar{F}(x)+M_k\rangle}_{T_1}\nonumber\\
        &~~~+\underbrace{\frac{L_s\alpha_k^2}{2}\|F(x_k, Y_k)+M_k\|_s^2}_{T_2}\label{eq:phi_rec}.
    \end{align}
    We begin by re-organizing $T_1$ with the help of solution to the Poisson's equation as follows:
    \begin{align*}
        T_1&=\alpha_k\langle\nabla \Phi(x_k-x^*), V_{x_k}(Y_k)-\E_{Y_k}[V_{x_k}(Y_{k+1})]+M_k\rangle\\
        &=\alpha_k\langle\nabla \Phi(x_k-x^*), V_{x_k}(Y_{k+1})-\E_{Y_k}[V_{x_k}(Y_{k+1})]+M_k\rangle+\alpha_k\langle\nabla \Phi(x_k-x^*), V_{x_k}(Y_{k})-V_{x_k}(Y_{k+1})\rangle.
    \end{align*} 
    Observe that the first term is a martingale difference sequence with respect to the $\sigma$-field $\mathcal{F}_k$. We rewrite the second term as follows:
    \begin{align*}
        \alpha_k\langle\nabla \Phi(x_k-x^*), V_{x_k}(Y_{k})-V_{x_k}(Y_{k+1})\rangle&=\alpha_k(d_k-d_{k+1})+\underbrace{\alpha_k\langle\nabla \Phi(x_{k+1}-x^*)-\nabla \Phi(x_{k}-x^*), V_{x_k}(Y_{k+1}))\rangle}_{T_{1,1}}\\
        &+\underbrace{\alpha_k\langle\nabla \Phi(x_{k+1}-x^*), V_{x_{k+1}}(Y_{k+1})-V_{x_k}(Y_{k+1}))\rangle}_{T_{1,2}}.
    \end{align*}
    Taking expectation conditioned on $\mathcal{F}_k$ on both sides of Eq. \eqref{eq:phi_rec}, we get
    \begin{align*}
        \E[\Phi(x_{k+1}-x^*)|\mathcal{F}_k]&\leq\Phi(x_k-x^*)+\alpha_k\langle\nabla \Phi(x_k-x^*), \bar{F}(x_k)\rangle+\alpha_k\E[(d_k-d_{k+1})|\mathcal{F}_k]\\
        &+\E[T_{1,1}|\mathcal{F}_k]+\E[T_{1,2}|\mathcal{F}_k]+\E[T_2|\mathcal{F}_k].
    \end{align*}
    Using Tower property and Eq. \eqref{assump:neg_drift}, we have
    \begin{align*}
        \E[\Phi(x_{k+1}-x^*)]&\leq(1-\eta\alpha_k)\E[\Phi(x_k-x^*)]+\alpha_k(\E[d_k]-\E[d_{k+1}])+\E[T_{1,1}]+\E[T_{1,2}]+\E[T_2].
    \end{align*}
\end{proof}
Now we can proceed by bounding each of the terms in accordance with the specific settings.

\begin{proof}[Proof for Theorem \ref{thm:main_thm;bounded}]
Using Eq. \eqref{lem:grad_markov_noise;bounded} in Lemma \ref{lem:grad_markov_noise} and Eq. \eqref{lem:poisson_markov_noise;bounded} in Lemma \eqref{lem:poisson_markov_noise} for $\E[T_{1,1}]$ and $\E[T_{1,2}]$ respectively, we get
    \begin{align*}
        \E[T_1]&\leq \frac{4\alpha_k^2\varphi_1}{u}.
    \end{align*}
    We use Eq. \eqref{lem:T_2noise;bounded} in Lemma \ref{lem:T_2noise} to get a bound on $\E[T_2]$,
    \begin{align*}
        \E[T_2]\leq \frac{2\alpha_k^2\varphi_1}{u}.
    \end{align*}
    Furthermore, to upper bound the second term in Eq. \eqref{eq:main_rec_to_solve}, we use Eq. \eqref{lem:telscoping_rearrange;finite} in Lemma \ref{lem:telscoping_rearrange}. Combining all the bounds, we get
    \begin{align*}
        \E[\Phi(x_{k+1}-x^*)]&\leq (1-\eta\alpha_k)\E[\Phi(x_k-x^*)]+\left(1-\eta\alpha_k\right)\alpha_{k-1}\E[d_k]-\alpha_k\E[d_{k+1}]+\frac{\alpha_k^2(6+2\kappa)\varphi_1\hat{C}(y_0)}{u}\\
        &\leq \E[\Phi(x_0-x^*)]\prod_{n=0}^k\left(1-\eta\alpha_n\right)+\alpha_{-1}\E[d_0]\prod_{n=0}^k\left(1-\eta\alpha_n\right)-\alpha_k\E[d_{k+1}]\\
        &~~~~+\frac{(6+2\kappa)\varphi_1\hat{C}(y_0)}{u}\sum_{n=0}^k\alpha_n^2\prod_{\ell=n+1}^k\left(1-\eta\alpha_\ell\right).
    \end{align*}
     Since $K\geq 2$, $\alpha_{-1}$ is well-defined and is bounded above as $\alpha_{-1}\leq 2\alpha_0\leq 2$. Furthermore, using Eq. \eqref{lem:telescoping_term-bound;bounded} in Lemma \ref{lem:telescoping_term-bound} for second and third term, we have
     \begin{align*}
        \E[\Phi(x_{k+1}-x^*)]&\leq \left(\E[\Phi(x_0-x^*)]+\frac{2\varphi_1}{u}\right)\prod_{n=0}^k\left(1-\eta\alpha_n\right)+\frac{\alpha_k\varphi_1}{u}+\frac{(6+2\kappa)\varphi_1\hat{C}(y_0)}{u}\sum_{n=0}^k\alpha_n^2\prod_{\ell=n+1}^k\left(1-\eta\alpha_\ell\right).
    \end{align*}
    Using Eq. \eqref{assump:equivalence_rel} in Assumption \ref{assump:Lyapunov_fn}, we get
    \begin{align*}
        \E[\|x_{k+1}-x^*\|_c^2]&\leq \left(\frac{u}{l}\|x_0-x^*\|_c^2+2\varphi_1\right)\prod_{n=0}^k\left(1-\eta\alpha_n\right)+\alpha_k\varphi_1+(6+2\kappa)\varphi_1\hat{C}(y_0)\sum_{n=0}^k\alpha_n^2\prod_{\ell=n+1}^k\left(1-\eta\alpha_\ell\right).
    \end{align*}
    The finite time bounds for all the choices of step sizes can be obtained using the above bound by a straightforward application of Corollary 2.1.1 and Corollary 2.1.2 in \cite{chen2020finite}.
\end{proof}

\begin{proof}[Proof for Theorem \ref{thm:main_thm;finite}]
Using Eq. \eqref{lem:grad_markov_noise;finite} Lemma \ref{lem:grad_markov_noise} and Eq. in \eqref{lem:poisson_markov_noise;finite} in Lemma  \ref{lem:poisson_markov_noise} for $\E[T_{1,1}]$ and $\E[T_{1,2}]$ respectively, we get
    \begin{align*}
        \E[T_1]&\leq \frac{4\alpha_k^2\varrho_1}{u}\left(uA\E[\Phi(x_k-x^*)]+B\right).
    \end{align*}
    For $\E[T_2]$, we use \eqref{lem:T_2noise;bounded} in Lemma \ref{lem:T_2noise} to get,
    \begin{align*}
        \E[T_2]\leq \frac{\alpha_k^2\varrho_1}{u}\left(uA\E[\Phi(x_k-x^*)]+B\right).
    \end{align*}
    
    Furthermore, to upper bound the second term in Eq. \eqref{eq:main_rec_to_solve}, we use Eq. \eqref{lem:telscoping_rearrange;finite} in Lemma \ref{lem:telscoping_rearrange}. Using all the bounds, we get
    \begin{align*}
        \E[\Phi(x_{k+1}-x^*)]&\leq(1-\eta\alpha_k)\E[\Phi(x_k-x^*)]+\left(1-\frac{\eta\alpha_k}{2}\right)\alpha_{k-1}\E[d_k]-\alpha_k\E[d_{k+1}]\\
        &~~~~+\frac{\alpha_k^2(5+2\kappa)\varrho_1}{u}\left(uA\E[\Phi(x_k-x^*)]+B\right).
    \end{align*} 
    Assume that $\alpha_k$ is small enough such that we have
    \begin{align*}
        \frac{\eta\alpha_k}{2}\geq A(5+2\kappa)\varrho_1\alpha_k^2.
    \end{align*}
    Using the above condition, we get
    \begin{align*}
        \E[\Phi(x_{k+1}-x^*)]&\leq\left(1-\frac{\eta\alpha_k}{2}\right)\E[\Phi(x_k-x^*)]+\left(1-\frac{\eta\alpha_k}{2}\right)\alpha_{k-1}\E[d_k]-\alpha_k\E[d_{k+1}]+\alpha_k^2\frac{B(5+2\kappa)\varrho_1}{u}.
    \end{align*}
    Recursively writing the above inequality, we get
    \begin{align*}
        \E[\Phi(x_{k+1}-x^*)]&\leq \E[\Phi(x_0-x^*)]\prod_{n=0}^k\left(1-\frac{\eta\alpha_n}{2}\right)+\alpha_{-1}\E[d_0]\prod_{n=0}^k\left(1-\frac{\eta\alpha_n}{2}\right)-\alpha_k\E[d_{k+1}]\\
        &~~~~+\frac{B(5+2\kappa)\varrho_1}{u}\sum_{n=0}^k\alpha_n^2\prod_{\ell=n+1}^k\left(1-\frac{\eta\alpha_\ell}{2}\right).
    \end{align*}
    Again, since $K\geq 2$, $\alpha_{-1}$ is well-defined and is bounded above as $\alpha_{-1}\leq 2\alpha_0\leq 2$. Furthermore, using Eq. \eqref{lem:telescoping_term-bound;finite} in Lemma \ref{lem:telescoping_term-bound} for the second and the third term, we have
    \begin{align*}
       \E[\Phi(x_{k+1}-x^*)]&\leq \left(\E[\Phi(x_0-x^*)]+\frac{2\varrho_1}{u}(uA\E[\Phi(x_0-x^*)]+B)\right)\prod_{n=0}^k\left(1-\frac{\eta\alpha_n}{2}\right)\\
        &~~~~+\frac{\alpha_k\varrho_1}{u}(uA\E[\Phi(x_{k+1}-x^*)]+B)+\frac{B(5+2\kappa)\varrho_1}{u}\sum_{n=0}^k\alpha_n^2\prod_{\ell=n+1}^k\left(1-\frac{\eta\alpha_\ell}{2}\right).
    \end{align*}
    Note that $5+2\kappa> \eta$. Thus, $\alpha_k\leq \frac{\eta}{2A(5+2\kappa)\varrho_1}$ implies that $\alpha_kA\varrho_1\leq 0.5,~\forall k\geq 0$. Thus, we have
    \begin{align*}
        \E[\Phi(x_{k+1}-x^*)]&\leq \left(2\left(1+2A\varrho_1\right)\E[\Phi(x_0-x^*)]+\frac{4B\varrho_1}{u}\right)\prod_{n=0}^k\left(1-\frac{\eta\alpha_n}{2}\right)\\
        &~~~~+\frac{2\alpha_kB\varrho_1}{u}++\frac{2B(5+2\kappa)\varrho_1}{u}\sum_{n=0}^k\alpha_n^2\prod_{\ell=n+1}^k\left(1-\frac{\eta\alpha_\ell}{2}\right).
    \end{align*}
    
    Using Eq. \eqref{assump:equivalence_rel} in Assumption \ref{assump:Lyapunov_fn}, we get
    \begin{align*}
        \E[\|x_{k+1}-x^*\|_c^2]&\leq \left(\frac{2u(1+2A\varrho_1)}{l}\|x_0-x^*\|_c^2+4B\varrho_1\right)\prod_{n=0}^k\left(1-\frac{\eta\alpha_n}{2}\right)\\
        &~~~~+2\alpha_kB\varrho_1+2B(5+2\kappa)\varrho_1\sum_{n=0}^k\alpha_n^2\prod_{\ell=n+1}^k\left(1-\frac{\eta\alpha_\ell}{2}\right).
    \end{align*}
    
    Again, the above bound immediately implies finite time bounds for various choices of step sizes by applying Corollary 2.1.1 and Corollary 2.1.2 in \cite{chen2020finite}.
    
\end{proof}
\section{Proof of the main lemmas used in theorems \ref{thm:main_thm;bounded} and \ref{thm:main_thm;finite}}\label{appendix:main_lemmas}

\begin{lemma}\label{lem:grad_markov_noise}
    Under the Assumptions \ref{assump:linear_F}-\ref{assump:step-size}, we have the following:
    \begin{enumerate}[(a)]
        \item When the set $\mathcal{X}$ is an $\ell_2$-ball of sufficiently large such that $x^*\in \mathcal{X}$, then
        \begin{align}\label{lem:grad_markov_noise;bounded}
            \E[T_{1,1}]& \leq \frac{2\alpha_k^2\varphi_1\hat{C}(y_0)}{u}.
        \end{align}
        \item When $\mathcal{Y}$ is bounded and $\mathcal{X}\equiv\mathbb{R}^d$, then
        \begin{align}\label{lem:grad_markov_noise;finite}
            \E[T_{1,1}]& \leq \frac{2\alpha_k^2\varrho_1}{u}\left(uA\E[\Phi(x_k-x^*)]+B\right).
        \end{align}
    \end{enumerate}
\end{lemma}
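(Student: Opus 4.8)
The plan is to bound $\E[T_{1,1}]=\alpha_k\E\left[\langle\nabla\Phi(x_{k+1}-x^*)-\nabla\Phi(x_k-x^*),\,V_{x_k}(Y_{k+1})\rangle\right]$ by exploiting the smoothness of $\Phi$ (Eq.~\eqref{assump:smoothness}) together with the one-step increment of the iteration~\eqref{eq:main_rec}. First I would use the generalized Cauchy--Schwarz inequality with the $s$-norm and its dual to write $\langle\nabla\Phi(x_{k+1}-x^*)-\nabla\Phi(x_k-x^*),V_{x_k}(Y_{k+1})\rangle\le \|\nabla\Phi(x_{k+1}-x^*)-\nabla\Phi(x_k-x^*)\|_{s^*}\,\|V_{x_k}(Y_{k+1})\|_s$. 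Smoothness of $\Phi$ with respect to $\|\cdot\|_s$ gives $\|\nabla\Phi(x_{k+1}-x^*)-\nabla\Phi(x_k-x^*)\|_{s^*}\le L_s\|x_{k+1}-x_k\|_s$, and then the iteration together with nonexpansiveness of $\Pi_{\mathcal{X}}$ bounds $\|x_{k+1}-x_k\|_s\le \alpha_k\|F(x_k,Y_k)+M_k\|_s$. So $T_{1,1}\le L_s\alpha_k^2\,\|F(x_k,Y_k)+M_k\|_s\,\|V_{x_k}(Y_{k+1})\|_s$, and after converting $s$-norms to $c$-norms via the constants $u_{cs},u_{2s},l_{2s}$ this is the point where the $\varphi_1$ (resp.\ $\varrho_1$) constant appears.

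Next I would bound the two norms appearing in the product. For $\|F(x_k,Y_k)+M_k\|_c$ I apply Assumptions~\ref{assump:linear_F} and~\ref{assump:linear_M} to get a bound of the form $(A_1(Y_k)+A_3)\|x_k-x^*\|_c+B_1(Y_k)+B_3$. For $\|V_{x_k}(Y_{k+1})\|_c$ I use the triangle inequality through $V_{x^*}$: $\|V_{x_k}(Y_{k+1})\|_c\le \|V_{x_k}(Y_{k+1})-V_{x^*}(Y_{k+1})\|_c+\|V_{x^*}(Y_{k+1})\|_c\le A_2(Y_{k+1})\|x_k-x^*\|_c+B_2(Y_{k+1})$ by Assumption~\ref{assump:Poisson_eq}(c). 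Multiplying these two linear-in-$\|x_k-x^*\|_c$ expressions produces cross terms involving products $A_1(Y_k)A_2(Y_{k+1})$, $A_1(Y_k)B_2(Y_{k+1})$, etc. I would control each such product by Cauchy--Schwarz (e.g.\ $\E[A_1(Y_k)A_2(Y_{k+1})\|x_k-x^*\|_c^2]\le \sqrt{\E[A_1^2(Y_k)A_2^2(Y_{k+1})]}\cdot\ldots$), handling the conditioning on $Y_k$ and $Y_{k+1}$ carefully via the tower property and the moment bounds in Assumption~\ref{assump:Poisson_eq}(d).

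For part~(a), the unbounded case, this is where the projection set $\mathcal{X}$ is used crucially: since $x_k\in\mathcal{X}$ an $\ell_2$-ball, $\|x_k-x^*\|_c\le M$ is a deterministic bound, so every occurrence of $\|x_k-x^*\|_c$ (squared or linear) can be replaced by $M$ (or $M^2$), collapsing everything into the single constant $\hat C(y_0)=\hat A(y_0)M^2+\hat B(y_0)$ after taking expectations and using the second-moment bounds $\hat A_1^2(y_0),\hat A_2^2(y_0),\hat B_1^2(y_0),\hat B_2^2(y_0)$; I would absorb the numerical factor into the claimed constant $2$. For part~(b), the bounded-$\mathcal{Y}$ case, there is no projection, so I instead keep $\|x_k-x^*\|_c^2$ as a variable, bound the noise functions by their uniform suprema $A_1,A_2,B_1,B_2$, and use $l\Phi(x_k-x^*)\le\|x_k-x^*\|_c^2\le u\Phi(x_k-x^*)$ (Eq.~\eqref{assump:equivalence_rel}) together with $\|x_k-x^*\|_c\le\sqrt{u\Phi(x_k-x^*)}$ plus Young's inequality on the linear terms to land on the form $\tfrac{2\alpha_k^2\varrho_1}{u}(uA\E[\Phi(x_k-x^*)]+B)$, with $A=(A_1+A_3+1)^2$ and $B=(B_1+B_3+B_2/A_2)^2$ exactly as defined in the notation; the extra $+1$ and the $B_2/A_2$ are what make the square expand to cover all cross terms.

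The main obstacle I expect is the bookkeeping in the bounded-$\mathcal{Y}$ case: getting the constants to line up \emph{exactly} as $A=(A_1+A_3+1)^2$ and $B=(B_1+B_3+B_2/A_2)^2$ requires organizing the product of the two linear bounds so that, after applying $\|x_k-x^*\|_c\le\sqrt{u}\,\sqrt{\Phi}$ and Young's inequality, the coefficient of $\E[\Phi]$ factors as a perfect square of a sum — this is a somewhat delicate grouping rather than a deep difficulty. In the unbounded case the analogous step is easier since the deterministic bound $M$ removes the need for Young's inequality, but one must be careful that the moment assumptions are stated for $\E_{y_0}[\cdot\,|\,Y_0=y_0]$ and apply at indices $k$ and $k+1$, which is handled by the uniformity over $k$ in Assumption~\ref{assump:Poisson_eq}(d)(1) and the tower property.
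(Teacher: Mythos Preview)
Your proposal is correct and follows essentially the same approach as the paper's proof: Holder's inequality plus $L_s$-smoothness to get $T_{1,1}\le \alpha_k L_s u_{cs}^2\,\E[\|x_{k+1}-x_k\|_c\,(A_2(Y_{k+1})\|x_k-x^*\|_c+B_2(Y_{k+1}))]$, then the one-step increment bound (the paper packages this as a separate lemma), then in part~(a) the deterministic bound $\|x_k-x^*\|_c\le M$ together with $ab\le(a^2+b^2)/2$ and $(\sum a_i)^2\le n\sum a_i^2$, and in part~(b) exactly the factoring trick you anticipate---pull out $A_2$, bound both linear factors by the common majorant $(A_1+A_3+1)\|x_k-x^*\|_c+B_1+B_3+B_2/A_2$, square it, and apply $\|x_k-x^*\|_c^2\le u\Phi(x_k-x^*)$. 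The only cosmetic difference is that the paper uses $ab\le(a^2+b^2)/2$ rather than Cauchy--Schwarz for the cross terms in~(a), which avoids the tower-property bookkeeping you mention.
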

\begin{proof}
To handle $\E[T_{1,1}]$, we use Holder's inequality and the smoothness of $\Phi(\cdot)$ to get
\begin{align*}
    \E[T_{1,1}]&\leq \alpha_k\E[\|\nabla \Phi(x_{k+1}-x^*)-\nabla \Phi(x_{k}-x^*)\|_{s^*}\|V_{x_k}(Y_{k+1})\|_s]\\
    &\leq \alpha_kL_s\E[\|x_{k+1}-x_{k}\|_{s}\|V_{x_k}(Y_{k+1})\|_s]\\
    &\leq \alpha_kL_su_{cs}^2\E[\|x_{k+1}-x_{k}\|_c\|V_{x_k}(Y_{k+1})\|_c]\\
    &\leq \alpha_kL_su_{cs}^2\E[\|x_{k+1}-x_{k}\|_c(A_2(Y_{k+1})\|x_k-x^*\|_c+B_2(Y_{k+1}))].\tag{Eq. \eqref{assump:value_fn-prop} in Assumption \ref{assump:Poisson_eq}}
\end{align*}

\begin{enumerate}[(a)]
    \item Using Eq. \eqref{lem:one_step-lipschitz;bounded} in Lemma \ref{lem:one_step-lipschitz} and $\|x_k-x^*\|_c\leq M$, we get
    \begin{align*}
        \E[T_{1,1}]&\leq \alpha_k^2\frac{L_su_{2s}u^2_{cs}}{l_{2s}}\E[(A_1(Y_k)M+B_1(Y_k)+A_3M+B_3)(A_2(Y_{k+1})M+B_2(Y_{k+1}))]\\
        &\leq \alpha_k^2\frac{\varphi_1}{2u}\E[(A_1(Y_k)M+B_1(Y_k)+A_3M+B_3)^2+(A_2(Y_{k+1})M+B_2(Y_{k+1}))^2]\tag{$ab\leq \frac{a^2+b^2}{2}$}\\
        &\leq \alpha_k^2\frac{2\varphi_1}{u}\left(\E[A_1^2(Y_k)M^2+B_1^2(Y_k)+A_3^2M^2+B_3^2+A_2^2(Y_{k+1})M^2+B_2^2(Y_{k+1})\right).\tag{ $\left(\sum_{i=1}^na_i\right)^2\leq n\left(\sum_{i=1}^na_i^2\right)$}
    \end{align*}
    Finally, using part \eqref{assump:moments_bound} in Assumption \ref{assump:Poisson_eq}, we get
    \begin{align*}
        \E[T_{1,1}]&\leq \frac{2\varphi_1\alpha_k^2}{u}\hat{C}(y_0).
    \end{align*}
    \item From part \eqref{assump:moments_bound} in Assumption \ref{assump:Poisson_eq}, we get
    \begin{align*}
        \E[T_{1,1}]&\leq \alpha_k^2L_su_{cs}^2\E[((A_1+A_3)\|x_k-x^*\|_c+B_1+B_3)(A_2\|x_k-x^*\|_c+B_2)]\\
        &\leq \alpha_k^2L_su_{cs}^2A_2\E\left[((A_1+A_3)\|x_k-x^*\|_c+B_1+B_3)\left(\|x_k-x^*\|_c+\frac{B_2}{A_2}\right)\right]\\
        &\leq \alpha_k^2\frac{\varrho_1}{u}\E\left[\left((A_1+A_3+1)\|x_k-x^*\|_c+B_1+B_3+\frac{B_2}{A_2}\right)^2\right]\tag{$A_1,A_3\geq 0, A_1+A_3+1\geq 1$}\\
        &\leq \frac{2\alpha_k^2\varrho_1}{u}\E\left[(A\|x_k-x^*\|_c^2+B\right]\tag{$\left(a_1+a_2\right)^2\leq 2\left(a_1^2+a_2^2\right)$}\\
        &\leq \frac{2\alpha_k^2\varrho_1}{u}\left(uA\E[\Phi(x_k-x^*)]+B\right)\tag{Eq. \eqref{assump:equivalence_rel} in Assumptions \eqref{assump:Lyapunov_fn}}.
    \end{align*}
\end{enumerate}
\end{proof}

\begin{lemma}\label{lem:poisson_markov_noise}
    Under the Assumptions \ref{assump:linear_F}-\ref{assump:step-size}, we have the following:
    \begin{enumerate}[(a)]
        \item When the set $\mathcal{X}$ is an $\ell_2$-ball of sufficiently large such that $x^*\in \mathcal{X}$, then
        \begin{align}\label{lem:poisson_markov_noise;bounded}
            \E[T_{1,2}]& \leq \frac{2\alpha_k^2\varphi_1\hat{C}(y_0)}{u}.
        \end{align}
        \item When $\mathcal{Y}$ is bounded and $\mathcal{X}\equiv\mathbb{R}^d$, then
        \begin{align}\label{lem:poisson_markov_noise;finite}
            \E[T_{1,2}]& \leq \frac{2\alpha_k^2\varrho_1}{u}\left(uA\E[\Phi(x_k-x^*)]+B\right).
        \end{align}
    \end{enumerate}
\end{lemma}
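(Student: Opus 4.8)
The proof will run in complete parallel with that of Lemma~\ref{lem:grad_markov_noise}, the only genuinely new ingredient being the Lipschitz-in-$x$ estimate on the Poisson solution, i.e. the first inequality of Eq.~\eqref{assump:value_fn-prop}. Recall that $T_{1,2}=\alpha_k\langle\nabla\Phi(x_{k+1}-x^*),\,V_{x_{k+1}}(Y_{k+1})-V_{x_k}(Y_{k+1})\rangle$. The first step is to apply H\"older's inequality in the dual pair $(\|\cdot\|_{s^*},\|\cdot\|_s)$ to separate the two factors. The gradient factor is bounded by $\|\nabla\Phi(x_{k+1}-x^*)\|_{s^*}\le L_s\|x_{k+1}-x^*\|_s$: indeed \eqref{assump:equivalence_rel} forces $\Phi\ge 0$ and $\Phi(0)=0$, so $0$ is the global minimizer and $\nabla\Phi(0)=0$, and then \eqref{assump:smoothness} gives an $L_s$-Lipschitz gradient with respect to $\|\cdot\|_{s^*}$ (exactly as used for $T_{1,1}$). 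The second factor is handled by Eq.~\eqref{assump:value_fn-prop}: $\|V_{x_{k+1}}(Y_{k+1})-V_{x_k}(Y_{k+1})\|_s\le u_{cs}A_2(Y_{k+1})\|x_{k+1}-x_k\|_c$. Combining and converting norms once more, $T_{1,2}\le \alpha_k L_s u_{cs}^2\,A_2(Y_{k+1})\,\|x_{k+1}-x^*\|_c\,\|x_{k+1}-x_k\|_c$.

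For part~(a), the projection onto the $\ell_2$-ball gives $\|x_{k+1}-x^*\|_c\le M$, while Lemma~\ref{lem:one_step-lipschitz} (the bounded-set one-step bound) controls $\|x_{k+1}-x_k\|_c$ by a fixed constant times $\alpha_k\big(A_1(Y_k)M+B_1(Y_k)+A_3M+B_3\big)$. Substituting these in and pulling out the constants reproduces, term for term, the expectation $\alpha_k^2\frac{\varphi_1}{u}\,\E\big[(A_2(Y_{k+1})M)\,(A_1(Y_k)M+B_1(Y_k)+A_3M+B_3)\big]$, which is bounded above by the expression already treated in the proof of Lemma~\ref{lem:grad_markov_noise}(a) — it merely lacks the extra $B_2(Y_{k+1})$ summand. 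Hence the identical finish applies: the split $ab\le\tfrac12(a^2+b^2)$, then $(\sum_{i}a_i)^2\le n\sum_i a_i^2$, followed by the second-moment bounds of Assumption~\ref{assump:Poisson_eq}, yields $\E[T_{1,2}]\le\frac{2\varphi_1\hat{C}(y_0)}{u}\alpha_k^2$.

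For part~(b) there is no projection, so $x_{k+1}-x_k=\alpha_k(F(x_k,Y_k)+M_k)$, and the uniform bounds of Assumption~\ref{assump:Poisson_eq} together with Assumptions~\ref{assump:linear_F} and~\ref{assump:linear_M} give $\|x_{k+1}-x_k\|_c\le\alpha_k\big((A_1+A_3)\|x_k-x^*\|_c+B_1+B_3\big)$ and $A_2(Y_{k+1})\le A_2$. The one point requiring extra care — and the reason this is not a verbatim copy of the $T_{1,1}$ argument — is that the gradient is evaluated at the \emph{updated} iterate, so the factor $\|x_{k+1}-x^*\|_c$ appears rather than $\|x_k-x^*\|_c$. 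I would unroll a single step, $\|x_{k+1}-x^*\|_c\le\|x_k-x^*\|_c+\|x_{k+1}-x_k\|_c\le(1+A_1+A_3)\|x_k-x^*\|_c+B_1+B_3$, using $\alpha_k\le 1$. Multiplying the two bounds, absorbing one $\alpha_k$ so the product of step-sizes is $\alpha_k^2$, enlarging the bracket to $\big((A_1+A_3+1)\|x_k-x^*\|_c+B_1+B_3+\tfrac{B_2}{A_2}\big)^2$, applying $(a+b)^2\le 2(a^2+b^2)$ and finally the equivalence \eqref{assump:equivalence_rel}, produces $\E[T_{1,2}]\le\frac{2\varrho_1}{u}\alpha_k^2\big(uA\,\E[\Phi(x_k-x^*)]+B\big)$, as claimed.

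The main obstacle is precisely this last manoeuvre: since the inner product contains $\nabla\Phi$ at $x_{k+1}-x^*$ instead of a gradient \emph{difference}, a pointwise gradient bound is unavoidable, and in the bounded-state-space regime the clean "$\le M$" control available in part~(a) is gone, so one must trade one step of the recursion and then soak up the resulting quadratic-in-$\|x_k-x^*\|_c$ contribution using $\alpha_k\le 1$ and the norm-equivalence structure of $\Phi$. Everything else is a direct transcription of the $T_{1,1}$ computation.
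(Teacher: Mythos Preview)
Your proposal is correct and follows essentially the same approach as the paper's own proof: H\"older in the $(s^*,s)$ pair, the pointwise gradient bound $\|\nabla\Phi(x_{k+1}-x^*)\|_{s^*}\le L_s\|x_{k+1}-x^*\|_s$ via $\nabla\Phi(0)=0$, the Lipschitz property of the Poisson solution from Eq.~\eqref{assump:value_fn-prop}, then Lemma~\ref{lem:one_step-lipschitz} for the one-step increment, and in part~(b) the single unroll $\|x_{k+1}-x^*\|_s\le u_{cs}\big((A_1+A_3+1)\|x_k-x^*\|_c+B_1+B_3\big)$ under $\alpha_k\le1$. The only cosmetic difference is that the paper does not insert the extra $B_2/A_2$ into the bracket before squaring but instead enlarges $(B_1+B_3)^2\le B$ at the very end; both routes give the identical constant.
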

\begin{proof}
Denote $s^*$ as the dual norm of $s$. To handle $T_{1,2}$ we use Holder's inequality to get
\begin{align*}
    \E[T_{1,2}]&\leq \alpha_k\E\left[\|\nabla \Phi(x_{k+1}-x^*)\|_{s^*}\|V_{x_{k+1}}(Y_{k+1})-V_{x_k}(Y_{k+1}))\|_s\right]\\
    &\leq \alpha_k\E\left[A_2(Y_{k+1})\|\nabla \Phi(x_{k+1}-x^*)\|_{s^*}\|x_{k+1}-x_k\|_s\right].\tag{Eq. \eqref{assump:value_fn-prop} in Assumption \ref{assump:Poisson_eq}}
\end{align*}
Since $\Phi(\cdot)$ is convex-differentiable and achieves its minima at $0$, $\nabla \Phi(0)=0$. Along with smoothness of the function, we get
\begin{align}\label{eq:derivative_bound}
    \|\nabla \Phi(x_{k+1}-x^*)-\nabla \Phi(0)\|_{s^*}\leq L_s\|x_{k+1}-x^*\|_s\nonumber\\
    \implies \|\nabla \Phi(x_{k+1}-x^*)\|_{s^*}\leq L_s\|x_{k+1}-x^*\|_s
\end{align}
\begin{align*}
    \E[T_{1,2}]&\leq  \alpha_kL_s\E\left[A_2(Y_{k+1})\|x_{k+1}-x^*\|_s\|x_{k+1}-x_k\|_s\right]\\
    &\leq \alpha_kL_su_{cs}\E\left[A_2(Y_{k+1})\|x_{k+1}-x^*\|_s\|x_{k+1}-x_k\|_c\right]
\end{align*}

\begin{enumerate}[(a)]
    \item Using Eq. \eqref{lem:one_step-lipschitz;bounded} in Lemma \ref{lem:one_step-lipschitz} and the fact that $\|x-x^*\|_c\leq M,~\forall~x\in\mathcal{X}$, we get
    \begin{align*}
        \E[T_{1,2}]&\leq \alpha_k^2L_s\frac{u_{2s}u_{cs}^2}{l_{2s}}\E[A_2(Y_{k+1})M(A_1(Y_k)M+B_1(Y_k)+A_3M+B_3)]\tag{$\|x_{k+1}-x^*\|_s\leq u_{cs}M$}\\
        &\leq \alpha_k^2\frac{\varphi_1}{2u}\E[A_2^2(Y_{k+1})M^2+(A_1(Y_k)M+B_1(Y_k)+A_3M+B_3)^2]\tag{$ab\leq \frac{a^2+b^2}{2}$}\\
        &\leq \frac{2\alpha_k^2\varphi_1}{u}\left(\E[A_1^2(Y_k)M^2+B_1^2(Y_k)+A_3^2M^2+B_3^2+A_2^2(Y_{k+1})M^2\right).\tag{$\left(\sum_{i=1}^na_i\right)^2\leq n\left(\sum_{i=1}^na_i^2\right)$}
    \end{align*}
    Finally, using part \eqref{assump:moments_bound} in Assumption \ref{assump:Poisson_eq}, we get
    \begin{align*}
        \E[T_{1,2}]&\leq \frac{2\alpha_k^2\varphi_1}{u}\left(\left(\hat{A}_1^2(y_0)+\hat{A}_2^2(y_0)+A_3^2\right)M^2+\hat{B}_1^2(y_0)+B_3^2\right)\\
        &= \frac{2\alpha_k^2\varphi_1\hat{C}(y_0)}{u}.
    \end{align*}
    \item Since in this case $\mathcal{X}\equiv \mathbb{R}^d$, we use Eq. \eqref{eq:main_eq}, to get
    \begin{align*}
        \|x_{k+1}-x^*\|_s&\leq \|x_k-x^*\|_s+\|x_{k+1}-x_k\|_s\\
        &\leq \|x_k-x^*\|_s+\alpha_ku_{cs}((A_1+A_3)\|x_k-x^*\|_c+B_1+B_3)\tag{Eq. \eqref{lem:one_step-lipschitz;finite} in Lemma \ref{lem:one_step-lipschitz}}\\
        &\leq u_{cs}\left((A_1+A_3+1)\|x_k-x^*\|_c+B_1+B_3)\right).\tag{Assuming $\alpha_k\leq 1$}
    \end{align*}
    
    Furthermore, from part \eqref{assump:moments_bound} and Eq. \eqref{lem:one_step-lipschitz;finite} in Lemma \ref{lem:one_step-lipschitz}, we get
    \begin{align*}
        \E[T_{1,2}]&\leq  \alpha_k^2u_{cs}^2L_sA_2\E\left[\left((A_1+A_3+1)\|x_k-x^*\|_c+B_1+B_3)\right)((A_1+A_3)\|x_k-x^*\|_c+B_1+B_3)\right]\\
        &\leq \frac{\varrho_1\alpha_k^2}{u}\E\left[(A_1+A_3+1)\|x_k-x^*\|_c+B_1+B_3)^2\right]\\
        &\leq \frac{2\varrho_1\alpha_k^2}{u}\E\left[(A_1+A_3+1)^2\|x_k-x^*\|^2_c+(B_1+B_3)^2\right]\tag{$\left(a_1+a_2\right)^2\leq 2\left(a_1^2+a_2^2\right)$}\\
        &\leq \frac{2\varrho_1\alpha_k^2}{u}\left(uA\E[\Phi(x_k-x^*)]+B\right).\tag{Eq. \eqref{assump:equivalence_rel} in Assumptions \eqref{assump:Lyapunov_fn}}
    \end{align*}
\end{enumerate}
\end{proof}

\begin{lemma}\label{lem:T_2noise}
    Under the Assumptions \ref{assump:linear_F}-\ref{assump:step-size}, we have the following:
    \begin{enumerate}[(a)]
        \item When the set $\mathcal{X}$ is an $\ell_2$-ball of sufficiently large such that $x^*\in \mathcal{X}$, then
        \begin{align}\label{lem:T_2noise;bounded}
            \E[T_2]& \leq \frac{2\alpha_k^2\varphi_1\hat{C}(y_0)}{u}.
        \end{align}
        \item When $\mathcal{Y}$ is bounded and $\mathcal{X}\equiv\mathbb{R}^d$, then
        \begin{align}\label{lem:T_2noise;finite}
            \E[T_2]& \leq \frac{\alpha_k^2\varrho_1}{u}\left(uA\E[\Phi(x_k-x^*)]+B\right).
        \end{align}
    \end{enumerate}
\end{lemma}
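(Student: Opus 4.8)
The plan is to bound $\|F(x_k,Y_k)+M_k\|_s^2$ directly from the growth assumptions on $F$ and $M$, since the term $T_2=\tfrac{L_s\alpha_k^2}{2}\|F(x_k,Y_k)+M_k\|_s^2$ carries no Poisson-equation quantity; the only real work is matching the crude estimate to the packaged constants. First I would pass to the $c$-norm via $\|\cdot\|_s\leq u_{cs}\|\cdot\|_c$, giving $T_2\leq \tfrac12 L_s\alpha_k^2 u_{cs}^2\big(\|F(x_k,Y_k)\|_c+\|M_k\|_c\big)^2$, and then apply Assumption \ref{assump:linear_F} and Assumption \ref{assump:linear_M}(b) to get $\|F(x_k,Y_k)+M_k\|_c\leq (A_1(Y_k)+A_3)\|x_k-x^*\|_c+B_1(Y_k)+B_3$.

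For part (a): since $x_k,x^*\in\mathcal{X}$ and $\mathcal{X}$ is an $\ell_2$-ball with $\max_{x\in\mathcal{X}}\|x\|_c=M/2$, we have $\|x_k-x^*\|_c\leq M$, so the bracket is at most $(A_1(Y_k)+A_3)M+B_1(Y_k)+B_3$. Squaring with $\big(\sum_{i=1}^4 a_i\big)^2\leq 4\sum_{i=1}^4 a_i^2$, taking expectations, and invoking the second-moment bounds $\E_{y_0}[A_1^2(Y_k)]=\hat A_1^2(y_0)$, $\E_{y_0}[B_1^2(Y_k)]=\hat B_1^2(y_0)$ from Assumption \ref{assump:Poisson_eq}(d)(1) yields a bound of the form $2L_s\alpha_k^2 u_{cs}^2\big(\hat A_1^2(y_0)M^2+A_3^2 M^2+\hat B_1^2(y_0)+B_3^2\big)$. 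To collapse this into $\tfrac{2\alpha_k^2\varphi_1\hat C(y_0)}{u}$ I would then use $u_{2s}/l_{2s}\geq 1$ together with $\hat A(y_0)\geq \hat A_1^2(y_0)+A_3^2$ and $\hat B(y_0)\geq \hat B_1^2(y_0)+B_3^2$, which inflates the estimate to the claimed form with $\varphi_1=uL_su_{2s}u_{cs}^2/l_{2s}$ and $\hat C(y_0)=\hat A(y_0)M^2+\hat B(y_0)$.

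For part (b): with $\mathcal{X}\equiv\mathbb{R}^d$ and $\mathcal{Y}$ bounded, I replace $A_1(Y_k),B_1(Y_k)$ by the uniform bounds $A_1,B_1$ of Assumption \ref{assump:Poisson_eq}(d)(2), so $\|F(x_k,Y_k)+M_k\|_c\leq (A_1+A_3)\|x_k-x^*\|_c+B_1+B_3$. Since $A_2\geq 1$ and $B_2/A_2\geq 0$, this can be enlarged to $(A_1+A_3+1)\|x_k-x^*\|_c+B_1+B_3+B_2/A_2$ and then squared via $(a_1+a_2)^2\leq 2(a_1^2+a_2^2)$ to give $2\big(A\|x_k-x^*\|_c^2+B\big)$ with $A=(A_1+A_3+1)^2$, $B=(B_1+B_3+B_2/A_2)^2$. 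Multiplying through by $A_2\geq 1$, using the equivalence $\|x_k-x^*\|_c^2\leq u\Phi(x_k-x^*)$ from Eq. \eqref{assump:equivalence_rel}, and taking expectations produces exactly $\tfrac{\alpha_k^2\varrho_1}{u}\big(uA\,\E[\Phi(x_k-x^*)]+B\big)$ with $\varrho_1=uL_su_{cs}^2 A_2$.

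The main obstacle is bookkeeping rather than a genuine difficulty: one must artificially introduce the factor $A_2$ in part (b) (legitimate because $A_2\geq 1$) even though no $V_x$ term occurs in $T_2$, and in part (a) absorb the absent $\hat A_2^2(y_0),\hat B_2^2(y_0)$ contributions using the nonnegativity already baked into the definition of $\hat C(y_0)$, so that the final constants line up with those used elsewhere in the proof of Theorems \ref{thm:main_thm;bounded} and \ref{thm:main_thm;finite}.
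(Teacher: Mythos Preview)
Your proof is correct and follows essentially the same approach as the paper: pass to the $c$-norm, apply the growth bounds from Assumptions~\ref{assump:linear_F} and~\ref{assump:linear_M}, square using the elementary inequality $(\sum a_i)^2\leq n\sum a_i^2$, and then pad the constants (with $u_{2s}/l_{2s}\geq 1$, $\hat A_2^2,\hat B_2^2\geq 0$ in part~(a), and $A_2\geq 1$, $(A_1+A_3)^2\leq A$, $(B_1+B_3)^2\leq B$ in part~(b)) so that the result matches the packaged constants $\varphi_1,\hat C(y_0),\varrho_1,A,B$. The only cosmetic difference is that in part~(b) you enlarge the linear bound before squaring, whereas the paper squares first and then enlarges the resulting quadratic terms; both routes yield the identical final estimate.
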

\begin{proof}
    \begin{enumerate}[(a)]
        \item 
        \begin{align*}
            E[T_2]&\leq \frac{\alpha_k^2L_su_{cs}^2}{2}\E[(\|F(x_k, Y_k)\|_c+\|M_k\|_c)^2]\\
            &\leq \frac{\alpha_k^2L_su_{cs}^2}{2}\E[(A_1(Y)\|x_k-x^*\|_c+B_1(Y_k)+A_3\|x_k-x^*\|_c+B_3)^2]\tag{Assumptions \ref{assump:linear_F} and \ref{assump:linear_M}}\\
            &\leq \frac{\alpha_k^2L_su_{cs}^2}{2}\E[(A_1(Y)M+B_1(Y_k)+A_3M+B_3)^2]\tag{$\|x-x^*\|_c\leq M,~~\forall x\in \mathcal{X}$}\\
            &\leq 2\alpha_k^2L_su_{cs}^2\E[A_1^2(Y)M^2+B_1^2(Y_k)+A_3^2M^2+B_3^2]\tag{$\left(\sum_{i=1}^na_i\right)^2\leq n\left(\sum_{i=1}^na_i^2\right)$}\\
            &\leq 2\alpha_k^2L_su_{cs}^2\left((\hat{A}_1^2(y_0)+A_3^2)M^2+\hat{B}_1^2(y_0)+B_3^2\right)\tag{Part \eqref{assump:moments_bound} in Assumption \ref{assump:Poisson_eq}}\\
            &\leq \frac{2\alpha_k^2\varphi_1\hat{C}(y_0)}{u}.\tag{$\frac{u_{2s}}{l_{2s}}\geq 1$ and $\hat{A}_2^2(y_0), \hat{B}_2^2(y_0)\geq 0$}
        \end{align*}
        \item 
        \begin{align*}
           \E[T_2]&\leq \frac{\alpha_k^2L_su_{cs}^2}{2}\E[(\|F(x_k, Y_k)\|_c+\|M_k\|_c)^2]\\
            &\leq \frac{\alpha_k^2L_su_{cs}^2}{2}\E[(A_1(Y)\|x_k-x^*\|_c+B_1(Y_k)+A_3\|x_k-x^*\|_c+B_3)^2]\tag{Assumptions \ref{assump:linear_F} and \ref{assump:linear_M}}\\ 
            &\leq \frac{\alpha_k^2L_su_{cs}^2}{2}\E[(A_1\|x_k-x^*\|_c+B_1+A_3\|x_k-x^*\|_c+B_3)^2]\tag{Part \eqref{assump:moments_bound} in Assumption \ref{assump:Poisson_eq}}\\
            &\leq \alpha_k^2L_su_{cs}^2\E[(A_1+A_3)^2\|x_k-x^*\|_c^2+(B_1+B_3)^2]\\
            &\leq \frac{\alpha_k^2\varrho_1}{u}\left(uA\E[\Phi(x_k-x^*)]+B\right)\tag{$A_2\geq 1$ and Eq. \eqref{assump:equivalence_rel} in Assumptions \eqref{assump:Lyapunov_fn}}.
        \end{align*}  
    \end{enumerate}
\end{proof}

\begin{lemma}\label{lem:telescoping_term-bound}
    Under the Assumptions \ref{assump:linear_F}-\ref{assump:step-size}, we have the following:
    \begin{enumerate}[(a)]
        \item When the set $\mathcal{X}$ is an $\ell_2$-ball of sufficiently large such that $x^*\in \mathcal{X}$, then
        \begin{align}\label{lem:telescoping_term-bound;bounded}
            \E[|d_k|]&\leq  \frac{\varphi_1\hat{C}(y_0)}{u}.
        \end{align}
        \item  When $\mathcal{Y}$ is bounded and $\mathcal{X}\equiv\mathbb{R}^d$, then
        \begin{align}\label{lem:telescoping_term-bound;finite}
            \E[|d_k|]& \leq \frac{\varrho_1}{u}(uA\E[\Phi(x_k-x^*)]+B).
        \end{align}
    \end{enumerate}
\end{lemma}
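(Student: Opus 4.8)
The plan is to bound $d_k = \langle \nabla\Phi(x_k - x^*),\, V_{x_k}(Y_k)\rangle$ pathwise by Hölder's inequality, $|d_k| \le \|\nabla\Phi(x_k - x^*)\|_{s^*}\,\|V_{x_k}(Y_k)\|_s$, and then estimate the two factors with tools already in hand. For the gradient factor I would reuse the inequality $\|\nabla\Phi(x_k - x^*)\|_{s^*} \le L_s\,\|x_k - x^*\|_s$ derived in \eqref{eq:derivative_bound} (convexity of $\Phi$, smoothness, and $\nabla\Phi(0)=0$). For the Poisson-solution factor I would write $\|V_{x_k}(Y_k)\|_c \le \|V_{x_k}(Y_k) - V_{x^*}(Y_k)\|_c + \|V_{x^*}(Y_k)\|_c \le A_2(Y_k)\|x_k - x^*\|_c + B_2(Y_k)$ from the two displays in \eqref{assump:value_fn-prop}, then pass to the $s$-norm with the constant $u_{cs}$. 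Combining and converting $\|x_k - x^*\|_s \le u_{cs}\|x_k - x^*\|_c$ yields the pathwise bound $|d_k| \le L_s u_{cs}^2\,\|x_k - x^*\|_c\big(A_2(Y_k)\|x_k - x^*\|_c + B_2(Y_k)\big)$, after which the argument splits into the two cases exactly as in Lemmas \ref{lem:grad_markov_noise}--\ref{lem:T_2noise}.

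\emph{Case (a).} Since $\mathcal{X}$ is an $\ell_2$-ball, $\|x_k - x^*\|_c \le M$, so $|d_k| \le L_s u_{cs}^2\,M\big(A_2(Y_k)M + B_2(Y_k)\big)$. I would split the product with $ab \le \tfrac12(a^2+b^2)$ and $(a+b)^2 \le 2(a^2+b^2)$, take expectations, and invoke part~(1) of Assumption \ref{assump:Poisson_eq} to replace $\E[A_2^2(Y_k)]$ and $\E[B_2^2(Y_k)]$ by $\hat A_2^2(y_0)$ and $\hat B_2^2(y_0)$. Using $\hat A_1^2(y_0) \ge 1$ to dominate the leftover $\tfrac12 M^2$ term by $\hat A_1^2(y_0)M^2$, and adding the nonnegative quantities $A_3^2 M^2,\,\hat B_1^2(y_0),\,B_3^2$ for free, the expectation collapses into $\hat C(y_0)$; finally $u_{2s}/l_{2s}\ge 1$ turns $L_s u_{cs}^2$ into $\varphi_1/u$, giving $\E[|d_k|] \le \varphi_1\hat C(y_0)/u$.

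\emph{Case (b).} Now $\mathcal{Y}$ is bounded and $\mathcal{X}=\mathbb{R}^d$, so I would keep $\|x_k-x^*\|_c$ as a free variable, replace $A_2(Y_k),B_2(Y_k)$ by the uniform bounds $A_2 \ge 1,\,B_2$, factor $A_2$ out to write the bracket as $A_2\big(\|x_k-x^*\|_c + B_2/A_2\big)$, and split $\|x_k-x^*\|_c\big(\|x_k-x^*\|_c + B_2/A_2\big)$ with $ab \le \tfrac12(a^2+b^2)$ to reach $\tfrac32\|x_k-x^*\|_c^2 + \tfrac12 B_2^2/A_2^2$. Here $A = (A_1+A_3+1)^2 \ge 4 \ge \tfrac32$ (using $A_1 \ge 1$) and $B = (B_1+B_3+B_2/A_2)^2 \ge B_2^2/A_2^2$ absorb the two coefficients, and the norm-equivalence $\|x_k-x^*\|_c^2 \le u\,\Phi(x_k-x^*)$ from \eqref{assump:equivalence_rel} converts the first term, producing $\E[|d_k|] \le \tfrac{\varrho_1}{u}\big(uA\,\E[\Phi(x_k-x^*)] + B\big)$ with $\varrho_1 = uL_s u_{cs}^2 A_2$.

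The lemma is bookkeeping rather than conceptual — no mixing-time or martingale argument is needed, since $d_k$ is controlled before any expectation is taken. The one place that genuinely needs care is the constant-chasing: keeping straight which of the norms $\|\cdot\|_c,\|\cdot\|_s,\|\cdot\|_2$ each quantity lives in, and arranging the AM-GM splittings of the cross terms $M\,B_2(Y_k)$ (resp.\ $\|x_k-x^*\|_c\,B_2/A_2$) so that the spurious $M^2$ (resp.\ $\Phi$) contributions they create are exactly the ones already present inside $\hat C(y_0)$ (resp.\ inside $uA\,\Phi$); this relies on the normalizations $\hat A_1^2(y_0)\ge1$, $\hat A_2^2(y_0)\ge1$, $A_1\ge1$, $A_2\ge1$ built into the assumptions.
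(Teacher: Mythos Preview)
Your proposal is correct and follows essentially the same route as the paper: Hölder's inequality, the gradient bound $\|\nabla\Phi(x_k-x^*)\|_{s^*}\le L_s\|x_k-x^*\|_s$ from \eqref{eq:derivative_bound}, the Poisson-solution bound from \eqref{assump:value_fn-prop}, and then the same AM--GM/absorption bookkeeping in each case. Your constant-tracking in part~(b) is in fact slightly more careful than the paper's (you correctly record the $\tfrac32$ factor before absorbing it into $A\ge 4$), but the structure and ideas are identical.
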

\begin{proof}
Using Holder's inequality, we have
    \begin{align*}
        \E[|d_k|]&\leq \E[\|\nabla \Phi(x_k-x^*)\|_{s^*}\|V_{x_k}(Y_{k})\|_{s}].
    \end{align*}
    Using the same argument as in Eq. \eqref{eq:derivative_bound}, we get
    \begin{align*}
        \E[|d_k|]&\leq L_s\E[\|x_k-x^*\|_s\|V_{x_k}(Y_{k})\|_s]\\
        &\leq L_su_{cs}^2\E[\|x_k-x^*\|_c\|V_{x_k}(Y_{k})\|_c]\\
        &\leq L_su_{cs}^2\E[\|x_k-x^*\|_c(A_2(Y_k)\|x_k-x^*\|_c+B_2(Y_k))].\tag{Using Eq. \eqref{assump:value_fn-prop} in Assumption \ref{assump:Poisson_eq}}
    \end{align*}
    \begin{enumerate}[(a)]
        \item Since $\|x_k-x^*\|_c\leq M$, we get
        \begin{align*}
            \E[|d_k|]&\leq L_su_{cs}^2\E[M(A_2(Y_k)M+B_2(Y_k))]\\
            &\leq \frac{L_su_{cs}^2}{2}\E[M^2+(A_2(Y_k)M+B_2(Y_k))^2]\\
            &\leq L_su_{cs}^2\E[M^2+A_2^2(Y_k)M^2+B_2^2(Y_k)]\tag{$\left(a_1+a_2\right)^2\leq 2\left(a_1^2+a_2^2\right)$}\\
            &\leq L_su_{cs}^2\hat{C}(y_0)\tag{Part \eqref{assump:moments_bound} in Assumption \ref{assump:Poisson_eq}}\\
            &\leq \frac{\varphi_1\hat{C}(y_0)}{u}.\tag{$\frac{u_{2s}}{l_{2s}}\geq 1$}
        \end{align*}
        \item For this part, we have
        \begin{align*}
            \E[|d_k|]&\leq \frac{L_su_{cs}^2A_2}{2}\left(\E\left[\|x_k-x^*\|_c^2+\left(\|x_k-x^*\|_c+\frac{B_2}{A_2}\right)^2\right]\right)\tag{$ab\leq \frac{a^2+b^2}{2}$}\\
            &\leq L_su_{cs}^2A_2\left(\E[\|x_k-x^*\|_c^2]+\left(\frac{B_2}{A_2}\right)^2\right)\\
            &\leq \frac{\varrho_1}{u}(uA\E[\Phi(x_k-x^*)]+B).\tag{Eq. \eqref{assump:equivalence_rel} in Assumptions \eqref{assump:Lyapunov_fn}}
        \end{align*}
    \end{enumerate}
\end{proof}

\begin{lemma}\label{lem:telscoping_rearrange}
Under the Assumptions \ref{assump:linear_F}-\ref{assump:step-size}, we have the following:
    \begin{enumerate}[(a)]
        \item When the set $\mathcal{X}$ is an $\ell_2$-ball of sufficiently large such that $x^*\in \mathcal{X}$, then
        \begin{align}\label{lem:telscoping_rearrange;bounded}
            \alpha_k(\E[d_k]-\E[d_{k+1}])&\leq  \left(1-\eta\alpha_k\right)\alpha_{k-1}\E[d_k]-\alpha_k\E[d_{k+1}]+\frac{2\alpha_k^2\kappa \varphi_1\hat{C}(y_0)}{u}.
        \end{align}
        \item  When $\mathcal{Y}$ is bounded and $\mathcal{X}\equiv\mathbb{R}^d$, then
        \begin{align}\label{lem:telscoping_rearrange;finite}
            \alpha_k(\E[d_k]-\E[d_{k+1}])&\leq \left(1-\frac{\eta\alpha_k}{2}\right)\alpha_{k-1}\E[d_k]-\alpha_k\E[d_{k+1}]+\frac{2\alpha_k^2\kappa \varrho_1}{u}\left(uA\E[\Phi(x_k-x^*)]+B\right).
        \end{align}
    \end{enumerate}
\end{lemma}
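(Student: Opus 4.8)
The plan is to algebraically absorb the discrepancy between the coefficients $\alpha_k$ and $\alpha_{k-1}$ into a negative-drift factor and a small higher-order remainder. Writing $c=1$ for part (a) and $c=\tfrac12$ for part (b), I would start from the identity
\[
\alpha_k\E[d_k] = (1-c\eta\alpha_k)\,\alpha_{k-1}\E[d_k] + \bigl(\alpha_k-\alpha_{k-1}+c\eta\alpha_k\alpha_{k-1}\bigr)\E[d_k],
\]
which gives
\[
\alpha_k\bigl(\E[d_k]-\E[d_{k+1}]\bigr) = (1-c\eta\alpha_k)\,\alpha_{k-1}\E[d_k] - \alpha_k\E[d_{k+1}] + \bigl(\alpha_k-\alpha_{k-1}+c\eta\alpha_k\alpha_{k-1}\bigr)\E[d_k].
\]
The first two terms on the right already coincide with the corresponding terms in \eqref{lem:telscoping_rearrange;bounded} and \eqref{lem:telscoping_rearrange;finite} (with the appropriate value of $c$), so the whole task reduces to bounding the last term. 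Here $\alpha_{-1}=\alpha/(K-1)^\xi$ is well defined because $K\ge 2$.

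Next I would record two elementary facts about the step sizes of Assumption \ref{assump:step-size}: (i) $\alpha_{k-1}\le 2\alpha_k$, since $\alpha_{k-1}/\alpha_k=\bigl((k+K)/(k-1+K)\bigr)^\xi\le (k+K)/(k-1+K)\le K/(K-1)\le 2$; and (ii) $\alpha_{k-1}-\alpha_k\le \tfrac{\xi}{\alpha}\,\alpha_{k-1}\alpha_k$, which follows from concavity (or, for $\xi=1$, linearity) of $t\mapsto t^\xi$ via $\tfrac1{\alpha_k}-\tfrac1{\alpha_{k-1}}=\tfrac1\alpha\bigl((k+K)^\xi-(k-1+K)^\xi\bigr)\le \tfrac{\xi}{\alpha}(k-1+K)^{\xi-1}\le \tfrac{\xi}{\alpha}$, using $k-1+K\ge 1$. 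Combining (i), (ii), the triangle inequality, and $c\le 1$ then yields
\[
\bigl|\alpha_k-\alpha_{k-1}+c\eta\alpha_k\alpha_{k-1}\bigr|\le (\alpha_{k-1}-\alpha_k)+c\eta\alpha_k\alpha_{k-1}\le \alpha_{k-1}\alpha_k\Bigl(\tfrac{\xi}{\alpha}+c\eta\Bigr)\le \kappa\,\alpha_{k-1}\alpha_k\le 2\kappa\,\alpha_k^2.
\]

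Finally I would apply Jensen's inequality to obtain $\bigl(\alpha_k-\alpha_{k-1}+c\eta\alpha_k\alpha_{k-1}\bigr)\E[d_k]\le 2\kappa\alpha_k^2\,\E[|d_k|]$ and then invoke Lemma \ref{lem:telescoping_term-bound}. In the setting of (a) (projection onto an $\ell_2$-ball) this gives $\E[|d_k|]\le \varphi_1\hat{C}(y_0)/u$, hence the last term is at most $\tfrac{2\kappa\alpha_k^2\varphi_1\hat{C}(y_0)}{u}$, establishing \eqref{lem:telscoping_rearrange;bounded}; in the setting of (b) (bounded state space, $\mathcal{X}\equiv\mathbb{R}^d$) it gives $\E[|d_k|]\le \tfrac{\varrho_1}{u}\bigl(uA\,\E[\Phi(x_k-x^*)]+B\bigr)$, hence the last term is at most $\tfrac{2\kappa\alpha_k^2\varrho_1}{u}\bigl(uA\,\E[\Phi(x_k-x^*)]+B\bigr)$, establishing \eqref{lem:telscoping_rearrange;finite}. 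The only point that requires a bit of care is landing on the constant \emph{exactly} $2\kappa$ rather than a larger multiple: this works because fact (ii) carries the factor $\xi/\alpha$ with no slack (so $\tfrac{\xi}{\alpha}+c\eta\le\kappa$), and the crude bound (i) is used only once, at the very end; everything else is routine bookkeeping.
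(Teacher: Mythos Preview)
Your proof is correct and follows essentially the same route as the paper: add and subtract $(1-c\eta\alpha_k)\alpha_{k-1}\E[d_k]$, bound the residual coefficient in absolute value using the step-size properties of Assumption~\ref{assump:step-size}, and then apply Lemma~\ref{lem:telescoping_term-bound} to control $\E[|d_k|]$. Your unified treatment with $c\in\{1,\tfrac12\}$ and your slightly sharper intermediate bound $\alpha_{k-1}-\alpha_k\le \tfrac{\xi}{\alpha}\alpha_{k-1}\alpha_k$ (versus the paper's $\tfrac{2\xi}{\alpha}\alpha_k^2$ in Lemma~\ref{lem:step-size_prop}) are cosmetic improvements; both routes land on the same constant $2\kappa$.
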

\begin{proof}
    \begin{enumerate}[(a)]
        \item Re-writing the expression, we get
        \begin{align*}
            \alpha_k(\E[d_k]-\E[d_{k+1}])&=\left(1-\eta\alpha_k\right)\alpha_{k-1}\E[d_k]-\alpha_k\E[d_{k+1}]+\alpha_k\E[d_k]-\left(1-\eta\alpha_k\right)\alpha_{k-1}\E[d_k]\\
            &= \left(1-\eta\alpha_k\right)\alpha_{k-1}\E[d_k]-\alpha_k\E[d_{k+1}]+\left(\alpha_k-\alpha_{k-1}+\eta\alpha_k\alpha_{k-1}\right)\E[d_k]\\
            &\leq \left(1-\eta\alpha_k\right)\alpha_{k-1}\E[d_k]-\alpha_k\E[d_{k+1}]+2\alpha_k^2\left(\frac{\xi}{\alpha}+\eta\right)\E[d_k].\tag{Lemma \ref{lem:step-size_prop}}
            \end{align*} 
        Using Eq. \eqref{lem:telescoping_term-bound;bounded} in Lemma \ref{lem:telescoping_term-bound}, we get 
        \begin{align*}
            \alpha_k(\E[d_k]-\E[d_{k+1}])&\leq \left(1-\eta\alpha_k\right)\alpha_{k-1}\E[d_k]-\alpha_k\E[d_{k+1}]+\frac{2\alpha_k^2\kappa \varphi_1\hat{C}(y_0)}{u}.
        \end{align*}
        \item For this part, we re-write the expression as follows:
        \begin{align*}
            \alpha_k(\E[d_k]-\E[d_{k+1}])&=\left(1-\frac{\eta\alpha_k}{2}\right)\alpha_{k-1}\E[d_k]-\alpha_k\E[d_{k+1}]+\alpha_k\E[d_k]-\left(1-\frac{\eta\alpha_k}{2}\right)\alpha_{k-1}\E[d_k]\\
            &= \left(1-\frac{\eta\alpha_k}{2}\right)\alpha_{k-1}\E[d_k]-\alpha_k\E[d_{k+1}]+\left(\alpha_k-\alpha_{k-1}+\frac{\eta\alpha_k\alpha_{k-1}}{2}\right)\E[d_k]\\
            &\leq \left(1-\frac{\eta\alpha_k}{2}\right)\alpha_{k-1}\E[d_k]-\alpha_k\E[d_{k+1}]+\alpha_k^2\left(\frac{2\xi}{\alpha}+\eta\right)\E[d_k].\tag{Lemma \ref{lem:step-size_prop}}
            \end{align*} 
        Using Eq. \eqref{lem:telescoping_term-bound;finite} in Lemma \ref{lem:telescoping_term-bound} to get 
        \begin{align*}
            \alpha_k(\E[d_k]-\E[d_{k+1}])&\leq \left(1-\frac{\eta\alpha_k}{2}\right)\alpha_{k-1}\E[d_k]-\alpha_k\E[d_{k+1}]\\
            &~~~~+\alpha_k^2\left(\frac{2\xi}{\alpha}+\eta\right)\frac{\varrho_1}{u}\left(uA\E[\Phi(x_k-x^*)]+B\right)\\
            &\leq \left(1-\frac{\eta\alpha_k}{2}\right)\alpha_{k-1}\E[d_k]-\alpha_k\E[d_{k+1}]+\frac{2\alpha_k^2\kappa \varrho_1}{u}\left(uA\E[\Phi(x_k-x^*)]+B\right).
        \end{align*}
    \end{enumerate}
\end{proof}

\begin{lemma}\label{lem:one_step-lipschitz}
    Under the Assumptions \ref{assump:linear_F} and \ref{assump:linear_M}, $\forall~k\geq 0$, we have 
    \begin{enumerate}[(a)]
        \item When the set $\mathcal{X}$ is an $\ell_2$-ball of sufficiently large such that $x^*\in \mathcal{X}$, then
        \begin{align}\label{lem:one_step-lipschitz;bounded}
            \|x_{k+1}-x_{k}\|_c\leq \alpha_k\frac{u_{2c}}{l_{2c}}(A_1(Y_k)M+B_1(Y_k)+A_3M+B_3).
        \end{align}
        \item  When $\mathcal{Y}$ is bounded and $\mathcal{X}\equiv\mathbb{R}^d$, then
        \begin{align}\label{lem:one_step-lipschitz;finite}
            \|x_{k+1}-x_{k}\|_c\leq \alpha_k((A_1+A_3)\|x_k-x^*\|_c+B_1+B_3).
        \end{align}
    \end{enumerate}
\end{lemma}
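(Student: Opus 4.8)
The plan is to unwind the one-step update $x_{k+1}=\Pi_{\mathcal{X}}\bigl(x_k+\alpha_k(F(x_k,Y_k)+M_k)\bigr)$, bound the increment $\|x_{k+1}-x_k\|_c$ by (a multiple of) the norm of the stochastic step $\alpha_k(F(x_k,Y_k)+M_k)$, and then control the latter using the linear-growth bounds in Assumptions \ref{assump:linear_F} and \ref{assump:linear_M}.

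For part (b), since $\mathcal{X}\equiv\mathbb{R}^d$ there is no projection and $x_{k+1}-x_k=\alpha_k(F(x_k,Y_k)+M_k)$ exactly. Applying the triangle inequality, then Assumption \ref{assump:linear_F} together with the boundedness of $\mathcal{Y}$ (so that $A_1(Y_k)\le A_1$ and $B_1(Y_k)\le B_1$ by the constants in Assumption \ref{assump:moments_bound}(2)), and finally Assumption \ref{assump:linear_M}(b), gives $\|x_{k+1}-x_k\|_c\le\alpha_k(\|F(x_k,Y_k)\|_c+\|M_k\|_c)\le\alpha_k\bigl((A_1+A_3)\|x_k-x^*\|_c+B_1+B_3\bigr)$, which is the claim. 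This part is essentially immediate.

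For part (a) the extra ingredients are the non-expansivity of the $\ell_2$-projection and norm equivalence. Since $\mathcal{X}$ is an $\ell_2$-ball and the iterates are projected onto it, $x_k\in\mathcal{X}$, hence $\Pi_{\mathcal{X}}(x_k)=x_k$ and $\|x_{k+1}-x_k\|_2=\|\Pi_{\mathcal{X}}(x_k+\alpha_k(F(x_k,Y_k)+M_k))-\Pi_{\mathcal{X}}(x_k)\|_2\le\alpha_k\|F(x_k,Y_k)+M_k\|_2$. Converting to the $c$-norm via $\|v\|_c\le u_{2c}\|v\|_2$ and $\|v\|_2\le l_{2c}^{-1}\|v\|_c$ yields $\|x_{k+1}-x_k\|_c\le\alpha_k\tfrac{u_{2c}}{l_{2c}}\|F(x_k,Y_k)+M_k\|_c$. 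Then apply the triangle inequality and Assumptions \ref{assump:linear_F} and \ref{assump:linear_M}(b) as before, and finally use that $x^*,x_k\in\mathcal{X}$ with $\max_{x\in\mathcal{X}}\|x\|_c=M/2$ to replace $\|x_k-x^*\|_c$ by $M$, giving the stated bound.

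The only subtlety — and the ``main obstacle,'' though a mild one — is the interplay between the $\ell_2$-geometry in which the projection is non-expansive and the general norm $\|\cdot\|_c$ in which the bound is stated; this mismatch is precisely what forces the condition-number factor $u_{2c}/l_{2c}$ to appear. One should also note the base case $k=0$, which uses $x_0\in\mathcal{X}$ (part of the setup for Theorem \ref{thm:main_thm;bounded}); for $k\ge 1$ the membership $x_k\in\mathcal{X}$ is automatic from the projection step.
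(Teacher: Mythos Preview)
Your proposal is correct and follows essentially the same route as the paper: in both parts you pass from the update to $\alpha_k\|F(x_k,Y_k)+M_k\|_c$ (in part (a) via $\ell_2$-nonexpansivity of $\Pi_{\mathcal X}$ plus the norm-equivalence factor $u_{2c}/l_{2c}$, in part (b) directly), then apply Assumptions \ref{assump:linear_F} and \ref{assump:linear_M} and finally either the bound $\|x_k-x^*\|_c\le M$ or the uniform bounds $A_1,B_1$ from Assumption \ref{assump:Poisson_eq}\,(d)(2). Your remark that the condition number $u_{2c}/l_{2c}$ enters precisely because the projection is $\ell_2$ while the growth bounds are in $\|\cdot\|_c$ is exactly the point.
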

\begin{proof}
\begin{enumerate}[(a)]
    \item Using the iteration \eqref{eq:main_rec} and the fact that $x_k\in \mathcal{X}$, we have
    \begin{align*}
        \|x_{k+1}-x_{k}\|_{c}&\leq u_{2c}\|x_{k+1}-x_{k}\|_{c}\\
        &=u_{2c}\|\Pi_\mathcal{X}\left(x_k+\alpha_k(F(x_k, Y_k)+M_k)\right)-\Pi_{\mathcal{X}}(x_{k})\|_{2}\\
        &\leq\alpha_ku_{2c}\|F(x_k, Y_k)+M_k\|_2 \tag{Non-expansive projection}\\
        &\leq\alpha_k\frac{u_{2c}}{l_{2c}}\|F(x_k, Y_k)+M_k\|_c \\
        &\leq \alpha_k\frac{u_{2c}}{l_{2c}}(\|F(x_k, Y_k)\|_c+\|M_k\|_c) \\
        &\leq \alpha_k\frac{u_{2c}}{l_{2c}}((A_1(Y_k)+A_3)\|x_k-x^*\|_c+B_1(Y_k)+B_3)\tag{Assumptions \eqref{assump:linear_F} and \eqref{assump:linear_M}}\\
        &\leq \alpha_k\frac{u_{2s}}{l_{2s}}((A_1(Y_k)+A_3)M+B_1(Y_k)+B_3).\tag{$\|x_k-x^*\|_c\leq M$}
    \end{align*}
    \item Using the iteration \eqref{eq:main_rec} and the fact that $\mathcal{X}\equiv \mathbb{R}^d$, we have
    \begin{align*}
        \|x_{k+1}-x_{k}\|_{c}&=\alpha_k\|F(x_k, Y_k)+M_k\|_c \\
        &\leq \alpha_k(\|F(x_k, Y_k)\|_c+\|M_k\|_c) \\
        &\leq \alpha_k(\|F(x_k, Y_k)\|_c+\|M_k\|_c)\\
        &\leq \alpha_k((A_1(Y_k)+A_3)\|x_k-x^*\|_c+B_1(Y_k)+B_3)\tag{Assumptions \eqref{assump:linear_F} and \eqref{assump:linear_M}}\\
        &\leq \alpha_k((A_1+A_3)\|x_k-x^*\|_c+B_1+B_3).\tag{Part \eqref{assump:moments_bound} in Assumption \ref{assump:Poisson_eq}}
    \end{align*}
\end{enumerate}

\end{proof}

\section{SA counterexample with almost sure convergence but diverging mean square error}\label{appendix:counterexample}
\begin{theorem}
    The SA recursion defined in Eq. \eqref{eq:counterex_iteration} satisfies the following:
    \begin{enumerate}[(a)]
        \item For any choice of $\lambda$ and $\nu$ such that $\lambda<\nu$, we have $\lim_{k\to \infty} x_k = 0$ a.s.
        \item If $\lambda\in (\exp(-\log(4)\log(3.4)), 0.5)$, then $\lim_{k\to \infty}\E[x_k^2]=\infty$.
    \end{enumerate}
\end{theorem}
\begin{proof}
    The almost sure convergence of Eq. \eqref{eq:counterex_iteration} follows directly from Theorem 1 in \cite{borkar2024}. For part (b), we will assume that $\bar{q}\in \mathbb{N}$ for notational simplicity. We choose $k_0=\bar{q}$ and open up the iteration \eqref{eq:counterex_iteration} from $k_0$ to any time $k>k_0$ as follows
    \begin{align*}
        x_{k+1}=x_{k_0}\prod_{j=k_0}^{k}\left(1+\alpha_j\left(Q_{j+1}-\bar{q}-1\right)\right)+\sum_{j=k_0}^kW_j\prod_{l=j+1}^{k}\left(1+\alpha_l\left(Q_{l+1}-\bar{q}-1\right)\right).
    \end{align*}
    Using independence of $W_k$, it follows that the mean square error is given as
    \begin{align}
        \E[x_k^2]&=\E\left[x_{k_0}^2\prod_{j=k_0}^{k}\left(1+\alpha_j\left(Q_{j+1}-\bar{q}-1\right)\right)^2\right]+\sum_{j=k_0}^k\E\left[\prod_{l=j+1}^{k}\left(1+\alpha_l\left(Q_{l+1}-\bar{q}-1\right)\right)^2\right]\nonumber\\
        &\geq \E\left[x_{k_0}^2\prod_{l=k_0+1}^{k}\left(1+\alpha_l\left(Q_{l+1}-\bar{q}-1\right)\right)^2\right]=\Delta_k\label{eq:phik}.
    \end{align}
    Note that $\Delta_k$ is the expectation of a positive quantity and thus can be lower bounded by accounting only a single sample path of $\{Q_k\}_{k\geq 0}$. In what follows, we will show that this lower bound on $\Delta_k$ is an increasing function of $k$, therefore $\E[x_k^2]$ diverges as $k$ tends to infinity. 
    
    Define $g(l)=Q_{l+1}-\bar{q}-1$ and pick an arbitrary time instant $k\geq k_1$ large enough, where $k_1$ will be specified in Lemma \ref{lem:summ_bound}. Consider a sample path where $D_l=1$ for all $0\leq l\leq k$. Then, for this sample path, we can write $g(l)$ as
    \begin{align*}
        g(l)=
            l-\bar{q}~~\text{for } 0\leq l\leq k.
    \end{align*}
    By the choice of $k_0$, $g(l)\geq 0$ for $k_0\leq l\leq k$. Furthermore, since $D_l\geq -1$, we also have the following almost sure lower bound on $g(l)$
    \begin{align*}
        g(l)\geq 2k-\bar{q}-l~~\text{for } k+1\leq l\leq 2k-k_0.
    \end{align*}
    Next, note that $\alpha_lg(l)\geq g(l)/(l+1)$ since $\alpha\geq 1$ and $\xi\leq 1$. With the proposed sample path and the relations established above, we can lower bound $\Delta_j$  for all $0\leq j\leq 2k-k_0$ as
    \begin{align*}
        \Delta_j\geq \E[x_{k_0}^2]e^{2\log(2)\sum_{l=k_0}^{j}g(l)/(l+1)}P(D_l=1, 0\leq l\leq k),
    \end{align*}
    where we use the fact $0\leq \alpha_lg(l)\leq 1$ for all $k_0\leq j \leq k$ and $(1+x)\geq e^{\log(2)x}$ for all $0\leq x\leq 1$. Our next goal is to show that this lower bound on $\Delta_{k}$ diverges as $k\to \infty$. To this end, we will need the following lemma.
    \begin{lemma}\label{lem:summ_bound}
    Let $k_1$ be a constant defined as
    \begin{align*}
        k_1=\inf\left\{j:\log\left(\frac{2j-k_0+1}{j+2}\right)\geq \log(1.9), \frac{1}{j}-\frac{\bar{q}+1}{j}\log \left(\frac{2j-k_0}{k_0+1}\right)\geq \log(0.95)\right\}.
    \end{align*}
    Then, for all $k\geq k_1$, the following relation holds.
        \begin{align*}
            \sum_{l=k_0}^{2k-k_0}\frac{1}{l+1}g(l)\geq k\log(3.4).
        \end{align*}
    \end{lemma}
    \begin{proof}
        \begin{align*}
            \sum_{l=k_0}^{2k-k_0}\frac{1}{l+1}g(l)&= \sum_{l=k_0}^{k}\frac{1}{l+1}(l-\bar{q})+\sum_{l=k+1}^{2k-k_0}\frac{1}{l+1}(2k-\bar{q}-l)\\
            &=k-k_0-(\bar{q}+1)\left(\sum_{l=k_0+1}^{k}\frac{1}{l+1}\right)+(2k-\bar{q}+1)\left(\sum_{k=k+1}^{2k-k_0-1}\frac{1}{l+1}\right)-(k-k_0-1)\\
            &=1-(\bar{q}+1)\left(\sum_{l=k_0+1}^{2k-k_0-1}\frac{1}{l+1}\right)+2(k+1)\left(\sum_{l=k+1}^{2k-k_0-1}\frac{1}{l+1}\right).
        \end{align*}
        Since the function $h(x)=1/(x+1)$ is non-increasing, we use the inequality $\int_a^{b+1}dx/(x+1)\leq \sum_{l=a}^b1/(k+1)\leq \int_{a-1}^bdx/(x+1)$ for the second and third term, to get
        \begin{align*}
            \sum_{l=k_0}^{2k-k_0}\frac{1}{l+1}g(l)&\geq 1-(\bar{q}+1)\int_{k_0+1}^{2k-k_0-1}\frac{1}{x+1}dx+2(k+1)\int_{k+1}^{2k-k_0}\frac{1}{x+1}dx\\
            &\geq 1-(\bar{q}+1)\log \left(\frac{2k-k_0}{k_0+1}\right)+2k\log\left(\frac{2k-k_0+1}{k+2}\right)\\
            &= k\left(2\log\left(\frac{2k-k_0+1}{k+2}\right)+\frac{1}{k}-\frac{\bar{q}+1}{k}\log \left(\frac{2k-k_0}{k_0+1}\right)\right).
        \end{align*}
        Recall that $k\geq k_1$ which leads us to
        \begin{align*}
            \sum_{l=k_0}^{2k-k_0}\frac{1}{l+1}g(l)&\geq k\log(3.4).
        \end{align*}
\end{proof}
The probability of $D_l=1$ consecutively occurring $k$ times is given by $\lambda^{k}=e^{k\log(\lambda)}$. Thus, it follows that $\Delta_{2k-k_0}$ is lower bounded by
\begin{align*}
    \Delta_{2k-k_0}&\geq \E[x_{k_0}^2]e^{k\log(\lambda)}e^{2k\log(2)\log(3.4)}\\
    &=\E[x_{k_0}^2]e^{k\left(\log(\lambda)+2\log(2)\log(3.4)\right)}.
\end{align*}

Note that $\lambda>e^{-\log(4)\log(3.4)}$. Therefore, the coefficient $\log(4)\log(3.4)+\log(\lambda)>0$. Thus, for any large enough $k$, we have
\begin{align*}
    \E[x_{k}^2]\geq \E[x_{k_0}^2]e^{k\left(\log(\lambda)+2\log(2)\log(3.4)\right)}.
\end{align*}
Hence, it follows that $\E[x_k^2]\xrightarrow{k\to \infty} \infty$.
\end{proof}

\subsection{Expectation vs. Almost Sure Behavior} \label{appendix:gamblers_ruin}
This example is inspired by Exercise 5.6 in \cite{gallager2013stochastic} and St. Petersburg Paradox. Suppose that a gambler has initial wealth $W_0$ and finds a casino where the probability of winning the game is $p=0.6$. The strategy he adopts is as follows: for each round, he bets $\alpha=50\%$ of his wealth, and if he wins, he doubles his bet for the next round. He decides to keep betting until he goes bankrupt. 

Denote $X_i$ as the return in the $i$th round, then the wealth $W_k$ at the end of round $k$ is given by $W_k=W_0\Pi_{i=1}^kX_i$. Note that for all $i\geq 1$
\begin{align*}
    X_i=\begin{cases}
        1+\alpha~~\text{w.p.}~p\\
        1-\alpha~~\text{w.p.}~1-p
    \end{cases}
\end{align*}
Therefore,
\begin{align*}
    \E[X_i]=p(1+\alpha)+(1-\alpha)(1-p)&=0.6\cdot 1.5+0.4\cdot 0.5=1.1\\
    \Rightarrow \E[W_k]&=W_0\Pi_{i=1}^k\E[X_i]\\
    &=W_0(1.1)^k.
\end{align*}
Thus, $\lim_{k\to \infty}\E[W_k]=\infty$.

On the other hand, let us consider $\log W_k$. 
\begin{align*}
    \log \frac{W_k}{W_0}=\sum_{i=1}^k\log X_i\\
    \Rightarrow \frac{1}{k}\log \frac{W_k}{W_0}=\frac{1}{k}\sum_{i=1}^k\log X_i.
\end{align*}
Since $\E[|\log X_i|]<\infty$, using strong law of large numbers, we get
\begin{align*}
    \lim_{k\to \infty}\frac{1}{k}\sum_{i=1}^k\log X_i&\stackrel{a.s.}{=}\E[\log X_1]\\
    &=p\log (1+\alpha)+(1-p)\log (1-\alpha)\\
    &\approx -0.05.
\end{align*}
The above implies $\lim_{k\to \infty}\frac{1}{k}\log \frac{W_k}{W_0}\stackrel{a.s.}{=}0.6\log 1.5+0.4\log 0.5<0$. In other words, $W_k\stackrel{a.s.}{\to}0$. Hence, the gambler's wealth blows up to infinity in expectation, while almost surely he goes bankrupt. This behavior is identical to the counterexample presented in Section \ref{sec:counterexample}. 
\subsubsection{Concentration bound}

Nevertheless, for this simple setting, we can still upper bound the expected wealth conditioned on a high probability set as follows. Let $\mathbbm{1}_{W_i}$ be indicator random variables that denote the event of success in round $i$. Then, we can re-write $X_i=(1+\alpha)^{\mathbbm{1}_{W_i}}(1-\alpha)^{1-\mathbbm{1}_{W_i}}$ and therefore, the wealth $W_k$ is given by
\begin{align*}
    W_k&=W_0(1+\alpha)^{\sum_{i=1}^k\mathbbm{1}_{W_i}}(1-\alpha)^{k-\sum_{i=1}^k\mathbbm{1}_{W_i}}\\
    &=W_0\left((1+\alpha)^p(1-\alpha)^{1-p}\right)^k(1+\alpha)^{\sum_{i=1}^k(\mathbbm{1}_{W_i}-p)}(1-\alpha)^{-\sum_{i=1}^k(\mathbbm{1}_{W_i}-p)}\\
    &=W_0\rho^k(1+\alpha)^{\sum_{i=1}^k(\mathbbm{1}_{W_i}-p)}(1-\alpha)^{-\sum_{i=1}^k(\mathbbm{1}_{W_i}-p)}
\end{align*}
where $\rho=(1+\alpha)^p(1-\alpha)^{1-p}<1$. Note that $\mathbbm{1}_{W_i}$ are i.i.d. random variables. Thus, for any $\delta'>0$, Hoeffding's inequality leads us to
\begin{align*}
    P\left(\sum_{i=1}^k(\mathbbm{1}_{W_i}-p)\geq \delta' k\right)&\leq e^{-2{\delta'}^2k}\\
    \Rightarrow P\left(W_k\geq W_0\rho^k\left(\frac{1+\alpha}{1-\alpha}\right)^{\delta' k}\right)&\leq e^{-2{\delta'}^2k}.
\end{align*}
Let $\delta=e^{-2{\delta'}^2k}$. Then, the above concentration can be translated into
\begin{align*}
     P\left(W_k< W_0\rho^k\left(\frac{1+\alpha}{1-\alpha}\right)^{\sqrt{\frac{k}{2}\log\left(\frac{1}{\delta}\right)}}\right)&\geq 1-\delta.
\end{align*}
Note that $\rho<1$ and $\lim_{\xi\to 0}((1+\alpha)/(1-\alpha))^{\xi}=1$. Thus, there exists a constant $\xi_0>0$ such that $\rho((1+\alpha)/(1-\alpha))^{\xi}<1$ for all $\xi\in(0, \xi_0)$. If we set $\delta_0(k)=\exp(-2k\xi_0^2)$, then for $\delta_1(k)=\omega(\delta_0(k))$, we get
\begin{align*}
    P\left(W_k< W_0\rho^k\left(\frac{1+\alpha}{1-\alpha}\right)^{\sqrt{\frac{k}{2}\log\left(\frac{1}{\delta_1(k)}\right)}}\right)&\geq 1-\delta_1(k).
\end{align*}
By the definition of $\omega(\cdot)$, there exists a function $g_1(k)$ such that $\delta_1(k)=g_1(k)\delta_0(k)$ with $g_1(k)\to \infty$. Using the upper bound $\sqrt{1+x}\leq 1+x/2$ for all $x\geq -1$, we get
\begin{align*}
    \rho^k\left(\frac{1+\alpha}{1-\alpha}\right)^{\sqrt{\frac{k}{2}\log\left(\frac{1}{\delta_1(k)}\right)}}
    &=\rho^k\left(\frac{1+\alpha}{1-\alpha}\right)^{\sqrt{k^2\xi_0^2+\frac{k}{2}\log\left(\frac{1}{g_1(k)}\right)}}\\
    &= \rho^k\left(\frac{1+\alpha}{1-\alpha}\right)^{k\xi_0\sqrt{1-\frac{1}{2k\xi_0^2}\log\left(g_1(k)\right)}}\\
    &\leq \rho^k\left(\frac{1+\alpha}{1-\alpha}\right)^{k\xi_0-\frac{1}{4\xi_0}\log\left(g_1(k)\right)}\stackrel{k\to \infty}{\to}  0.
\end{align*}
Similarly, for $\delta_2(k)=o(\delta_0(k))$, we get
\begin{align*}
    P\left(W_k\geq W_0\rho^k\left(\frac{1+\alpha}{1-\alpha}\right)^{\sqrt{\frac{k}{2}\log\left(\frac{1}{\delta_2(k)}\right)}}\right)&\leq \delta_2(k).
\end{align*}
By the definition of $o(\cdot)$, there exists a function $g_2(k)$ such that $\delta_2(k)=g_2(k)\delta_0(k)$ with $g_2(k)\to 0$. Using the lower bound $\sqrt{1+x}\geq 1+x/4$ for all $x\in[0, 1]$, we get
\begin{align*}
    \rho^k\left(\frac{1+\alpha}{1-\alpha}\right)^{\sqrt{\frac{k}{2}\log\left(\frac{1}{\delta_2(k)}\right)}}
    &=\rho^k\left(\frac{1+\alpha}{1-\alpha}\right)^{\sqrt{k^2\xi_0^2+\frac{k}{2}\log\left(\frac{1}{g_2(k)}\right)}}\\
    &= \rho^k\left(\frac{1+\alpha}{1-\alpha}\right)^{k\xi_0\sqrt{1-\frac{1}{2k\xi_0^2}\log\left(g_2(k)\right)}}\\
    &\geq \rho^k\left(\frac{1+\alpha}{1-\alpha}\right)^{k\xi_0-\frac{1}{8\xi_0}\log\left(g_2(k)\right)}\stackrel{k\to \infty}{\to} \infty.
\end{align*}

\subsection{Comparison with a simpler approach to show divergence}\label{sec:simpler_method}
Now we discuss a ``potentially simpler method'' for proving divergence and show why it fails in doing so, thus highlighting the significance of our method. Previously, we considered the set $\{D_l=1\}$ for $0\leq l\leq k$ for a large enough $k$ that led to the linear growth of the length of the queue $Q_l$ within the time window. Instead, pick any arbitrary constant $q_0>\bar{q}+1$ and consider the set $\{Q_l\geq q_0\}$ for $0\leq l\leq k$. For convenience, we will assume that the system started in stationary state. Let $\lambda/\nu = p$. Then, we know that the stationary distribution of the $M/M/1$ queue is a geometric distribution with the parameter $p$ \cite{Ross_07}. Also, note that due to stationarity, there exists some constant $p_{q_0}\in (0, 1)$ such that $P(Q_{k+1}\geq q_0| Q_k\geq q_0)=p_{q_{0}}$ for all $k\geq 0$. 

Fix $k_0$ to be some starting time instant. Repeating the steps we performed for the lower bound in Theorem \ref{thm:div_ex}, we get
\begin{align*}
    \E[x_k^2]&\geq \E\left[x_{k_0}^2\prod_{l=k_0}^{k}\left(1+\alpha_l\left(Q_{l+1}-\bar{q}-1\right)\right)^2\right]\\
    &\geq \E\left[x_{k_0}^2\mathbbm{1}\{Q_{l+1}\geq q_0, k_0\leq l\leq k \}\prod_{l=k_0}^{k}\left(1+\alpha_l\left(Q_{l+1}-\bar{q}-1\right)\right)^2\right]\\
    &\geq \E\left[x_{k_0}^2\mathbbm{1}\{Q_{l+1}\geq q_0, k_0\leq l\leq k \}\prod_{l=k_0}^{k}\left(1+\alpha_l\left(q_0-\bar{q}-1\right)\right)^2\right]\\
    &= \E[x_{k_0}^2\mathbbm{1}\{Q_{l+1}\geq q_0, k_0\leq l\leq k \}]e^{\mathcal{O}(\sum_{l=k_0}^k\alpha_l)}\\
    &= \E[x_{k_0}^2\E[\mathbbm{1}\{Q_{l+1}\geq q_0, k_0\leq l\leq k \}|\mathcal{F}_{k_0}]]e^{\mathcal{O}(\sum_{l=k_0}^k\alpha_l)}
\end{align*}
where $\mathcal{F}_{k_0}$ is the field $\sigma$ generated by the iterations and the queue process up to time $k_0$. Note that the queue process evolves independently of the iterations $x_k$. Thus, the conditional expectation is given as
\begin{align*}
    \E[\mathbbm{1}\{Q_{l+1}\geq q_0, k_0\leq l\leq k \}|\mathcal{F}_{k_0}] &= \prod_{l=k_0}^kP(Q_{l+1}\geq q_0|Q_l\geq q_0)\\
    &= p_{q_0}^{k-k_0+1}= \mathcal{O}(p_{q_0}^k).
\end{align*}
Thus, for any $k$, we get
\begin{align*}
    \E[x_{k}^2]&\geq \E[x_{k_0}^2]e^{\mathcal{O}(\sum_{l=k_0}^{k}\alpha_l)}\mathcal{O}(p_{q_0}^{k}).
\end{align*}
Recall that $\int_a^{b+1}dx/(x+1)\leq \sum_{l=a}^b1/(k+1)\leq \int_{a-1}^bdx/(x+1)$. Thus, $e^{\mathcal{O}(\sum_{l=k_0}^{k_1}\alpha_l)}=\mathcal{O}((k+1)^\zeta)$ for some $\zeta \in \mathbb{R}$. This gives us the following vacuous bound:
\begin{align*}
    \lim_{k\to \infty}\E[x_{k}^2]&\geq \lim_{k\to \infty}\E[x_{k_0}^2]e^{\mathcal{O}(\sum_{l=k_0}^{k}\alpha_l)}\mathcal{O}(p_{q_0}^{k}) = 0.
\end{align*}

\begin{figure}
    \centering
        \begin{tikzpicture}
    % Define the constants for the plot
    \def\kone{4}
    \def\kzero{1}
    \draw[->,thick, black] (6,3.3) -- (6.6,3.3)
    node[pos=1, anchor=west, color=black] {\small{Upper Bound on $Q_l$}};

    \draw[->,thick, black] (6,2.15) -- (6.6,2.15)
    node[pos=1, anchor=west, color=black] {\small{A sample path for $Q_l\in \mathcal{S}_1$}};

    \draw[->,thick, black] (6,1.3) -- (6.6,1.3)
    node[pos=1, anchor=west, color=black] {\small{Lower Bound on $Q_l$}};

    \begin{axis}[
        legend style={
        at={(0.55,1)},         % Position below the plot
        anchor=north,            % Align the top of the legend to this point
        legend columns=2,         % Put 2 entries per row
        cells={anchor=west},
        font=\small,                 % Font size inside legend
        column sep=2pt,              % Horizontal space between columns  
    },
        % Set the labels for the x and y axes
        xlabel={$l$},
        ylabel={$Q_l$},
        % Set the range for the x and y axes
        xmin=-1, xmax=8,
        ymin=-1, ymax=5,
        % Use standard Cartesian axis lines
        axis lines=middle,
        xtick={0, \kzero, \kone, 2*\kone-\kzero+0.1},
        xticklabels={$0$, $k_0$, $k_1$, $2k_1-k_0$},
        % Define the ticks and their labels on the y-axis
        ytick={0, \kzero/2, \kzero/2+0.5},
        yticklabels={$0$, $\bar{q}$, $q_{0}$},
        % Style for the axis labels
        xlabel style={anchor=north east, at={(axis description cs:1.05,0.1)}},
        ylabel style={anchor=north east, at={(axis description cs:0.1,1.1)}},
        % Ensure the aspect ratio is 1:1 to see the slopes clearly
        axis equal image,
    ]
        \addlegendimage{area legend, fill=blue, draw=blue, fill opacity=0.3, mark=none}
        \addlegendentry{$\mathcal{S}_1$}
        \addlegendimage{area legend, fill=red, draw=red, fill opacity=0.3, mark=none}
        \addlegendentry{$\mathcal{S}_2$}
        
        \addplot[
            color=blue,      % Border color if draw is not none
            fill=blue,       % Fill color
            fill opacity=0.3,
            draw=none       % No border around the fill
        ] coordinates {
            (\kone, \kone/2)          % Drop down to x=7, y=0 to close the shape at the bottom
            (2*\kone-0.5, \kone-0.25)        % Back to the starting point
            (2*\kone-0.5, \kzero/2-0.2)
            (\kone, \kone/2)
        };

        \addplot[
            color=red,      % Border color if draw is not none
            fill=red,       % Fill color
            fill opacity=0.3,
            draw=none       % No border around the fill
        ] coordinates {
            (0, 4)          % Drop down to x=7, y=0 to close the shape at the bottom
            (2*\kone-0.5, 4)        % Back to the starting point
            (2*\kone-0.5, \kzero/2+0.5)
            (0, \kzero/2+0.5)
            (0, 3)
        };

        \addplot[
            color=black,  % Line color
            thick,       % Make the line thicker
            % mark=*,      % Add a solid dot at each vertex
            mark size=2pt
        ] coordinates {
            (0, 0) 
            (\kone, \kone/2) 
            (\kone+0.4, \kone/2-0.2)
            (\kone+0.8, \kone/2)
            (\kone+1.2, \kone/2+0.2)
            (\kone+1.6, \kone/2+0.4)
            (\kone+2, \kone/2+0.2)
            (\kone+2.4, \kone/2)
            (\kone+2.8, \kone/2-0.2)
            (\kone+3.2, \kone/2)
            (\kone+3.6, \kone/2+0.2)
        };

        \addplot[
            color=black,  % Line color
            thick,  
            dashed,% Make the line thicker
            % mark=*,      % Add a solid dot at each vertex
            mark size=2pt
        ] coordinates {
            (\kone, \kone/2) 
            (2*\kone - 0.5, \kzero/2-0.2)
        };
        \node[rotate=-28] at (5.4,0.95) {\small{Slope $=-1$}};

        \addplot[
            color=black,  % Line color
            dashed,       % Make the line thicker
            thick,
            % mark=*,      % Add a solid dot at each vertex
            mark size=2pt
        ] coordinates {
            (0, \kzero/2) 
            (2*\kone-0.5, \kzero/2) 
        };

        \addplot[
            color=black,  % Line color
            dashed,       % Make the line thicker
            thick,
            % mark=*,      % Add a solid dot at each vertex
            mark size=2pt
        ] coordinates {
            (0, 0) 
            (2*\kone-0.5, \kone-0.25) 
        };
        \node[rotate=28] at (5.4,3) {\small{Slope $=1$}};

        \addplot[
            color=black,  % Line color
            dashed,       % Make the line thicker
            thick,
            % mark=*,      % Add a solid dot at each vertex
            mark size=2pt
        ] coordinates {
            (0, \kzero/2+0.5) 
            (2*\kone-0.5, \kzero/2+0.5) 
        };

    \end{axis}
\end{tikzpicture}
    \caption{$\mathcal{S}_1=\{\{Q_l\}_{l\geq 0}; Q_0=0, D_l=1~\forall 0\leq l\leq k_1\}$, $\mathcal{S}_2=\{\{Q_l\}_{l\geq 0}; Q_l\geq q_0~\forall 0\leq l<\infty \}$. For illustrative purposes, we represent $Q_l$ as a continuous piecewise linear function by linearly interpolating queue lengths between the time instants.}
    \label{fig:bad sets 2}
\end{figure}

The above analysis illustrates the significance of considering on the set where $\{Q_l\}_{l\geq 0}$ grows linearly with time and why it helps to capture the divergence of $x_k$ accurately. More precisely, to offset the exponentially small probability of the bad set, we need the random variable to grow exponentially. This can only be achieved if $\{Q_l\}_{l\geq 0}$ has linear growth with time. Although one can choose the slope of this growth in a variety of fashion, the simplest case is to consider ${D_l=1}$ for all $k_0\leq l\leq k$ where the slope takes the value 1. Figure \ref{fig:bad sets 2} gives a visual representation of the bad sets considered in both approaches.

\section{Proof of technical results in Section \ref{sec:RL}}\label{appendix:B}
Before beginning the proofs of the lemmas and propositions in this section, we need Lemma 7 from \cite{tsitsiklis1997} in order to prove Lemma \ref{lem:stationary_exp_Tb}. We state it here for completeness, but we omit the proof as it is essentially repeating the same arguments with the contraction factor being 1.

\begin{lemma}\label{lem:benvan}
         Under Assumptions \ref{assump:dist_td} and \ref{assump:feature_mat}, the following relations hold in the steady state of the Markov process $Y_k$.
        \begin{enumerate}[(a)]
            \item $\E_{\mu}[\psi(\tilde{S}_k)\psi(\tilde{S}_{k+m})^T]=\Psi^T\Lambda P^m\Psi$, for all $m\geq 0$.
            \item $\|E_{\mu}[\psi(\tilde{S}_k)\psi(\tilde{S}_{k+m})^T]\|_2=\psi'<\infty$, for all $m\geq 0$.
            \item $E_{\mu}[\tilde{z}_k\psi(\tilde{S}_k)^T]=\Psi^T\Lambda\left(\sum_{m=0}^\infty\lambda^m P^m\right)\Psi$.
            \item $E_{\mu}[\tilde{z}_k\psi(\tilde{S}_{k+1})^T]=\Psi^T\Lambda\left(\sum_{m=0}^\infty\lambda^mP^{m+1}\right)\Psi$.
            \item $E_{\mu}[\tilde{z}_k\mathcal{R}(\tilde{S}_k, A_k)]=\Psi^T\Lambda\left(\sum_{m=0}^\infty\lambda^m P^m\right)\mathcal{R}_{\pi}$.
        \end{enumerate}
        
    \end{lemma}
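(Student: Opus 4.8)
The plan is to derive all five identities directly from the definition of the stationary process, reducing everything to the single base computation in part (a) and then obtaining the eligibility-trace identities (c)--(e) by linearity and a time-shift argument. Throughout I would use that $\Psi'$ denotes $\Psi^T$, that the marginal law of each $\tilde{S}_k$ is $\mu$, and that the joint law of $(\tilde{S}_k,\tilde{S}_{k+m})$ assigns mass $\mu(s)P^m(s'\mid s)$ to $(s,s')$. I would also record at the outset that $\tilde{z}_k$ is a well-defined $L^2$ random vector: by Minkowski's inequality, $\E_\mu[\|\tilde{z}_k\|_2^2]^{1/2}\le \sum_{m=0}^\infty \lambda^m \E_\mu[\|\psi(\tilde{S}_{k-m})\|_2^2]^{1/2}$, which is finite since $\E_\mu[\|\psi\|_2^2]\le d\hat{\psi}^2$ by Assumption \ref{assump:feature_mat}(b) and $\lambda\in[0,1)$; together with the finite second moment of the reward under $\mu$ (Assumption \ref{assump:dist_td}), this guarantees all the expectations in (c)--(e) exist.

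For part (a), I would write
\[
\E_\mu[\psi(\tilde{S}_k)\psi(\tilde{S}_{k+m})^T]=\sum_{s,s'}\mu(s)P^m(s'\mid s)\,\psi(s)\psi(s')^T,
\]
and check entrywise that the $(a,b)$ entry equals $\sum_{i,j}\psi_a(s_i)\mu(s_i)P^m(s_j\mid s_i)\psi_b(s_j)=(\Psi'\Lambda P^m\Psi)_{ab}$; the only subtlety is well-definedness of the infinite matrix product, which follows from Assumption \ref{assump:feature_mat}(b). For the uniform bound in part (b), I would apply Cauchy--Schwarz to the bilinear form against unit vectors $u,v$ and use stationarity of the marginals,
\[
u^T\E_\mu[\psi(\tilde{S}_k)\psi(\tilde{S}_{k+m})^T]\,v\le \sqrt{u^T\Psi'\Lambda\Psi\,u}\,\sqrt{v^T\Psi'\Lambda\Psi\,v},
\]
and then bound $\|\Psi'\Lambda\Psi\|_2\le d\hat{\psi}^2$ entrywise via $(\Psi'\Lambda\Psi)_{ab}=\langle\psi_a,\psi_b\rangle_\Lambda\le\hat{\psi}^2$. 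This yields a bound $\psi'=d\hat{\psi}^2$ that is uniform in $m$, which is the key fact enabling the series manipulations that follow.

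For parts (c)--(e), I would expand the stationary trace as $\tilde{z}_k=\sum_{m=0}^\infty\lambda^m\psi(\tilde{S}_{k-m})$ and interchange the sum with the expectation. Each resulting cross term reduces, by stationarity (time shift by $m$), to part (a): one gets $\E_\mu[\psi(\tilde{S}_{k-m})\psi(\tilde{S}_k)^T]=\Psi'\Lambda P^m\Psi$ for (c), and $\E_\mu[\psi(\tilde{S}_{k-m})\psi(\tilde{S}_{k+1})^T]=\Psi'\Lambda P^{m+1}\Psi$ for (d). For (e), I would first condition on $\tilde{S}_k$ and average over $A_k\sim\pi(\cdot\mid\tilde{S}_k)$ to replace $\mathcal{R}(\tilde{S}_k,A_k)$ by $\mathcal{R}_\pi(\tilde{S}_k)$, after which the same shift gives $\E_\mu[\psi(\tilde{S}_{k-m})\mathcal{R}_\pi(\tilde{S}_k)]=\Psi'\Lambda P^m\mathcal{R}_\pi$. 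Summing the geometric weights then produces the three claimed series.

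The main obstacle is justifying the interchange of the infinite sum $\sum_{m=0}^\infty\lambda^m$ with the expectation, which is automatic in the finite-state discounted setting but must be argued here. I would invoke dominated convergence: part (b) bounds each summand of (c) and (d) in spectral norm by $\psi'$, while Lemma \ref{lem:z_k_bound}(d) bounds each summand of (e) by $\hat{\psi}\hat{r}\sqrt{d}$; since $\lambda\in[0,1)$, in each case $\sum_{m=0}^\infty\lambda^m(\text{bound})<\infty$, so the interchange is legitimate and the limiting series converges absolutely. This uniform-in-$m$ control is exactly the ingredient that replaces the finiteness used implicitly in the classical argument, and it is where Assumption \ref{assump:feature_mat}(b) and Lemma \ref{lem:z_k_bound}(d) do the essential work.
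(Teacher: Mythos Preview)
Your proof is correct and is precisely the standard Tsitsiklis--Van Roy argument the paper defers to; the paper itself omits the proof, stating only that it repeats the arguments of \cite[Lemma~7]{tsitsiklis1997} ``with the contraction factor being 1.'' Your explicit justification of the sum--expectation interchange via the uniform-in-$m$ bound from part~(b) (and Lemma~\ref{lem:z_k_bound}(d) for the reward term) is exactly the ingredient that the infinite-state setting requires beyond the original argument, and your invocation of Lemma~\ref{lem:z_k_bound}(d) is not circular since that bound is proved directly from Cauchy--Schwarz without reference to the present lemma.
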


\subsection{Proof of Lemma \ref{lem:stationary_exp_Tb}}\label{appendix:stationary_exp_Tb}
\begin{proof}
    Using Lemma \ref{lem:benvan}, we have
    \begin{align*}
        \E_{\mu}[T(\tilde{Y}_k)]&=\begin{bmatrix}
        -c_\alpha & 0\\
        -\Pi_{2, E^{\perp}_{\Psi}}\E_{\mu}[\tilde{z}_k] & \Pi_{2, E^{\perp}_{\Psi}}\E_{\mu}\left[\tilde{z}_k\left(\psi(\tilde{S}_{k+1})^T\theta_k-\psi(\tilde{S}_k)^T\right)\right]\\
    \end{bmatrix}\\
    &=\begin{bmatrix}
        -c_\alpha & 0\\
        -\frac{1}{(1-\lambda)}\Pi_{2, E^{\perp}_{\Psi}}\Psi^T\mu & \Pi_{2, E^{\perp}_{\Psi}}\left(\sum_{m=0}^\infty\lambda^m \Psi^T\Lambda P^{m+1}\Psi-\Psi^T\Lambda P^m\Psi\right)
        \end{bmatrix}.
    \end{align*}
    Note that for any $\lambda\in [0,1)$, we can rewrite $\lambda^m=(1-\lambda)\sum_{l=m}^\infty\lambda^l$. Then, it follows that for any $j\geq 0$, we have
    \begin{align*}
        \sum_{m=0}^\infty\lambda^m P^{m+j}&=(1-\lambda)\sum_{m=0}^\infty P^{m+j}\sum_{l=m}^\infty\lambda^l\\
        &=(1-\lambda)\sum_{l=0}^\infty\lambda^l\sum_{m=j}^{l+j}P^m.
    \end{align*}
    Using the above relation for $j=0$ and $j=1$, we get
    \begin{align*}
        \sum_{m=0}^\infty\lambda^m \Psi^T\Lambda P^{m+1}\Psi-\Psi^T\Lambda P^m\Psi&=(1-\lambda)\sum_{l=0}^\infty\lambda^l\Psi^T\Lambda\left(\sum_{m=1}^{l+1}P^m-\sum_{m=0}^{l}P^m\right)\Psi\\
        &=(1-\lambda)\sum_{l=0}^\infty\lambda^l\left(\Psi^T\Lambda P^{l+1}\Psi-\Psi^T\Lambda\Psi\right)\\
        &=\Psi^T\Lambda P^{(\lambda)}\Psi-\Psi^T\Lambda\Psi.
    \end{align*}
    Thus, we have
    \begin{align*}
        \E_{\mu}[T(\tilde{Y}_k)]=\begin{bmatrix}
        -c_\alpha & 0\\
        \frac{1}{(\lambda-1)}\Pi_{2, E^{\perp}_{\Psi}}\Psi^T\mu & \Pi_{2, E^{\perp}_{\Psi}}\left(\Psi^T\Lambda P^{(\lambda)}\Psi-\Psi^T\Lambda\Psi\right)
        \end{bmatrix}=\bar{T}.
    \end{align*}
    Similarly, using Lemma \ref{lem:benvan} the steady-state expectation of $b(Y_k)$ is given by
    \begin{align*}
        \E_{\mu}[b(\tilde{Y}_k)]&=\begin{bmatrix}
            c_\alpha\E_{\mu}[\mathcal{R}(\tilde{S}_k, A_k)]\\
            \Pi_{2, E^{\perp}_{\Psi}}\E_{\mu}[\mathcal{R}(\tilde{S}_k, A_k)\tilde{z}_k]
        \end{bmatrix}\\
        &=\begin{bmatrix}
            c_\alpha\bar{r}\\
            \Pi_{2, E^{\perp}_{\Psi}}\Psi^T\Lambda\left((1-\lambda)\sum_{l=0}^\infty\lambda^l\sum_{m=0}^lP^m\mathcal{R}_{\pi}\right)
        \end{bmatrix}\\
        &=\begin{bmatrix}
            c_\alpha\bar{r}\\
            \Pi_{2, E^{\perp}_{\Psi}}\Psi^T\Lambda \mathcal{R}^{(\lambda)}
        \end{bmatrix}=\bar{b}.
    \end{align*}
\end{proof}

\subsection{Properties of TD\texorpdfstring{($\lambda$)}\ }\label{appendix:td_prop}
Next, we state the following lemma which will be crucial for proving desired properties of TD$(\lambda)$. Define $Y_k=(S_k, A_k, S_{k+1}, z_k)$ and $g(Y_k)=\frac{f_1(S_{k+1})}{(1-\lambda)(1-\rho)}+\frac{\|z_k\|_2\rho^{-1}(f_1(S_k)+f_1(S_{k+1}))}{(1-\lambda\rho)}+\frac{\|z_k\|_2\left(1+\hat{r}+\hat{\psi}\sqrt{d}\right)}{(1-\lambda)}+\frac{\hat{\psi}+\psi'+\hat{\psi}\hat{r}\sqrt{d}}{(1-\lambda)^2}$. Let $Y_0=y_0=(s_0, a_0, s_1, z_0)$.
\begin{lemma}\label{lem:z_k_bound}
    Assume that the eligibility trace vector $z_k$ was initialized from $z_0$. Then, the following relations hold, for all $y_0\in \mathcal{Y}$:
    \begin{enumerate}[(a)]
        \item $\E_\mu[\tilde{z}_k]=\frac{1}{(1-\lambda)}\Psi^T\mu$. Furthermore, $\sum_{k=0}^\infty\|\E_{y_0}[z_k]-\E_{\mu}[\tilde{z}_k]\|_2\leq  g(y_0)$.
        \item $\sum_{k=0}^\infty\|\E_{y_0}[z_k\psi(S_k)^T]-\E_{\mu}[\tilde{z}_k\psi(\tilde{S}_k)^T]\|_2\leq g(y_0)$.
        \item $\sum_{k=0}^\infty\|\E_{y_0}[z_k\psi(S_{k+1})^T]-\E_{\mu}[\tilde{z}_k\psi(\tilde{S}_{k+1})^T]\|_2\leq g(y_0)$.
        \item $\|\Psi^T\Lambda P^m\mathcal{R}_\pi\|_2\leq \hat{\psi}\hat{r}\sqrt{d}$. Furthermore, $\sum_{k=0}^\infty\|\E_{y_0}[z_k\mathcal{R}(S_k, A_k)]-\E_{\mu}[\tilde{z}_k\mathcal{R}(\tilde{S}_k, A_k)]\|_2\leq g(y_0)$.
        \item $\E_{y_0}[\|z_k\|_2^4]\leq \frac{\|z_0\|_2^4+f_3(s_1)}{(1-\lambda)^4}$.
        \item $\E_{y_0}[g^2(Y_k)]\leq \frac{4\sqrt{f_2(s_1)}}{(1-\lambda)^2(1-\rho)^2}+\frac{16\rho^{-2}\sqrt{\|z_0\|_2^4+f_3(s_1)}\sqrt{f_2(s_0)+f_2(s_1)}}{(1-\lambda\rho)^2(1-\lambda)^2}+\frac{4\left(1+\hat{r}+\hat{\psi}\sqrt{d}\right)^2\sqrt{\|z_0\|_2^4+f_3(s_1)}}{(1-\lambda)^4}+\frac{4(\hat{\psi}+\psi'+\hat{\psi}\hat{r}\sqrt{d})^2}{(1-\lambda)^4}$.
    \end{enumerate}
    
\end{lemma}
\begin{proof}
\begin{enumerate}[(a)]
    \item From the definition of $\tilde{z}_k$, we have
    \begin{align*}
        \E_{\mu}[\tilde{z}_k]&=\E_{\mu}\left[\sum_{m=-\infty}^{k}\lambda^{k-m}\psi(\tilde{S}_k)\right]\\
        &=\sum_{m=-\infty}^{k}\lambda^{k-m}\E_{\mu}[\psi(\tilde{S}_k)]\tag{Assumption \ref{assump:feature_mat} and Dominated Convergence Theorem}\\
        &=\sum_{m=-\infty}^{k}\lambda^{k-m}\left(\sum_{s\in\mathcal{S}}\mu(s)\psi(s)\right)\\
        &=\frac{1}{1-\lambda}\left(\sum_{s\in\mathcal{S}}\mu(s)\psi(s)\right)=\frac{1}{1-\lambda}\Psi^T\mu.
    \end{align*}
     Recall that $z_k=\lambda^kz_0+\sum_{j=1}^k\lambda^{k-j}\psi(S_k)$. Using Assumption \ref{assump:stable_markov} and the above relation, we have
    \begin{align*}
        \E_{y_0}[z_k]-\E_{\mu}[\tilde{z}_k]&=\lambda^kz_0+\E_{y_0}\left[\sum_{j=0}^{k-1}\lambda^j\psi(S_{k-j})\right]-\sum_{j=0}^\infty\lambda^j\sum_{s\in \mathcal{S}}\mu(s)\psi(s)\\
        &=\lambda^kz_0+\E_{y_0}\left[\sum_{j=0}^{k-1}\lambda^j\left(\psi(S_{k-j})-\sum_{s\in \mathcal{S}}\mu(s)\psi(s)\right)\right]-\sum_{j=k}^\infty\lambda^j\sum_{s\in \mathcal{S}}\mu(s)\psi(s)
    \end{align*}
    Taking norm both sides and using triangle inequality, we get
    \begin{align*}
        \|\E_{y_0}[z_k]-\E_{\mu}[\tilde{z}_k]\|_2&\leq\lambda^k\|z_0\|_2+\left\|\E_{y_0}\left[\sum_{j=0}^{k-1}\lambda^j(\psi(S_{k-j})-\sum_{s\in \mathcal{S}}\mu(s)\psi(s)\right]\right\|_2+\|\sum_{s\in \mathcal{S}}\mu(s)\psi(s)\|_2\sum_{j=k}^\infty\lambda^j\\
        &\leq \lambda^k\|z_0\|_2+\sum_{j=0}^{k-1}\lambda^j\left\|\E_{y_0}\left[(\psi(S_{k-j})-\E_{\mu}[\psi(\tilde{S}_k)]\right]\right\|_2+\frac{\lambda^k\hat{\psi}}{1-\lambda}\tag{Jensen's inequality and Assumption \ref{assump:feature_mat}}\\
        &\leq \sum_{j=0}^{k-1}\lambda^jf_1(s_1)\rho^{k-j-1}+\lambda^k\|z_0\|_2+\frac{\lambda^k\hat{\psi}}{1-\lambda}\tag{Assumption \ref{assump:stable_markov}}\\
        &\leq  f_1(s_1)\rho^{-1}\sum_{j=0}^{k-1}\lambda^j\rho^{k-j}+\lambda^k\|z_0\|_2+\frac{\lambda^k\hat{\psi}}{1-\lambda}.
    \end{align*}
    Summing over all $k$, we get
    \begin{align*}
        \sum_{k=0}^\infty\|\E_{y_0}[z_k]-\E_{\mu}[\tilde{z}_k]\|_2&\leq \sum_{k=0}^\infty\left(f_1(s_1)\rho^{-1}\sum_{j=0}^{k-1}\lambda^j\rho^{k-j}+\lambda^k\|z_0\|_2+\frac{\lambda^k\hat{\psi}}{1-\lambda}\right)\\
        &=f_1(s_1)\rho^{-1}\sum_{k=0}^\infty\left(\sum_{j=0}^{k-1}\lambda^j\rho^{k-j}\right)+\frac{\|z_0\|_2}{1-\lambda}+\frac{\hat{\psi}}{(1-\lambda)^2}\\
        &=\frac{f_1(s_1)}{(1-\lambda)(1-\rho)}+\frac{\|z_0\|_2}{1-\lambda}+\frac{\hat{\psi}}{(1-\lambda)^2}\tag{Fubini-Tonelli Theorem}\\
        &\leq g(y_0).
    \end{align*}
    \item Using the formula for $z_k$ and part (c) of  Lemma \ref{lem:benvan}, we have
    \begin{align*}
        \E_{y_0}[z_k\psi(S_k)^T]-\E_{\mu}[\tilde{z}_k\psi(\tilde{S}_k)^T]&=\lambda^kz_0\E_{y_0}[\psi(S_k)^T]+\E_{y_0}\left[\sum_{j=0}^{k-1}\lambda^j\psi(S_{k-j})\psi(S_k)^T\right]-\sum_{j=0}^\infty\lambda^j\E_{\mu}[\psi(\tilde{S}_{k-j})\psi(\tilde{S}_k)^T]\\
        &=\lambda^kz_0\E_{y_0}[\psi(S_k)^T]+\E_{y_0}\left[\sum_{j=0}^{k-1}\lambda^j\left(\psi(S_{k-j})\psi(S_k)^T-\E_{\mu}[\psi(\tilde{S}_{k-j})\psi(\tilde{S}_k)^T]\right)\right]\\
        &~~~~-\sum_{j=k}^\infty\lambda^j\E_{\mu}[\psi(\tilde{S}_{k-j})\psi(\tilde{S}_k)^T]
    \end{align*}
    Taking norm both sides and using triangle inequality, we get
    \begin{align*}
        \|\E_{y_0}[z_k\psi(S_k)^T]-\E_{\mu}[\tilde{z}_k\psi(\tilde{S}_k)^T]\|_2&\leq\lambda^k\|z_0\|_2\|\E_{y_0}[\psi(S_k)^T]\|_2\\
        &~~~~+\left\|\E_{y_0}\left[\sum_{j=0}^{k-1}\lambda^j\left(\psi(S_{k-j})\psi(S_k)^T-\E_{\mu}[\psi(\tilde{S}_{k-j})\psi(\tilde{S}_k)^T]\right)\right]\right\|_2\\
        &~~~~+\sum_{j=k}^\infty\lambda^j\|\E_{\mu}[\psi(\tilde{S}_{k-j})\psi(\tilde{S}_k)^T]\|_{2}
    \end{align*}
    To bound the first term, we use Assumption \ref{assump:stable_markov} to get
    \begin{align*}
        \|\E_{y_0}[\psi(S_k)^T]\|_2&\leq \|\E_{y_0}[\psi(S_k)^T]-\E_{\mu}[\psi(\tilde{S}_k)]\|_2+\|\E_{\mu}[\psi(\tilde{S}_k)]\|_2\\
        &\leq \rho^{k-1}(f_1(s_0)+f_1(s_1))+\|\E_{\mu}[\psi(\tilde{S}_k)]\|_2\\
        &\leq \rho^{k-1}(f_1(s_0)+f_1(s_1))+\hat{\psi}\sqrt{d}\tag{Jensen's inequality and Assumption \ref{assump:feature_mat}}
    \end{align*}
    With the above bound, we have
    \begin{align*}
        \|\E_{y_0}[z_k\psi(S_k)^T]&-\E_{\mu}[\tilde{z}_k\psi(\tilde{S}_k)^T]\|_2\leq \sum_{j=0}^{k-1}\lambda^j\left\|\E_{y_0}[\psi(S_{k-j})\psi(S_k)^T]-\E_{\mu}[\psi(\tilde{S}_{k-j})\psi(\tilde{S}_k)^T]\right\|_2\\
        &~~~~+\lambda^k\|z_0\|_2\left(\rho^{k-1}(f_1(s_0)+f_1(s_1))+\hat{\psi}\sqrt{d}\right)+\sum_{j=k}^\infty\lambda^j\|\E_{\mu}[\psi(\tilde{S}_{k-j})\psi(\tilde{S}_k)^T]\|_{2}\\
        &\leq \sum_{j=0}^{k-1}\lambda^j\left\|\E_{y_0}[\psi(S_{k-j})\psi(S_k)^T]-\E_{\mu}[\psi(\tilde{S}_{k-j})\psi(\tilde{S}_k)^T]\right\|_2\\
        &~~~+\lambda^k\|z_0\|_2\left(\rho^{k-1}(f_1(s_0)+f_1(s_1))+\hat{\psi}\sqrt{d}\right)+\frac{\lambda^k\psi'}{1-\lambda}\tag{Part (b) of Lemma \ref{lem:benvan}}\\
        &\leq \sum_{j=0}^{k-1}\lambda^jf_1(s_1)\rho^{k-j-1}+\lambda^k\|z_0\|_2\left(\rho^{k-1}(f_1(s_0)+f_1(s_1))+\hat{\psi}\sqrt{d}\right)+\frac{\lambda^k\psi'}{1-\lambda}\tag{Assumption \ref{assump:stable_markov}}\\
        &\leq f_1(s_1)\rho^{-1}\sum_{j=0}^{k-1}\lambda^j\rho^{k-j}+\lambda^k\|z_0\|_2\left(\rho^{k-1}(f_1(s_0)+f_1(s_1))+\hat{\psi}\sqrt{d}\right)+\frac{\lambda^k\psi'}{1-\lambda}.
    \end{align*}
    Summing over all $k$, we get
    \begin{align*}
        \sum_{k=0}^\infty\|\E_{y_0}[z_k\psi(S_k)^T]-\E_{\mu}[\tilde{z}_k\psi(\tilde{S}_k)^T]\|_2&\leq \sum_{k=0}^\infty\Bigg(f_1(s_1)\rho^{-1}\sum_{j=0}^{k-1}\lambda^j\rho^{k-j}\\
        &~~~~+\lambda^k\|z_0\|_2\left(\rho^{k-1}(f_1(s_0)+f_1(s_1))+\hat{\psi}\sqrt{d}\right)+\frac{\lambda^k\psi'}{1-\lambda}\Bigg)\\
        &=f_1(s_1)\rho^{-1}\sum_{k=0}^\infty\left(\sum_{j=0}^{k-1}\lambda^j\rho^{k-j}\right)+\frac{\|z_0\|_2\rho^{-1}(f_1(s_0)+f_1(s_1))}{(1-\lambda\rho)}\\
        &~~~~+\frac{\|z_0\|_2\hat{\psi}\sqrt{d}}{(1-\lambda)}+\frac{\psi'}{(1-\lambda)^2}\\
        &=\frac{f_1(s_1)}{(1-\lambda)(1-\rho)}+\frac{\|z_0\|_2\rho^{-1}(f_1(s_0)+f_1(s_1))}{(1-\lambda\rho)}\\
        &~~~~+\frac{\|z_0\|_2\hat{\psi}\sqrt{d}}{(1-\lambda)}+\frac{\psi'}{(1-\lambda)^2}\tag{Fubini-Tonelli Theorem}\\
        &\leq g(y_0).
    \end{align*}

    \item It is easy to verify that an identical argument as in the previous part can be carried out for $\E_{y_0}[z_k\psi(S_{k+1})^T]-\E_{\mu}[\tilde{z}_k\psi(\tilde{S}_{k+1})^T]$. Thus to avoid repetition, we omit the proof for this part.

    \item We bound $j$-th element of the vector $\Psi^T\Lambda P^m\mathcal{R}_\pi$ as follows:
    \begin{align*}
        (\Psi^T\Lambda P^m\mathcal{R}_\pi)^2(j)&=\left(\sum_{s\in\mathcal{S}}\mu(s)\psi_j(s)\sum_{s'\in \mathcal{S}}P^{m}(s'|s)\mathcal{R}_{\pi}(s)\right)^2\\
        &\leq\left(\sum_{s\in\mathcal{S}}\mu(s)\psi_j^2(s)\right)\left(\sum_{s\in\mathcal{S}}\mu(s)\left(\sum_{s'\in \mathcal{S}}P^{m}(s'|s)\mathcal{R}_{\pi}(s)\right)^2\right)\tag{Cauchy-Schwartz inequality}\\
        &\leq \left(\sum_{s\in\mathcal{S}}\mu(s)\psi_j^2(s)\right)\left(\sum_{s\in\mathcal{S}}\mu(s)\sum_{s'\in \mathcal{S}}P^{m}(s'|s)(\mathcal{R}_{\pi}(s))^2\right)\tag{Jensen's inequality}\\
        &\leq \hat{\psi}^2\left(\sum_{s\in\mathcal{S}}\mu(s)(\mathcal{R}_{\pi}(s))^2\right)\tag{Assumption \ref{assump:feature_mat} and Fubini-Tonelli Theorem}\\
        &\leq \hat{\psi}^2\hat{r}^2.\tag{Assumption \ref{assump:dist_td}}
    \end{align*}
    Thus, the norm can be bounded as
    \begin{align*}
        \|\Psi^T\Lambda P^m\mathcal{R}_\pi\|_2\leq \hat{\psi}\hat{r}\sqrt{d}.
    \end{align*}
    Proceeding in a similar fashion as in part (c), we have
    \begin{align*}
        \E_{y_0}[z_k\mathcal{R}(S_k, A_k)]-\E_{\mu}[\tilde{z}_k\mathcal{R}(\tilde{S}_k, A_k)]&=\lambda^kz_0\E_{y_0}[\mathcal{R}(S_k, A_k)]+\E_{y_0}\left[\sum_{j=0}^{k-1}\lambda^j\psi(S_{k-j})\mathcal{R}(S_k, A_k)\right]\\
        &~~~~-\sum_{j=0}^\infty\lambda^j\E_{\mu}[\psi(\tilde{S}_{k-j})\mathcal{R}(\tilde{S}_k, A_k)]\\
        &=\lambda^kz_0\E_{y_0}[\mathcal{R}(S_k, A_k)]\\
        &~~~~+\E_{y_0}\left[\sum_{j=0}^{k-1}\lambda^j\left(\psi(S_{k-j})\mathcal{R}(S_k, A_k)-\E_{\mu}[\psi(\tilde{S}_{k-j})\mathcal{R}(\tilde{S}_k, A_k)]\right)\right]\\
        &~~~~-\sum_{j=k}^\infty\lambda^j\E_{\mu}[\psi(\tilde{S}_{k-j})\mathcal{R}(\tilde{S}_k, A_k)]
    \end{align*}
    Taking norm both sides and using triangle inequality, we get
    \begin{align*}
        \|\E_{y_0}[z_k\mathcal{R}(S_k, A_k)]-\E_{\mu}[\tilde{z}_k\mathcal{R}(\tilde{S}_k, A_k)]\|_2&\leq \lambda^k\|z_0\|_2|\E_{y_0}[\mathcal{R}(S_k, A_k)]|\\
        &~~~~+\left\|\E_{y_0}\left[\sum_{j=0}^{k-1}\lambda^j\left(\psi(S_{k-j})\mathcal{R}(S_k, A_k)-\E_{\mu}[\psi(\tilde{S}_{k-j})\mathcal{R}(\tilde{S}_k, A_k)]\right)\right]\right\|_2\\
        &~~~~+\sum_{j=k}^\infty\lambda^j\|\E_{\mu}[\psi(\tilde{S}_{k-j})\mathcal{R}(\tilde{S}_k, A_k)]\|_{2}
    \end{align*}
    To bound the first term, we use Assumption \ref{assump:stable_markov} to get
    \begin{align*}
        |\E_{y_0}[\mathcal{R}(S_k, A_k)]|&\leq |\E_{y_0}[\mathcal{R}(S_k, A_k)]-\E_{\mu}[\mathcal{R}_{\pi}(\tilde{S}_k)]|+|\E_{\mu}[\mathcal{R}_{\pi}(\tilde{S}_k)]|\\
        &\leq \rho^{k-1}(f_1(s_0)+f_1(s_1))+|\E_{\mu}[\mathcal{R}_{\pi}(\tilde{S}_k)]|\\
        &\leq \rho^{k-1}(f_1(s_0)+f_1(s_1))+\hat{r}\tag{Jensen's inequality and Assumption \ref{assump:dist_td}}
    \end{align*}
    With the above bound, we have

    \begin{align*}
        \|\E_{y_0}[z_k\mathcal{R}(S_k, A_k)]-\E_{\mu}[\tilde{z}_k\mathcal{R}(\tilde{S}_k, A_k)]\|_2&\leq \lambda^k\|z_0\|_2(\rho^{k-1}(f_1(s_0)+f_1(s_1))+\hat{r})\\
        &~~~~+\left\|\E_{y_0}\left[\sum_{j=0}^{k-1}\lambda^j\left(\psi(S_{k-j})\mathcal{R}(S_k, A_k)-\E_{\mu}[\psi(\tilde{S}_{k-j})\mathcal{R}(\tilde{S}_k, A_k)]\right)\right]\right\|_2\\
        &~~~~+\sum_{j=k}^\infty\lambda^j\|\E_{\mu}[\psi(\tilde{S}_{k-j})\mathcal{R}(\tilde{S}_k, A_k)]\|_{2}\\
        &\leq \lambda^k\|z_0\|_2(\rho^{k-1}(f_1(s_0)+f_1(s_1))+\hat{r})\\
        &~~~~+\sum_{j=0}^k\lambda^j\left\|\E_{y_0}[\psi(S_{k-j})\mathcal{R}(S_k, A_k)]-\E_{\mu}[\psi(\tilde{S}_{k-j})\mathcal{R}(\tilde{S}_k, A_k)]\right\|_2\\
        &~~~~+\frac{\lambda^{k}\hat{\psi}\hat{r}\sqrt{d}}{1-\lambda}\\
        &\leq \sum_{j=0}^{k-1}\lambda^jf_1(s_1)\rho^{k-j-1}+\lambda^k\|z_0\|_2(\rho^{k-1}(f_1(s_0)+f_1(s_1))+\hat{r})\\
        &~~~~+\frac{\lambda^{k}\hat{\psi}\hat{r}\sqrt{d}}{1-\lambda}\tag{Assumption \ref{assump:stable_markov}}\\
        &\leq f_1(s_1)\rho^{-1}\sum_{j=0}^{k-1}\lambda^j\rho^{k-j}+\lambda^k\|z_0\|_2(\rho^{k-1}(f_1(s_0)+f_1(s_1))+\hat{r})\\
        &~~~~+\frac{\lambda^{k}\hat{\psi}\hat{r}\sqrt{d}}{1-\lambda}.
    \end{align*}
    Similar to part (d), summing over all $k$, we get
    \begin{align*}
        \sum_{k=0}^\infty\|\E_{y_0}[z_k\mathcal{R}(S_k, A_k)]-\E_{\mu}[\tilde{z}_k\mathcal{R}(\tilde{S}_k, A_k)]\|_2 &\leq \frac{f_1(s_1)}{(1-\lambda)(1-\rho)}+\frac{\|z_0\|_2\rho^{-1}(f_1(s_0)+f_1(s_1))}{(1-\lambda\rho)}\\
        &~~~~+\frac{\|z_0\|_2\hat{r}}{(1-\lambda)}+\frac{\hat{\psi}\hat{r}\sqrt{d}}{(1-\lambda)^2}\\
        \\
        &\leq g(y_0).
    \end{align*}

    \item Using triangle inequality on the formula for $z_k$, we have
    \begin{align*}
        \|z_k\|_2&\leq \lambda^k\|z_0\|_2+\sum_{j=1}^k\lambda^{k-j}\|\psi(S_k)\|_2\\
        &= \frac{1-\lambda^{k+1}}{(1-\lambda)}\left(\frac{(1-\lambda)\lambda^k}{1-\lambda^{k+1}}\|z_0\|_2+\sum_{j=1}^k\frac{(1-\lambda)\lambda^{k-j}}{1-\lambda^{k+1}}\|\psi(S_j)\|_2\right)
    \end{align*}
    By taking fourth power both sides, we get
    \begin{align*}
        \|z_k\|^4_2&\leq \frac{(1-\lambda^{k+1})^4}{(1-\lambda)^4}\left(\frac{(1-\lambda)\lambda^k}{1-\lambda^{k+1}}\|z_0\|_2+\sum_{j=1}^k\frac{(1-\lambda)\lambda^{k-j}}{1-\lambda^{k+1}}\|\psi(S_j)\|_2\right)^4.
    \end{align*}
    Since the weights $\frac{(1-\lambda)\lambda^{k-j}}{1-\lambda^{k+1}}$ form a probability distribution, we can apply Jensen's inequality to get
    \begin{align*}
        \|z_k\|^4_2&\leq \frac{(1-\lambda^{k+1})^4}{(1-\lambda)^4}\left(\frac{(1-\lambda)\lambda^k}{1-\lambda^{k+1}}\|z_0\|^4_2+\sum_{j=1}^k\frac{(1-\lambda)\lambda^{k-j}}{1-\lambda^{k+1}}\|\psi(S_j)\|^4_2\right)\\
        &\leq \frac{1}{(1-\lambda)^3}\left(\lambda^k\|z_0\|_2^4+\sum_{j=1}^k\lambda^{k-j}\|\psi(S_j)\|^4_2\right).
    \end{align*}
    Taking expectation both sides conditioned on the initial state, we have
    \begin{align*}
        \E_{y_0}[\|z_k\|^4_2] &\leq \frac{1}{(1-\lambda)^3}\left(\lambda^k\|z_0\|_2^4+\sum_{j=1}^k\lambda^{k-j}\E_{y_0}[\|\psi(S_j)\|^4_2]\right)\\
        &\leq \frac{1}{(1-\lambda)^3}(\|z_0\|_2^4+f_3(s_1))\left(\sum_{j=0}^k\lambda^{k-j}\right)\tag{Assumption \ref{assump:stable_markov}}\\
        &\leq \frac{\|z_0\|_2^4+f_3(s_1)}{(1-\lambda)^4}.
    \end{align*}
    \item Recall $Y_k=(S_k, A_k, S_{k+1}, z_k)\in \mathcal{Y}$. Then, we have
    \begin{align*}
        g^2(Y_k)&\leq \frac{4f_1^2(S_{k+1})}{(1-\lambda)^2(1-\rho)^2}+\frac{4\|z_k\|^2_2\rho^{-2}(f_1(S_k)+f_1(S_{k+1}))^2}{(1-\lambda\rho)^2}\\
        &~~~+\frac{4\|z_k\|_2^2\left(1+\hat{r}+\hat{\psi}\sqrt{d}\right)^2}{(1-\lambda)^2}+\frac{4(\hat{\psi}+\psi'+\hat{\psi}\hat{r}\sqrt{d})^2}{(1-\lambda)^4}.\tag{Using $\left(\sum_{i=1}^na_i\right)^2\leq n\left(\sum_{i=1}^na_i^2\right)$}
    \end{align*}
    Taking expectation both sides, conditioned on initial state $y_0$,
    \begin{align*}
        \E_{y_0}[g^2(Y_k)]&\leq \frac{4\E_{y_0}[f_1^2(S_{k+1})]}{(1-\lambda)^2(1-\rho)^2}+\frac{4\rho^{-2}\E_{y_0}[\|z_k\|^2_2(f_1(S_k)+f_1(S_{k+1}))^2]}{(1-\lambda\rho)^2}+\frac{4\left(1+\hat{r}+\hat{\psi}\sqrt{d}\right)^2\E_{y_0}[\|z_k\|_2^2]}{(1-\lambda)^2}\\
        &~~~~+\frac{4(\hat{\psi}+\psi'+\hat{\psi}\hat{r}\sqrt{d})^2}{(1-\lambda)^4}.
    \end{align*}
    Using Assumption \ref{assump:stable_markov}, we can bound the first term as
    \begin{align*}
        \frac{4\E_{y_0}[f_1^2(S_{k+1})]}{(1-\lambda)^2(1-\rho)^2}\leq \frac{4\sqrt{f_2(s_1)}}{(1-\lambda)^2(1-\rho)^2}.\tag{Jensen's inequality}
    \end{align*}
    For second term, we use Cauchy-Schwartz inequality for expectations, to get
    \begin{align*}
        \frac{4\rho^{-2}\E_{y_0}[\|z_k\|^2_2(f_1(S_k)+f_1(S_{k+1}))^2]}{(1-\lambda\rho)^2}&\leq \frac{4\rho^{-2}\sqrt{\E_{y_0}[\|z_k\|^4_2]}\sqrt{\E_{y_0}[(f_1(S_k)+f_1(S_{k+1}))^4]}}{(1-\lambda\rho)^2}\\
        &\leq \frac{8\rho^{-2}\sqrt{\E_{y_0}[\|z_k\|^4_2]}\sqrt{\E_{y_0}[f_1^4(S_k)+f_1^4(S_{k+1})]}}{(1-\lambda\rho)^2}\tag{$(a+b)^2\leq 2a^2+2b^2$}\\
        &\leq \frac{16\rho^{-2}\sqrt{\|z_0\|_2^4+f_3(s_1)}\sqrt{f_2(s_0)+f_2(s_1)}}{(1-\lambda\rho)^2(1-\lambda)^2}\tag{Assumption \ref{assump:stable_markov}}
    \end{align*}
    For the third term, we use part (e) and Jensen's inequality to get
    \begin{align*}
        \frac{4\left(1+\hat{r}+\hat{\psi}\sqrt{d}\right)^2\E_{y_0}[\|z_k\|_2^2]}{(1-\lambda)^2}\leq \frac{4\left(1+\hat{r}+\hat{\psi}\sqrt{d}\right)^2\sqrt{\|z_0\|_2^4+f_3(s_1)}}{(1-\lambda)^4}.\tag{Jensen's inequality}
    \end{align*}
    The claim follows by combining all the bounds.
\end{enumerate}
    
\end{proof}
Define $\hat{T}(s_0, s_1)=c_\alpha^2+\frac{\sqrt{f_3(s_0)+f_3(s_1)}}{(1-\lambda)^2}+\frac{4(f_3(s_0)+f_3(s_1))}{(1-\lambda)^2}$, $\hat{b}(s_0, s_1)=c_\alpha^2\sqrt{f_3(s_0)+f_3(s_1)}+\frac{f_3(s_0)+f_3(s_1)}{(1-\lambda)^2}$ and $\hat{g}(s_0, s_1)=\E_{y_0}[g^2(s_0, a_0, s_1, \psi(s_0))]$. Furthermore, for ease of notation, we will denote $I$ as the identity matrix (infinite dimensional).
\begin{proposition}
    The TD($\lambda$) algorithm satisfies the following:
    \begin{enumerate}[(a)]
        \item The operator $F(x_k, Y_k)$ defined in Eq. \eqref{eq:linear_SA} has the following properties:
        \begin{enumerate}[(1)]
            \item $\|F(x_k, Y_k)\|_2\leq \|T(Y_k)\|_2\|x-x^*\|_2+\|b(Y_k)\|_2+\|T(Y_k)\|_2\|x^*\|_2$, where $\E_{y_0}[\|T(Y_k)\|^2_2]\leq \hat{T}(s_0, s_1)$ and $\E_{y_0}[(\|b(Y_k)\|^2_2]\leq \hat{b}(s_0, s_1)$.
            \item Define $\bar{F}(x)= \E_{Y\sim \mu}[F(x, Y)]$. Then, under Assumptions \ref{assump:dist_td} and \ref{assump:feature_mat}, $\bar{F}(x)$ exists and is given by $\bar{F}(x)=\bar{T}x+\bar{b}$.
            \item There exists a unique $\theta^*\in E_{\Psi}^{\perp}$ such that $x^*=(\bar{r}, \theta^{*T})^T$ solves $\bar{T}x+\bar{b}=0$. Furthermore, it is also one of the solutions to the Projected-Bellman equation $\mathcal{B}_{\pi}^{(\lambda)}(\Psi\theta)=\Psi\theta$. \end{enumerate}
        \item There exists a solution to the Poisson equation \eqref{eq:Poisson_eq} for the Markov chain $\mathcal{M}_Y$ which satisfies Assumption \ref{assump:Poisson_eq} with $\hat{A}_2^2(y_0)=9\hat{g}(s_0, s_1)$ and $\hat{B}_2^2(y_0)=2(9\|x^*\|_2^2+1)\hat{g}(s_0, s_1)+\frac{8c^2_\alpha\sqrt{f_2(s_0)+f_2(s_1)}}{(1-\rho)^2}$.
    \end{enumerate}
\end{proposition}
\begin{proof}
    \begin{enumerate}[(a)]
        \item 
        \begin{enumerate}[(1)]
            \item Since $T(Y_k)$ is partitioned in a block form, we use Lemma \ref{lem:block_norm} and the non-expansivity of the projection operator $\Pi_{2, E^{\perp}_{\Psi}}$ to get
        \begin{align*}
            \|T(Y_k)\|_2^2 &\leq c_\alpha^2+\|\Pi_{2, E^{\perp}_{\Psi}}z_k\|_2^2+\|\Pi_{2, E^{\perp}_{\Psi}}z_k(\psi(S_{k+1})^T-\psi(S_k)^T)\|_2^2\\
            &\leq c_\alpha^2+\|z_k\|_2^2+\|z_k(\psi(S_{k+1})^T-\psi(S_k)^T)\|_2^2\\
            &\leq c_\alpha^2+\|z_k\|_2^2+(\|z_k\|_2(\|\psi(S_{k+1})\|_2+\|\psi(S_k)\|_2))^2\\
            &\leq c_\alpha^2+\|z_k\|_2^2+2(\|z_k\|^2_2(\|\psi(S_{k+1})\|^2_2+\|\psi(S_k)\|^2_2))\tag{(Using $(a+b)^2\leq 2(a^2+b^2)$}
        \end{align*}
        Note that $z_{-1}=0$ implies $z_0=\psi(s_0)$. Taking expectation both sides, we get 
        \begin{align*}%\label{eq:t_hat}
            \E_{y_0}[\|T(Y_k)\|_2^2] &\leq c_\alpha^2+\E_{y_0}[\|z_k\|_2^2]+2\E_{y_0}[\|z_k\|_2^2\|\psi(S_{k+1})\|^2]+2\E_{y_0}[\|z_k\|_2^2\|\psi(S_k)\|_2^2]\nonumber\\
            &\leq c_\alpha^2+\E_{y_0}[\|z_k\|_2^2]+2\sqrt{\E_{y_0}[\|z_k\|_2^4]}\sqrt{\E_{y_0}[\|\psi(S_{k+1})\|_2^4]}+2\sqrt{\E_{y_0}[\|z_k\|_2^4]}\sqrt{\E_{y_0}[\|\psi(S_k)\|_4^2]}\tag{Cauchy-Schwartz for expectation}\nonumber\\
            &\leq c_\alpha^2+\frac{\sqrt{\|\psi(s_0)\|_2^4+f_3(s_1)}}{(1-\lambda)^2}+\frac{4\sqrt{\|\psi(s_0)\|_2^4+f_3(s_1)}\sqrt{f_3(s_0)+f_3(s_1)}}{(1-\lambda)^2}\tag{Part (e) Lemma \ref{lem:z_k_bound} and Assumption \ref{assump:stable_markov}}\nonumber\\
            &\leq c_\alpha^2+\frac{\sqrt{f_3(s_0)+f_3(s_1)}}{(1-\lambda)^2}+\frac{4(f_3(s_0)+f_3(s_1))}{(1-\lambda)^2}=\hat{T}(s_0, s_1).
        \end{align*}
        Next, we bound $b(Y_k)$
        \begin{align*}
            \|b(Y_k)\|_2^2&=c_\alpha^2\mathcal{R}^2(S_k, A_k)+\mathcal{R}^2(S_k, A_k)\|\Pi_{2, E^{\perp}_{\Psi}}z_k\|_2^2
        \end{align*}
        Again using the non-expansivity of the projection operator $\Pi_{2, E^{\perp}_{\Psi}}$ and taking expectation, we get
        \begin{align}
            \E_{y_0}[\|b(Y_k)\|_2^2]&=c_\alpha^2\E_{y_0}[\mathcal{R}^2(S_k, A_k)]+\E_{y_0}[\mathcal{R}^2(S_k, A_k)\|z_k\|_2^2]\nonumber\\
            &\leq c_\alpha^2\sqrt{\E_{y_0}[\mathcal{R}^4(S_k, A_k)]}+\sqrt{\E_{y_0}[\mathcal{R}^4(S_k, A_k)]}\sqrt{\E_{y_0}[\|z_k\|_4^2]}\tag{Cauchy-Schwartz for expectation}\nonumber\\
            &\leq c_\alpha^2\sqrt{f_3(s_0)+f_3(s_1)}+\frac{\sqrt{f_3(s_0)+f_3(s_1)}\sqrt{\|\psi(s_0)\|_2^4+f_3(s_1)}}{(1-\lambda)^2}.\tag{Part (e) Lemma \ref{lem:z_k_bound} and Assumption \ref{assump:stable_markov}}\nonumber\\
            &\leq c_\alpha^2\sqrt{f_3(s_0)+f_3(s_1)}+\frac{f_3(s_0)+f_3(s_1)}{(1-\lambda)^2}=\hat{b}(s_0, s_1).
        \end{align}
        Combining both the bounds, we have 
        \begin{align*}
            \|F(x_k, Y_k)\|_2\leq \|T(Y_k)\|_2\|x-x^*\|_2+\|b(Y_k)\|_2+\|T(Y_k)\|_2\|x^*\|_2,
        \end{align*}
        where $\E_{y_0}[\|T(Y_k)\|^2_2]\leq \hat{T}(s_0, s_1)$ and $\E_{y_0}[\|b(Y_k)\|_2^2]\leq \hat{b}(s_0, s_1)$.
        \item From Lemma \ref{lem:stationary_exp_Tb}, the stationary expectations of $T(Y_k)$ and $b(Y_k)$ are finite. Thus, 
        \begin{align*}
            \E_{\mu}[F(\tilde{Y}_k, x)]=\bar{T}x+\bar{b}.
        \end{align*}
        \item 
        
        \begin{itemize}
            \item $\nexists~\theta\in\mathbb{R}^d$ such that $\psi(s)^T\theta=1,~\forall s\in\mathcal{S}$: In this case, $E_{\Psi}^\perp\equiv\mathbb{R}^d$. Lemma \ref{lem:contraction_td} implies that all the eigenvalues of $\Psi^T\Lambda(P^{(\lambda)}-I)\Psi$ are strictly negative, immediately suggesting that $\Psi^T\Lambda(P^{(\lambda)}-I)\Psi$ is invertible. Thus, there exists a unique solution $\theta^*$
            \begin{align*}
                -\frac{\bar{r}}{(1-\lambda)}\Psi^T\mu+\Psi^T\Lambda(P^{(\lambda)}-I)\Psi\theta^*+\Psi^T\Lambda \mathcal{R}^{(\lambda)}=0.
            \end{align*}
            
            \item $\exists~\theta_e\in\mathbb{R}^d$ such that $\psi(s)^T\theta_e=1,~\forall s\in\mathcal{S}$: Note that due to linear independence of columns of  $\Psi$ and the irreducibility of $P$, $\theta_e$ is the unique left and right eigenvector of $\Psi^T\Lambda(P^{(\lambda)}-I)\Psi$ corresponding to eigenvalue $0$. This implies that all the other generalized eigenvectors of $\Psi^T\Lambda(P^{(\lambda)}-I)\Psi$ are perpendicular to $\theta_e$ and hence, they span $E_{\Psi}^\perp$. Furthermore, note that 
        \begin{align*}
            \theta_e^T\left(\Psi^T\Lambda \mathcal{R}^{(\lambda)}-\frac{\bar{r}}{(1-\lambda)}\Psi^T\mu\right)=\mu^T\mathcal{R}^{(\lambda)}-\frac{\bar{r}}{(1-\lambda)}=0.
        \end{align*}
        Thus, the vector $\Psi^T\Lambda \mathcal{R}^{(\lambda)}-\frac{\bar{r}}{(1-\lambda)}\Psi^T\mu$ is perpendicular to $\theta_e$ and therefore lies in $E_{\Psi}^\perp$. By the properties of generalized eigenvectors, it is easy to verify that there exists unique $\theta^*\in E_{\Psi}^\perp$ which satisfies
        \begin{align*}
            -\frac{\bar{r}}{(1-\lambda)}\Psi^T\mu+\Psi^T\Lambda(P^{(\lambda)}-I)\Psi\theta^*+\Psi^T\Lambda \mathcal{R}^{(\lambda)}=0.
        \end{align*}

        \end{itemize}
        Note that $\Pi_{2, E^{\perp}_{\Psi}}\Psi^T\Lambda(P^{(\lambda)}-I)\Psi\theta=\Psi^T\Lambda(P^{(\lambda)}-I)\Psi\theta$, for all $\theta\in E^{\perp}_{\Psi}$. Consider the expression $\bar{T}x^*+\bar{b}$. Expanding the matrix $\bar{T}$, we get
        \begin{align*}
            \bar{T}x^*+\bar{b}=\begin{bmatrix}
                -\bar{r}+\bar{r}\\
                -\frac{\bar{r}}{(1-\lambda)}\Pi_{2, E^{\perp}_{\Psi}}\Psi^T\mu+\Pi_{2, E^{\perp}_{\Psi}}\Psi^T\Lambda(P^{(\lambda)}-I)\Psi\theta^*+\Pi_{2, E^{\perp}_{\Psi}}\Psi^T\Lambda \mathcal{R}^{(\lambda)}
            \end{bmatrix}=0
        \end{align*}
        Thus, $x^*$ is the unique solution to $\bar{T}x^*+\bar{b}=0$ in the subspace $E^{\perp}_{\Psi}$. Furthermore, rearranging the terms in $\bar{T}x^*+\bar{b}=0$, we get
            \begin{align*}
                 \Psi^T\Lambda\Psi\theta^*=-\frac{\bar{r}}{(1-\lambda)}\Psi^T\mu+\Psi^T\Lambda P^{(\lambda)}\Psi\theta^*+\Psi^T\Lambda \mathcal{R}^{(\lambda)}.
            \end{align*}
            Multiplying both sides by $\Psi(\Psi^T\Lambda\Psi)^{-1}$ ($\Psi^T\Lambda\Psi$ is a $d\times d$ invertible matrix), we get
            \begin{align*}
                \Psi\theta^*&=\Psi(\Psi^T\Lambda\Psi)^{-1}\left(\Psi^T\Lambda \mathcal{R}^{(\lambda)}+\Psi^T\Lambda P^{(\lambda)}\Psi\theta^*-\frac{\bar{r}}{(1-\lambda)}\Psi^T\mu\right)\\
                \Psi\theta^*&=\Pi_{\Lambda, \Psi}\mathcal{B}_{\pi}^{(\lambda)}(\Psi\theta^*).
            \end{align*}
        Thus, $\theta^*$ is also one of the solutions for the Projected-Bellman equation for TD($\lambda$).
        \end{enumerate}
        
        \item We will use similar arguments as in Lemma 1 of Chapter 2, Part 2 from \cite{benveniste2012} to show the existence of a solution to the Poisson equation. Define $V_x(y)$ for all $y=(s, a, s', z)\in\mathcal{Y}$ as follows:
        \begin{align*}
            V_x(y)=\left(\sum_{k=0}^\infty(\E_y[T(Y_k)]-\bar{T})\right)x+\sum_{k=0}^\infty(\E_y[b(Y_k)]-\bar{b})
        \end{align*}
        Then, using Lemma \ref{lem:z_k_bound} we can bound  each infinite summation as follows:
        \begin{align*}
            \left\|\sum_{k=0}^\infty(\E_y[T(Y_k)]-\bar{T})\right\|_2&\leq \sum_{k=0}^\infty\left\|\E_y[T(Y_k)]-\bar{T}\right\|_2\\
            &\leq \sum_{k=0}^\infty\Bigg(\|\Pi_{2, E^{\perp}_{\Psi}}\left(\E_y[z_k]-\E_{\mu}[\tilde{z}_k]\right)\|_2+\left\|\Pi_{2, E^{\perp}_{\Psi}}\left(\E_y[z_k\psi(S_k)^T]-\E_{\mu}[\tilde{z}_k\psi(\tilde{S}_k)^T\right)]\right\|_2\\
            &~~~+\left\|\Pi_{2, E^{\perp}_{\Psi}}\left(\E_y[z_k\psi(S_{k+1})^T]-\E_{\mu}[\tilde{z}_k\psi(\tilde{S}_{k+1})^T]\right)\right\|_2\Bigg)\tag{Lemma \ref{lem:block_norm} and triangle inequality}\\
            &\leq 3g(y).\tag{Lemma \ref{lem:z_k_bound}}
        \end{align*}
        Next, for the second summation, we have
        \begin{align*}
            \left\|\sum_{k=0}^\infty(\E_y[b(Y_k)]-\bar{b})\right\|_2&\leq \sum_{k=0}^\infty\left\|\E_y[b(Y_k)]-\bar{b}\right\|_2\\
            &\leq \sum_{k=0}^\infty\Bigg(c_\alpha|\E_{y}[\mathcal{R}(S_k, A_k)]-\E_{\mu}[\mathcal{R}(\tilde{S}_k, A_k)]|\\
            &~~~~+\left\|\Pi_{2, E^{\perp}_{\Psi}}\left(\E_{y}[z_k\mathcal{R}(S_k, A_k)]-\E_{\mu}[\tilde{z}_k\mathcal{R}(\tilde{S}_k, A_k)]\right)\right\|_2\Bigg)\\
            &\leq \sum_{k=0}^\infty c_\alpha\rho^k(f_1(s)+f_1(s'))+g(y)\tag{Assumption \ref{assump:stable_markov} and Lemma \ref{lem:z_k_bound}}\\
            &=\frac{c_\alpha(f_1(s)+f_1(s'))}{1-\rho}+g(y).
        \end{align*}
        Thus, both the series are convergent. Furthermore, note that Assumption \ref{assump:stable_markov} implies that $\E_{y}[f_1(S_k)]\leq \sqrt[4]{f_2(s)+f_2(s')}$, for all $y\in \mathcal{Y}$ and $k\geq 0$. Thus, following dominated convergence theorem, 
        \begin{align*}
            \E_y[V_x(Y_1)]&=\E_y\left[\left(\sum_{k=1}^\infty(\E_{Y_1}[T(Y_k)]-\bar{T})\right)x+\sum_{k=1}^\infty(\E_{Y_1}[b(Y_k)]-\bar{b})\right]\\
            &=\left(\sum_{k=1}^\infty(\E_y[\E_{Y_1}[T(Y_k)]-\bar{T}])\right)x+\sum_{k=1}^\infty(\E_y[\E_{Y_1}[b(Y_k)]-\bar{b}])\\
            &=\left(\sum_{k=1}^\infty(\E_y[T(Y_k)]-\bar{T}])\right)x+\sum_{k=1}^\infty(\E_y[b(Y_k)]-\bar{b}])\\
            &=V_y(x)-((T(y)-\bar{T})x+b(y)-\bar{b}).
        \end{align*}
       
       The claim follows. Now to show bounded expectations, we use part (f) of Lemma \ref{lem:z_k_bound} and the fact that $z_0=\psi(s_0)$, to get
       \begin{align*}
           \hat{A}_2^2(y_0)&=\E_{Y_0=(s_0, a_0, s_1, z_0)}\left[\left\|\sum_{k=0}^\infty(\E_y[T(Y_k)]-\bar{T})\right\|^2_2\right]\leq 9\hat{g}(s_0, s_1),\\
           \hat{B}_2^2(y_0)&=\E_{Y_0=(s_0, a_0, s_1, z_0)}\left[\left\|V_{x^*}(Y_k)\right\|_2^2\right]\leq \E_{Y_0=(s_0, a_0, s_1, z_0)}\left[\left(3g(Y_k)x^*+\frac{c_\alpha(f_1(S_k)+f_1(S_{k+1}))}{1-\rho}+g(Y_k)\right)^2\right],\\
            &\quad\quad\quad\quad\quad\quad\quad\quad\quad\quad\quad\quad\quad\quad\leq 2(9\|x^*\|_2^2+1)\hat{g}(s_0, s_1)+\frac{8c^2_\alpha\sqrt{f_2(s_0)+f_2(s_1)}}{(1-\rho)^2}.
       \end{align*}
    \end{enumerate}
\end{proof}

\subsection{Proof of Lemma \ref{lem:contraction_td}}\label{appendix:contraction_td}
\begin{proof}
    The proof largely involves similar arguments as in Lemma 2 in \cite{zhang2021} but we will also need Lemma \ref{lem:strict_drift} to adapt the infinite state space. Since $P^{(\lambda)}$ is an irreducible and aperiodic Markov kernel, for any non-zero $\theta\in E^{\perp}_{\Psi}$, by Lemma \ref{lem:strict_neg} we have 
    \begin{align*}
        \theta^T(\Psi^T\Lambda\Psi-\Psi^T\Lambda P^{(\lambda)}\Psi)\theta>0.
    \end{align*}
    Consider the set $\{\theta\in E^{\perp}_{\Psi}|\|\theta\|_2=1\}$. Note that this set is compact and closed, thus by the extreme value theorem, we have
    \begin{align*}
        \Delta:=\min_{\theta\in E^{\perp}_{\Psi},\|\theta\|_2=1}\theta^T(\Psi^T\Lambda\Psi-\Psi^T\Lambda P^{(\lambda)}\Psi)\theta>0.
    \end{align*}
    By Lemma \ref{lem:stationary_exp_Tb}, in steady state $\E_{\mu}[T(Y_k)]$ is given by
    \begin{align*}
        \bar{T}=\begin{bmatrix}
        -c_\alpha & 0\\
        -\frac{1}{(1-\lambda)}\Pi_{2, E^{\perp}_{\Psi}}\Psi^T\mu & \Pi_{2, E^{\perp}_{\Psi}}(\Psi^T\Lambda\Psi-\Psi^T\Lambda P^{(\lambda)}\Psi)
        \end{bmatrix}.
    \end{align*}
    Thus, the minimization problem $\min_{x\in \mathbb{R}\times E^{\perp}_{\Psi},\|x\|_2=1}-x^T\bar{T}x$ can be written as
    \begin{align*}
        \min_{\theta\in E^{\perp}_{\Psi},r\in \mathbb{R},r^2+\|\theta\|^2_2=1}c_\alpha r^2+\frac{r}{1-\lambda}\theta^T\Pi_{2, E^{\perp}_{\Psi}}\Psi^T\mu +\theta^T\Pi_{2, E^{\perp}_{\Psi}}(\Psi^T\Lambda\Psi-\Psi^T\Lambda P^{(\lambda)}\Psi)\theta.
    \end{align*}
    Since $\theta\in E^{\perp}_{\Psi}$, $\theta^T\Pi_{2, E^{\perp}_{\Psi}}=(\Pi_{2, E^{\perp}_{\Psi}}\theta)^T=\theta$, we have
    \begin{align*}
        \min_{\theta\in E^{\perp}_{\Psi},r\in \mathbb{R},r^2+\|\theta\|^2_2=1}c_\alpha r^2+\frac{r}{1-\lambda}\theta^T\Psi^T\mu +\theta^T(\Psi^T\Lambda\Psi-\Psi^T\Lambda P^{(\lambda)}\Psi)\theta.
    \end{align*}
    First, we bound the second term.
    \begin{align*}
        \left|\frac{r}{1-\lambda}\theta^T\Psi^T\mu\right|&=\frac{|r|}{1-\lambda}\left|\theta^T\Psi^T\mu\right|\\
        &\leq \frac{|r|}{1-\lambda}\|\theta\|_{2}\|\Psi^T\mu\|_2.\tag{Cauchy-Schwartz Inequality}
    \end{align*}
    Since $\|\Psi^T\mu\|^2_2=\sum_{i=1}^d\left(\sum_{s\in\mathcal{S}}\mu(s)\psi_i(s)\right)^2$ which by Jensen's inequality can be bounded as 
    \begin{align*}
        \|\Psi^T\mu\|^2_2&\leq \sum_{i=1}^d\sum_{s\in\mathcal{S}}\mu(s)\psi_i^2(s)\\
        &\leq d\hat{\psi}^2.
    \end{align*}
    Thus, 
    \begin{align*}
        \left|\frac{r}{1-\lambda}\theta^T\Psi^T\mu\right|
        &\leq \frac{\hat{\psi}|r|\|\theta\|_{2}\sqrt{d}}{1-\lambda},~~\forall r\in \mathbb{R},~\theta\in E^{\perp}_{\Psi}.
    \end{align*}
    and
    \begin{align*}
        \theta^T(\Psi^T\Lambda\Psi-\Psi^T\Lambda P^{(\lambda)}\Psi)\theta\geq \Delta\|\theta\|_2^2,~~\forall\theta\in E^{\perp}_{\Psi}.
    \end{align*}
    Combining all the bounds, we get
    \begin{align*}
        &\min_{\theta\in E^{\perp}_{\Psi},r\in \mathbb{R},r^2+\|\theta\|^2_2=1}c_\alpha r+\frac{r}{1-\lambda}\theta^T\Psi^T\mu +\theta^T(\Psi^T\Lambda\Psi-\Psi^T\Lambda P^{(\lambda)}\Psi)\theta\\
        &\geq \min_{\theta\in E^{\perp}_{\Psi},r\in \mathbb{R},r^2+\|\theta\|^2_2=1}c_\alpha r^2-\frac{\hat{\psi}|r|\|\theta\|_{2}\sqrt{d}}{1-\lambda}+\Delta\|\theta\|_2^2\\
        &=\min_{r\in[-1,1]}c_\alpha|r|^2-\frac{\sqrt{d}\hat{\psi}|r|\sqrt{1-|r|^2}}{1-\lambda}+\Delta(1-r^2)\\
        &=\min_{z\in[0,1]}c_\alpha z-\frac{\sqrt{d}\hat{\psi}\sqrt{z(1-z)}}{1-\lambda}+\Delta(1-z)\\
        &=\Delta+\min_{z\in[0,1]}(c_\alpha-\Delta)z-\frac{\sqrt{d}\hat{\psi}\sqrt{z(1-z)}}{1-\lambda}.
    \end{align*}
    When $c_\alpha\geq \Delta+\sqrt{\frac{d^2\hat{\psi}^4}{\Delta^2(1-\lambda)^4}-\frac{d\hat{\psi}^2}{(1-\lambda)^2}}$, we have
    \begin{align*}
        \min_{z\in[0,1]}(c_\alpha-\Delta)z-\frac{\sqrt{d}\hat{\psi}\sqrt{z(1-z)}}{1-\lambda}=&\frac{1}{2}\left((c_\alpha-\Delta)-\sqrt{(c_\alpha-\Delta)^2+\frac{d^2\hat{\psi}^2}{(1-\lambda)^2}}\right)\\
        &\geq \frac{1}{2}\left(\sqrt{\frac{d^2\hat{\psi}^4}{\Delta^2(1-\lambda)^4}-\frac{d\hat{\psi}^2}{(1-\lambda)^2}}-\frac{d\hat{\psi}^2}{\Delta(1-\lambda)^2}\right).\\
        &\geq -\frac{\Delta}{2}
    \end{align*}
    % minimize az-\sqrt{z(1-z)}
    where for the last inequality we used the following fact
    \begin{align*}
        \sqrt{\frac{x^2}{\Delta^2}-x}-\frac{x}{\Delta}\geq -\Delta~~\forall x.\tag{$x=\Delta^2$ is the minimizer}
    \end{align*}
    Therefore, it follows that
    \begin{align*}
        \min_{x\in \mathbb{R}\times E^{\perp}_{\Psi},\|x\|_2=1}-x^TTx\geq \frac{\Delta}{2}.
    \end{align*}
\end{proof}

\subsection{Proof of Theorem \ref{thm:td}}\label{appendix:td_thm}
\begin{proof}
    Since there is no martingale noise in the algorithm $A_3=B_3=0$. From Proposition \ref{prop:td_prop}, we have $\hat{A}_1^2(y_0)=\hat{T}(s_0, s_1)$ and $\hat{A}_2^2(y_0)=9\hat{g}(s_0, s_1)$ which gives 
    \begin{align*}
        \hat{A}(y_0)=\hat{A}_1^2(y_0)+\hat{A}_2^2(y_0)+A_3^2=\hat{T}(s_0, s_1)+9\hat{g}(s_0, s_1).
    \end{align*}
    Next, $\E_{y_0}[(\|b(Y_k)\|^2_2]\leq \hat{b}(s_0, s_1)$, we have
    \begin{align*}
        \E_{y_0}[(\|b(Y_k)\|_2+\|T(Y_k)\|_2\|x^*\|_2)^2]\leq 2\hat{b}(s_0, s_1)+2\hat{T}(s_0, s_1)\|x^*\|_2^2.
    \end{align*}
    Combining above with $B_2^2(y_0)$, we get
    \begin{align*}
        \hat{B}(y_0)^2&=\hat{B}_1^2(y_0)+\hat{B}_2^2(y_0)+B_3^2\\
        &=2\hat{b}(s_0, s_1)+2\hat{g}(s_0, s_1)+2\|x^*\|_2^2\left(9\hat{g}(s_0, s_1)+\hat{T}(s_0, s_1)\right)+\frac{8c^2_\alpha\sqrt{f_2(s_0)+f_2(s_1)}}{(1-\rho)^2}.
    \end{align*}
    Let $\max_{x\in \mathcal{X}}\|x\|_2\leq M/2$. Then, $\hat{C}_V(s_0, s_1)=\hat{C}(y_0)=\hat{A}(y_0)M^2+\hat{B}(y_0)$. Since $\|\cdot\|_c=\|\cdot\|_s=\|\cdot\|_2$, we have
    \begin{align*}
        \varphi_{1}=\frac{uL_su_{2s}u^2_{cs}}{l_{2s}}=1;~\varphi_{V, 0}=(\bar{r}_0-r^*)^2+\|\theta_0-\theta^*\|_2^2+2\hat{C}_V(s_0, s_1).
    \end{align*}
\end{proof}

\subsection{Proof of Theorem \ref{thm:LFA_error}}\label{sec:LFA_error}

\begin{proof}
    \begin{enumerate}[(a)]
        \item By definition $P_{\delta}^{(\lambda)}e=e$ and $\bar{\Pi}_{\Lambda,\Psi}e=0$. Thus,  $\|\bar{\Pi}_{\Lambda,\Psi}P_{\delta}^{(\lambda)}x\|_{\Lambda}=\|\bar{\Pi}_{\Lambda,\Psi}P_{\delta}^{(\lambda)}(x-(\mu^Tx)e)\|_{\Lambda}$ for any function $x$. Therefore, without loss of generality, we will assume $\mu^Tx=0$ and $\|x\|_{\Lambda}\leq 1$. Furthermore, since $\bar{\Pi}_{\Lambda,\Psi}$ is a projection operator, it is non-expansive under $\|\cdot\|_{\Lambda}$ norm which leads us to
        \begin{align*}
            \|\bar{\Pi}_{\Lambda,\Psi}P_{\delta}^{(\lambda)}x\|^2_{\Lambda}&\leq \|P_{\delta}^{(\lambda)}x\|^2_{\Lambda}\\
            &=(1-\delta)^2\|x\|_\Lambda^2+2\delta(1-\delta) x^T \Lambda P^{(\lambda)}x + \delta^2\|P^{(\lambda)}x\|_{\Lambda}^2\\
            &\leq (1-\delta)^2+2\delta(1-\delta)x^T \Lambda P^{(\lambda)}x + \delta^2\tag{$P^{(\lambda)}$ is non-expansive under $\|\cdot\|_{\Lambda}$}\\
            &\leq (1-\delta)^2+2\delta(1-\delta)(1-\nu_\lambda) + \delta^2\\
            &\leq 1-2\delta(1-\delta)\nu_\lambda.
        \end{align*}
        Note that $\max \delta(1-\delta)=1/4$. Thus, $\gamma_{\lambda}\leq \sqrt{1-\nu_{\lambda}/2}$.
        \item  The proof for this part follows using exact arguments as in Theorem 3 in \cite{tsitsiklis1999} and is therefore omitted.
    \end{enumerate}
\end{proof}

\subsection{Proof of Lemma \ref{lem:rev-td-error}}
\begin{proof}
    Using the results in \cite[Chapter 22]{douc2018markov}, it is easy to verify that for all $m\geq 0$
    \begin{align*}
        \max_{\substack{\mu^Tx=0}}
        \left|\frac{x^T\Lambda P^mx}{x^T\Lambda x}\right|=(1-\nu)^m.
    \end{align*}
    Thus, using the expression for $P^{(\lambda)}$ from Lemma \ref{lem:stationary_exp_Tb}, we have
    \begin{align*}
        1-\nu_{\lambda}=\max_{\substack{\mu^Tx=0}}
        \left|\frac{x^T\Lambda P^{(\lambda)}x}{x^T\Lambda x}\right|&\leq (1-\lambda)\sum_{m=0}^{\infty}\lambda^m\left|\max_{\substack{\mu^Tx=0}}
        \frac{x^T\Lambda P^mx}{x^T\Lambda x}\right|\\
        &= (1-\lambda)(1-\nu)\sum_{m=0}^{\infty}(\lambda(1-\nu))^m=\frac{(1-\lambda)(1-\nu)}{(1-(1-\nu)\lambda)}<1\\
        \implies 0<\nu_{\lambda}&=\frac{\nu}{1-(1-\nu)\lambda}.
    \end{align*}
    Note that if $P$ is reversible then so is $P^m$ due to the self-adjoint property. This implies that $P^{(\lambda)}$ is also a reversible Markov operator. Thus, for any $x$ such that $\|x\|_{\Lambda}=1$ and $\mu^Tx=0$, we get
    \begin{align*}
        \|\bar{\Pi}_{\Lambda,\Psi}P_{\delta}^{(\lambda)}x\|^2_{\Lambda}&\leq \|P_{\delta}^{(\lambda)}x\|^2_{\Lambda}\\
            &=(1-\delta)^2\|x\|_\Lambda^2+2\delta(1-\delta) x^T \Lambda P^{(\lambda)}x + \delta^2\|P^{(\lambda)}x\|_{\Lambda}^2\\
            &\leq (1-\delta)^2+2\delta(1-\delta)(1-\nu_{\lambda}) + (1-\nu_{\lambda})^2\delta^2\tag{\cite[Theorem 22.A.17]{douc2018markov}}\\
            &= ((1-\delta)+\delta(1-\nu_{\lambda}))^2\\
            &\leq \left(1-\delta\nu_{\lambda}\right)^2=\left(1-\frac{\delta\nu}{(1-(1-\nu)\lambda)}\right)^2.
    \end{align*}
    The infimum of the r.h.s. is obtained at $\delta=1$ whereupon the first claim follows. For the second claim, we have
    \begin{align*}
        \lim_{\lambda \uparrow 1}\gamma_{\lambda}=\lim_{\lambda \uparrow 1}\inf_{\delta}\|\bar{\Pi}_{\Lambda,\Psi}P_{\delta}^{(\lambda)}x\|_{\Lambda}\leq \lim_{\lambda\uparrow 1}\frac{(1-\lambda)(1-\nu)}{(1-\lambda(1-\nu))}=0.
    \end{align*}
\end{proof}

\subsection{Challenges in the infinite state space}\label{appendix:challenge}
Consider a birth-death chain. Let the state space be given as $\mathcal{S}=\{s_i\}_{i\geq 0}$ with the transition kernel $P(s_{i+1}|s_i)=p$ and $P(s_{i-1}|s_i)=1-p$, where $p<1/2$. Furthermore, $P(s_0|s_0)=1-p$. It is well known that for $p<1/2$, this chain is positive recurrent and the stationary distribution is given by $\mu(s_i)=\frac{(1-2p)p^i}{(1-p)^{i+1}}$. For simplicity, we will consider the setting when $\lambda=0$, which implies $P^{(0)}=P$. Consider a sequence of functions $\{V_j\}_{j\geq 1}$ in the set $\{V|\sum_{s\in \mathcal{S}}V(s)=0, \sum_{s\in \mathcal{S}}
V^2(s)=1\}$ that satisfy the following: 
\begin{align*}
    V_j(s_i)=\begin{cases}
        \frac{1}{\sqrt{2}},~~i=j\\
        -\frac{1}{\sqrt{2}},~~i=j+1\\
        0,~~\text{otherwise}.
    \end{cases}
\end{align*}
Then, we have the following:
\begin{align*}
    V_j^T\Lambda(I-P)V_j&=\frac{1}{2}\E_{\mu}[(V_j(S_{k+1})-V_j(S_k))^2]\\
    &=\mu(s_{j-1})p\left(\frac{1}{\sqrt{2}}\right)^2+\mu(s_j)p\left(\sqrt{2}\right)^2+\mu(s_{j+1})(1-p)\left(\sqrt{2}\right)^2+\mu(s_{j+2})(1-p)\left(\frac{1}{\sqrt{2}}\right)^2\\
    &=\frac{(1-2p)p^j}{(1-p)^j}\left(\frac{1}{2}+\frac{4(1-2p)p}{(1-p)}+\frac{(1-2p)p^2}{2(1-p)^2}\right).
\end{align*}
Note that $p<1/2$, therefore $\lim_{j\to\infty}V_j^T\Lambda(P-I)V_j\to 0$. Thus, 
\begin{align*}
    \inf_V V^T\Lambda(I-P)V=0.
\end{align*}

Observe that in the above example the vector has infinite dimension, thus the point at which the function value has a variation for the first time can drift to infinity. However, by using linear function approximation with a finite number of columns, we are essentially restricting the function in a finite-dimensional setting. More concretely, in Lemma \ref{lem:strict_drift} we establish that for any function given by a linear combination of columns of $\Psi$, the point of variation in the value of the function cannot drift to infinity.

\subsection{Auxiliary Lemmas for TD\texorpdfstring{$(\lambda)$} \ }\label{appendix:aux_lem_td}
\begin{lemma}\label{lem:strict_drift}
    Let $\mathcal{S}=\{s_1,s_2,s_3,\dots\}$ be an indexing of the state space such that $\psi(s_i)^T$ is the $i$-th row in $\Psi$. Then, under the Assumption \ref{assump:feature_mat}, there exists a finite $N$ such that for all $\theta\in E_{\Psi}^{\perp}$ the following relation holds
    \begin{align}\label{eq:strict_drift}
        \sum_{j=1}^d\theta_j\psi_j(s_M)\neq \sum_{j=1}^d\theta_j\psi_j(s_i),~1\leq i\leq N-1.
    \end{align}
\end{lemma}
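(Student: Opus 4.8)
The plan is to reduce the statement to a stabilization property of a decreasing chain of finite-dimensional subspaces. For $n\ge 1$ define the linear map $L_n:E_\Psi^\perp\to\mathbb{R}^{n-1}$ by $L_n\theta=\big(\psi(s_i)^T\theta-\psi(s_1)^T\theta\big)_{i=2}^{n}$, so that $\theta\in\ker L_n$ exactly when $\Psi\theta$ takes a single value on $\{s_1,\dots,s_n\}$; the relation \eqref{eq:strict_drift} will follow once we exhibit a finite $N$ with $\ker L_N=\{0\}$, since then every nonzero $\theta\in E_\Psi^\perp$ has $\Psi\theta$ non-constant on $\{s_1,\dots,s_N\}$, and in particular $\sum_j\theta_j\psi_j(s_i)\neq\sum_j\theta_j\psi_j(s_{i+1})$ for some $i\le N-1$. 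Note the kernels form a non-increasing chain $\ker L_1\supseteq\ker L_2\supseteq\cdots$ of subspaces of the $(\le d)$-dimensional space $E_\Psi^\perp$.

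The key step is to identify the intersection $\bigcap_{n\ge1}\ker L_n$ and show it is trivial. If $\theta$ lies in it, then $\psi(s)^T\theta$ equals one fixed number $c$ for every $s\in\mathcal{S}$, i.e. $\Psi\theta=c\mathbf 1$. If $c=0$ then $\sum_j\theta_j\psi_j(s)=0$ for all $s$, and linear independence of the columns of $\Psi$ (Assumption \ref{assump:feature_mat}(a)) forces $\theta=0$. If $c\neq0$ then $\psi(s)^T(\theta/c)=1$ for all $s$, so $\theta/c\in E_\Psi$; but $\theta\in E_\Psi^\perp$, hence $\theta\in E_\Psi\cap E_\Psi^\perp=\{0\}$, which contradicts $c\neq0$. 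In either case $\theta=0$, so $\bigcap_{n\ge1}\ker L_n=\{0\}$. This is exactly the place where restricting the iterates to $E_\Psi^\perp$ (discarding the always-constant direction $\theta_e$, when it exists) is essential, together with the finiteness of the feature set.

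To finish, observe that $\dim\ker L_n$ is a non-increasing sequence of non-negative integers, hence eventually constant; let $N$ be an index at which it stabilizes. Since $\ker L_{n+1}\subseteq\ker L_n$ with equal dimension for all $n\ge N$, we get $\ker L_n=\ker L_N$ for every $n\ge N$, and therefore $\ker L_N=\bigcap_{n\ge1}\ker L_n=\{0\}$. Consequently, for every nonzero $\theta\in E_\Psi^\perp$ we have $L_N\theta\neq0$, which gives \eqref{eq:strict_drift}; feeding this into the steady-state expression for $\theta^T\bar T\theta$ (via Lemma \ref{lem:stationary_exp_Tb}) converts the mere nonnegativity of $\theta^T\Psi^T\Lambda(I-P^{(\lambda)})\Psi\theta$ into a strictly negative drift after a compactness argument on the unit sphere of $E_\Psi^\perp$, as needed in Lemma \ref{lem:contraction_td}.

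The main obstacle is conceptual rather than computational: in the infinite-dimensional counterexample of Appendix \ref{appendix:challenge} the analogous chain of kernels need never stabilize and the "first point of variation" of a unit-norm value function can escape to infinity, so no uniform $N$ exists. The argument above works precisely because the chain $\{\ker L_n\}$ lives inside the finite-dimensional space $E_\Psi^\perp$ and so must terminate after finitely many steps; the only subtlety to handle carefully is checking that the trivial-intersection step genuinely uses both column-independence and the $E_\Psi^\perp$ restriction, since dropping either one allows a nonzero $\theta$ with $\Psi\theta$ constant.
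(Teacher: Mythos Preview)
Your argument is correct and captures exactly the content that is needed downstream (in Lemma~\ref{lem:strict_neg} and then Lemma~\ref{lem:contraction_td}): a single finite $N$ such that $\Psi\theta$ is non-constant on $\{s_1,\dots,s_N\}$ for every nonzero $\theta\in E_\Psi^\perp$. Your identification of $\bigcap_n\ker L_n=\{0\}$ using column independence for the $c=0$ branch and the $E_\Psi^\perp$ restriction for the $c\neq0$ branch is precisely the right two-ingredient split, and the descending chain of finite-dimensional subspaces then delivers the uniform $N$.

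The paper's route is the same in spirit but is organized differently: it first invokes an auxiliary result (Lemma~\ref{lem:rank_lemma}) to produce $N_1$ with $\mathrm{span}\{\psi(s_1),\dots,\psi(s_{N_1})\}=\mathbb{R}^d$, selects a full-rank $d\times d$ submatrix $\hat\Psi_d$ from these rows, and then does a case split on whether $E_\Psi$ is nontrivial. In the nontrivial case, $\hat\Psi_d\theta=c e$ forces $\theta=c\theta_e\in E_\Psi$ (the same argument you give for $c\neq0$); in the trivial case, the paper has to locate an additional index $N_2$ at which $\Psi\theta_e$ fails to equal $1$, and takes $N=\max\{N_1,N_2\}$. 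Your formulation collapses both the auxiliary lemma and the case split into the single chain $\{\ker L_n\}$, which is cleaner and more self-contained; the paper's version makes the intermediate ``first $N_1$ rows span $\mathbb{R}^d$'' step explicit, which is mildly more transparent about where finite-dimensionality enters but at the cost of a separate lemma and a slightly awkward case analysis.
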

\begin{proof}
    From Lemma \ref{lem:rank_lemma}, there exists $N_1$ such that the span of the first $N_1$ rows of $\Psi$ is $\mathbb{R}^d$. Therefore, there exists a set of $d$ vectors that are linearly independent. Denote these row vectors by $\{\psi(s_{i_1}), \psi(s_{i_2}),\dots, \psi(s_{i_d}) \}$ and construct a matrix $\hat{\Psi}_d$ by concatenating these row vectors. Let $e\in \mathbb{R}^d$ be the vector of all ones. Now, we have two cases: $(i)$ $E_{\Psi}$ is non-empty, or $(ii)$ $E_{\Psi}$ is empty. We will consider these two cases separately.
    \begin{itemize}
        \item $E_{\Psi}$ is non-empty:  Then, $\hat{\Psi}_d\theta\neq e$ for any $\theta\in E_{\Psi}^{\perp}$ since $\hat{\Psi}_d$ is full rank. The claim follows immediately.
    
        \item$E_{\Psi}$ is empty: In this case $E_{\Psi}^{\perp}\equiv\mathbb{R}^d$. Let $\theta_e$ be the vector for which we have $\Psi_d\theta_e=e$. Then, from Assumption \ref{assump:feature_mat}, there exists a finite $N_2$ such that $\sum_{j=1}^d\theta_e\psi_j(s_{N_2})\neq 1$. Furthermore, $\Psi_d\theta\neq e$ for any $\theta\neq \theta_e$. Thus, for all $\theta\in \mathbb{R}^d$ , $\sum_{j=1}^d\theta_e\psi_j(s_{N})\neq \sum_{j=1}^d\theta_e\psi_j(s_i)$, where $N=\max\{N_1, N_2\}$. 
        \end{itemize}
\end{proof}

\begin{lemma}\label{lem:strict_neg}
Let $P$ be the transition kernel for an irreducible and aperiodic Markov chain. Then, for any $\theta\in E_{\Psi}^{\perp}$, the following is true:
\begin{align}
    \theta^T\Psi^T\Lambda(I-P)\Psi\theta>0
\end{align}
\end{lemma}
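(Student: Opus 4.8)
The plan is to reduce the quadratic form to a Dirichlet-type sum and then show it can vanish only for $\theta=0$ (so the asserted strict inequality should be read for \emph{nonzero} $\theta\in E_{\Psi}^{\perp}$, which is also how the lemma is invoked in the proof of Lemma~\ref{lem:contraction_td}). Write $v:=\Psi\theta$, i.e.\ the function $v(s)=\psi(s)^{T}\theta=\sum_{j}\theta_{j}\psi_{j}(s)$ on $\mathcal{S}$. By Assumption~\ref{assump:feature_mat}(b) and the triangle inequality, $\|v\|_{\Lambda}\le\sum_{j}|\theta_{j}|\,\|\psi_{j}\|_{\Lambda}<\infty$, so $v\in L^{2}(\mu)$, and since $P$ is a contraction on $L^{2}(\mu)$ by Jensen's inequality, $Pv\in L^{2}(\mu)$ as well; this keeps all the sums below finite. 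The first step is the identity
\begin{align*}
\theta^{T}\Psi^{T}\Lambda(I-P)\Psi\theta=v^{T}\Lambda(I-P)v=\frac{1}{2}\sum_{s,s'\in\mathcal{S}}\mu(s)P(s'\mid s)\bigl(v(s)-v(s')\bigr)^{2},
\end{align*}
which follows by expanding the square and using $\sum_{s'}P(s'\mid s)=1$ together with stationarity $\sum_{s}\mu(s)P(s'\mid s)=\mu(s')$; the rearrangement of the double sum is legitimate since the summand is nonnegative (Tonelli) and the cross term is absolutely summable by Cauchy--Schwarz, $\sum_{s,s'}\mu(s)P(s'\mid s)|v(s)||v(s')|\le\|v\|_{\Lambda}^{2}$. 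In particular the form is always $\ge 0$, so it remains to exclude the value $0$.

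Next I would characterize equality. If the sum is $0$, then every term vanishes; because the chain is irreducible its stationary distribution satisfies $\mu(s)>0$ for all $s$, so $P(s'\mid s)>0$ forces $v(s)=v(s')$. Propagating this along positive-probability transitions (again by irreducibility) shows that $v$ is constant on all of $\mathcal{S}$; in particular $v(s_{N})=v(s_{1})$, where $s_{1},s_{2},\dots$ is the indexing of $\mathcal{S}$ fixed in Section~\ref{sec:RL} and $N$ is the finite index produced by Lemma~\ref{lem:strict_drift}.

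This contradicts Lemma~\ref{lem:strict_drift}: for a nonzero $\theta\in E_{\Psi}^{\perp}$ that lemma gives $\sum_{j}\theta_{j}\psi_{j}(s_{N})\neq\sum_{j}\theta_{j}\psi_{j}(s_{i})$ for all $1\le i\le N-1$, i.e.\ $v(s_{N})\neq v(s_{1})$. Hence the sum cannot be $0$, and $\theta^{T}\Psi^{T}\Lambda(I-P)\Psi\theta>0$. (As an alternative to Lemma~\ref{lem:strict_drift}: once $v\equiv c$ is known to be constant, linear independence of the columns of $\Psi$ (Assumption~\ref{assump:feature_mat}(a)) rules out $c=0$ unless $\theta=0$, while $c\neq 0$ would give $\psi(s)^{T}(\theta/c)=1$ for all $s$, i.e.\ $\theta\in E_{\Psi}$, contradicting $0\neq\theta\in E_{\Psi}^{\perp}$.)

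I expect the only genuinely delicate point to be the Dirichlet-form identity in the infinite state space — checking that each piece obtained from expanding $(v(s)-v(s'))^{2}$ is finite and that the interchange of summations is valid — and this is exactly what Assumption~\ref{assump:feature_mat}(b) and the $L^{2}(\mu)$-contractivity of $P$ deliver; everything afterward is combinatorial, and the finite dimension of the feature space enters only through Lemma~\ref{lem:strict_drift} (equivalently, through linear independence of the columns of $\Psi$). Note that aperiodicity of $P$ is not actually used; only irreducibility, which supplies $\mu>0$ and the propagation of constancy.
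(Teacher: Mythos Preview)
Your proof is correct and follows essentially the same route as the paper: rewrite the quadratic form as the Dirichlet sum $\tfrac{1}{2}\sum_{s,s'}\mu(s)P(s'\mid s)(v(s)-v(s'))^{2}$ with $v=\Psi\theta$, then rule out the value $0$ by showing $v$ cannot be constant for nonzero $\theta\in E_{\Psi}^{\perp}$.

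The one difference worth flagging is in how nonconstancy is invoked. The paper exploits the specific indexing of $\mathcal{S}$ introduced in Section~\ref{sec:RL}, which was chosen so that $P(s_{i+1}\mid s_{i})>0$; together with Lemma~\ref{lem:strict_drift} this directly produces an index $N$ with $v(s_{N})\neq v(s_{N-1})$ and hence a strictly positive term in the sum. You instead argue that vanishing of the whole sum forces $v$ constant everywhere by propagating along positive-probability transitions, and only then reach a contradiction. Your alternative closing argument---that $v\equiv c$ with $c\neq 0$ would put $\theta\in E_{\Psi}$, while $c=0$ forces $\theta=0$ by linear independence---is cleaner than the paper's detour through Lemma~\ref{lem:strict_drift} and makes clear that aperiodicity is never used. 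Your added care with the $L^{2}(\mu)$ justifications for the Dirichlet identity in the infinite state space is also a worthwhile improvement over the paper's treatment, which leaves those manipulations implicit.
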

\begin{proof}
    Let $\mathcal{V}(s_i)=\psi(s_i)^T\theta$ be a non-constant function of the states, where $\psi(s_i)^T$ is the $i$-th row of $\Psi$. Note that due to Lemma \ref{lem:strict_drift}, there exists a finite $N$ where $\mathcal{V}(s_{N})\neq \mathcal{V}(s_{N-1})$. Since the Markov chain is irreducible and $\mathcal{V}(\cdot)$ is a non-constant function of time, we have
    \begin{align*}
        0&<\frac{1}{2}\sum_{i=1}^\infty\mu(s_i)\sum_{s\in \mathcal{S}}P(s|s_i)(\mathcal{V}(s_i)-\mathcal{V}(s))^2\tag{$P(s_{i+1}|s_i)>0$ by construction of $\Psi$}\\
        &=\sum_{i=1}^\infty \mu(s_i)\left(\mathcal{V}^2(s_i)-\mathcal{V}(s_i)\sum_{s\in \mathcal{S}}P(s|s_i)\mathcal{V}(s)\right)\\
        &=\theta^T\Psi^T\Lambda(I-P)\Psi\theta.
    \end{align*}
\end{proof}

\section{Proof of technical results in Section \ref{sec:GLR}}\label{appendix:GLR}
Before starting the proof for the results in this section, we first establish some properties of $\phi g(\phi^Tx)$. Let $\phi_1$ and $\phi_2$ be any two regressors. Then, for all $x\in \mathbb{R}^d$, we have
\begin{align}\label{eq:lipschitz_phig}
        \|\phi_1g(\phi_1^Tx)-\phi_2g(\phi_2^Tx)\|_2&=\|\phi_1g(\phi_1^Tx)-\phi_2g(\phi_1^Tx)+\phi_2g(\phi_1^Tx)-\phi_2g(\phi_2^Tx)\|_2\nonumber\\
        &\leq |g(\phi_1^Tx)|\|\phi_1-\phi_2\|_2+\|\phi_2\|_2|g(\phi_1^Tx)-g(\phi_2^Tx)|\nonumber\\
        &\leq \|\phi_1-\phi_2\|_2\left(|g(\phi_1^Tx)|+L\|\phi_2\|\|x\|\right)\nonumber\\
        &\leq \|\phi_1-\phi_2\|_2\left(|g(0)|+L\|x\|_2(\|\phi_1\|_2+\|\phi_2\|_2)\right).
\end{align}

In addition, for any $x_1, x_2\in \mathbb{R}^d$, we have
\begin{align*}
    \|\phi_1g(\phi_1^Tx_1)&-\phi_1g(\phi_1^Tx_2)-(\phi_2g(\phi_2^Tx_1)-\phi_2g(\phi_2^Tx_2))\|_2=\|\phi_1g(\phi_1^Tx_1)-\phi_2g(\phi_1^Tx_1)+\phi_2g(\phi_1^Tx_1)-\phi_1g(\phi_1^Tx_2)\\
     &~~-(\phi_2g(\phi_2^Tx_1)-\phi_2g(\phi_1^Tx_2)+\phi_2g(\phi_1^Tx_2)-\phi_2g(\phi_2^Tx_2))\|_2\\
     &\leq \|(\phi_1-\phi_2)g(\phi_1^Tx_1)-(\phi_1-\phi_2)g(\phi_1^Tx_2)\|_2\\
     &~~\|\phi_2g(\phi_1^Tx_1)-\phi_2g(\phi_2^Tx_1)-\phi_2g(\phi_1^Tx_2)+\phi_2g(\phi_2^Tx_2)\|_2\\
     &\leq \|\phi_1-\phi_2\|_2\|g(\phi_1^Tx_1)-g(\phi_1^Tx_2)\|_2+\|\phi_2\|_2\|g(\phi_1^Tx_1)-g(\phi_2^Tx_1)-g(\phi_1^Tx_2)+g(\phi_2^Tx_2)\|_2\\
     &\leq  L\|\phi_1-\phi_2\|_2\|x_1-x_2\|_2\|\phi_1\|_2+\|\phi_2\|_2\|g(\phi_1^Tx_1)-g(\phi_2^Tx_1)-g(\phi_1^Tx_2)+g(\phi_2^Tx_2)\|_2
\end{align*}
To simplify the last term, we use the almost everywhere differentiability of $g(\cdot)$ and the Fundamental Theorem of Lesbesgue Integral Calculus, to get
\begin{align*}
    g(\phi_1^Tx_1)-g(\phi_2^Tx_1)&-g(\phi_1^Tx_2)+g(\phi_2^Tx_2))=(\phi_1-\phi_2)^T\\
    &~~\times \int_0^1\nabla_{\phi}g((\phi_2+t(\phi_1-\phi_2))^Tx_1)-\nabla_{\phi}g((\phi_2+t(\phi_1-\phi_2))^Tx_2)dt
\end{align*}
Using Lipschitz property of $\nabla_{\phi}g(\cdot)$, we obtain
\begin{align*}
    \|g(\phi_1^Tx_1)-g(\phi_2^Tx_1)-g(\phi_1^Tx_2)+g(\phi_2^Tx_2))\|_2&\leq \|\phi_1-\phi_2\|_2\int_0^1L\|x_1-x_2\|_2dt\\
    &\leq L\|\phi_1-\phi_2\|_2\|x_1-x_2\|_2.
\end{align*}
Combining the above bounds, we have
\begin{align}\label{eq:lipschitz_phig_strong}
    \|\phi_1g(\phi_1^Tx_1)-\phi_1g(\phi_1^Tx_2)-(\phi_2g(\phi_2^Tx_1)&-\phi_2g(\phi_2^Tx_2))\|_2\nonumber\\
    &~~\leq L\|\phi_1-\phi_2\|_2\|x_1-x_2\|_2(\|\phi_1\|_2+\|\phi_2\|_2).
\end{align}
Now we state the following intermediate lemmas that will enable us to verify the properties in Proposition \ref{prop:LMS}.
\begin{lemma}\label{lem:benveniste}\cite[Chapter 2, Part 2]{benveniste2012}
    Denote $\{Y_k\}_{k\geq 0}$ as the Markov chain and $\mathcal{S}$ as its state space. Let $h: \mathcal{S}\to \mathbb{R}^d$ be a vector-valued function. Suppose that there exist some constant $C_1\geq 0$, $\rho\in (0, 1)$ and a norm $\|\cdot\|_p$ such that for any $y_1, y_2\in \mathcal{S}$:
    \begin{align*}
        \|\E_{y_1}[h(Y_k)]-\E_{y_2}[h(Y_k)]\|_p\leq C_1\rho^k(f_1(y_1)+f_2(y_2))
    \end{align*}
    where $f_i(\cdot)$ are non-negative functions that satisfy $\E_{y}[f_i(Y_1)]<\infty$ for all starting state $y\in \mathcal{S}$. Then, there exists a constant $\bar{h}$ such that for all $y\in\mathcal{S}$ and $k\geq 0$, we have
    \begin{align*}
        \|\E_{y}[h(Y_k)]-\bar{h}\|_p\leq \frac{C_1\rho^k}{1-\rho}(f_1(y)+\E_{y}[f_2(Y_1)]).
    \end{align*}
    Moreover, if for all $Y_0=y$, $\E_y[f_1(Y_1)+f_2(Y_2)]<\infty$, then $\bar{V}(y)=\sum_{k=0}^\infty (\E_{y}[h(Y_k)]-\bar{h})$ is a solution of the Poisson's equation $V(y)=h(y)+\E_y[V(Y_1)]-\bar{h}$. 
\end{lemma}

\begin{lemma}\label{lem:phi_bound}
    Let $\phi_0$ be the initial Markov regressor and define $y_0=(\phi_0, z_0)$, where $z_0$ is the initial signal. Then,
    \begin{enumerate}[(a)]
        \item $\phi_k=M^k\phi_0+\sum_{j=0}^{k-1}M^{k-j}w_j.$
        \item $\E_{y_0}[\|\phi_k\|^4_2]\leq D^4(\|\phi_0\|_2^4+\sigma_w^{(4)})/(1-\rho)^4.$
        \item Let $\{(\phi_k', z_k')\}_{k\geq 0}$ be an independent process. Then for any $x\in\mathbb{R}^d$, we have
        \begin{align*}
            \|\E_{y_0}[F(x, Y_k)]-\E_{y_0'}[F(x, Y_k')]\|_2
            &\leq \frac{\rho^kD^2}{1-\rho}\left(p_1(\phi_0)+p_1(\phi_0')\right)
        \end{align*}
        where $p_1(\phi)=2L(\|x\|_2+\|x^*\|_2)\|\phi\|_2^2+\left(|g(0)|+2L(\|x\|_2+\|x^*\|_2)\sqrt[4]{\sigma_w^{(4)}}\right)\|\phi\|_2$. Furthermore, for any $x_1, x_2\in \mathbb{R}^d$, we have
        \begin{align*}
            \|\E_{y_0}[F(x_1, Y_k)]&-\E_{y_0}[F(x_2, Y_k)]-(\E_{y_0'}[F(x_2, Y_k)]-\E_{y_0'}[F(x_2, Y_k'))\|_2\\
            &~~\leq \frac{LD^2\rho^k}{1-\rho}\|x_1-x_2\|_2\left(p_2(\phi_0)+p_2(\phi_0')\right)
        \end{align*}
        where $p_2(\phi)=2\|\phi\|_2^2+2\sqrt[4]{\sigma_w^{(4)}}\|\phi\|_2$.
    \end{enumerate}
\end{lemma}

\subsection{Proof of Proposition \ref{prop:LMS}}\label{sec:prop_LMS}
\begin{proof}
    \begin{enumerate}[(a)]
        \item 
        \begin{enumerate}[(1)]
            \item Recall $Y_k=(\phi_k, z_k)$. Then, we have
            \begin{align*}
                \|F(x_k, Y_k)\|_2&=\|\phi_k(z_k-g(\phi_k^Tx_k))\|_2\\
                &\leq \|\phi_k(g(\phi_k^Tx)-g(\phi_k^Tx^*)))\|_2+\|\phi_k(g(\phi_k^Tx^*)-z_k)\|_2\\
                &=A_1(Y_k)\|x_k-x^*\|_2+B_1(Y_k)
            \end{align*}
            where $A_1(Y_k)=L\|\phi_k\|^2$ and $B_1(Y_k)=\|\phi_k(g(\phi_k^Tx^*)-z_k)\|_2$.

            To show the moment bound on $A_1(Y_k)$ and $B_1(Y_k)$, we use Lemma \ref{lem:phi_bound} to obtain
            \begin{align*}
                \E_{y_0}[A_1^2(Y_k)]\leq \frac{L^2D^4(\|\phi_0\|_2^4+\sigma_w^{(4)})}{(1-\rho)^4}.
            \end{align*}
            Using Eq. \ref{eq:GLM}, we get
            \begin{align*}
                \E_{y_0}[B_1^2(Y_k)]&= \E_{y_0}[\|\phi_kv_k\|_2^2]\\
                &=\E_{y_0}[\|\phi_k\|_2^2]\sigma_v^{(2)}\\
                &\leq \sqrt{\E_{y_0}[\|\phi_k\|_4^2]}\sigma_v^{(2)}\\
                &\leq \frac{D^2\sigma_v^{(2)}\sqrt{(\|\phi_0\|_2^4+\sigma_w^{(4)})}}{(1-\rho)^2}.
            \end{align*}
 
            \item Using part (1), $\|F(x, Y)\|\leq A_1(Y)\|x-x^*\|_2+B_1(Y)$ where $A_1$ and $B_1$ are integral random variables as a consequence of Lemma \ref{lem:phi_bound}. Thus, $\|F(x, Y)\|_2$ is an integrable function of the random variable $Y$ and we have that $\bar{F}(x)=\E_{\tilde{Y}\sim \mu_Y}\left[\tilde{\phi}(g(\tilde{\phi}^Tx^*)-g(\tilde{\phi}^Tx))\right]$ exists and is finite.
            
            For the second part, we have
            \begin{align*}
                \langle x-x^*, \bar{F}(x)\rangle&=\left\langle x-x^*, \E_{\tilde{Y}\sim \mu_Y}\left[\tilde{\phi}(g(\tilde{\phi}^Tx^*)-g(\tilde{\phi}^Tx_1))\right]\right\rangle\\
                &=\E_{\tilde{Y}\sim \mu_Y}\left[(x-x^*)^T\tilde{\phi}(g(\tilde{\phi}^Tx^*)-g(\tilde{\phi}^Tx_1))\right]\\
                &\leq -\mu\E_{\tilde{Y}\sim \mu_Y}\left[\left(\phi^T(x-x^*)\right)^2\right]\tag{$\mu$-strongly monontone}\\
                &\leq -\mu\lambda_{min}^{\phi}\|x-x^*\|_2^2\tag{$\E_{\tilde{Y}\sim \mu_Y}\left[\phi\phi^T\right]=\Sigma_{\phi}^*$}.
            \end{align*}
             Note that the above also implies that $x^*$ is the unique solution to $\bar{F}(x)=0$. 
            % Note that $\E_{\tilde{Y}\sim \mu_Y}\left[\tilde{\phi}(\tilde{\phi}^Tx-\tilde{\phi}^Tx^*)\right]=\E_{\tilde{Y}\sim \mu_Y}[\tilde{\phi}\tilde{\phi}^T](x^*-x)$ which from the solution to the Lyapunov equation is given by $\bar{F}(x)=\Sigma^*_{\phi}(x^*-x)$. Uniqueness of $x^*$ comes from the fact that $\Sigma^*_{\phi}\succ 0$.
        \end{enumerate}
        \item From part (c) of Lemma \ref{lem:phi_bound}, the expected value of $F(x, Y_k)$ for two different initial states converges geometrically fast. Therefore, using Lemma \ref{lem:benveniste}, for any initial state $y_0$ and $x\in \mathbb{R}^d$ we obtain
        \begin{align*}
            \|\E_{y_0}[F(x, Y_k)]-\bar{F}(x)\|_2&\leq \frac{\rho^kD^2}{(1-\rho)^2}\left(p_1(\phi_0)+\E_{y_0}[p_1(\phi_1)]\right)
        \end{align*}
        where $\E_{y_0}[p_1(\phi_1)]<\infty$ due to Lemma \ref{lem:phi_bound}. Moreover, define $V_x(y)=\sum_{k=0}^{\infty}\left(\E_{y}[F(x, Y_k)]-\bar{F}(x)\right)$. Then $V_x(y)$ satisfies the Poisson equation \eqref{eq:Poisson_eq} for the Markov chain $\mathcal{M}_Y$.

        Again from part (c) of Lemma \ref{lem:phi_bound}, for any $x_1$ and $x_2$ in $\mathbb{R}^d$, the expected value of $F(x_1, \cdot)-F(x_2, \cdot)$ converges geometrically for two different starting states. Therefore, using Lemma \ref{lem:benveniste}, for any initial state $y$ we obtain
        \begin{align*}
            \|\E_{y}[F(x_1, Y_k)]-\E_{y}[F(x_2, Y_k)]-(\bar{F}(x_1)-\bar{F}(x_2))\|_2\leq \frac{LD^2\rho^k}{(1-\rho)^2}\|x_1-x_2\|_2\left(p_2(\phi)+\E_{y}[p_2(\phi_1)]\right).
        \end{align*}
        Recall that $\phi_1=M\phi+w_0$ and thus for any $l\geq 1$, $\|\phi_1\|_2^l\leq 2^{l-1}(\|M\|_2^l\|\phi\|_2^l+\|w_0\|_2^l)$. Using $\|M\|_2^l\leq D^l\rho^l$, we get
        \begin{align*}
            \E_{y}[p_2(\phi_1)]&\leq 4D^2\rho^2\|\phi\|_2^2+2\sqrt[4]{\sigma_w^{(4)}}D\rho\|\phi\|_2+6\sqrt{\sigma_w^{(4)}}<\infty.
        \end{align*}
        Combining the above relation gives us
        \begin{align*}
            \|V_{x_1}(y)-V_{x_2}(y)\|_2&= \left\|\sum_{k=0}^{\infty}\left(\E_{y}[F(x_1, Y_k)]-\bar{F}(x_1)\right)-\sum_{k=0}^{\infty}\left(\E_{y}[F(x_2, Y_k)]-\bar{F}(x_2)\right)\right\|_2\\
            &\leq \sum_{k=0}^\infty \frac{LD^2\rho^k}{(1-\rho)^2}\|x_1-x_2\|_2\left(p_2(\phi)+\E_{y}[p_2(\phi_1)]\right)\\
            &=A_2(y)\|x_1-x_2\|_2
        \end{align*}
        where $A_2(y)=LD^2(p_2(\phi)+\E_{y}[p_2(\phi_1)])/(1-\rho)^3$. This leads us to the following second moment bound on $A_2(y)$
        \begin{align*}
            \E_{y_0}[A_2^2(Y_k)]&\leq \frac{2L^2D^4}{(1-\rho)^6}\left(\E_{y_0}[p_2^2(\phi_k)]+\E_{y_0}[p_2^2(\phi_{k+1})]\right)\\
            &\leq \frac{2L^2D^4}{(1-\rho)^6}\left(8\E_{y_0}\left[\|\phi_k\|_2^4+\sqrt{\sigma_w^{(4)}}\|\phi_k\|_2^2\right]+8\E_{y_0}\left[\|\phi_{k+1}\|_2^4+\sqrt{\sigma_w^{(4)}}\|\phi_{k+1}\|_2^2\right]\right)\\
            &\leq \frac{32L^2D^4}{(1-\rho)^6}\left(\frac{D^4(\|\phi_0\|_2^4+\sigma_w^{(4)})}{(1-\rho)^4}+\frac{D^2\sqrt{\sigma_w^{(4)}}(\|\phi_0\|_2^2+\sqrt{\sigma_w^{(4)}})}{(1-\rho)^2} \right)\tag{Lemma \ref{lem:phi_bound}}\\
            &\leq \frac{32L^2D^8}{(1-\rho)^{10}}\left(\|\phi_0\|_2^4+\sqrt{\sigma_w^{(4)}}(\|\phi_0\|_2^2+ \sigma_w^{(4)} \right).
        \end{align*}
        Finally, $V_{x^*}(y)=\sum_{k=0}^{\infty}\left(\E_{y}[F(x^*, Y_k)]-\bar{F}(x^*)\right)=\sum_{k=0}^{\infty}\E_y[\phi_k\nu_k]=\phi(g(\phi^Tx^*)-z)$, where the rest of the terms are zero due to independence. Thus, $B_2(y)=B_1(y)=\|\phi(g(\phi^Tx^*)-z)\|_2$ whose second moment is bounded by
        \begin{align*}
            \E_{y_0}[B_2^2(Y_k)]&\leq \frac{D^2\sigma_v^{(2)}\sqrt{(\|\phi_0\|_2^4+\sigma_w^{(4)})}}{(1-\rho)^2}.
        \end{align*}
    \end{enumerate}
\end{proof}

\subsection{Proof of Theorem \ref{thm:LMS}}\label{sec:thm_LMS}
Since there is no martingale noise in the algorithm $A_3=B_3=0$. From Proposition \ref{prop:LMS}, we have $\hat{A}_1^2(y_0)=L^2D^4(\|\phi_0\|_2^4+\sigma_w^{(4)})/(1-\rho)^4$ and $\hat{A}_2^2(y_0)=32L^2D^8\left(\|\phi_0\|_2^4+\sqrt{\sigma_w^{(4)}}(\|\phi_0\|_2^2+ \sigma_w^{(4)} \right)/(1-\rho)^{10}$ which gives 
    \begin{align*}
        \hat{A}(y_0)=\hat{A}_1^2(y_0)+\hat{A}_2^2(y_0)+A_3^2\leq \frac{33L^2D^8}{(1-\rho)^{10}}\left(\|\phi_0\|_2^4+\sqrt{\sigma_w^{(4)}}(\|\phi_0\|_2^2+ \sigma_w^{(4)} \right).
    \end{align*}
    Next, $\hat{B}_2^2(y_0)=\hat{B}_1^2(y_0)=D^2\sigma_v^{(2)}\sqrt{(\|\phi_0\|_2^4+\sigma_w^{(4)})}/(1-\rho)^2$. Thus,
    \begin{align*}
        \hat{B}(y_0)^2&=\hat{B}_1^2(y_0)+\hat{B}_2^2(y_0)+B_3^2\\
        &=\frac{2D^2\sigma_v^{(2)}\sqrt{(\|\phi_0\|_2^4+\sigma_w^{(4)})}}{(1-\rho)^2}.
    \end{align*}
    Let $\max_{x\in \mathcal{X}}\|x\|_2\leq M/2$. Then, $\hat{C}_L(z_0, \phi_0)=\hat{C}(y_0)=\hat{A}(y_0)M^2+\hat{B}(y_0)$. Since $\|\cdot\|_c=\|\cdot\|_s=\|\cdot\|_2$, we have
    \begin{align*}
        \varphi_{1}=\frac{uL_su_{2s}u^2_{cs}}{l_{2s}}=1;~\varphi_{L, 0}=\|x_0-x^*\|_2^2+2\hat{C}_L(z_0, \phi_0).
    \end{align*}

\subsection{Proof of Lemma \ref{lem:phi_bound}}
\begin{proof}
\begin{enumerate}[(a)]
    \item It is straightforward to see that opening the recursive Markov process $\{\phi_k\}_{k\geq 0}$ from time-instant $0$ to $k$, we get
    \begin{align*}
        \phi_k&=M^k\phi_0+\sum_{j=0}^{k-1}M^{k-j}w_j.
    \end{align*}
    \item Taking norm both sides of the expression for $\phi_k$ and using triangle inequality leads to
    \begin{align*}
        \|\phi_k\|_2&\leq \|M^k\|_2\|\phi_0\|_2+\sum_{j=0}^{k-1}\|M^{k-1-j}\|_2\|w_j\|_2\\
        &\leq D\rho^k\|\phi_0\|_2+\sum_{j=0}^{k-1}D\rho^{k-1-j}\|w_j\|_2\\
        &\leq \frac{D(1-\rho^{k+1})}{1-\rho}\left(\frac{(1-\rho)\rho^k}{(1-\rho^{k+1})}\|\phi_0\|_2+\sum_{j=0}^{k-1}\frac{(1-\rho)\rho^{k-1-j}}{(1-\rho^{k+1})}\|w_j\|_2\right).
    \end{align*}
    Note that $(1-\rho)\rho^{k-j}/(1-\rho^{k+1})$ for $0\leq j\leq k$ forms a probability distribution, thus taking fourth power both sides and applying Jensen's inequality, we get
    \begin{align*}
        \|\phi_k\|_2^4&\leq \frac{D^4(1-\rho^{k+1})^4}{(1-\rho)^4}\left(\frac{(1-\rho)\rho^k}{(1-\rho^{k+1})}\|\phi_0\|_2^4+\sum_{j=0}^{k-1}\frac{(1-\rho)\rho^{k-1-j}}{(1-\rho^{k+1})}\|w_j\|_2^4\right)\\
        &\leq \frac{D^4}{(1-\rho)^3}\left(\rho^k\|\phi_0\|_2^4+\sum_{j=0}^{k-1}\rho^{k-1-j}\|w_j\|_2^4\right).
    \end{align*}
    Taking expectation on both sides conditioned on the initial state $y_0=(\phi_0, z_0)$, we obtain
    \begin{align*}
        \E_{y_0}[\|\phi_k\|_2^4]
        &\leq \frac{D^4}{(1-\rho)^3}\left(\rho^k\|\phi_0\|_2^4+\sum_{j=0}^{k-1}\rho^{k-1-j}\E[\|w_j\|_2^4]\right)\\
        &=\frac{D^4}{(1-\rho)^3}\left(\rho^k\|\phi_0\|_2^4+\sigma_4\sum_{j=0}^{k-1}\rho^{k-1-j}\right)\tag{Assumption \ref{assump:auto-reg}}\\
        &\leq \frac{D^4(\|\phi_0\|_2^4+\sigma_4)}{(1-\rho)^4}.
    \end{align*}
    \item From the expression for $F(x, Y)$, we have
        \begin{align*}
            \|\E_{y_0}[F(x, Y_k)]-\E_{y_0'}[F(x, Y_k')]\|_2&=\|\E_{y_0}[\phi_k(g(\phi_k^Tx)-z_k)]-\E_{y_0'}[\phi_k'(g({\phi_k'}^Tx)-z_k')]\|_2
        \end{align*}
        Using part (a), we know that for any two independent auto-regressive processes, we have
        \begin{align}\label{eq:diff_phi}
            \E_{y_0}[\phi_kg(\phi_k^Tx)]-\E_{y_0'}[\phi_k'g({\phi_k'}^Tx)]&=\E\Bigg[\left(M^k\phi_0+\sum_{j=0}^{k-1}M^{k-j}w_j\right)g\left(\left(M^k\phi_0+\sum_{j=0}^{k-1}M^{k-j}w_j\right)^Tx\right)\Bigg]\nonumber\\
            &~~-\E\Bigg[\left(M^k\phi_0'+\sum_{j=0}^{k-1}M^{k-j}w_j'\right)g\left(\left(M^k\phi_0'+\sum_{j=0}^{k-1}M^{k-j}w_j'\right)^Tx\right)\Bigg]
        \end{align}
        where the expectation is now over $w_j$ and $w_j'$. Since both of them are independent, we couple the terms inside the expectation such that $w_j=w_j'$ and use the bound \eqref{eq:lipschitz_phig} which leads to
        \begin{align*}
            \|\E_{y_0}[\phi_kg(\phi_k^Tx)]-\E_{y_0'}[\phi_k'g({\phi_k'}^Tx)]\|_2&\leq 
            \|M^k\|_2\|\phi_0-\phi_0'\|_2(|g(0)|+L\|x\|_2(\E_{y_0}[\|\phi_k\|_2]+\E_{y_0'}[\|\phi_k'\|_2])).
        \end{align*}
        Using $\|M^k\|_2\leq D\rho^k$ and part (b) with Jensen's inequality, we get
        \begin{align*}
            \|\E_{y_0}[\phi_kg(\phi_k^Tx)]-\E_{y_0'}[\phi_k'g({\phi_k'}^Tx)]\|_2&\leq\frac{D^2\rho^k\|\phi_0-\phi_0'\|_2}{1-\rho}\left(|g(0)|+L\|x\|_2\left(\sqrt[4]{(\|\phi_0\|_2^4+\sigma_w^{(4)})}+\sqrt[4]{(\|\phi_0'\|_2^4+\sigma_w^{(4)})}\right)\right)\\
            &\leq \frac{D^2\rho^k\|\phi_0-\phi_0'\|_2}{1-\rho}\left(|g(0)|+L\|x\|_2\left(\|\phi_0\|_2+\|\phi_0'\|_2+2\sqrt[4]{\sigma_w^{(4)}}\right)\right).
        \end{align*}
        Recall that $z_k=g(\phi_k^Tx^*)+\nu_k$ where $\nu_k$ is zero-mean independent noise. Thus,
        \begin{align*}
            \E_{y_0}[\phi_kz_k]-\E_{y_0'}[\phi_k'z_k']&=\E_{y_0}[\phi_k(g(\phi_k^Tx^*)+\nu_k)]-\E_{y_0'}[\phi_k'(g({\phi_k'}^Tx^*)+\nu_k')]\\
            &=\E_{y_0}[\phi_kg(\phi_k^Tx^*)]-\E_{y_0'}[\phi_k'g({\phi_k'}^Tx^*)].
        \end{align*}
        Thus, we get
        \begin{align*}
            \|\E_{y_0}[\phi_kz_k]-\E_{y_0'}[\phi_k'z_k']\|_2\leq \frac{D^2\rho^k\|\phi_0-\phi_0'\|_2}{1-\rho}\left(|g(0)|+L\|x^*\|_2\left(\|\phi_0\|_2+\|\phi_0'\|_2+2\sqrt[4]{\sigma_w^{(4)}}\right)\right).
        \end{align*}
        Combining the above bounds, we get 
        \begin{align*}
            \|\E_{y_0}[F(x, Y_k)]-\E_{y_0'}[&F(x, Y_k')]\|_2\leq \|\E_{y_0}[\phi_kg(\phi_k^Tx)]-\E_{y_0'}[\phi_k'(g({\phi_k'}^Tx)]\|_2+
            \|\E_{y_0}[\phi_kz_k]-\E_{y_0'}[\phi_k'z_k']\|_2\\
            &\leq \frac{\rho^kD^2(\|\phi_0\|_2+\|\phi_0'\|_2)}{1-\rho}\left(|g(0)|+L(\|x\|_2+\|x^*\|_2)\left(\|\phi_0\|_2+\|\phi_0'\|_2+2\sqrt[4]{\sigma_w^{(4)}}\right)\right)\\
            &\leq \frac{\rho^kD^2}{1-\rho}\left(p_1(\phi_0)+p_1(\phi_0')\right).
        \end{align*}
        For the second part of the claim, first note that
        \begin{align*}
            \E_{y_0}[F(x_1, Y_k)]-\E_{y_0}[F(x_2, Y_k)]&=\E_{y_0}[\phi_k(z_k-g(\phi_k^Tx_1)]-\E_{y_0}[\phi_k(z_k-g(\phi_k^Tx_2)]\\
            &=\E_{y_0}[\phi_kg(\phi_k^Tx_2)]-\E_{y_0}[\phi_kg(\phi_k^Tx_1)].
        \end{align*}
        Therefore,
        \begin{align*}
            \|\E_{y_0}[F(x_1, Y_k)]&-\E_{y_0}[F(x_2, Y_k)]-(\E_{y_0'}[F(x_1, Y_k)]-\E_{y_0'}[F(x_2, Y_k'))]\|_2\\
            &~~=\|\E_{y_0}[\phi_kg(\phi_k^Tx_2)]-\E_{y_0}[\phi_kg(\phi_k^Tx_1)]-(\E_{y_0'}[\phi_k'g({\phi_k'}^Tx_2)]-\E_{y_0'}[\phi_k'g({\phi_k'}^Tx_1)])\|_2
        \end{align*}
        Using Eq. \eqref{eq:diff_phi}, we note that for $i=1, 2$
        \begin{align*}
            \E_{y_0}[\phi_kg(\phi_k^Tx_i)]-\E_{y_0'}[\phi_k'g({\phi_k'}^Tx_i)]&=\E\Bigg[\left(M^k\phi_0+\sum_{j=0}^{k-1}M^{k-j}w_j\right)g\left(\left(M^k\phi_0+\sum_{j=0}^{k-1}M^{k-j}w_j\right)^Tx_i\right)\Bigg]\nonumber\\
            &~~-\E\Bigg[\left(M^k\phi_0'+\sum_{j=0}^{k-1}M^{k-j}w_j'\right)g\left(\left(M^k\phi_0'+\sum_{j=0}^{k-1}M^{k-j}w_j'\right)^Tx_i\right)\Bigg]
        \end{align*}
        Again, since both $w_j$ and $w_j'$ are independent, we couple the terms inside the expectation such that $w_j=w_j'$ and use the bound \eqref{eq:lipschitz_phig_strong} which leads to
        \begin{align*}
            \|\E_{y_0}[F(x_1, Y_k)]&-\E_{y_0}[F(x_2, Y_k)]-(\E_{y_0'}[F(x_1, Y_k)]-\E_{y_0'}[F(x_2, Y_k'))]\|_2\\
            &~~\leq L\|M^k\|_2\|\phi_0-\phi_0'\|_2\|x_1-x_2\|_2\left(\E_{y_0}[\|\phi_k\|_2]+\E_{y_0'}[\|\phi_k'\|_2]\right).
        \end{align*}
        Using $\|M^k\|_2\leq D\rho^k$ and part (b) with Jensen's inequality, we get
        \begin{align*}
            \|\E_{y_0}[F(x_1, Y_k)]&-\E_{y_0}[F(x_2, Y_k)]-(\E_{y_0'}[F(x_1, Y_k)]-\E_{y_0'}[F(x_2, Y_k'))]\|_2\\
            &~~\leq  \frac{LD^2\rho^k\|\phi_0-\phi_0'\|_2}{1-\rho}\|x_1-x_2\|_2\left(\sqrt[4]{\|\phi_0\|_2^4+\sigma_w^{(4)}}+\sqrt[4]{\|\phi_0'\|_2^4+\sigma_w^{(4)}}\right)\\
            &~~\leq \frac{D^2\rho^k(\|\phi_0\|_2+\|\phi_0'\|_2)}{1-\rho}\|x_1-x_2\|_2\left(\|\phi_0\|_2+\|\phi_0'\|_2+2\sqrt[4]{\sigma_w^{(4)}}\right)\\
            &~~= \frac{LD^2\rho^k}{1-\rho}\|x_1-x_2\|_2\Bigg((\|\phi_0\|_2+\|\phi_0'\|_2)^2+2\sqrt[4]{\sigma_w^{(4)}}(\|\phi_0\|_2+\|\phi_0'\|_2)\Bigg)
        \end{align*}
        Using $(a+b)^2\leq 2a^2+2b^2$, we finally get
        \begin{align*}
            \|\E_{y_0}[F(x_1, Y_k)]&-\E_{y_0}[F(x_2, Y_k)]-(\E_{y_0'}[F(x_1, Y_k)]-\E_{y_0'}[F(x_2, Y_k'))]\|_2\\
            &~~\leq \frac{LD^2\rho^k}{1-\rho}\|x_1-x_2\|_2\left(p_2(\phi_0)+p_2(\phi_0')\right)
        \end{align*}
\end{enumerate}
\end{proof}

\section{Proof of technical results in Section \ref{sec:Q-learning}}
\subsection{Proof of Proposition \ref{prop:Q-learning}}\label{appendix:Q-learning_prop}
\begin{proof}
    \begin{enumerate}[(a)]
        \item \begin{enumerate}[(1)]
            \item Recall that $Q^*(s,a)=\mathcal{R}(s,a)+\gamma\sum_{s'\in \mathcal{S}}P(s'|s,a)\max_{a'\in \mathcal{A}}Q^*(s', a')$. Thus, we have
            \begin{align*}
                \|F(Q, y)\|_{\infty}&\leq \left\|\gamma\sum_{s'\in \mathcal{S}}P(s'|s,a)\left(\max_{a'\in \mathcal{A}}Q(s', a')-\max_{a'\in \mathcal{A}}Q^*(s', a')\right)-Q(s,a)+Q^*(s,a)\right\|_{\infty}+\|Q^*(s,a)\|_{\infty}\\
                &\leq 2\|Q-Q^*\|_{\infty}+\|Q^*\|_{\infty}.\\
            \end{align*}
            Similarly, for any $Q_1$ and $Q_2$ and $y\in \mathcal{Y}$ we have
            \begin{align*}
                 \|F(Q_1, y)-F(Q_2, y)\|_{\infty}&\leq \left\|\gamma\sum_{s'\in \mathcal{S}}P(s'|s,a)\left(\max_{a'\in \mathcal{A}}Q_1(s', a')-\max_{a'\in \mathcal{A}}Q_2(s', a')\right)-Q_1(s,a)+Q_2(s,a)\right\|_{\infty}\\
                 &\leq 2\|Q_1-Q_2\|_{\infty}.
            \end{align*}
            \item Using the Markov property, we have for any $Q\in \mathbb{R}^{|\mathcal{S}||\mathcal{A}|}$ and $(s,a)$:
            \begin{align*}
                \E_{S_k\sim \mu_b}[F(Q, &(S_k, A_k))(s,a)]\\
                &=\E_{S_k\sim \mu_b}\left[\mathbbm{1}\{S_k=s, A_k=a\}\left(\mathcal{R}(s,a)+\gamma\sum_{s'\in \mathcal{S}}P(s'|s,a)\max_{a'\in \mathcal{A}}Q(s', a')-Q(s,a)\right)\right]\\
                &=\mu_b(s)\pi_b(a|s)(\mathcal{B}(Q)(s,a)-Q(s,a)).
            \end{align*}
            Thus, $\bar{F}(Q)=\Lambda(\mathcal{B}(Q)-Q)$.
            \item Since $Q^*$ is the solution to the Bellman equation, we have
            \begin{align*}
                \bar{F}(Q^*)&=\Lambda(\mathcal{B}(Q^*)-Q^*)\\
                &=0.
            \end{align*}
            Hence, $Q^*$ is a solution to the equation $\bar{F}(Q^*)=0$. The uniqueness of the solution is immediate from the fact that $\mu_b(s)\pi_b(a|s)>0$ and $Q^*$ is the unique solution to $\mathcal{B}(Q)-Q=0$.
        \end{enumerate}
        \item Fix a state $y_0=(s_0, a_0)\in \mathcal{Y}$ and define $\tau=\min\{n>0:Y_n=y_0\}$ and $\E_{y}[\cdot]=\E[\cdot|Y_0=y]$, then for all $y\in \mathcal{Y}$
        \begin{align*}
            V_Q(y)=\E_y\left[\sum_{n=0}^{\tau-1}\left(F(Q, Y_n)-\bar{F}(Q)\right)\right]
        \end{align*}
        is a solution to the Poisson equation (Lemma 4.2 and Theorem 4.2 of Section VI.4, pp. 85-91,
       of \cite{borkar1991topics}). Thus, we have
        \begin{align*}
            \|V_{Q_1}(y)-V_{Q_2}(y)\|_{\infty}&= \left\|\E_y\left[\sum_{n=0}^{\tau-1}\left(F(Q_1, Y_n)-F(Q_2, Y_n)-(\bar{F}(Q_1)-\bar{F}(Q_2))\right)\right]\right\|_{\infty}\\
            &\leq \E_y\left[\sum_{n=0}^{\tau-1}\left(\|F(Q_1, Y_n)-F(Q_2, Y_n)\|_{\infty}+\|\bar{F}(Q_1)-\bar{F}(Q_2)\|_{\infty}\right)\right]\\
            &\leq \E_y\left[\sum_{n=0}^{\tau-1}\left(2\|Q_1-Q_2\|_{\infty}+\|\Lambda(\mathcal{B}(Q_1)-(\mathcal{B}(Q_2))\|_{\infty}+\|\Lambda(Q_1-Q_2)\|_{\infty}\right)\right]\tag{Using property 1 and 2 from part (a)}\\
            &\leq 4\|Q_1-Q_2\|_{\infty}\E_y[\tau]\\
            &\leq 4\tau_{y_0}\|Q_1-Q_2\|_{\infty}.
        \end{align*}
        Furthermore, since $Q^*$ solves the Bellman equation, for all $y\in \mathcal{Y}$ we have
        \begin{align*}
            V_{Q^*}(y)=0.
        \end{align*}
        \item Define $\mathcal{F}_k=\{Q_0, Y_0, \dots, Q_{k-1}, Y_{k-1}, Q_k, Y_k\}$. Then due to the Markov property, we have
        \begin{align*}
            \E[M_k(Q_k)|\mathcal{F}_k]=0.
        \end{align*}
        Furthermore, 
        \begin{align*}
            \|M_k(Q_k)\|_{\infty}&\leq \gamma\max_{s,a}\left(\left|\max_{a'\in \mathcal{A}}Q_k(S_{k+1}, a')-\sum_{s'\in \mathcal{S}}P(s'|S_k,A_k)\max_{a'\in \mathcal{A}}Q_k(s', a')\right|\right)\\
            &\leq 2\|Q_k\|_{\infty}\\
            &\leq 2\left(\|Q_k-Q^*\|_{\infty}+\|Q^*\|_{\infty}\right).
        \end{align*}
    \end{enumerate}
\end{proof}

\subsection{Proof of Theorem \ref{thm:Q-learning}}\label{appendix:Q-learning_thm}
\begin{proof}
    To identify the constants $A_Q$, $B_Q$ and $\eta_Q$, we use Lemma \ref{prop:Q-learning} to get
    \begin{align*}
        A_{Q}=(A_1+A_3+1)^2=25;~~B_{Q}=\left(B_1+B_3+\frac{B_2}{A_2}\right)^2=9\|Q^*\|_{\infty}^2;~~\eta_{Q}=(1-\gamma)\Lambda_{min}.
    \end{align*}
    Since $\eta_Q\leq 1$, we have $\frac{B_Q}{\eta_Q}\geq B_Q$. Furthermore, from Lemma \ref{lem:moreau_prop} we get $\varrho_{Q,1}$ as follows:
    \begin{align*}
        \varrho_{Q,1}&=uL_su_{cs}^2A_2=\frac{2(1+\omega)}{\omega}(p-1)\left(|\mathcal{S}||\mathcal{A}|\right)^{2/p}4\tau_{y_0}\\
        &\leq \frac{(1+\omega)}{\omega}16e\tau_{y_0}\log \left(|\mathcal{S}||\mathcal{A}|\right)\\
        &\leq \frac{32e\tau_{y_0}\log \left(|\mathcal{S}||\mathcal{A}|\right)}{(1-\gamma)\Lambda_{min}}.
    \end{align*}
    \begin{align*}
        \varrho_{Q,0}&=\frac{2(1+\omega)(1+50\varrho_{Q,1})}{(1+\omega/\sqrt{e})}\|Q_0-Q^*\|_c^2+36\|Q^*\|_{\infty}^2\varrho_{Q,1}\\
        &\leq 4(1+50\varrho_{Q,1})\|Q_0-Q^*\|_c^2+36\|Q^*\|_{\infty}^2\varrho_{Q,1}.
    \end{align*}
\end{proof}

\subsection{Sample complexity for Q-Learning }\label{cor:Q_sample}
To find an estimate $Q$, such that $\E[\|Q-Q^*\|_{\infty}]\leq \epsilon$, we need
\begin{align*}
    \frac{58B_Q\varrho_{Q,1}\alpha}{\eta_Q}&\leq \frac{\epsilon^2}{2}\\
    \implies \alpha&\leq \mathcal{O}\left(\frac{\eta_Q}{\|Q^*\|_\infty^2\varrho_{Q,1}}\right).
\end{align*}
Using this bound on $\alpha$, we have
\begin{align*}
    \varrho_{Q,0}\exp\left(\frac{-\eta_Q\alpha k}{2}\right)&\leq \frac{\epsilon^2}{2}\\
    \implies k&\leq \mathcal{O}\left(\frac{1}{\alpha\eta_Q}\log (1/\epsilon)\right).
\end{align*}
Since $\varrho_{Q,1}\leq 32e\tau_{y_0}\log \left(|\mathcal{S}||\mathcal{A}|\right)/((1-\gamma)\Lambda_{min})$ and $\|Q^*\|_\infty^2\leq \mathcal{O}\left(1/(1-\gamma)^2\right)$, we have
\begin{align*}
    k\leq \mathcal{O}\left(\frac{\log (1/\epsilon)}{\epsilon^2}\right)\mathcal{O}\left(\frac{1}{(1-\gamma)^5}\right)\mathcal{O}\left(\Lambda_{min}^{-3}\right).
\end{align*}

\section{Proof of technical results in Section \ref{sec:opt}}\label{appendix:C}
\subsection{Proof of Proposition \ref{prop:opt}}\label{appendix:opt}
\begin{proof}
    \begin{enumerate}[(a)]
        \item Using $L$-smoothness of the $f(x)$ and the fact that $\nabla f(x^*)=0$, we get
        \begin{align*}
           \|-U_{i}\nabla_{i} f(x)\|_2&=\|-U_{i}U_i^T\nabla f(x)-U_{i}U_{i}^T\nabla f(x^*)\|_2\\
           &\leq L\|U_{i}U_{i}^T\|_2\|x-x^*\|_2\\
           &=L\|x-x^*\|_2\tag{$\|U_{i}U_{i}^T\|_2=1$}.
        \end{align*}
        \item Note that $\E_{i}[G(Y)]=G(i\Mod p+1)$ for any $G:\mathcal{S}\to\mathbb{R}^d$. Thus, we can write Poisson equation as
        \begin{align*}
            V_x(i)=-U_{i}U_i^T\nabla f(x)+V_x(i\Mod p+1)+\frac{1}{p}\nabla f(x).
        \end{align*}
        Set $V_x(1)=0$, then we have
        \begin{align*}
            V_x(i)=-\sum_{k=0}^{p-i}\left(U_{k+1}U_{k+1}^T\nabla f(x)-\frac{1}{p}\nabla f(x)\right),~\forall i\in\mathcal{S}/\{1\}
        \end{align*}
        Note that  $U_kU_k^T$ is a also diagonal matrix. Hence, $\sum_{k=0}^{p-i}\left(U_{k+1}U_{k+1}^T-\frac{1}{p}I_d\right)$ is a diagonal matrix with entries $\frac{i-1}{p}$ in the first $i$ places and $-\frac{1}{p}$ in the remaining places. This implies that $\left\|\sum_{k=0}^{p-i}\left(U_{k+1}U_{k+1}^T-\frac{1}{p}I_d\right)\right\|_2\leq \frac{i-1}{p}\leq 1$ . Thus, for all $i\in\mathcal{S}$, we have
        \begin{align*}
            \|V_x(i)-V_y(i)\|_2 &= \left\|\sum_{k=0}^{p-i}U_{k+1}U_{k+1}^T\nabla f(x)-\frac{1}{p}\nabla f(x)-U_{k+1}U_{k+1}^T\nabla f(y)+\frac{1}{p}\nabla f(y)\right\|_2\\
            &\leq \left\|\sum_{k=0}^{p-i}\left(U_{k+1}U_{k+1}^T-\frac{1}{p}I_d\right)\right\|_2\|\nabla f(x)-\nabla f(y)\|_2\\
            &\leq \|\nabla f(x)-\nabla f(y)\|_2\\
            &\leq L\|x-y\|_2\tag{Smoothness of $f(x)$}
        \end{align*}
        Additionally, $\nabla f(x^*)=0$ implies $V_{x^*}(i)=0,~\forall i\in\mathcal{S}$.
        \item Using Assumption \ref{assump:noise_opt}, we have
        \begin{align*}
            \E[M_k|\mathcal{F}_k]&=\E[U_{i(k)}w_k|\mathcal{F}_k]\\
            &=U_{i(k)}\E[w_k|\mathcal{F}_k]\tag{$U_{i(k)}$ is deterministic}\\
            &=0.
        \end{align*}
        Furthermore,
        \begin{align*}
            \|M_k\|_2&\leq \|U_{i(k)}w_k\|_2\\
            &\leq \|w_k\|_2\tag{$\|U_i\|_2=1$}\\
            &\leq C_1\|x_k-x^*\|_2+C_2.
        \end{align*}
    \end{enumerate}
\end{proof}

\subsection{Proof of Theorem \ref{thm:opt}}\label{appendix:opt_thm}
From Proposition \ref{prop:opt}, we have
\begin{align*}
    A_G=(L+C_1+1)^2;&~~B_G=C_2^2;\\
    \varrho_{G,0}=2(1+2(L+C_1+1)^2\max\{L,1\})\|x_0-x^*\|_c^2&+4C_2^2\max\{L,1\};~~\varrho_{G,1}=\max\{L,1\}.
\end{align*}

\section{Auxiliary Lemmas}\label{appendix:D}

\begin{lemma}\label{lem:step-size_prop}
The step-size sequence in Assumption \ref{assump:step-size} has following properties:
    \begin{align}
        \alpha_k\leq \alpha_{k-1};~~\alpha_{k-1}\leq 2\alpha_k;~~\alpha_{k-1}-\alpha_{k}\leq \frac{2\xi}{\alpha}\alpha_k^2
    \end{align}
\end{lemma}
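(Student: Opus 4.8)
The statement is Lemma~\ref{lem:step-size_prop}, which collects three elementary monotonicity and smoothness properties of the step-size sequence $\alpha_k = \alpha/(k+K)^\xi$ with $\alpha>0$, $K\geq 2$, $\xi\in[0,1]$. The plan is to verify each of the three inequalities directly from the closed form, treating $\xi=0$ as a trivial boundary case (constant step size) and $\xi\in(0,1]$ via monotonicity of the map $t\mapsto t^\xi$ and a one-line mean value estimate.

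First I would prove $\alpha_k\leq\alpha_{k-1}$: since $k+K > k-1+K \geq 0$ and $t\mapsto t^\xi$ is nondecreasing on $[0,\infty)$ for $\xi\geq 0$, we get $(k+K)^\xi \geq (k-1+K)^\xi$, hence $\alpha_k = \alpha/(k+K)^\xi \leq \alpha/(k-1+K)^\xi = \alpha_{k-1}$. Next, for $\alpha_{k-1}\leq 2\alpha_k$ it suffices to show $(k+K)^\xi \leq 2(k-1+K)^\xi$, i.e. $\left(\frac{k+K}{k-1+K}\right)^\xi \leq 2$. Since $k+K\leq 2(k-1+K)$ whenever $k-1+K\geq K \geq 2$ (indeed $k+K \leq 2k+2K-2 \iff k+K\geq 2 \iff$ true as $K\geq 2$), and $t\mapsto t^\xi$ is nondecreasing with $\xi\leq 1$, we have $\left(\frac{k+K}{k-1+K}\right)^\xi \leq 2^\xi \leq 2$.

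For the third inequality $\alpha_{k-1}-\alpha_k \leq \frac{2\xi}{\alpha}\alpha_k^2$, I would apply the mean value theorem to $h(t) = \alpha t^{-\xi}$ on $[k-1+K, k+K]$: there is $c\in(k-1+K,k+K)$ with $\alpha_{k-1}-\alpha_k = h(k-1+K)-h(k+K) = -h'(c) = \alpha\xi\, c^{-\xi-1}$. Since $c > k-1+K \geq \tfrac12(k+K)$ (using $k+K \leq 2(k-1+K)$ as above), we get $c^{-\xi-1} \leq 2^{\xi+1}(k+K)^{-\xi-1} \leq 4 (k+K)^{-\xi-1}$, but a cleaner route is to note $c^{-\xi-1} < (k-1+K)^{-\xi-1}$ and then bound $(k-1+K)^{-\xi-1} \leq 2^{\xi+1}(k+K)^{-\xi-1}$; combined with $\alpha_k^2 = \alpha^2 (k+K)^{-2\xi}$ and $(k+K)^{-\xi-1} \leq (k+K)^{-2\xi}\cdot(k+K)^{\xi-1} \leq (k+K)^{-2\xi}$ when $\xi\leq 1$, this yields $\alpha_{k-1}-\alpha_k \leq \alpha\xi\cdot 2 \cdot(k+K)^{-\xi-1} \leq \frac{2\xi}{\alpha}\alpha_k^2$ after absorbing the constant $\alpha^2$ appropriately — I would double-check the constant chase here, as it is the only place a stray factor could creep in. The $\xi=0$ case makes all three statements equalities or trivialities ($\alpha_k\equiv\alpha$), so no separate argument is needed.

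The main (and only) obstacle is purely bookkeeping: making sure the constant in the third inequality comes out as exactly $2\xi/\alpha$ rather than something slightly larger, which forces a careful choice of whether to bound $c$ below by $k-1+K$ or by $\tfrac12(k+K)$ and how to split the exponent $-\xi-1$. Everything else is immediate from monotonicity of power functions and the hypothesis $K\geq 2$.
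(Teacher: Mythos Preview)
Your approach is essentially identical to the paper's: monotonicity of $t\mapsto t^\xi$ for the first two inequalities, and a mean-value/Taylor argument for the third. Your proofs of $\alpha_k\le\alpha_{k-1}$ and $\alpha_{k-1}\le 2\alpha_k$ are correct and match the paper's.

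Your worry about the constant in the third inequality is well founded, and it is the one genuine gap. From the MVT you correctly obtain $\alpha_{k-1}-\alpha_k=\alpha\xi\,c^{-\xi-1}$ with $c\in(k-1+K,\,k+K)$, and bounding $c>k-1+K\ge\tfrac12(k+K)$ gives $c^{-\xi-1}\le 2^{\xi+1}(k+K)^{-\xi-1}$; you then silently replace $2^{\xi+1}$ by $2$ when writing the final line, which is where the factor is lost (for $\xi=1$ this is off by $2$). The paper's own proof commits the same slip at its last step: it reaches $\alpha_{k-1}-\alpha_k\le\frac{\xi}{\alpha}\alpha_{k-1}^2$ and then invokes $\alpha_{k-1}\le 2\alpha_k$, but squaring that only gives $\alpha_{k-1}^2\le 4\alpha_k^2$, i.e.\ $\tfrac{4\xi}{\alpha}\alpha_k^2$ rather than the stated $\tfrac{2\xi}{\alpha}\alpha_k^2$. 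The lemma is nonetheless true with the constant $2\xi/\alpha$; the clean fix is to bypass the MVT and use Bernoulli's inequality $(1+y)^\xi\le 1+\xi y$ for $\xi\in[0,1]$, $y\ge 0$:
\[
\frac{\alpha_{k-1}}{\alpha_k}-1=\Bigl(1+\tfrac{1}{k+K-1}\Bigr)^{\xi}-1\;\le\;\frac{\xi}{k+K-1}\;\le\;\frac{2\xi}{k+K}\;\le\;\frac{2\xi}{(k+K)^\xi}\;=\;\frac{2\xi}{\alpha}\,\alpha_k,
\]
where the middle inequality uses $k+K\le 2(k+K-1)$ (from $K\ge 2$) and the last uses $\xi\le 1$. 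Multiplying through by $\alpha_k$ gives $\alpha_{k-1}-\alpha_k\le\tfrac{2\xi}{\alpha}\alpha_k^2$ directly.
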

\begin{proof}
    The step-size is non-decreasing by construction. Now consider the ratio 
    \begin{align*}
        \frac{\alpha_{k-1}}{\alpha_k}&=\left(\frac{K+k}{K+k-1}\right)^{\xi}\\
        &=\left(1+\frac{1}{K+k-1}\right)^{\xi}.
    \end{align*}
    Since $k\geq 1$ we have $\frac{1}{K+k-1}\leq \frac{1}{K}\leq 1$. Putting together with the expression above, we get
    \begin{align*}
        \frac{\alpha_{k-1}}{\alpha_k}\leq 2^\xi\leq 2.\tag{$\xi\leq 1$}
    \end{align*}
    For the final part, consider the function $f(x)=\frac{1}{(k+x)^\xi}$ for $x\in [0,1]$ and $k\geq 1$. Using Taylor's series expansion, there exists $z\in [x, 1]$ such that we have
    \begin{align*}
        f(1)&=f(x)+(1-x)f'(z)\\
        &=f(x)-\frac{(1-x)\xi}{(k+z)^{1+\xi}}\\
        f(x)-f(1)&= \frac{(1-x)\xi}{(k+z)^{1+\xi}}\\
        &\leq \frac{(1-x)\xi}{k^{1+\xi}}\tag{$z\geq 0$}\\
        &\leq \frac{(1-x)\xi}{k^{2\xi}}.\tag{$\xi\leq 1$}
    \end{align*}
    Substituting $x=0$, we have $\forall~k\geq 1$
    \begin{align*}
        \frac{1}{k^\xi}-\frac{1}{(k+1)^\xi}&\leq \frac{\xi}{k^{2\xi}}\\
        \implies \alpha_{k-1}-\alpha_k&\leq \frac{\alpha\xi}{(k+K-1)^{2\xi}}\leq \frac{2\xi}{\alpha}\alpha_k^2.\tag{$\alpha_{k-1}\leq 2\alpha_k$}
    \end{align*}
\end{proof}

\begin{lemma}\label{lem:rank_lemma}
    Consider the following infinite set of equations:
    \begin{align}\label{eq:infinite_eq}
        \begin{split}
            a_{11}x_1+a_{12}x_2+...+a_{1d}x_d&=0\\
            a_{21}x_1+a_{22}x_2+...+a_{2d}x_d&=0\\
            \vdots~~~~~~~~~~~~~~~~~~&
        \end{split}
    \end{align}
    Assume that the set of equations has a unique solution, i.e., there exists a unique $x^*$ which satisfies $\sum_{j=1}^da_{ij}x_j^*=0$ for all $i\geq 1$. Define $A(i)$  as a row vector, $A(i)=[a_{i1}, a_{i2},..., a_{id}]$. Then, there exists a finite $N$ such that the span of $\{A(i)\}_{i\leq N}=\mathbb{R}^d$.
\end{lemma}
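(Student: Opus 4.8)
The plan is to exploit the finite dimensionality of $\mathbb{R}^d$: an ascending chain of subspaces cannot grow indefinitely, so the span of the first $N$ row vectors must stabilize, and the uniqueness hypothesis will force that stable value to be all of $\mathbb{R}^d$. Concretely, I would set $V_N := \operatorname{span}\{A(1),\dots,A(N)\}$ and $V_\infty := \operatorname{span}\{A(i) : i\ge 1\}$. The sequence $V_1\subseteq V_2\subseteq\cdots$ is nondecreasing, so the integers $\dim V_N$ are nondecreasing and bounded above by $d$; hence $\dim V_N$ is eventually constant, say $\dim V_N=\dim V_{N_0}$ for all $N\ge N_0$. Since $V_{N_0}\subseteq V_N$ with equal dimension, this yields $V_N=V_{N_0}$ for every $N\ge N_0$, and therefore $V_\infty=\bigcup_{N\ge 1}V_N=V_{N_0}$.

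Next I would identify the solution set of \eqref{eq:infinite_eq}. Writing the $i$-th equation as $\langle A(i),x\rangle=0$, the set of all $x\in\mathbb{R}^d$ satisfying the (infinite) system is precisely $V_\infty^{\perp}$, the orthogonal complement of $V_\infty$ in $\mathbb{R}^d$; here I use that $V_\infty$, being a subspace of a finite-dimensional space, is automatically closed, so no completion issues arise. The hypothesis states that this solution set is the single point $x^*$. Since the system is homogeneous, its solution set is a linear subspace of $\mathbb{R}^d$, and a linear subspace that consists of a single point must be $\{0\}$. Consequently $V_\infty^{\perp}=\{0\}$, which forces $V_\infty=\mathbb{R}^d$.

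Combining the two steps gives $V_{N_0}=V_\infty=\mathbb{R}^d$, so taking $N:=N_0$ proves the claim. I do not expect a genuine obstacle in this argument; the only points meriting a line of care are the stabilization step (equivalently, that $\mathbb{R}^d$ satisfies the ascending chain condition on subspaces) and the observation that for a homogeneous linear system ``unique solution'' is equivalent to ``only the trivial solution,'' which is exactly what pins down $V_\infty^{\perp}=\{0\}$.
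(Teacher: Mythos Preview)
Your proof is correct and follows essentially the same approach as the paper: both argue that the nondecreasing integer sequence $\dim V_N$ (equivalently, $\operatorname{rank}(A_N)$) must stabilize at some finite $N_0$, and then use the uniqueness of the homogeneous solution to force the stabilized span (equivalently, rank) to be all of $\mathbb{R}^d$. The only cosmetic difference is that the paper frames the second step as a contradiction on a nontrivial null space, whereas you phrase it directly via $V_\infty^\perp=\{0\}$.
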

\begin{proof}
    Denote $A_i\in \mathbb{R}^{i\times d}$ as the concatenation of the row vectors $\{A(j)\}_{j\leq i}$ into a matrix. Let $r_i=rank(A_i)$. Then, $r_i$ is a non-decreasing sequence that is bounded above by $d$. This is because the span of a set increases by adding new vectors to the set. Thus, by Monotone convergence theorem, $\lim_{i\to \infty} r_i$ must exist. Let us denote the limit by $r$. Now, to show that $r=d$, we will use the method of contradiction. Assume that $r<d$. Since $r_i\in \mathbb{Z}$, there exists a finite number $N$ such that $r_i=r,~\forall i\geq N$. However, this implies that the null space of $A_i$ is nonempty for all $i$, further implying that the set of equations \eqref{eq:infinite_eq} has more than one solution. Hence, we have a contradiction and $r$ must be equal to $d$. Since the column rank and the row rank of a finite-dimensional matrix are equal, this implies $dim(\{A(i)\}_{i\leq N})=rank(A_N)=d$. The claim follows.
\end{proof}

\begin{lemma}\label{lem:block_norm}
    Let $P$ be a square matrix with dimension $d_1+d_2$ which partitioned as follows
    \begin{align*}
        P=\begin{bmatrix}
            A & B\\
            C & D
        \end{bmatrix}
    \end{align*}
    where $A\in \mathbb{R}^{d_1\times d_1}$, $B\in \mathbb{R}^{d_1\times d_2}$, $C\in \mathbb{R}^{d_2\times d_1}$, and $D\in \mathbb{R}^{d_2\times d_2}$. Then, $\|P\|_2\leq \|A\|_2+\|B\|_2+\|C\|_2+\|D\|_2$.
\end{lemma}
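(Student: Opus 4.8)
The plan is to decompose $P$ into four matrices, each supported on a single block, and then apply the triangle inequality for the operator norm $\|\cdot\|_2$ together with the observation that zero-padding a matrix does not increase its norm. Concretely, write $P = P_A + P_B + P_C + P_D$, where $P_A \in \mathbb{R}^{(d_1+d_2)\times(d_1+d_2)}$ has top-left block equal to $A$ and all other three blocks equal to zero, and $P_B, P_C, P_D$ are defined analogously with the single nonzero block being $B$, $C$, $D$ respectively. Since $\|\cdot\|_2$ is a norm on matrices, $\|P\|_2 \le \|P_A\|_2 + \|P_B\|_2 + \|P_C\|_2 + \|P_D\|_2$.

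The next step is to show $\|P_A\|_2 \le \|A\|_2$, and similarly for the other three (the inequality is all we need; in fact equality holds). For any $v \in \mathbb{R}^{d_1+d_2}$ write $v = (v_1, v_2)$ with $v_1 \in \mathbb{R}^{d_1}$ and $v_2 \in \mathbb{R}^{d_2}$. Then $P_A v = (A v_1, 0)$, so $\|P_A v\|_2 = \|A v_1\|_2 \le \|A\|_2 \|v_1\|_2 \le \|A\|_2 \|v\|_2$, and taking the supremum over unit vectors $v$ gives $\|P_A\|_2 \le \|A\|_2$. The identical argument applies to $P_B$ (with $P_B v = (B v_2, 0)$ and $\|v_2\|_2 \le \|v\|_2$), to $P_C$ (with $P_C v = (0, C v_1)$), and to $P_D$ (with $P_D v = (0, D v_2)$). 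Substituting these four bounds into the displayed inequality from the first step yields $\|P\|_2 \le \|A\|_2 + \|B\|_2 + \|C\|_2 + \|D\|_2$.

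There is essentially no genuine obstacle here; the only point requiring a little care is the bookkeeping of which sub-block acts on which sub-vector, and the elementary fact that embedding a matrix as a block of a larger matrix (padding its domain and codomain with zeros) leaves the operator norm induced by the Euclidean norm unchanged, which is immediate from the characterization $\|M\|_2 = \sup_{\|v\|_2 = 1}\|M v\|_2$.
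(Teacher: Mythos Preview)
Your proposal is correct and follows essentially the same approach as the paper: both decompose $P$ (you at the matrix level, the paper at the level of $Px$ for a fixed unit vector) into the four single-block pieces, apply the triangle inequality, and then use that the zero-padded block has the same operator norm as the original block via $\|v_i\|_2\le\|v\|_2$.
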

\begin{proof}
    Using the definition of matrix norm, we have
    \begin{align*}
        \|P\|_2=\max_{\|x\|_2=1}\|Px\|_2
    \end{align*}
    Let $x=\begin{bmatrix}
        y\\
        z
    \end{bmatrix}$, where $y\in \mathbb{R}^{d_1}$ and $z\in \mathbb{R}^{d_2}$. Then, $Px$ can be written as
    \begin{align*}
        Px=\begin{bmatrix}
        Ay+Bz\\
        Cy+Dz
    \end{bmatrix}=\begin{bmatrix}
        Ay\\
        0
    \end{bmatrix}+\begin{bmatrix}
        Bz\\
        0
    \end{bmatrix}+\begin{bmatrix}
        0\\
        Cy
    \end{bmatrix}+\begin{bmatrix}
        0\\
        Dz
    \end{bmatrix}
    \end{align*}
    Then, by triangle inequality, we have
    \begin{align*}
        \|Px\|_2\leq (\|A\|_2+\|C\|_2)\|y\|_2+(\|B\|_2+\|D\|_2)\|z\|_2
    \end{align*}
    Note that since $\max\{\|y\|_2, \|z\|_2\}\leq \|x\|_2\leq 1$, we have
    \begin{align*}
        \max_{\|x\|_2=1}\|Px\|_2\leq \|A\|_2+\|B\|_2+\|C\|_2+\|D\|_2.
    \end{align*}
\end{proof}
\end{document}